\documentclass[12pt]{article}

\usepackage[T1]{fontenc}
\usepackage[letterpaper,margin=1in]{geometry}
\usepackage{microtype}
\usepackage{graphicx}
\usepackage{amsmath,amssymb,amsfonts}
\usepackage{amsthm}
\usepackage{newtxtext,newtxmath}
\usepackage[table]{xcolor}
\usepackage{booktabs}
\usepackage{tabularx}
\usepackage{threeparttable}
\usepackage{makecell}
\usepackage{array}
\usepackage{caption}
\usepackage{float}

\usepackage{rotating}

\newcommand{\suppfigure}[1]{\makebox[\linewidth][c]{\includegraphics[width=0.98\textwidth]{#1}}}
\newcommand{\suppfigurefit}[1]{\makebox[\linewidth][c]{\includegraphics[width=0.98\textwidth]{#1}}}
\newcommand{\suppfiguretall}[1]{\makebox[\linewidth][c]{\includegraphics[width=0.98\textwidth]{#1}}}
\usepackage[section]{placeins}
\usepackage[most]{tcolorbox}
\usepackage{url}
\usepackage[numbers,sort&compress]{natbib}
\usepackage[hidelinks]{hyperref}
\graphicspath{{figures/}}
\date{}

\renewenvironment{abstract}{\quotation\bfseries\boldmath}{\endquotation}
\setcitestyle{numbers,round,sort&compress}

\definecolor{tablehead}{HTML}{DDE5E8}
\definecolor{tablerow}{HTML}{F2F5F4}
\definecolor{copal}{HTML}{2F7C76}
\definecolor{csafe}{HTML}{315D82}
\definecolor{cwarn}{HTML}{B48743}
\definecolor{cneutral}{HTML}{6E767B}
\newcommand{\OPAL}{OPAL}

\newcommand{\mainfigure}[2][0.92\textwidth]{\makebox[\linewidth][c]{\includegraphics[width=#1]{#2}}}
\newcolumntype{L}[1]{>{\raggedright\arraybackslash}p{#1}}
\newcolumntype{C}[1]{>{\centering\arraybackslash}p{#1}}
\newcolumntype{Y}{>{\raggedright\arraybackslash}X}
\theoremstyle{plain}
\newtheorem{proposition}{Proposition}
\newtheorem{lemma}{Lemma}
\newtheorem{corollary}{Corollary}
\theoremstyle{definition}

\theoremstyle{remark}
\newtheorem*{remark}{Remark}
\makeatletter
\renewcommand{\fnum@figure}{\textbf{Fig. \thefigure}}
\renewcommand{\fnum@table}{\textbf{Table \thetable}}
\makeatother
\DeclareCaptionFont{compactcaption}{\small\linespread{1.05}\selectfont}
\hypersetup{
  pdftitle={Held-out evidence resolves follow-up measurement decisions in biological screens},
  pdfauthor={Jia Bi, Samuel Pinilla and Chenyang Zhu}
}

\begin{document}
\nocite{macarron2011hts,
hase2019,
burger2020,
kusne2020cameo,
macleod2022selfdriving,
szymanski2023,
dai2024robots,
mahecic2022event,
meirovitch2026smartem,
wognum2024assessment,
bernett2024leakage,
graber2025bias,
asiaee2026cancerleakage,
ahlmanneltze2025linear,
kapoor2024reforms,
szymanski2026correction,
lindley1956,
chaloner1995,
shahriari2016,
rainforth2024,
ma2019eddi,
vonkleist2025afa,
sui2015safeopt,
thomas2019seldonian,
angelopoulos2024crc,
fenwick2020voi,
tosh2025batchie,
bendavid2010domains,
bendavid2010impossibility,
pearl2014transportability,
wawer2014profiling,
bray2017cellpaintingdata,
caicedo2017profiling,
weisbart2024gallery,
nist_acceptance_sampling,
serfling1974,
subramanian2017l1000,
niepel2017lincs,
seashoreludlow2015,
chandrasekaran2024jump,
arevalo2024batch,
gamella2025,
tikhomirov2026silent,
blackwell1953,
angelopoulos2025,
clopper1934,
efron1987bca,
chernozhukov2018,
chernozhukov2013,
howard2021,
maurer2009empirical,
hoeffding1963,
thomas2015,
laroche2019spibb,
cho2025cspimt,
chow1970,
elyaniv2010selective,
sawarni2026abstention,
wu2016conservative,
swaminathan2015,
jiang2016,
jacobs1991,
jordan1994,
diamond2024,
esrf2024}

\title{\bfseries\boldmath Held-out evidence resolves follow-up measurement decisions in biological screens}
\author{Jia Bi$^{1,*}$, Samuel Pinilla$^{2}$, Chenyang Zhu$^{3,*}$\\[0.35em]
\small $^{1}$Scientific Computing Department, Science and Technology Facilities Council,\\
\small Rutherford Appleton Laboratory, Didcot, UK\\
\small $^{2}$Diamond Light Source, Harwell Science and Innovation Campus, Didcot, UK\\
\small $^{3}$School of Electronics and Computer Science, University of Southampton,\\
\small Southampton, UK\\[0.35em]
\small $^{*}$Corresponding authors. Email: jia.bi@stfc.ac.uk; c.zhu@soton.ac.uk}

\maketitle
\noindent\textbf{Short title:} Follow-up decisions in biological screens

\begin{abstract}
Machine learning determines which follow-up measurements biological screens collect. In a six-rule Cell Painting battery, the highest-value rule would re-image 96.01\% of the library and had a 97.14\% false-activation upper bound, showing why predicted value alone cannot justify replacing a fixed plan. We developed \OPAL{}, a held-out decision test that freezes a rule and judges unnecessary measurement, coverage and value after cost against archive-specific criteria fixed before final evaluation. A development-selected sparse Cell Painting rule had 18.2-fold lower added-well burden, but its false-discovery bound exceeded 35\%, so the fixed plan remained. LINCS--LJP favored broad acquisition under point-estimate criteria set during development, not selective saving. CTRP required fallback because its frozen score missed measured opportunity. \OPAL{} separates optimization from evidence sufficient to change an experiment.
\end{abstract}

\section*{INTRODUCTION}
\phantomsection\label{sec:introduction}

High-throughput biological experiments must decide where to spend follow-up
wells, imaging time, samples and staff effort. Machine learning offers a direct
acquisition strategy: learn from initial measurements, then prioritize compounds,
conditions or samples for deeper phenotyping. Such rules increasingly shape
drug discovery, high-throughput screening and shared scientific facilities
~\cite{macarron2011hts}, while closed-loop and self-driving laboratories extend
them to sequential campaigns
~\cite{hase2019,burger2020,kusne2020cameo,macleod2022selfdriving,szymanski2023,dai2024robots}.
Learned decisions already control biological instruments: event-driven
fluorescence microscopy uses neural-network detection to switch acquisition
rates in real time, whereas SmartEM allocates slower rescans only to regions
that require higher-quality electron microscopy
~\cite{mahecic2022event,meirovitch2026smartem}.
Each choice consumes experimental resources and determines which evidence will
exist later. Predicted value alone, however, does not tell an experimentalist
whether to measure a sparse subset, acquire broadly or keep the fixed plan.
That choice must be made before the follow-up result is known.

Recent work in biological machine learning has sharpened this distinction.
Calls for blinded assessment in real-world drug discovery and analyses of
split construction, dependence, and data leakage show that benchmark
performance can change materially when the intended unit of generalization is
enforced~\cite{wognum2024assessment,bernett2024leakage,graber2025bias,asiaee2026cancerleakage}.
Critical benchmarking can also show when model complexity does not translate
into experimental value. Five foundation models and two other deep-learning
models for gene-perturbation prediction did not outperform deliberately simple
linear baselines~\cite{ahlmanneltze2025linear}. REFORMS likewise codifies
consensus recommendations for validity, reproducibility and generalizability
in machine-learning-based science~\cite{kapoor2024reforms}. An author
correction to the A-Lab report makes the consequence concrete: one target
mistakenly present in the training data was removed, and manual reanalysis
upheld 36 of 40 reported successes while classifying four as inconclusive from
XRD alone; the article title was revised from ``novel'' to ``inorganic''
materials~\cite{szymanski2023,szymanski2026correction}. Together, these studies
establish the need for leakage-resistant, held-out assessment of predictive
claims. They stop before the operational decision considered here: after a
rule has been trained, what evidence is sufficient for it to replace the
laboratory's fixed measurement plan?

Active learning, Bayesian optimization and experimental design choose the next
experiment after an objective has been specified
~\cite{lindley1956,chaloner1995,shahriari2016,rainforth2024}, and active feature
acquisition chooses which optional observation to collect
~\cite{ma2019eddi,vonkleist2025afa}. Safe learning, selective prediction and
calibrated-risk methods constrain candidate decisions, whereas
value-of-information analysis prices additional evidence
~\cite{sui2015safeopt,thomas2019seldonian,angelopoulos2024crc,fenwick2020voi}.
Together, these approaches optimize, constrain or value acquisition once a
candidate objective and action space have been accepted. They do not answer the
prior authorization question studied here: whether evidence from the intended
setting is sufficient to replace a fixed experimental plan before laboratory
capacity is committed. This distinction matters because a broad rule can
accumulate expected value by measuring almost every unit, while a sparse rule
can lower burden without being reliable enough to deploy. BATCHIE
prospectively explored approximately 4\% of 1.4 million possible drug-combination
experiments and experimentally validated prioritized combinations
~\cite{tosh2025batchie}. It demonstrates that adaptive acquisition can succeed
when the assay, objective and validation path are aligned, but it does not
provide a held-out test of whether a learned rule should replace an existing
measurement plan.

To close this gap, we developed \OPAL{} (the opportunity-aware protocol for
authorizing learned measurement rules), a held-out decision framework. \OPAL{}
tests whether a learned rule has enough evidence to change a fixed experiment;
unlike methods that optimize the next measurement, it treats adoption itself
as the scientific decision. It
freezes one learned rule before evaluation outcomes are opened and judges its
assignments against three coupled requirements: unnecessary acquisition,
nontrivial coverage and value after cost. Within a specified evaluation plan,
failure of any fixed requirement retains the fixed action. If a criterion is
revised during development, the revised rule must be fixed before evaluation
on a new held-out partition.
The numerical criteria are archive-specific
and fixed before the corresponding final outcome stage; they are not a common
biological effect scale. A separate methodological layer defines
when these judgments are identifiable and certifiable. Target outcomes or
justified transport assumptions are needed when outcomes may shift across
settings
~\cite{bendavid2010domains,bendavid2010impossibility,pearl2014transportability},
and an identified finite campaign can still be too small to support a claim
about its unobserved remainder. Exact finite-campaign results define this
boundary, while held-out simulations show that risk control can preserve a
nontrivial active branch under constructed conditions.

Three measured biological archives then resolved distinct experimental
decisions. Cell Painting produced a lower-burden sparse candidate whose mean and
prespecified nominal compound-bootstrap endpoint were positive, but
false-discovery uncertainty kept the fixed plan in place; post-hoc block
sensitivity made the value conclusion inconclusive.
LINCS--LJP paired a positive mean point estimate with acquisition of every
episode in the untouched final partition, indicating broad acquisition in the
observed setting rather than selective savings. CTRP fell back because its
frozen decision-time score did not rank measured opportunity. These states
direct further rule development, broad measurement and score revision,
respectively; they are not levels of algorithmic success or results on one
evidentiary scale. None yielded a measurement-saving selective policy.

\section*{RESULTS}\label{sec:results}

Three measured archives first resolve the experimental decisions; exact
analysis, reconstructed execution and simulation then separate limitations of a
learned rule from limits imposed by the evidence design. Supplementary
Table~\ref{tab:study_design} distinguishes these empirical and
methodological roles, and Supplementary Table~\ref{tab:claims} maps each
reported claim to its evidence and boundary.

\subsection*{Three biological archives resolve distinct evidence states}\label{sec:r0_framework}
Across the three measured archives, held-out evaluation resolved a sparse Cell
Painting candidate, broad LINCS--LJP acquisition and CTRP fallback. \OPAL{}
separates these states by evaluating a learned selection rule before it replaces
the fixed action \(K_0\). In Cell Painting, all 11,002 compounds with measured
outcomes reveal broad overlap between selected POSITIVE and selected measured-NULL
units on the decision-time predicted-gain axis (Fig.~\ref{fig:archive_decisions}, A).
The practical burden difference is visible in the comparator battery: among
rules that assigned follow-up wells, only the sparse candidate remained below
the prespecified false-activation limit, and it assigned 1,190 wells rather than
21,630 for the highest-value rule (Fig.~\ref{fig:archive_decisions}, B). The three archive evaluations used
different frozen rules and archive-specific criteria. Their operational
decisions can be read on a common selection axis, but not on a common
evidentiary scale: Cell Painting and CTRP used confidence-bound criteria,
whereas LINCS--LJP used point-estimate criteria revised before FINAL, with bounds reported
separately. They resolved 5.28\%
selection with authorization withheld, 100\% selection for broad acquisition
and 0\% selection with fallback (Fig.~\ref{fig:archive_decisions}, C). The
underlying access sequence and partition roles are reported in Supplementary
Table~\ref{tab:study_design} and Methods. None of the three supported a
measurement-saving selective policy; the LINCS--LJP action was a measured-data
``yes'' to broad acquisition rather than to selection.

\begin{figure}[!htbp]
\centering
\mainfigure{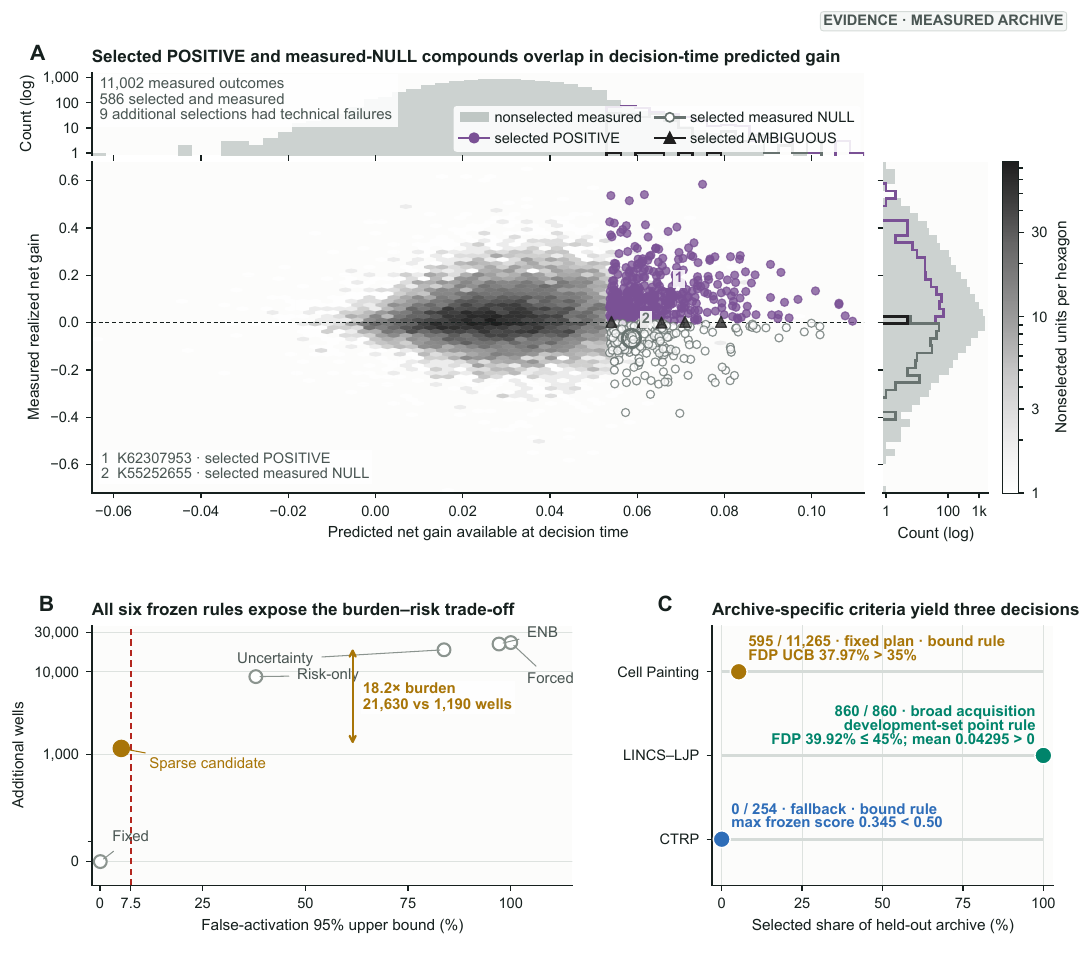}
\caption{\textbf{Measured outcomes separate prediction, burden and archive-level decision.} (\textbf{A}) All 11,002 measured Cell Painting outcomes. Hexagons show 10,416 nonselected compounds; points show all 586 selected measured compounds as POSITIVE, AMBIGUOUS or measured NULL. Marginals summarize the complete selected and nonselected distributions; the upper count axis is logarithmic. Exemplars 1 and 2 are expanded in Fig.~\ref{fig:cpg_target_authorization}, and nine selected technical failures are counted separately. (\textbf{B}) Six frozen rules by false-activation upper bound and added wells. The sparse candidate assigned 1,190 wells below the 7.5\% limit, versus 21,630 for the highest-value rule, an 18.2-fold burden difference. (\textbf{C}) Archive-specific selected fractions and decisive evidence: Cell Painting, 595/11,265 with FDP UCB 37.97\% above 35\%, so the fixed plan remained under a bound-based rule; LINCS--LJP, 860/860 with FDP 39.92\% at or below 45\% and mean value 0.04295 above zero under point-estimate criteria set during development, favoring broad acquisition; CTRP, 0/254 with 0/7 POSITIVE families activated and fallback under a bound-based rule. The horizontal axis is common, but the archive-specific criteria are not exchangeable or on one evidentiary scale.}
\label{fig:archive_decisions}
\end{figure}

\FloatBarrier
\subsection*{Cell Painting identifies a lower-burden sparse candidate without replacing the fixed plan}
\label{sec:r7_cpg0012}

Cell Painting provided an archive-based, outcome-unopened test of the
target-labeled route. One initial cpg0012 compound profile informed whether
two additional profiles were warranted, while a fourth replicate supplied
evaluation-only verification
~\cite{wawer2014profiling,bray2017cellpaintingdata,caicedo2017profiling,weisbart2024gallery}. The
gate used only hashed structure, initial morphology and pre-imaging metadata.
Here scientific return meant more reproducible estimation of a compound's
morphological profile: agreement with an independent verifier after adding two
replicate wells. It was not hit confirmation, mechanism-of-action discovery or
biological efficacy. Replicate-stable profiles are needed before perturbations
can be compared or grouped by phenotype, so the endpoint measures a decision-relevant
quality-control gain rather than a downstream biological effect
~\cite{caicedo2017profiling}. Development on 13,748 labeled target compounds produced no
rule that met all seven screens. The frozen candidate met the other six and was
the unique near-boundary rule with the smallest false-discovery-bound excess
(35.11\% against 35\%). That rule
made every assignment on the held-out 11,265-compound partition before any
additional or verifier profile was accessed (Supplementary
Section~\ref{sec:sr_cpg0012}).

A prespecified expected-net-benefit gate exposed the difference between high
predicted value and an evidence-backed measurement decision. It would have
selected additional imaging for 10,815 of 11,265 compounds (96.01\%). This gate
assigned 21,630 additional wells and also had the largest held-out executed
value in the six-rule battery (mean,
\(1.727\times10^{-2}\), with a nominal one-sided 95\% BCa lower endpoint,
\(1.483\times10^{-2}\)). The same gate activated 4,419 of 4,569
compounds in the prespecified least-favorable NULL denominator, giving a
one-sided 95\% false-activation upper bound of 97.14\%. In this battery,
maximizing value therefore favored additional imaging for almost the entire
library (Fig.~\ref{fig:archive_decisions}, B, and Supplementary
Table~\ref{tab:cpg0012_comparators}).

\begin{figure}[!htbp]
\centering
\mainfigure{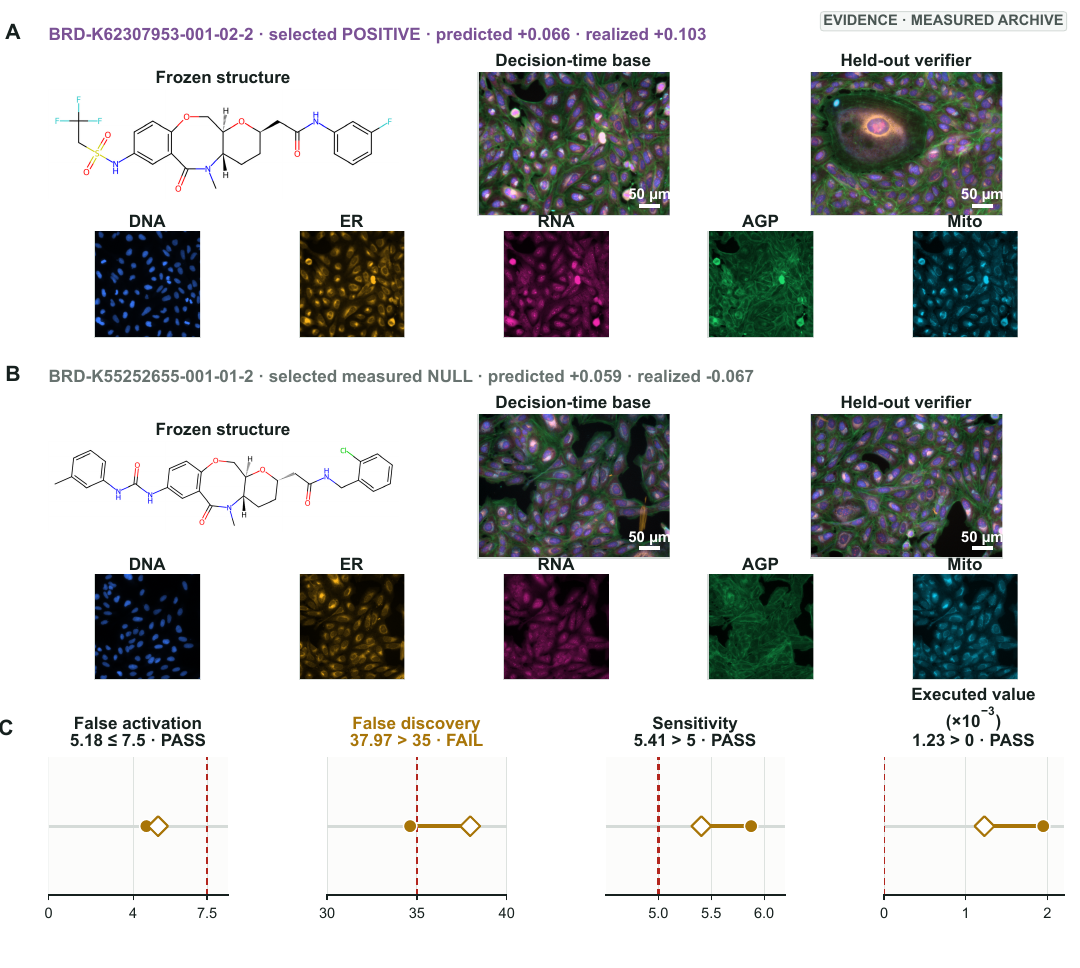}
\caption{\textbf{Archived Cell Painting measurements connect individual assignments to the binding evidence criterion.} (\textbf{A} and \textbf{B}) Frozen structures and site-5 fields for the selected POSITIVE and selected measured-NULL exemplars identified in Fig.~\ref{fig:archive_decisions}A. Each row shows full-field decision-time and held-out-verifier composites, followed by a centered 680-by-680-pixel detail crop from each decision-time channel (DNA, blue; endoplasmic reticulum, yellow; RNA, magenta; actin/Golgi/plasma membrane, green; and mitochondria, cyan). For display only, intensities were clipped to pooled channel-specific 0.20th and 99.80th percentiles calculated across the 40 bound site-5 files, scaled to [0,1], and transformed by the same fixed gamma exponent, 0.82, within every channel; scale bars, 50~\textmu m. The conspicuous ring-like object in the POSITIVE verifier is retained from the archived raw field without digital removal; it is displayed as observed morphology and is not assigned a mechanistic interpretation. The examples are descriptive and have no inferential role. The displayed raw fields were not direct model inputs; the rule used archive-supplied per-well morphology profiles. (\textbf{C}) Four prespecified criteria on their native scales. Filled points are estimates, open diamonds one-sided bounds and red lines criteria; every panel states PASS or FAIL explicitly, and false-discovery uncertainty was the sole binding failure.}
\label{fig:cpg_target_authorization}
\end{figure}

The NULL denominator was evaluated separately for each frozen rule. Because
the assignments and least-favorable treatment of missing outcomes were
rule-specific, it contained 4,569 compounds for the expected-net-benefit rule
and 4,453 for \OPAL{}; these are distinct estimands rather than inconsistent
counts.

The held-out outcome table included all 263 technical outcome failures. Under the
prespecified worst-case completion, each missing realized gain was set to
\(-1.02\), the theoretical lower bound of the utility scale. Only nine of
these compounds were activated and therefore entered executed value at that
bound; the other 254 followed the fallback and contributed zero under the
frozen assignment. The reported executed value already includes this
conservative completion.

By contrast, \OPAL{} selected a different operating point
(Fig.~\ref{fig:archive_decisions}, B). It activated 595
compounds (5.28\%), corresponding to 1,190 additional wells; the
highest-value rule required 18.2 times as many. \OPAL{} had
positive all-library executed value (mean, \(1.948\times10^{-3}\), with a nominal
one-sided 95\% BCa lower endpoint, \(1.229\times10^{-3}\)). Its
false-activation upper bound was 5.18\%, below the prespecified 7.5\% limit.
The 206 least-favorable false activations accounted for 412 wells, compared
with 8,838 under the expected-net-benefit gate. In a descriptive sweep of the
fixed battery, no other active rule became feasible until the false-activation
upper bound reached 37.94\%, more than five times the prespecified limit. Thus,
\OPAL{} alone occupied the active risk-feasible interval from 5.18\% up to,
but not including, 37.94\%.

The resulting active branch was sparse and low recall. Under the prespecified
all-unit least-favorable accounting, it contained 384 measured POSITIVE
compounds among 595 activations (64.54\%), compared with 6,280 measured
POSITIVE compounds among all 11,265 final units (55.75\%). The latter
denominator includes 254 inactive technical failures under the least-favorable
completion; the measured-only baseline used in the score-sweep diagnostic is
6,280/11,002 (57.08\%). The all-unit comparison corresponded to a descriptive
1.16-fold enrichment, whereas the measured-only comparison was 1.148-fold.
Worst-case POSITIVE sensitivity was \(384/6{,}534=5.88\%\), with a one-sided 95\% lower bound of
5.41\% above the prespecified 5\% non-degeneracy floor. The gate recovered a
non-trivial subset rather than most compounds with positive measured gain.

To connect the aggregate decision to the underlying assay,
Fig.~\ref{fig:cpg_target_authorization}, A and B, show archived Cell Painting fields
for the active POSITIVE and active measured-NULL compounds nearest their
respective group-median gains under one fixed selection rule. These examples
were selected after the locked outcome analysis and are descriptive rather
than an additional inferential sample.

A complete population view showed why the aggregate decision could not be
reduced to rank accuracy. Among 11,002 final compounds with a measured outcome,
frozen predicted gain and realized gain had Pearson \(r=0.090\). The 586
measured compounds selected by \OPAL{} had a higher mean and median gain than
the 10,416 nonselected compounds, but selected POSITIVE and selected
measured-NULL units occupied overlapping predicted-gain ranges
(Fig.~\ref{fig:archive_decisions}, A; Supplementary
Fig.~\ref{fig:cpg_real_data_validation}, A and B). In a
separate post-hoc rank-cutoff sweep, enrichment did not exceed 1.238-fold when
units were ordered by frozen predicted gain or 1.181-fold when ordered by
frozen positive probability. The measured subset selected by the prespecified
multicriterion rule had 1.148-fold enrichment; because that rule was not a
top-\(n\) cutoff, each sweep additionally assumes an externally chosen
acquisition budget and does not satisfy the seven-part contract. The larger
top-\(n\) enrichments therefore motivate future budget-constrained rule
development but do not provide held-out evidence for a new rule
(Supplementary
Section~\ref{sec:cpg_score_sweep}).

However, the complete evidence test kept the fixed plan because active-set false
discovery remained too uncertain. Its point estimate was
\(206/595=34.62\%\), below the internal 35\% target, but the one-sided 95\%
upper bound was 37.97\%. Development placed the selected near-boundary rule just outside the
FDP screen (35.11\% against 35\%); the held-out final partition yielded 37.97\%
while the other six contract components were met. The held-out data therefore identified a non-trivial,
false-activation-controlled branch that met the prespecified nominal
compound-bootstrap value criterion, but not the
precision required to replace the fixed plan
(Fig.~\ref{fig:cpg_target_authorization}, C, and
Table~\ref{tab:cpg_contract_blocks}, A). Full stage results are in Supplementary
Table~\ref{tab:cpg0012_stage_results}.

A post-hoc plate-map block analysis tested sensitivity to dependence in the assay
layout. A bootstrap of 80 non-overlapping four-plate blocks gave pointwise
one-sided 95\% percentile endpoints of 5.65\% for false activation and 37.82\%
for false discovery. The corresponding lower endpoints were 4.86\% for
sensitivity and \(5.319\times10^{-4}\) for executed value.
False activation and value remained favorable, false discovery remained
above its target, and sensitivity fell just below its 5\% floor. A secondary
cluster-BCa calculation also made the value boundary inconclusive. These
descriptive results identify block-disjoint replication as the next evidence
requirement (Table~\ref{tab:cpg_contract_blocks} and Supplementary
Table~\ref{tab:cpg0012_plate_map_sensitivity}; Supplementary
Fig.~\ref{fig:cpg_real_data_validation}, C).

\begin{table}[!t]
\caption{Cell Painting final-contract decision and post-hoc dependence sensitivity.}
\label{tab:cpg_contract_blocks}
\centering
\begin{threeparttable}
\footnotesize
\setlength{\tabcolsep}{3pt}
\textbf{A\quad Prespecified seven-part final contract}\par\smallskip
\begin{tabular*}{\textwidth}{@{\extracolsep{\fill}}L{0.20\textwidth} C{0.18\textwidth} C{0.20\textwidth} C{0.15\textwidth} C{0.10\textwidth}@{}}
\toprule
Component & Observed result & One-sided 95\% endpoint & Criterion & Status \\
\midrule
Activations & 595 & --- & $\geq100$ & \textcolor{copal}{met} \\
Coverage & 5.2818\% & --- & $\geq5\%$ & \textcolor{copal}{met} \\
Outcome missingness & 2.3347\% & --- & $\leq5\%$ & \textcolor{copal}{met} \\
False activation & 4.6261\% & UCB 5.1776\% & $\leq7.5\%$ & \textcolor{copal}{met} \\
False discovery & 34.6218\% & UCB 37.9660\% & $\leq35\%$ & \textcolor{cwarn}{not met} \\
POSITIVE sensitivity & 5.8770\% & LCB 5.4054\% & $\geq5\%$ & \textcolor{copal}{met} \\
Executed value & $1.948\times10^{-3}$ & BCa LCB $1.229\times10^{-3}$ & $>0$ & \textcolor{copal}{met} \\
\bottomrule
\end{tabular*}

\vspace{0.7em}
\textbf{B\quad Fixed-assignment plate-map block sensitivity}\par\smallskip
\begin{tabular*}{\textwidth}{@{\extracolsep{\fill}}L{0.19\textwidth} C{0.15\textwidth} C{0.15\textwidth} C{0.14\textwidth} C{0.11\textwidth} C{0.10\textwidth}@{}}
\toprule
Endpoint & Primary compound-level analysis & Block percentile & Block BCa & Criterion & Status \\
\midrule
False-activation UCB & 5.1776\% & 5.6525\% & 5.6923\% & $\leq7.5\%$ & \textcolor{copal}{met} \\
False-discovery UCB & 37.9660\% & 37.8165\% & 37.8739\% & $\leq35\%$ & \textcolor{cwarn}{not met} \\
Sensitivity LCB & 5.4054\% & 4.8592\% & 4.9246\% & $\geq5\%$ & \textcolor{cwarn}{not met} \\
Executed-value LCB & $1.229\times10^{-3}$ & $5.319\times10^{-4}$ & $-9.083\times10^{-5}$ & $>0$ & mixed \\
\bottomrule
\end{tabular*}
\begin{tablenotes}[flushleft]
\item Values in \textbf{A} are the binding final-contract results. The executed-value row gives the fixed-archive point estimate after the prespecified \(-1.02\) worst-case completion and the nominal BCa endpoint used by the prespecified criterion. A more conservative empirical-Bernstein sensitivity analysis gave an LCB of \(-1.099\times10^{-3}\). Values in \textbf{B} are post hoc, pointwise sensitivity analyses of the same frozen assignments over 80 non-overlapping four-plate blocks. ``Mixed'' means that the percentile endpoint remained positive while the block BCa endpoint crossed zero. UCB denotes upper confidence bound, LCB lower confidence bound and BCa bias-corrected and accelerated bootstrap.
\end{tablenotes}
\end{threeparttable}
\end{table}

The sparse branch was also less exposed to measurement cost. In a post hoc
sensitivity analysis that held all six pre-outcome assignments and the
prespecified \(c=0.02\) outcome strata fixed, added wells fell from 22,530 for
forced activation to 1,190 for \OPAL{}. Across the same sequence, the cost at
which the nominal value lower endpoint reached zero rose monotonically from
0.02206 to 0.04320. \OPAL{} was the only active rule with a positive lower
endpoint between approximately 0.03997 and 0.04320 (Supplementary
Fig.~\ref{fig:cpg_cost_sensitivity} and Supplementary
Fig.~\ref{fig:cpg_real_data_validation}, D).

Cell Painting therefore resolved a lower-burden selective candidate rather
than a rule ready to replace the fixed plan. The candidate combined controlled false activation and
positive nominal compound-bootstrap executed value, but false-discovery uncertainty and
the plate-map sensitivity results kept the fixed plan in place. The next step is to
improve active-set enrichment and evaluate the revised rule on new held-out,
preferably block-disjoint, data.

\FloatBarrier
\subsection*{Formal boundaries and executed comparisons define when adaptive measurement is justified}
\label{sec:r2_boundary}

The Cell Painting decision raises a broader question: can a revised model
change the evidence state, or does the design itself limit what can be
established? Two formal boundaries and an executed-value reconstruction
separate model-dependent limitations from constraints imposed by the evidence
design. The first boundary concerns transfer into a target setting whose
outcomes have not been observed.

Specializing established domain-adaptation and transportability impossibility
results to this authorization setting, we show a limit of source-only
evidence. Under unrestricted conditional
outcome shift, the same source outcomes, unlabeled target covariates and frozen
gate can be compatible with favorable and adverse target laws. A source-only
certificate with uniform level-\(\alpha\) false-authorization control can
therefore authorize with probability at most \(\alpha\) under an
observationally indistinguishable favorable law. The full statement and proof
are given in Supplementary Proposition~\ref{prop:target_shift}.
Target-labeled development followed by one untouched target evaluation makes
the selected rule testable, although it does not guarantee that the contract
will pass (Supplementary Section~\ref{sec:target_calibrated_recovery}).

Target labels resolve identification, but they cannot add information about a
finite campaign's unobserved units. For a binary best-case boundary, let \(X\)
be the actionable count in a simple-random pilot of size \(q\), let \(d=N-q\),
and require the actionable fraction in the specific unobserved complement to
exceed \(\rho\in[0,1)\). Under the boundary-null composition, the probability
of the most favorable observation \(X=q\) is

\[
h(N,q,\rho)=
\frac{\binom{q+\lfloor\rho(N-q)\rfloor}{q}}{\binom{N}{q}}.
\]

This probability gives an exact decision boundary. A non-randomized,
monotone level-\(\alpha\) certificate is nontrivial at look \(q\) if and only if
\(h(N,q,\rho)\leq\alpha\). If \(h(N,q,\rho)>\alpha\), even an all-actionable
pilot cannot be certified. For a prespecified look set \(\mathcal Q_N\), the
minimum qualifying pilot size is
\[
q^\dagger(N,\rho,\alpha)=
\min\{q\in\mathcal Q_N:h(N,q,\rho)\leq\alpha\}
\],
which is undefined when \(\mathcal Q_N\) contains no qualifying \(q\).

The complete statement, randomized extension, proof and large-\(N\) limit are
given in Supplementary
Proposition~\ref{prop:impossibility}. This result differs from classical
acceptance sampling, which concerns an entire finite lot
~\cite{nist_acceptance_sampling}. Successes already observed in the pilot
cannot also be counted in the complement that would receive the additional
measurements. Standard concentration bounds for sampling without replacement
address deviations of finite-population averages rather than this
complement-specific certification question~\cite{serfling1974}.

At the prespecified \(\alpha=0.02/7\), the landscape of qualifying designs is
shown in Fig.~\ref{fig:n_alpha_boundary}, A. No pilot size qualified for a
strict-majority complement claim when \(N=16\)
(Fig.~\ref{fig:n_alpha_boundary}, B). The first campaign size not excluded by
the noiseless best-case boundary was \(N=25\), with \(q=14\)
(Fig.~\ref{fig:n_alpha_boundary}, C). These are abstract campaign-unit
calculations for the stated binary complement claim, not sample-size estimates
for Cell Painting, LINCS--LJP or CTRP. Applying the boundary to an assay would
require defining its unit, binary target, pilot design and error allocation
before outcomes. Thus, unidentified target outcomes
require labels or transport assumptions, whereas an identified but unsupported
complement requires a larger or more informative campaign design.

\begin{figure}[!htbp]
\centering
\mainfigure{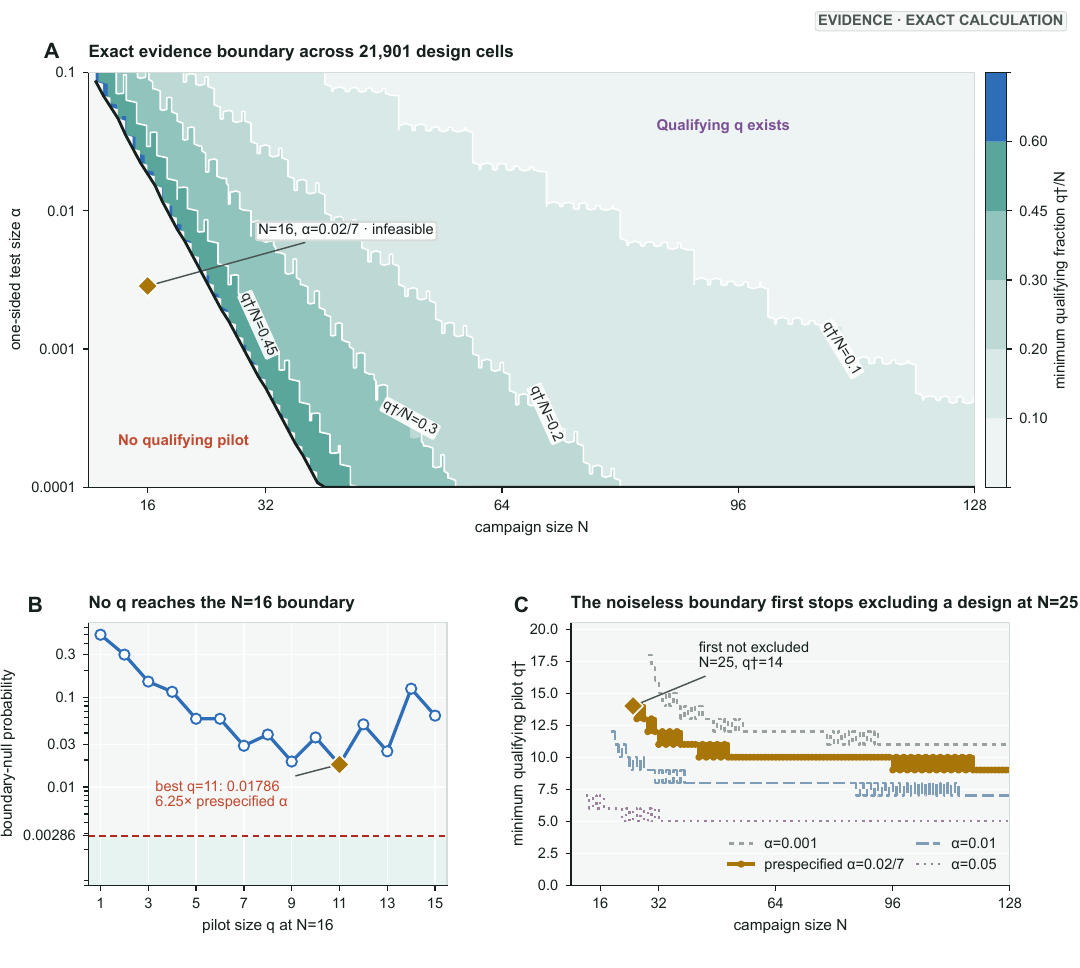}
\caption{\textbf{Finite experiment size imposes an exact methodological evidence boundary.} (\textbf{A}) Exact certifiability landscape for the minimum qualifying pilot fraction \(q^\dagger/N\) over campaign size \(N\) and one-sided test size \(\alpha\) for a complement target \(\rho=1/2\); the prespecified design is marked directly. (\textbf{B}) Exact boundary-null probability for every pilot size at \(N=16\), none of which meets the prespecified \(\alpha=0.02/7\). (\textbf{C}) Integer-valued step functions for the minimum qualifying pilot size across campaign sizes. The emphasized curve is the prespecified \(\alpha=0.02/7\) boundary; \(\alpha=0.001\), 0.01 and 0.05 are shown for context. The stair steps reflect the integer-valued \(q^\dagger\), not simulation noise. The first design not excluded by the prespecified noiseless best-case boundary is \(N=25,q=14\); this necessary boundary uses abstract campaign units and is not an archive-specific sample-size prescription.}
\label{fig:n_alpha_boundary}
\end{figure}

\FloatBarrier
\paragraph{Executed baselines and information cost determine realized value.}
\label{sec:r1_frontier}

Evidence size was not the only determinant. An apparent gain can reverse when
the candidate selection rule is compared with the
fallback that will actually be executed. A label-based surrogate suggested a
\(5.549\times10^{-3}\) improvement against its score-construction reference,
\(K_{\mathrm{score},-f}=1\). Executing the same rule against the best fixed
action, \(K_0=2\), changed the contrast to \(-8.14\times10^{-4}\)
(Fig.~\ref{fig:selector_frontier}, C, and Supplementary
Fig.~\ref{fig:ed_estimand}; Supplementary
Table~\ref{tab:sr_gap_decomposition}). Baseline identity supplied the dominant correction
(\(-12.862\times10^{-3}\)). Value-source and execution terms completed the
accounting identity. Across all 720 frozen banks, 487 contrasts were exactly
unchanged, 159 favored the fixed fallback and 74 favored the selector; their
mechanism-cell structure and complete ranked distribution are shown in
Fig.~\ref{fig:selector_frontier}, A and B. The surrogate and executed contrasts
therefore answered different decisions. The complete factorial ablation and
estimand crosswalk are in Supplementary Tables~\ref{tab:sr_factorial}
and~\ref{tab:ed_estimands}.

\begin{figure}[!htbp]
\centering
\mainfigure{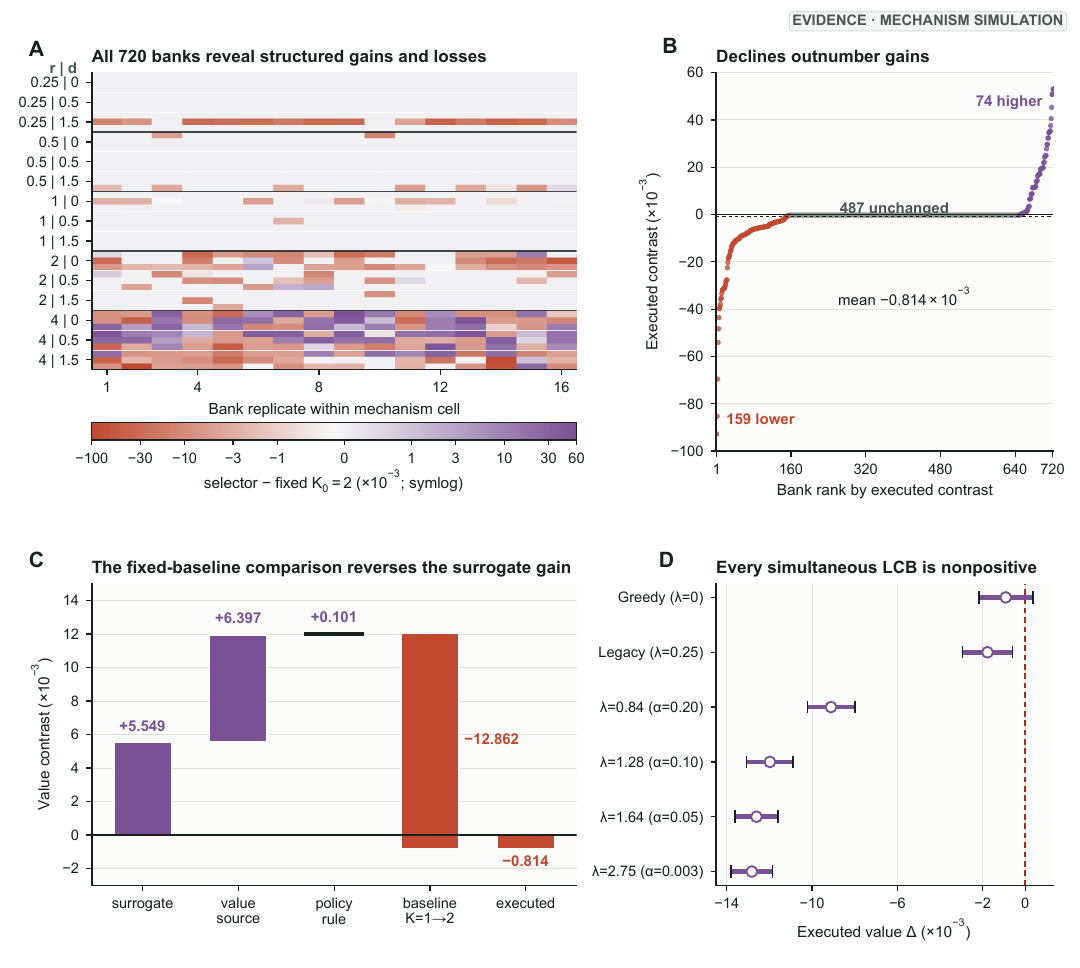}
\caption{\textbf{Execution against the actual fallback reverses the surrogate gain.} (\textbf{A}) Complete 45-mechanism-cell by 16-bank matrix of selector-minus-\(K_0\) executed contrasts. Cells are ordered first by execution-to-process-noise ratio \(r\in\{0.25,0.5,1,2,4\}\), then by normalized drift \(d\in\{0,0.5,1.5\}\), and then by substitutability \(s\in\{0,0.5,0.9\}\); within each \(r\mid d\) block, the three rows correspond to \(s=0,0.5,0.9\). Separators mark the \(d\) and \(r\) blocks. A symmetric-log color scale preserves exact zeros and both tails. (\textbf{B}) All 720 contrasts ranked without jitter: 487 were unchanged, 159 favored \(K_0\) and 74 favored the selector; the overall mean was \(-0.814\times10^{-3}\). (\textbf{C}) Accounting waterfall for the exact underlying reconciliation, with displayed labels rounded: the label-based surrogate contrast, \(5.549\times10^{-3}\), is updated by value-source, policy-rule and comparator corrections to the executed contrast against \(K_0=2\), \(-0.814\times10^{-3}\). The small \(+0.101\times10^{-3}\) policy-rule correction is labeled directly without changing the plotting scale. (\textbf{D}) Simultaneous max-\(t\) intervals for six frozen thresholds. Open circles show point contrasts, horizontal bars show simultaneous intervals and the red dashed line marks zero. The four \(\alpha\)-labeled rows use their corresponding one-sided critical values as \(\lambda\). Only \(\lambda=0\) overlapped zero; every lower bound was nonpositive.}
\label{fig:selector_frontier}
\end{figure}

None of the six thresholds applied to the common cross-fitted score had a
simultaneous lower bound above zero. Every rule therefore selected \(K_0\)
(Fig.~\ref{fig:selector_frontier}, D, and Supplementary
Table~\ref{tab:sr_frontier_full}). A separate
full-state analysis nevertheless found a positive opportunity envelope of
\(2.690\times10^{-3}\). Heterogeneous action value was therefore present even
though the learned selector did not convert it into executed value over the
fallback (Supplementary Section~\ref{sec:sr_frontier}). The available analysis could not distinguish whether the remaining
gap arose from decision-time information, the representation or model error.

\paragraph{Probe and expert diagnostics.}
\label{sec:r2_option}

The architecture analysis localized the same value loss. Cross-validation
selected no residual expert in any fold, and only 10 of 810 banks purchased a
probe. Only two probes changed capacity. After information cost, the realized probe
contribution was \(-5.86\times10^{-4}\), with a simultaneous interval ending
at zero (Supplementary Fig.~\ref{fig:ed_architecture}).
The probe ledger added 90 OOD banks to the 720 in-support banks used for the
executed-value reconciliation, yielding 810 diagnostic banks; the two totals
therefore describe different ledgers rather than missing observations.

\FloatBarrier
\paragraph{Finite evidence separates opportunity from certifiability.}\label{sec:r3_finite}
Positive expected opportunity did not ensure that a finite pilot could justify
the claim about the units that would receive the additional measurements. We inverted the
complete three-class likelihood for every prespecified pilot size, noise level
and signal-count vector in the 16-bank campaign. Across all 1,113
configurations, a least-favorable campaign composition yielded non-positive
action gain. No observation was actionable, the uncosted envelope reached only
zero and the costed envelope remained negative
(Supplementary Fig.~\ref{fig:finite_campaign_diagnostics}, B, and Supplementary
Table~\ref{tab:full_likelihood_certificate}).

The exact boundary in Fig.~\ref{fig:n_alpha_boundary}, B, explains this result.
At \(N=16\), even an all-actionable pilot can leave every counterexample in the
unobserved complement. Adding noise and cost through the full three-class
likelihood cannot remove that best-case obstruction. A more elaborate score on
the same evidence design therefore cannot cross the boundary without changing
the target claim, the certificate class or its assumptions. Exact all-success
enumeration and operating-characteristic denominators are reported in
Supplementary Tables~\ref{tab:sr_n16_enumeration}
and~\ref{tab:sr_finite_oc}.

An independent enumeration verified exact coverage over all finite-support
campaign--pilot pairs. The rule never activated in 7,442 sequential campaigns,
and every prespecified look remained below zero (Supplementary
Figs.~\ref{fig:finite_campaign_diagnostics}, A and B, and
\ref{fig:finite_campaign_values}, A).
Across 104,188 selected and oracle campaign--look values, oracle gross gain
never exceeded 0.05. In contrast, campaign-normalized information cost rose
from 0.06875 at \(q=2\) to 0.44375 at \(q=14\)
(Supplementary Figs.~\ref{fig:finite_campaign_values}, B;
\ref{fig:ed_coverage}; and~\ref{fig:ed_finite_economics}). Full-state and
deployable opportunity were positive in 5,310 and 3,664 final-look campaigns,
respectively, leaving 1,646 campaigns with positive opportunity that the
prespecified evidence could not deploy (Supplementary
Fig.~\ref{fig:finite_campaign_values}, A, inset). Adaptive opportunity can therefore exist
even when a small campaign cannot justify the required claim about its
unobserved complement. Crossing this boundary requires more informative
evidence, a different target claim or additional justified structure.

\FloatBarrier
\subsection*{Simulation shows that risk control can preserve a selective branch}\label{sec:r4_risk}
Risk control is useful only if it can preserve selective action when evidence
warrants it. A held-out mechanism simulation tested this property independently
of the biological archives. A NULL-only tuning set selected
\(\tau=0.0180806\), which was fixed and evaluated once on an independent risk
lock (Supplementary
Fig.~\ref{fig:ed_d6_calibration_heterogeneity}, A and B).

On 900 previously unseen mechanism fixtures, the gate activated none of 400
NULL fixtures. Its one-sided 99.5\% false-activation upper bound was 1.316\%,
below the prespecified 5\% limit. It also activated 342 of 500 positive fixtures,
giving 68.4\% sensitivity (95\% confidence interval, 64.1--72.5\%,
Supplementary Fig.~\ref{fig:risk_validation}, A to C). In this population, false-activation
control preserved a substantial active branch rather than defaulting to the
fallback everywhere.

The gate changed the failure profile without a detectable increase in mean
value over forced activation. Forced activation was harmful in 13.6\% of
positive fixtures. Gating removed that negative tail while declining some
small positive effects. The paired mean difference was
\(8.60\times10^{-5}\) (retrospective 95\% interval,
\(-6.78\times10^{-4}\) to \(8.75\times10^{-4}\),
Supplementary Fig.~\ref{fig:risk_validation}, D). Post-result threshold ablations increased
sensitivity, but the most permissive rule exceeded the 5\% risk limit
(Supplementary Fig.~\ref{fig:ed_d6_comparators} and Supplementary
Table~\ref{tab:sr_d6_comparator_battery}; primary results and value
decomposition are in Supplementary Tables~\ref{tab:sr_study3_results}
and~\ref{tab:sr_d6_value}; the complete prespecified validation summary is
in Supplementary Table~\ref{tab:ed_d6}). These descriptive comparisons
trace the trade-off between sensitivity and NULL activation. The held-out
result belongs to the frozen threshold.

The campaign score \(S^{\mathrm{eval}}\) used common-random-number potential
outcomes and served as an evaluation statistic rather than a decision-time
score. These results establish nontrivial risk-controlled behavior under
constructed conditions. The separate Causal Chambers analysis tested whether
an archive-backed proxy could be computed from measured trajectories
(Supplementary Section~\ref{sec:sr_study4p}; archive and loss summaries in
Supplementary Tables~\ref{tab:sr_study4p_datasets}
and~\ref{tab:sr_study4p_loss}).

\FloatBarrier
\subsection*{LINCS--LJP and CTRP resolve broad acquisition and fallback}
Cell Painting resolved a sparse candidate without changing the fixed plan. The
two external archives asked whether the same held-out decision layer would
instead support broad measurement or fallback. They resolved those states from
measured biological data.

\subsubsection*{LINCS--LJP indicates broad acquisition in the observed setting}
\label{sec:r7_lincs}

LINCS--LJP supplied a temporally ordered transcriptomic measurement decision
using public LINCS L1000 profiles and matched long-term perturbation outcomes
~\cite{subramanian2017l1000,niepel2017lincs}. A 3-h profile supplied the
decision-time features, a 24-h profile was the optional acquisition and a 72-h
growth-rate response supplied the payoff-relevant endpoint. Complete matching
yielded 4,049 complete-case episodes from 105 perturbation-identity components. Components, rather
than individual episodes, were assigned to disjoint LEARN, CALIBRATION,
CERTIFICATION and FINAL partitions of 42, 21, 21 and 21 components,
respectively. Thus, no perturbation component crossed an outcome-evaluation
boundary. All component-equal summaries gave every perturbation component the
same weight regardless of its number of episodes (Supplementary
Section~\ref{sec:sr_lincs}).

The initially issued confidence-bound workflow produced no eligible
calibration threshold. After CALIBRATION, FDP acceptance was relaxed from a
95\% upper bound no greater than 35\% to a component-equal point estimate no
greater than 45\% before CERTIFICATION. The original 35\% limits had markedly
different finite-sample resolution across
the two archives, whose inferential units and bounds also differed: the Cell
Painting Clopper--Pearson rule admitted at most 188 false activations among 595
selections (31.60\%), whereas the LINCS KL--Hoeffding rule over 21 equal-weight components
required a component-equal point FDP no greater than 11.675\%. Under the
amended criterion, calibration established a fixed threshold of \(-0.06\) from
29 candidates. The selected rule acquired the 24-h profile for 706
calibration episodes across all 21 components, corresponding to a
component-equal activation fraction of 94.07\%. Its component-equal
false-discovery proportion (FDP) was 42.67\%, and its normalized mean net value per episode after the
prespecified acquisition cost was 0.03704. CERTIFICATION met the point-FDP,
support and technical screens but not the original positive value-LCB
criterion. With 21 components and the prespecified four-unit range,
the one-sided Hoeffding penalty was 1.068, so a positive lower bound required a
component-equal mean above 1.068; the observed certification mean was 0.04358.
Value acceptance was then relaxed from a positive lower bound to a positive
component-equal mean before FINAL, while all support and influence screens were
retained. Accordingly, CALIBRATION and CERTIFICATION contributed to rule
development, and only FINAL evaluated the complete amended rule. The 95\% FDP
upper bounds and value lower bounds remained uncertainty summaries rather than
acceptance criteria.

The rule remained broadly active in the certification aggregate and the
untouched final partition (Fig.~\ref{fig:lincs_ljp}, A and B). It selected all 790 certification episodes
and all 860 FINAL episodes, with no unevaluable selected episode and all 21
components represented in each partition. The 94.07\% calibration activation
and 100\% FINAL activation were measured on different component-disjoint
partitions under the same frozen threshold; the rule itself did not change.
The component-equal FDP point
estimate was 35.55\% in certification and 39.92\% in FINAL, below the 45\%
criterion. Mean net value point estimates were positive in both partitions (0.04358 and 0.04295,
respectively). In FINAL, leave-one-technical-group-out values ranged from
0.03968 to 0.04615 across 12 cell-line--batch groups, and the largest absolute
group contribution share was 0.1647. The positive mean was therefore
distributed across the observed groups rather than concentrated in one of
them. Transport beyond the six cell lines and two batches remains untested
(Supplementary Table~\ref{tab:lincs_ljp_stage_results}).

\begin{figure}[!htbp]
\centering
\mainfigure{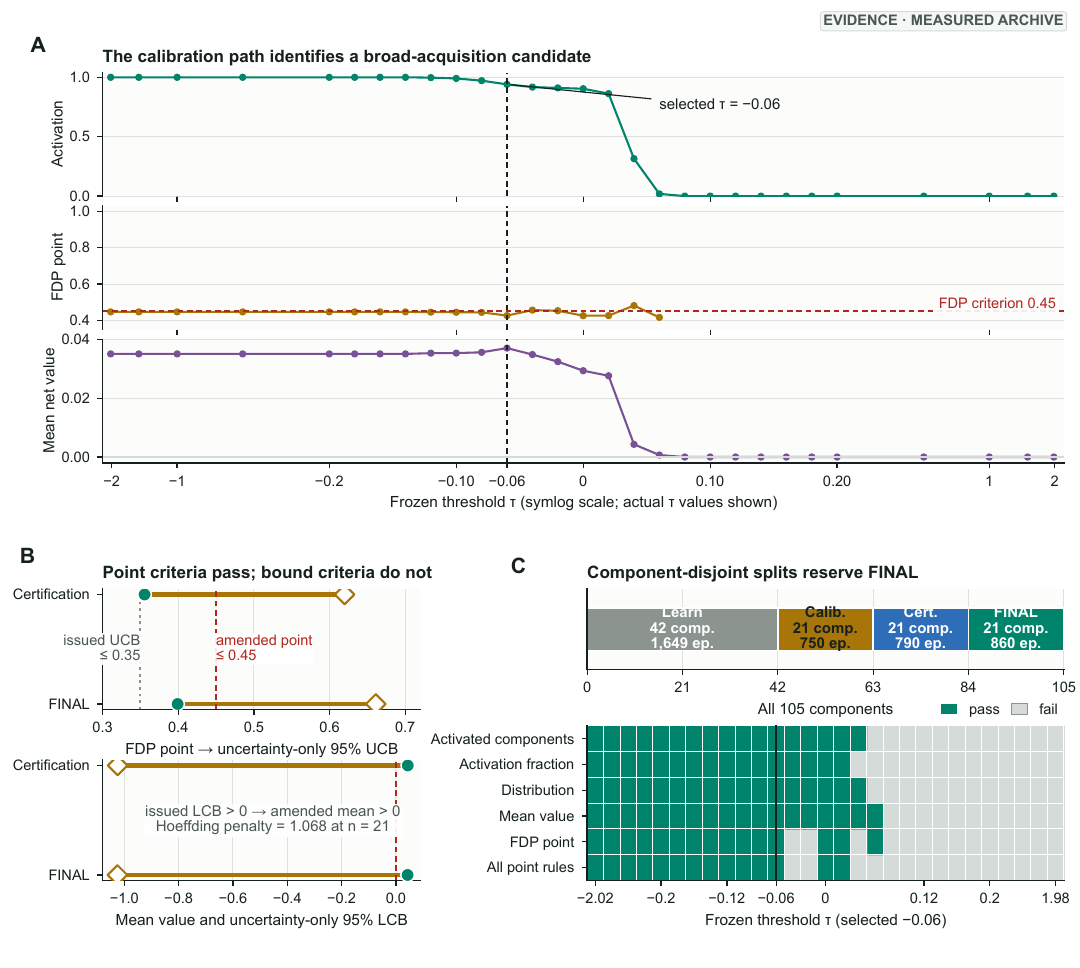}
\caption{\textbf{LINCS--LJP favors broad acquisition under development-set point criteria.} (\textbf{A}) All 29 calibration candidates plotted against their actual frozen threshold values on a symmetric-log axis, with activation, FDP and component-equal mean net value on native vertical scales. The black dashed vertical line marks the selected threshold, \(\tau=-0.06\), and the red dashed horizontal line marks the amended FDP point criterion, 0.45. FDP is omitted for the 11 candidates with zero activation because no FDP point estimate exists; their stored software sentinel is not plotted. Aggregate mean net value remains plotted at zero because it is defined over all episodes relative to fallback. (\textbf{B}) Component-equal certification and FINAL point estimates paired with uncertainty-only 95\% bounds. Circles mark the point statistics used by the complete amended rule; diamonds mark the original bound statistics. FDP point acceptance was fixed after CALIBRATION and before CERTIFICATION, and mean-value acceptance after CERTIFICATION and before FINAL. All 790 certification and 860 FINAL episodes were selected, and both mean-value point estimates were positive. The Hoeffding value bounds use the full four-unit prespecified range across 21 components, explaining their approximately \(-1.025\) endpoints; these and the FDP bounds remained uncertainty summaries. (\textbf{C}) Aggregate component-disjoint 42/21/21/21 split of all 105 components and 4,049 complete episodes, followed by five predicate rows and their all-point-rule conjunction for all 29 candidates; green and gray cells denote pass and fail.}
\label{fig:lincs_ljp}
\end{figure}

The uncertainty intervals were substantially wider than the acceptance region:
the 95\% FDP upper bound was 61.99\% in certification and 66.11\% in FINAL,
while the corresponding value lower bounds were negative. The Hoeffding value
bound uses the full prespecified range from \(-2.02\) to \(1.98\) across only 21
components, subtracting about 1.068 from each observed mean; this construction
explains why both lower bounds lie near \(-1.025\). The evidence
therefore justifies the stated point-estimate decision, not confidence-bound
control of FDP or a confidence-guaranteed positive value. The final rule also
acquired every eligible 24-h profile. Because it selected 790/790
CERTIFICATION episodes and 860/860 FINAL episodes, neither partition tests
within-partition ranking or selective savings. The positive mean point estimate
is therefore evidence for broad acquisition in this archive under the complete
amended rule, not evidence that a selective policy reduces measurement
burden.

\subsubsection*{CTRP falls back because the frozen score does not rank measured opportunity}
\label{sec:r6_ctrp}

CTRP v2 tested whether a frozen observable score could recover measured
opportunity in a family-disjoint archive of cancer-cell-line responses to
small-molecule perturbations
~\cite{seashoreludlow2015}. Before response access, we fixed a four-compound
panel and actions \(K\in\{1,2,4\}\), where \(K\) denotes the size of an offline
response bundle ranked from development data. Every eligible family had
outcomes for every compound, so each action value was a measured lookup rather
than an imputed counterfactual. Families were partitioned into development,
calibration and locked evaluation (Supplementary Table~\ref{tab:ctrp_actions} and Supplementary
Section~\ref{sec:sr_ctrp}).

The unconstrained threshold exceeded the 5\% NULL-risk limit, whereas
\(\tau=0.5\) passed calibration and was fixed before locked responses were
opened. Every locked score then fell below the threshold
(Fig.~\ref{fig:ctrp}, A). The final rule made
no false activations among the 244 NULL families in the 254-family locked set
(binding one-sided 99.5\% upper
bound, 2.15\%). However, it also activated none of the seven POSITIVE families
and had zero executed gain. More sharply, all seven POSITIVE families had a
frozen score of exactly zero, whereas every family with a nonzero frozen score
was NULL (Supplementary
Fig.~\ref{fig:ed_ctrp}, C to F).
The complete calibration and locked-evaluation results are reported in
Supplementary Table~\ref{tab:ctrp_locked_results}.

The absence of activation did not indicate an absence of pharmacological
opportunity. The four frozen compound structures and all 433 archive-reported
dose-level averages from 28 curves across the seven POSITIVE families are shown
in Fig.~\ref{fig:ctrp}, B and C. These complete measured dose--response
summaries showed mean panel-relative opportunity of 0.2054, whereas
the score--opportunity rank correlation was only \(9\times10^{-3}\)
(Supplementary Fig.~\ref{fig:ed_ctrp}, D to F). Recent analyses of cancer
drug-response benchmarks further emphasize the need to separate entities and
lock preprocessing and outcomes before evaluation
~\cite{asiaee2026cancerleakage}. Family-disjoint partitions, a frozen
four-compound panel and complete measured outcomes isolated the measurement
decision. The source-trained score
therefore failed to rank measured opportunity in the target strata. In this
archive, fallback was the indicated operational state.

\begin{figure}[!htbp]
\centering
\mainfigure{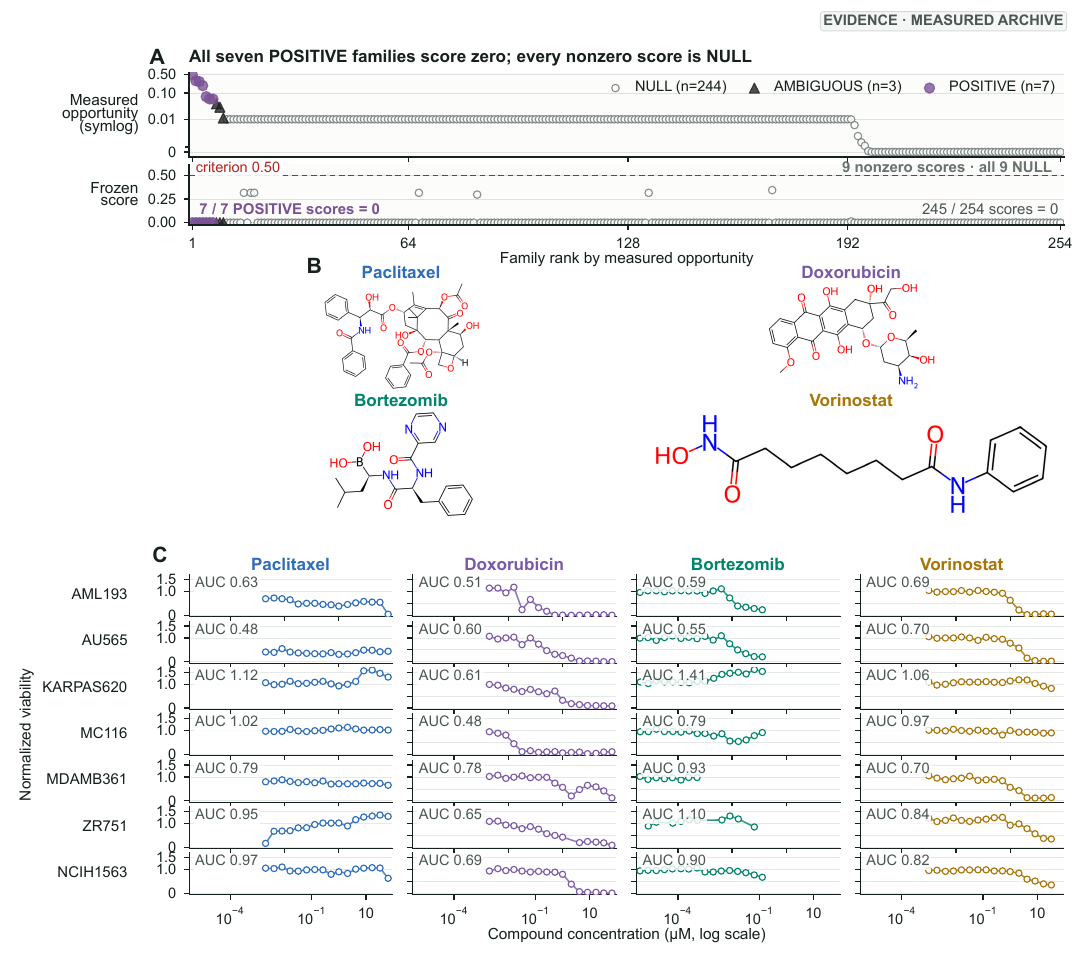}
\caption{\textbf{CTRP falls back because the frozen score does not rank measured opportunity.} (\textbf{A}) All 254 locked families ordered by measured opportunity, displayed on a symmetric-log vertical scale, with the frozen observable-only score aligned to the same family order. All seven POSITIVE families had score zero, whereas all nine families with a nonzero score were NULL; no score reached the fixed \(0.5\) threshold. (\textbf{B}) The four frozen compound structures, redrawn from staged SMILES in a two-by-two layout; compound colors are carried into panel C. (\textbf{C}) All 433 archive-reported dose-level averages from 28 compound--cell-line curves: four compounds by seven named POSITIVE families. Lines connect archive measurements as visual guides and are not fitted dose--response models; panel labels report normalized archive AUC values.}
\label{fig:ctrp}
\end{figure}

Together, the archive-specific evaluations resolve three empirical evidence
states with different experimental consequences rather than a common
performance ranking
(Table~\ref{tab:empirical_regimes}).

\begin{table}[!htbp]
\caption{Three measured archives resolve different evidence states and next-study implications.}
\label{tab:empirical_regimes}
\centering
\begin{threeparttable}
\footnotesize
\setlength{\tabcolsep}{3pt}
\renewcommand{\arraystretch}{1.10}
\begin{tabularx}{\textwidth}{@{}L{0.10\textwidth}L{0.12\textwidth}C{0.09\textwidth}L{0.18\textwidth}L{0.17\textwidth}Y@{}}
\toprule
Archive & Evaluation units & Selected & Acquisition-error result & Value result & State and next study \\
\midrule
Cell Painting & 11,265 compounds & 595 (5.28\%) & False-activation UCB 5.18\%; FDP 34.62\% (UCB 37.97\%) & Mean $1.948\times10^{-3}$; BCa LCB $1.229\times10^{-3}$ & Sparse candidate; revise and retest \\
LINCS--LJP & 860 FINAL episodes; 21 components & 860 (100\%) & Component-equal FDP 39.92\% (UCB 66.11\%) & Component-equal mean 0.04295; LCB $-1.02534$ & Broad acquisition; no selective saving \\
CTRP v2 & 254 families & 0 (0\%) & 0/244 NULL activated; 99.5\% UCB 2.15\% & Executed gain 0 & Fallback; improve score \\
\bottomrule
\end{tabularx}
\begin{tablenotes}[flushleft]
\item UCB denotes a one-sided 95\% upper confidence bound unless stated otherwise; LCB denotes lower confidence bound. ``Sparse candidate'' denotes the Cell Painting branch whose false-discovery uncertainty kept the fixed plan in place; its next test requires better active-set enrichment and a new, preferably block-disjoint, held-out evaluation. ``Broad acquisition'' denotes positive point-estimate value with all FINAL episodes selected; it does not imply selective measurement saving or confidence-bound control. ``Fallback'' denotes the locked CTRP rule, for which every score remained below threshold; a revised score must first demonstrate opportunity ranking. The amended LINCS--LJP point criteria used FDP $\leq0.45$ and mean value $>0$; its UCB and LCB were uncertainty summaries, not acceptance criteria. The FDP point criterion was set after CALIBRATION and before CERTIFICATION, and the value-mean criterion after CERTIFICATION and before FINAL; only FINAL evaluated the complete rule. Each state and implication is limited to its archive, endpoint, cost definition and split. Metrics are archive-specific and should not be compared as common biological effect sizes.
\end{tablenotes}
\end{threeparttable}
\end{table}

The formal and simulated studies answer a separate methodological question.
They define identification and finite-sample boundaries and show, under
constructed known-truth conditions, that risk control can preserve a
nontrivial active branch.

\FloatBarrier
\section*{DISCUSSION}\label{sec:discussion}

Across held-out final partitions from three biological archives, evaluation of frozen learned rules
resolved three evidence states. Cell Painting produced a lower-burden sparse
candidate with a positive mean and nominal compound-bootstrap lower endpoint,
but its false-discovery bound kept the fixed plan, and two sensitivity bounds
on value crossed zero. LINCS--LJP paired a positive mean point estimate with
acquisition of every final episode, indicating broad acquisition in the
observed setting rather than selective savings. CTRP fell back because
its frozen score did not rank measured opportunity. None of the three archive
evaluations therefore justified a measurement-saving selective policy.
Instead, the states imply different next experiments: improve and re-evaluate
the Cell Painting candidate, budget broad LINCS--LJP acquisition in the
observed setting, or revise the CTRP score before testing another selector.

Cell Painting shows why value and an evidence-qualified plan change can
diverge. Aggregate
expected value rewards gains accumulated across every selected compound,
whereas measurement burden accumulates with each added well. The highest-value
rule assigned 21,630 additional wells and re-imaged 96.01\% of the library. By
contrast, the sparse candidate assigned 1,190 wells, an 18.2-fold lower burden,
and met the nominal compound-bootstrap value criterion. The highest-value rule also had a 97.14\%
false-activation upper bound, while no competing active rule fell below a
37.94\% bound. Held-out qualification is therefore a different experimental target
from maximizing aggregate value.

Cell Painting was a confirmatory test of a development-selected near-boundary
rule, not an unexpected failure: none of 53,792 threshold combinations met all
seven development screens, and the selected rule entered final evaluation with
an FDP upper bound of 35.11\% against the 35\% screen. However, lower burden and
value positive under the nominal compound-bootstrap criterion were not
sufficient to change the fixed plan. The Cell Painting FDP point estimate was 34.62\%, below the 35\%
target, but its one-sided 95\% upper bound reached 37.97\%. At 595 selections,
the study would have needed at most 188 false activations, meaning that at
least 68.40\% of selections had to be non-false; 65.38\% were non-false under
the prespecified completion. At the same error rate, roughly 43,000 activations would be required before
the bound crossed 35\%. The priority is therefore a better-enriched active set
followed by a new held-out evaluation. Its 5.88\% sensitivity has the same
interpretation: the candidate identifies a high-confidence subset rather than
recovering most compounds with positive measured gain.

The other two archives imply different next experiments. LINCS--LJP points to
budgeting broad 24-h acquisition only within the evaluated endpoint, cost
definition, cell lines and batches; it provides no evidence of selective
measurement saving. In CTRP, the current frozen score should not alter the
fixed plan. The failure was sharper than a weak rank correlation: every
POSITIVE family had score zero, and every nonzero score belonged to a NULL
family. A revised score must first demonstrate opportunity ranking before
another selective acquisition rule is evaluated. These conclusions concern
archive-specific evidence and experimental consequences, not a common scale
of algorithmic success.

The plate-map analysis identifies a second level of Cell Painting evidence.
Compound-level resampling met the stated sensitivity and value criteria, but
resampling 80 assay blocks reduced the sensitivity lower endpoint from 5.41\%
to 4.86\%. The value endpoint remained positive under a pointwise percentile
analysis but became inconclusive under cluster BCa. Compounds and assay blocks
therefore answer different questions. More compounds from the existing layout
can improve precision within those blocks, but they cannot establish transport
to new blocks. That population claim requires block-disjoint development and
evaluation. Related Cell Painting benchmarks document technical variation
across plates, imaging runs, acquisition sites and instruments
~\cite{chandrasekaran2024jump,arevalo2024batch}.

A separate methodological layer explains when these archive-level decisions
can be justified. If the relationship between measurements and outcomes
changes across experimental settings, source outcomes and unlabeled target data
cannot establish whether the rule is suitable in the target setting without
transport assumptions. Target labels resolve that ambiguity, but a finite
campaign may still be too small. The exact finite-population boundary shows
when even an all-actionable pilot cannot justify a claim about its specific
unmeasured remainder. Target evidence addresses identification; experimental
design determines whether the intended use is certifiable.

The synthetic and reconstructed-execution studies establish complementary
method properties. The finite campaign isolates an evidence boundary that no
rule using the same pilot-count evidence, target claim, certificate class and
assumptions can cross. The held-out simulator shows that the risk gate can
preserve a nontrivial selective branch under constructed known-truth
conditions: it activated 342 of 500 positive fixtures and none of 400 NULL
fixtures. Together with the untouched LINCS--LJP FINAL result for broad
acquisition, this shows that the decision layer does not mechanically return
fallback. The complete joint rule has yet to support a measurement-saving
selective policy in measured biology, which now requires evaluation in a new
held-out archive or an instrument-silent prospective study.
The executed-value reconstruction shows
how comparator identity and information cost can reverse an apparent gain.
These are methodological results rather than additional biological acquisition
states.

Prospective adaptive screens provide the complementary success case. BATCHIE
explored approximately 4\% of 1.4 million possible drug-combination
experiments and validated prioritized combinations experimentally
~\cite{tosh2025batchie}. Its information-gain objective differs from the
authorization criteria studied here, but the comparison clarifies the next
evidence layer: a frozen acquisition rule should ultimately be tested through
instrument-executed decisions and measured experimental consequences.

Measurement burden also determines how long a policy remains valuable.
Across the five active Cell Painting rules, added-well burden fell from 22,530
to 1,190 as the zero-crossing cost rose from 0.02206 to 0.04320. Broad rules
lost value faster because each cost increase was charged for every selected
compound (Supplementary Table~\ref{tab:cpg_cost_crossovers}). LINCS--LJP
supplies the complementary empirical case: broad
acquisition yielded a positive component-equal mean point estimate after cost
in the final partition. The resulting design choices are concrete: improve
enrichment, acquire target labels, enlarge or redesign the campaign, acquire a
measurement broadly, or keep \(K_0\).

\OPAL{} complements methods that optimize, constrain or evaluate a
candidate decision rule after adaptive measurement has entered scope. It
addresses the earlier decision to depart from a fixed plan. The same decision
structure arises when a learned rule changes future data before its value is
observed, including high-throughput screens, beamline experiments and synthesis
campaigns. This conceptual scope is broader than the archive-based empirical
tests reported here. It also links experimental design to safe learning and
adaptation across settings. Data sufficient to learn a rule for a new setting are
not automatically sufficient to justify its use.

\paragraph{Study limitations.}
The three biological evaluations use recorded archives rather than prospective
instrument execution. Cell Painting provides one held-out compound partition,
and the block analysis does not establish transport to new assay blocks.
Its utility measures replicate-profile agreement after cost, not hit efficacy,
mechanism-of-action recovery or a biologically calibrated effect; the
\(5\times10^{-3}\) POSITIVE margin is study-specific. The displayed microscopy
examples are descriptive single-site fields, and the influence of the retained
ring-like object on the archive-supplied per-well profile was not separately
evaluated.
LINCS--LJP covers six cell lines and two batches; its grouping diagnostics do
not establish generalization to new cell lines, batches or perturbation
libraries. Its FDP criterion was amended after calibration and before
certification, and its value criterion after certification and before FINAL;
both amendments relaxed bound-based acceptance to point-estimate acceptance.
Only FINAL was untouched under the complete amended rule. Its decision criteria concern point
estimates, while the reported
FDP and value uncertainty bounds crossed 45\% and zero. CTRP evaluates an
author-defined response bundle, and its seven POSITIVE families provide
limited positive-stratum evidence. The Causal Chambers analysis
~\cite{gamella2025} shows that a measurement-rule computation can be
reconstructed from recorded trajectories (Supplementary
Fig.~\ref{fig:ed_constructibility}); it is constructibility evidence rather
than prospective validation. Clinical AI uses an analogous prospective staging
step: silent trials evaluate a model noninterventionally in its intended
setting before its outputs affect care~\cite{tikhomirov2026silent}. An
instrument-silent study could likewise log frozen acquisition decisions
alongside the fixed plan before allowing real-time experimental control.
Future studies should test frozen rules with instrument-executed actions,
block-disjoint data and measured facility costs.
Within these boundaries, \OPAL{} turns predicted acquisition value into an
empirical decision about whether a fixed plan should change. We conclude that
such a change requires evidence from the intended setting, or justified transport
assumptions, rather than an optimizer's recommendation alone.

\section*{MATERIALS AND METHODS}\label{sec:methods}

\subsection*{Study design and acquisition decision}\label{sec:m_problem}

The study separates archive-level evidence states from methodological
analyses. Cell Painting, LINCS--LJP and CTRP determine the state resolved in
each measured archive. The formal results, reconstructed-execution analyses
and simulations evaluate identification, finite-sample constraints and
nontrivial risk control rather than adding further biological states.
Supplementary Table~\ref{tab:notation} defines the core notation and the
evidentiary role of each quantity.

\OPAL{} separates two decisions that are usually combined: which experimental
action a learned rule would take, and whether target evidence justifies
departing from a fixed plan. In the formal development below, a policy is
simply a prespecified mapping from information available at assignment to one
of the allowed experimental actions. Decision unit \(i\) has precommitment information
\(X_i\), an optional probe \(Z_{ir}\), a payoff-relevant state \(M_i\) and an
action \(K\in\mathcal K\). Utility

\[
U_i(K)=Y_i(K)-C_i^{\mathrm{op}}(K)
\]

places scientific return and operational cost on one prespecified scale. The
executed fixed action \(K_0\) is the comparator and fallback. All inputs to the
rule must be available when the decision is made. Potential outcomes and oracle
mechanism variables are used only for evaluation. Calibration and evaluation
use the same scoring function, action semantics and decision-time information,
which we call deployment isomorphism (Supplementary
Section~\ref{sec:deployment_isomorphism}). In this paper, ``prespecified'',
``frozen'' and ``locked'' refer to internally timestamped study artifacts fixed
before the corresponding outcome stage, not to registration in an external
public registry. ``Registry'' denotes an internal record set, not an external
registration service.

For information set \(\mathcal I\), define

\[
V(\mathcal I)=
\mathbb E\!\left[\max_{k\in\mathcal K}
\mathbb E\{U(k)\mid\mathcal I\}\right],
\qquad
V_0=\max_{k\in\mathcal K}\mathbb E\{U(k)\},
\qquad
G(\mathcal I)=V(\mathcal I)-V_0.
\]

We report \(G_X=G\{\sigma(X)\}\),
\(G_{XZ}=G\{\sigma(X,Z_r)\}\) and simulator-only
\(G_M=G\{\sigma(M)\}\). Under payoff sufficiency,
\(\mathbb E\{U(k)\mid M,X,Z_r\}=\mathbb E\{U(k)\mid M\}\), these quantities
satisfy \(0\leq G_X\leq G_{XZ}\leq G_M\). Thus, decision-time information
cannot create opportunity absent from the payoff-relevant state
\cite{blackwell1953}. Complete conditions and proofs are given in Supplementary
Proposition~\ref{prop:opportunity}. Probe value is the increase in optimal
conditional utility after observing \(Z_r\), less its resource and latency
costs. A probe is acquired only when a frozen action-change model and a
net-gain lower bound both pass (Supplementary
Proposition~\ref{prop:option}).

Within the candidate active branch, a cross-fitted value model estimates the
best action and its paired uncertainty relative to a fold-specific scoring
reference. A finite, outcome-blind threshold family controls how much evidence
is required to depart from that reference. The complete family, tie rules,
residual-expert architecture and cross-fitting procedure are specified in
Supplementary Section~\ref{sec:architecture}. Frontier analyses report every
fixed threshold rather than selecting a winner from observed outcomes. The
nearest-method comparison contract, including which quantitative comparisons
share the required data and action semantics, is in Supplementary
Table~\ref{tab:authorization_comparator_contract}.

Authorization is evaluated for the composite policy that applies the active
action when the frozen gate \(A(X)=1\) and otherwise executes \(K_0\). If
\(\Gamma(X)=U\{a_1(X)\}-U(K_0)\), its executed value relative to the fixed plan
is \(\mu=\mathbb E\{A(X)\Gamma(X)\}\). The basic contract requires an upper
bound on NULL false activation to be below its limit, a lower bound on
\(\mu\) to be positive and a prespecified lower bound on non-triviality to
exceed its target. Study-specific contracts may add active-set error,
coverage and missingness guards. The decision is an intersection--union test:
all components must pass, and an empty or failed contract returns \(K_0\).
This excludes always-fallback from satisfying the positive-value and
non-triviality claims while controlling false authorization of the conjunction
at the largest component level, conditional on development and calibration
(Supplementary Proposition~\ref{prop:joint_authorization}).

Learn-then-Test and conformal risk-control procedures can calibrate a frozen
family of candidate rules against a population-risk criterion
\cite{angelopoulos2025,angelopoulos2024crc}. \OPAL{} uses that logic where it
applies, but the authorization target is not a prediction set or one scalar
risk. It is a joint action-space contract that also requires positive executed
value relative to the actual fallback and a nontrivial active branch; the
finite-campaign analysis additionally concerns the specific unobserved
deployment complement. The intersection--union decision controls false
authorization of this conjunction at the component level. Simultaneous
interval coverage across a displayed candidate family is a separate
inferential requirement. In Cell Painting, the probability, gain, and
out-of-distribution thresholds were combined into one rule during development
and frozen before final evaluation, rather than selected as multiple final-set
hypotheses. A single scalarized Bayes objective would instead impose an
exchange rate among avoidable acquisition, coverage, and scientific value; we
treat these quantities as separate criteria because that exchange rate is a
decision-maker choice rather than an empirical identity.

\subsection*{Acquisition statistics and finite-campaign inference}\label{sec:m_boundaries}

Authorization depends on target outcomes in the region selected by the frozen
gate. Source outcomes and unlabeled target covariates do not identify the
relevant target risk and value under unrestricted conditional outcome shift.
Supplementary Proposition~\ref{prop:target_shift} states and proves this
identification boundary; the target-labeled recovery design is given in
Supplementary Section~\ref{sec:target_calibrated_recovery}.

For a campaign of \(N\) units, a simple random pilot \(P_q\) has size \(q\) and
the policy acts on \(D_q=\{1,\ldots,N\}\setminus P_q\). The costed campaign
contrast is

\begin{equation}\label{eq:delta_campaign}
\Delta_{\mathrm{camp}}(q,I_q)
=\frac{1}{N}\left[
\sum_{i\in D_q}\{U_i(\pi_{I_q})-U_i(K_0)\}
-C_{\mathrm{info,total}}(q)
\right].
\end{equation}

Pilot outcomes, the value of the specific deployment complement and a
superpopulation mean are distinct estimands. We invert the exact
multivariate-hypergeometric likelihood to obtain a simultaneous confidence set
for the finite composition and observation-channel nuisance, then switch only
when the least-favorable value of Eq.~\eqref{eq:delta_campaign} is positive
(Supplementary Section~\ref{sec:finite_algorithm}). The binary noiseless
calculation evaluates \(h(N,q,\rho)\) and \(q^\dagger(N,\rho,\alpha)\) as
defined in Supplementary Proposition~\ref{prop:impossibility}. Its proof,
finite-\(N\) qualification surface and large-\(N\) limit are given there;
prespecified deployment-reuse scenarios are listed in Supplementary
Table~\ref{tab:h_scenarios}.

\subsection*{Measured-data archives and statistical analysis}\label{sec:m_empirical}

\paragraph{Cell Painting target evaluation.}
The flagship study used public cpg0012 Cell Painting profiles
\cite{wawer2014profiling,bray2017cellpaintingdata}. A design-only reconstruction
identified compounds and planned wells before profile access, producing 25,013
eligible compound clusters. Each cluster supplied a deterministically assigned
initial replicate, two gate-contingent additional replicates and one held-out
verifier. Utility was half-scaled cosine similarity to the verifier, and net
gain subtracted a normalized two-well cost \(c=0.02\). NULL, AMBIGUOUS and
POSITIVE corresponded to net gain \(\leq0\), between 0 and
\(5\times10^{-3}\), and \(\geq5\times10^{-3}\), respectively. Decision-time
inputs were initial morphology, stateless hashed structure features and
pre-imaging plate metadata.

This utility quantifies technical replicate-profile agreement relevant to
estimating a compound's morphological profile. It does not measure hit
confirmation, mechanism of action or biological efficacy. The
\(5\times10^{-3}\) POSITIVE margin was an author-set operational threshold, not
a biologically calibrated effect size.

We used 13,748 labeled target compounds to develop the multi-head ExtraTrees
model and gate. Development evaluated 53,792 candidate threshold combinations.
None met all seven development criteria. A total of 12,673 met the six non-FDP criteria,
and the unique rule with the smallest FDP-bound excess among them was frozen
before final access (development FDP UCB, 35.1117\%). The frozen rule required predicted NULL probability
\(\leq0.395\), predicted POSITIVE probability \(\geq0.633333\), predicted gain
\(\geq0.053625\) and OOD percentile \(\leq1\). It was applied to 11,265 final
base profiles and activated 595 compounds. Assignments, model, configuration
and evaluator were fixed before the additional and verifier profiles were
opened. The full protocol, split history and failure completions are
given in Supplementary Section~\ref{sec:sr_cpg0012}.

The empirical OOD percentile was scaled to \([0,1]\). Its fixed ceiling of 1.0
passed all 11,265 final units and was therefore nonbinding; the other three
thresholds determined the 595 activations. It should not be interpreted as a
final-partition OOD restriction.

The seven-part contract required the one-sided 95\% false-activation and
false-discovery Clopper--Pearson upper bounds to be at most 7.5\% and 35\%, the
one-sided sensitivity lower bound to be at least 5\%, a positive one-sided 95\%
BCa value lower endpoint, at least 100 activations, at least 5\% coverage and
at most 5\% final-outcome missingness. Least-favorable completion was defined
separately for each endpoint. The observed counts were false activation
206/4,453 and sensitivity 384/6,534. The all-compound worst-case mean gain was
\(1.948\times10^{-3}\). Binomial bounds have their exact conditional
interpretation under independent common-probability Bernoulli units. The value
endpoint used 20,000 compound resamples and is nominal under an iid
canonical-compound model \cite{clopper1934,efron1987bca}.
Error levels were study-specific and fixed within each operational contract:
Cell Painting used one-sided 95\% endpoints, the finite-campaign study used its
declared Bonferroni allocation, and the simulator and CTRP used 99.5\% risk
bounds. These levels are not a shared cross-archive evidence scale.
The development-only assurance scenario, target-opening chain and exact
conditional NULL-risk boundary are documented in Supplementary
Tables~\ref{tab:target_power_plan},
\ref{tab:target_calibrated_stages} and~\ref{tab:target_cp_boundary}.

We tested layout dependence post hoc by holding assignments and endpoint
completions fixed and resampling the 80 non-overlapping plate-map blocks, each
of which contains the four physical plates spanned by a compound's roles.
Twenty thousand shared-index block resamples produced pointwise one-sided 95\%
percentile endpoints. Endpoint-specific paired resamples produced a secondary
BCa sensitivity. Development and final compounds shared all 80 blocks, so this
analysis assesses dependence within the observed layout and does not establish
transport to new blocks. Seeds, block definitions and complete results are in
Supplementary Section~\ref{sec:sr_cpg0012}.

All six comparator assignments were fixed before final outcomes. A descriptive
risk-budget sweep changed only the admissible false-activation upper-bound
budget. A separate fixed-assignment cost analysis held the \(c=0.02\) truth
strata, assignments and thresholds fixed, changed only the two-well charge and
recomputed the mean and 20,000-resample nominal BCa endpoint. These analyses do
not reselect a rule at a new risk budget or cost (Supplementary
Section~\ref{sec:cpg_cost_sensitivity_methods}). The 7.5\% ceiling was an
internal study-specific tolerance introduced during tuning and frozen before
final outcomes. It was not derived from biological efficacy, monetary value or
a facility-capacity model.

The CTRP v2 analysis used measured dose--response summaries
\cite{seashoreludlow2015}. A metadata-only design step selected four compounds
before response access and included cell-line families with complete outcomes,
giving 125 development, 244 calibration and 254 held-out families. Observable
inputs were restricted to pre-response cell-line and assay metadata. Actions
selected the top \(K\in\{1,2,4\}\) compounds under a development-trained ranker.
Value was the best measured normalized response in the selected bundle minus
its prespecified cost. Complete panel coverage made each action value a
measured lookup rather than an imputed counterfactual. Counts are exact for the
fixed archive. Clopper--Pearson bounds are secondary repeated-sampling summaries
under independent common-probability Bernoulli family indicators. The complete
design is reported in Supplementary Section~\ref{sec:sr_ctrp}; the post-lock
full-information comparator reanalysis is separated in Supplementary
Table~\ref{tab:ctrp_retro_comparators}.

The LINCS--LJP analysis linked public L1000 perturbation profiles to matched
longer-term growth-rate responses
\cite{subramanian2017l1000,niepel2017lincs}. Each complete episode contained
978 landmark-gene features at 3 h, the corresponding optional 24-h profile and
a bounded 72-h growth-rate endpoint. The 4,049 complete-case episodes belonged to
105 perturbation-identity components. We assigned whole components to LEARN,
CALIBRATION, CERTIFICATION and FINAL in a 42/21/21/21 split so that episodes
from the same component could not cross partitions. Within LEARN, nested
component-disjoint cross-fitting trained deterministic ridge models for the
3-h-only and 3-h-plus-24-h prediction paths. Episode value was the reduction in
absolute error for the bounded 72-h endpoint after subtracting the fixed
acquisition cost \(c=0.02\).

For reproducibility, after the independently executed evaluation was complete,
we reconstructed a publication episode index and component partition map from
the public source identifiers and frozen matching and partition rules. These
indexes contain source metadata and the fixed partition assignments, but no
growth-rate outcomes, model predictions or held-out statistics. Their
construction did not repeat model fitting, rule selection or evaluation.

Calibration evaluated 29 fixed score thresholds. For each threshold, metrics
were first calculated within component and then averaged equally across
components. The selected threshold \(-0.06\) acquired 706 calibration episodes
and passed the support, activation, action-distribution and positive-value
screens. The initial FDP-UCB rule (UCB no greater than 0.35) was not met in
CALIBRATION and was relaxed to a component-equal FDP point estimate no greater
than 0.45 before CERTIFICATION. CERTIFICATION met the revised FDP rule but not
the value-LCB-above-zero rule. The value criterion was then relaxed using the
certification aggregate to component-equal mean value above zero. Accordingly,
LEARN, CALIBRATION and CERTIFICATION were development data. The complete final
rule therefore required FDP point estimate no greater than 0.45, positive mean
value, component-equal activation fraction greater than
0.5395375, all accepted metrics to be finite, representation of at least 14
components, and action support across both batches, all six cell lines and at
least eight of the 12 observed cell-line--batch groups. The influence screen
also required positive value after removing each group and a maximum absolute
group-contribution share no greater than 0.50. The support constants were
author-set operational floors rather than biologically calibrated thresholds.
This complete rule was fixed before FINAL
outcomes were opened; accordingly, FINAL is the sole untouched evaluation of
the complete amended rule.
The component-equal 95\% FDP upper bound and value lower bound were reported as
uncertainty summaries and were not acceptance statistics. Leave-one-group-out
values and contribution shares assessed dominance among the 12 observed
technical groups, not transport beyond them. Supplementary
Section~\ref{sec:sr_lincs} gives the complete matching, modeling,
thresholding and aggregation procedure.

\subsection*{Mechanism, constructibility and operational analyses}\label{sec:m_mechanism}

Three author-generated simulators isolate information opportunity, finite
campaign evidence and held-out risk calibration. They draw continuous
mechanism axes and categorical capacity demand from declared stress envelopes,
not estimated facility distributions. Models and fixed comparators were fitted
out of fold, candidate policies shared common random numbers within campaigns,
and inference was aggregated to the declared campaign or fixture unit.
Finite-policy comparisons used a common resample plan and multiplier/max-
\(t\) critical value. Sequential finite-campaign looks used simultaneous
coverage \cite{chernozhukov2018,chernozhukov2013,howard2021}. Exact populations,
models, allocations and seeds are given in Supplementary Sections
\ref{sec:architecture}--\ref{sec:study3_design}. The continuous mechanism
axes and prespecified calibration design are tabulated in Supplementary
Tables~\ref{tab:axes} and~\ref{tab:study3_design}.

The remaining studies use facility documents and Causal Chambers trajectories
to test scheduling, reuse and post-selection constructibility mechanisms. They
do not supply instrument-executed interventions or an identified physical
measurement-rule value.
Supplementary Table~\ref{tab:data_sources} records each source, inferential
unit and evidentiary role. Operational value includes declared reservation,
setup, idle, cancellation, spot, delay and information charges. Lifecycle
analyses amortize shared training and characterization cost over a specified
number of deployments. Full ledgers and adoption boundaries are in
Supplementary Section~\ref{sec:cost}.

\section*{SUPPLEMENTARY MATERIALS}
\noindent Supplementary Text; Materials and Methods; Figs. S1 to S13; Tables S1
to S32; and publication-facing aggregate source tables for the external-data
figure.

\bibliographystyle{sciencemag}
\bibliography{references}

\section*{ACKNOWLEDGMENTS}

We are grateful to colleagues at Diamond Light Source for discussions that
helped frame the motivating operational problem: when a learned selection rule
should be allowed to alter a fixed measurement plan if instrument
configurations, staffing and capacity must be committed before all
decision-relevant evidence becomes available. We thank Yufei Wang (School of
Electronics and Information, Northwestern Polytechnical University, Xi'an
710072, China; \href{mailto:yufei_wang@mail.nwpu.edu.cn}{yufei\_wang@mail.nwpu.edu.cn})
for independently executing the LINCS--LJP analysis on a separately controlled
host and returning the aggregate outputs used here. The authors specified the
scientific question and analysis protocol, set each acceptance criterion before
the corresponding held-out partition was opened, and interpreted the results.

\noindent\textbf{AI-assisted technologies:} AI-assisted tools were used during
manuscript preparation for language editing, proofreading and quality-control
support. They did not generate or modify primary data, perform the reported
numerical analyses, select models or decision rules, or create images or
multimedia. The authors verified all AI-assisted output and take responsibility
for the submitted work.

\noindent\textbf{Funding:} This work was supported by the CNPC Innovation Fund
(grant no. 2024DQ02-0501), the Royal Society (grant no. IECNSFC233444),
Intelligent Manufacturing Longcheng Laboratory (grant no. CJ20254004) and the
Youth Science and Technology Talent Promotion Project of Jiangsu Province
(grant no. JSTJ-2025-137).

\noindent\textbf{Author contributions:} J.B., S.P. and C.Z. conceived the
study, specified the scientific questions and acceptance criteria, and
interpreted the results. J.B. developed the methods and software, performed
the Cell Painting, CTRP and simulation analyses, integrated the LINCS--LJP
aggregate outputs, and drafted the manuscript. C.Z. supervised the work. All
authors reviewed and revised the manuscript.

\noindent\textbf{Competing interests:} The authors declare no competing
interests.

\noindent\textbf{Data, code and materials availability:} The cpg0012 profiles and
raw microscopy are available from the Cell Painting Gallery collection
\texttt{cpg0012-wawer-bioactivecompoundprofiling}
~\cite{wawer2014profiling,bray2017cellpaintingdata}. LINCS L1000 profiles are
available from GEO accession
\href{https://www.ncbi.nlm.nih.gov/geo/query/acc.cgi?acc=GSE70138}{GSE70138},
and the matched growth-rate resource is described and distributed with Niepel
\emph{et al.}~\cite{niepel2017lincs}. CTRP v2 dose--response data are available
from the \href{https://portals.broadinstitute.org/ctrp.v2.1/}{Broad Institute
CTRP portal} and the \href{https://ocg.cancer.gov/programs/ctd2/data-portal}{NCI
CTD\({}^{2}\) Data Portal}~\cite{seashoreludlow2015}. The simulation studies use
author-generated synthetic campaign ledgers. Author-generated source tables,
aggregate evidence, figure-building materials and reproducibility records are
archived at
\href{https://doi.org/10.5281/zenodo.21859443}{doi: 10.5281/zenodo.21859443}.
The cited record contains all 29 LINCS--LJP calibration candidates,
certification and FINAL aggregate results, the 4,049-row episode index, the
105-component identity and partition map, source manifests, and the
deterministic frame-construction and analysis pipeline. The episode and
component indexes contain public source identifiers, metadata and frozen
partition assignments; they do not redistribute the underlying L1000
expression profiles, growth-response values, model predictions or held-out
statistics. The LINCS--LJP evaluation was executed independently on a separate
host, and the manuscript authors originally received aggregate outputs only.
After that evaluation was complete, the two publication indexes were
reconstructed from the cited public metadata and frozen matching and partition
rules solely to support reproducibility; model fitting, rule selection and
outcome aggregation were not repeated. The deposited aggregates reproduce
Fig.~\ref{fig:lincs_ljp} and its numerical summaries. Full episode-level
reanalysis requires reacquiring the cited third-party inputs and rerunning the
deposited pipeline. Code for
\OPAL{}, the simulation studies, CTRP and Cell Painting analyses is publicly
accessible for inspection at
\href{https://github.com/JIABI/mard_autonomous}{github.com/JIABI/mard\_autonomous};
the latest manuscript source, figure-building code and post-hoc score-sweep
tables accompany this submission. Author-generated data tables and
documentation in the Zenodo record are released under CC BY 4.0.
Author-generated software in the submission package is released under the MIT
License; the public repository is released under the same MIT License.
Third-party materials retain their original licenses. No physical materials
were generated in this work, and no patient or human-participant data were used.

\clearpage

\clearpage
\hypersetup{pageanchor=false}
\setcounter{secnumdepth}{3}
\setcounter{tocdepth}{2}
\renewcommand{\thesection}{S\arabic{section}}
\renewcommand{\thesubsection}{\thesection.\arabic{subsection}}
\renewcommand{\thesubsubsection}{\thesubsection.\arabic{subsubsection}}
\captionsetup{labelsep=period,hypcap=false}
\renewcommand{\topfraction}{0.95}
\renewcommand{\bottomfraction}{0.95}
\renewcommand{\textfraction}{0.06}
\renewcommand{\floatpagefraction}{0.80}
\setlength{\textfloatsep}{16pt plus 3pt minus 3pt}
\setlength{\floatsep}{14pt plus 3pt minus 3pt}

\begin{center}
{\LARGE\bfseries Supplementary Materials for\par}
\vspace{0.5em}
{\Large Held-out evidence resolves follow-up measurement decisions in biological screens\par}
\vspace{1.0em}
{\large Jia Bi$^{*}$, Samuel Pinilla, Chenyang Zhu$^{*}$\par}
\vspace{0.8em}
{\small $^{*}$Corresponding authors. Email: jia.bi@stfc.ac.uk; c.zhu@soton.ac.uk\par}
\end{center}

\vspace{1.5em}
\noindent\textbf{This PDF file includes:}
\begin{itemize}
  \item Supplementary Results
  \item Supplementary Methods
  \item Figs. S1 to S13
  \item Tables S1 to S32
  \item References cited in the manuscript reference list
\end{itemize}
\newpage

\begingroup
\small
\tableofcontents
\clearpage
\listoffigures
\listoftables
\endgroup

\renewcommand{\thefigure}{S\arabic{figure}}
\renewcommand{\thetable}{S\arabic{table}}
\renewcommand{\theequation}{S\arabic{equation}}
\setcounter{figure}{0}
\setcounter{table}{0}
\setcounter{equation}{0}

\section{SUPPLEMENTARY RESULTS}\label{sec:supp_results}

Three biological archives resolved three evidence states with different
experimental consequences. In Cell Painting, a lower-burden sparse candidate
had positive executed value under the compound-level analysis, but
false-discovery uncertainty kept the fixed plan in place. In LINCS--LJP, positive
mean point-estimate value accompanied acquisition of every final episode,
indicating broad 24-h profiling within the observed archive rather than
selective measurement saving. In CTRP, the frozen score did not rank measured
panel-relative opportunity, and every locked family fell back.
Together, these outcomes form a decision map rather than a performance
ranking.

A separate evidence layer evaluates the behavior and limits of \OPAL{} itself.
Formal results define information and finite-population boundaries; held-out
simulations show that risk control can preserve a nontrivial active branch
under constructed known-truth conditions; and retrospective selector and
Causal Chambers analyses diagnose executed-value alignment and
constructibility. Supplementary Table~\ref{tab:study_design} summarizes the
independent units and distinguishes the empirical archive states from these
methodological analyses.

Here \OPAL{} denotes the opportunity-aware protocol for authorizing learned
measurement rules. A policy is a prespecified mapping from information
available at assignment to one of the allowed experimental actions.

\begin{table}[!t]
\centering
\begin{threeparttable}
\caption{\textbf{Evidence layers and inferential roles.} Units are the independent units used for each scientific statement, not raw ledger rows.}
\label{tab:study_design}
\fontsize{7.2}{8.1}\selectfont
\renewcommand{\arraystretch}{1.03}
\rowcolors{2}{tablerow}{white}
\begin{tabularx}{\textwidth}{@{}L{2.65cm}L{2.55cm}L{2.45cm}L{2.45cm}Y@{}}
\specialrule{\heavyrulewidth}{0pt}{0pt}
\rowcolor{tablehead}
Study & Scientific question & Independent units & Evidence layer & Scope \\
\specialrule{\lightrulewidth}{0pt}{0pt}
Target-calibrated morphology & Can a development-selected gate activate nontrivially while controlling false activation and maintaining positive executed value? & 13,748 labeled development and 11,265 previously unopened final compound clusters & Empirical archive: sparse candidate & Lower-burden nontrivial candidate with positive executed value; binding FDP uncertainty kept the fixed plan in place \\
Temporal transcriptomics and growth & Does acquiring a 24-h L1000 profile add net value for predicting a 72-h growth-rate endpoint after a 3-h profile? & 105 perturbation-identity components and 4,049 complete-case episodes; component split 42/21/21/21 & Empirical archive: broad acquisition & Positive mean point estimate under broad acquisition; no selective measurement saving or new-batch transport \\
Pharmacogenomic exact outcomes & Can a frozen observable rule limit false activation and recover measured bundle value? & 125 development, 244 calibration and 254 locked cell-line families & Empirical archive: fallback & Frozen score did not rank measured opportunity; all locked families fell back to \(K_0\) \\
Executed-value mechanism & Does a learned selector exceed the correctly executed fixed rule after information cost? & 720 in-support units across 45 cells; 810 units for probe diagnostics & Retrospective mechanism diagnostic & Baseline, action and information cost evaluated on development data \\
Finite-campaign certification & Can a pilot certify positive value on the unobserved complement? & 7,442 campaigns at \(q\in\{2,4,\ldots,14\}\) & Exact methodological boundary & No superpopulation substitution for the finite complement \\
Held-out simulator risk & Does a frozen threshold control false activation under known simulator truth? & 140 NULL tuning, 500 NULL lock, 400 NULL validation and 500 positive fixtures & Held-out methodological validation & \(S^{\mathrm{eval}}\) uses potential outcomes and is not deployable \\
Physical-archive constructibility & Can a proxy score--action--fallback path be computed from measured trajectories? & 13 scalar archives and 196 sessions & Constructibility only & Author-defined proxies without a native causal action or risk-control endpoint \\
\specialrule{\heavyrulewidth}{0pt}{0pt}
\end{tabularx}
\begin{tablenotes}[flushleft]\footnotesize
\item Sensitivity in the held-out simulator study was an estimation target. CTRP assignments were sealed before locked-response access. Its fixed-archive counts are exact empirical quantities, whereas reported Clopper--Pearson bounds have a secondary repeated-sampling interpretation under a model in which the relevant family-level event indicators are independent Bernoulli trials with a common probability.
\item The cpg0012 rule and all 11,265 final gate assignments were sealed before final outcome access. Both labeled pre-final partitions are treated as development. Only the previously unopened final partition supports the single-held-out final evaluation.
\item In LINCS--LJP, LEARN, CALIBRATION and CERTIFICATION were used to establish the final operational rule and criteria and are therefore development evidence. The FINAL\_LOCK action map and criteria were fixed before its 24-h profiles or 72-h outcomes were opened.
\end{tablenotes}
\end{threeparttable}
\end{table}

\subsection{Executed-value diagnostics separate predicted opportunity from experimental value}\label{sec:sr_policy_diagnostics}

We separated the evaluated estimand, fixed comparator, selector family, probe
contribution and model architecture to determine whether the apparent value of
adaptation persisted after reconstructing the action that was actually
executed and charged.

\subsubsection{Selector-frontier reconciliation}\label{sec:sr_frontier}

We retrospectively reconstructed the executed comparison from the frozen
development ledgers. This descriptive analysis did not select a frontier
member. We distinguished three baselines. The fold-learned \(K\)-only baseline
was \(K=1\). The legacy best-fixed arm was \(K=2\) and included learned-model
cost. The factorial fixed-\(K=2\), no-probe policy was unlearned and carried no
model cost; it is therefore the executed comparator.

Table~\ref{tab:sr_gap_decomposition} gives the reconciliation shown in Supplementary Fig.~\ref{fig:ed_estimand}B at full precision. The floating-point residual confirms numerical closure of the four-step identity. The frozen ledger did not retain enough information to reconstruct the information-set, executor/exercise and action-cost subcomponents separately, so they remain bundled in the value-source term.

\begin{figure}[!htbp]
\centering
\suppfigure{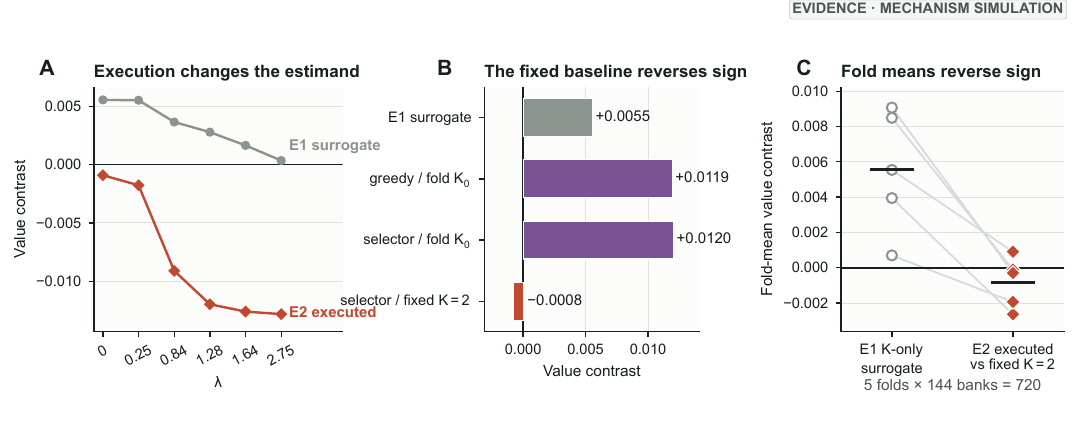}
\caption{\textbf{Estimand and comparator reconciliation in the retrospective
selector analysis.}
\textbf{A}, \(K\)-only surrogate and executed contrasts over the same frozen
\(\lambda\) family. The surrogate uses label-derived value against
\(K_{\mathrm{score},-f}=1\); the executed estimand uses operational utility
against \(K_0=2\). \textbf{B}, Four intermediate contrasts that isolate
value-source, selection-rule and comparator effects. \textbf{C}, Paired
surrogate and executed fold means for the five frozen outer folds (144 banks
per fold; 720 total); horizontal bars show the corresponding overall means.}
\label{fig:ed_estimand}
\end{figure}

We replayed all six \(\lambda\) members from cross-fitted per-bank, per-\(K\)
executed utilities and recovered both factorial endpoints exactly. The
\(K\)-only surrogate, executed contrasts and comparator reconciliation used
the same resamples and critical value, \(3.158018\). The six \(\lambda\) values
were fixed independently of these results. The bank-level score used
\(K_{\mathrm{score},-f}=1\) in every outer fold, whereas the operational
comparison and cohort-level fallback used \(K_0=2\). At \(\epsilon=0\), no
learned whole-policy member was eligible. Proposition~\ref{prop:safe}
therefore applies to the cohort wrapper, not to bank-level reversion from a
learned action to \(K_{\mathrm{score},-f}\).

Using the same contract, we reconstructed
\[
\widehat G^{\mathrm{aligned}}_M
=\frac1{720}\sum_i\{\max_{k\in\{0,1,2\}}U_i^{\mathrm{exec}}(k)
-U_i^{\mathrm{exec}}(K_0)\}
=2.689906\times10^{-3}
\]
from the same counterfactual action paths used by the executed frontier. Its
descriptive cell-stratified bootstrap interval was
\([2.253257,3.161963]\times10^{-3}\). Executor (MRAD), exercise rule (microgate),
commitment mode (fully reversible), \(\phi=1\), action-cost function,
zero-probe information cost and \(K_0=2\) were identical.
Every bank--policy gain was bounded by its bank-level maximum. The recovered
fraction was undefined because all attained frontier contrasts were
non-positive.

The oracle quantities \(G_X\) and \(G_{XZ}\) require conditional optimal values
under their respective information sets and cannot be identified from the
frozen ledger. A separate cross-fitted oracle evaluation stream yielded the
previously reported descriptive full-state routing quantity \(0.0198977\), with
interval \([0.0173493,0.0228224]\). Although this quantity is on a related
operational scale, it comes from a different evaluation-fold and seed stream
than the executed-estimand counterfactual path. We therefore do not use it as a
strict numerical ceiling for the executed frontier. Complete executed
intervals appear in Table~\ref{tab:sr_frontier_full}.

\begin{table}[ht]
\centering
\caption{\textbf{Exact selector-frontier gap decomposition.} Values
reconstruct the sign change from the \(K\)-only surrogate against
\(K_{\mathrm{score},-f}=1\) to executed operational value against the fixed
comparator \(K_0=2\).}
\label{tab:sr_gap_decomposition}
\small
\renewcommand{\arraystretch}{1.13}
\rowcolors{2}{tablerow}{white}
\begin{tabularx}{\textwidth}{@{}L{4.6cm}C{2.4cm}Y@{}}
\specialrule{\heavyrulewidth}{0pt}{0pt}
\rowcolor{tablehead}
Quantity & Value & Interpretation \\
\specialrule{\lightrulewidth}{0pt}{0pt}
Greedy K-only surrogate versus \(K_{\mathrm{score},-f}=1\) & \(5.5489699\times10^{-3}\) & Surrogate starting point \\
Executed greedy versus \(K_{\mathrm{score},-f}=1\) & \(11.9462080\times10^{-3}\) & Adds executed value source \\
Executed selector versus \(K_{\mathrm{score},-f}=1\) & \(12.0472049\times10^{-3}\) & Adds the actual selector rule \\
Executed selector versus \(K_0=2\) & \(-0.8143707\times10^{-3}\) & Correct operational contrast \\
\(\Delta_{\mathrm{value\ source}}\) & \(6.3972381\times10^{-3}\) & Label-to-execution bundle \\
\(\Delta_{\mathrm{policy\ rule}}\) & \(0.1009968\times10^{-3}\) & Greedy-to-actual selector \\
\(\Delta_{\mathrm{baseline}}\) & \(-12.8615755\times10^{-3}\) & \(K_{\mathrm{score},-f}=1\) to \(K_0=2\) \\
Residual & \(6.51\times10^{-19}\) & Numerical closure \\
\specialrule{\heavyrulewidth}{0pt}{0pt}
\end{tabularx}
\end{table}

\begin{table}[ht]
\centering
\caption{\textbf{Full executed selector frontier against fixed \(K=2\).} All six points use one score estimator and one 2,000-replicate simultaneous max-\(t\) family.}
\label{tab:sr_frontier_full}
\small
\renewcommand{\arraystretch}{1.10}
\rowcolors{2}{tablerow}{white}
\begin{tabularx}{\textwidth}{@{}C{1.4cm}Y C{2.1cm}C{2.1cm}C{2.1cm}@{}}
\specialrule{\heavyrulewidth}{0pt}{0pt}
\rowcolor{tablehead}
\(\lambda\) & Policy label & \shortstack{Mean\\(\(\times10^{-3}\))} & \shortstack{Lower bound\\(\(\times10^{-3}\))} & \shortstack{Upper bound\\(\(\times10^{-3}\))} \\
\specialrule{\lightrulewidth}{0pt}{0pt}
0 & Greedy & \(-0.915\) & \(-2.184\) & \(0.353\) \\
0.25 & Legacy threshold & \(-1.773\) & \(-2.939\) & \(-0.607\) \\
0.8416 & \(\alpha=0.20\) critical value & \(-9.112\) & \(-10.223\) & \(-8.000\) \\
1.2816 & \(\alpha=0.10\) critical value & \(-11.979\) & \(-13.069\) & \(-10.888\) \\
1.6449 & \(\alpha=0.05\) critical value & \(-12.609\) & \(-13.625\) & \(-11.592\) \\
2.7478 & \(\alpha=3\times10^{-3}\) critical value & \(-12.827\) & \(-13.800\) & \(-11.853\) \\
\specialrule{\heavyrulewidth}{0pt}{0pt}
\end{tabularx}
\end{table}

\subsubsection{Probe and expert diagnostics}\label{sec:sr_option}

Table~\ref{tab:sr_factorial} reports the executed factorial values. The no-probe fixed policy performed best among the nine cells. Full-panel policies were strongly negative because they purchased every probe coordinate at every bank, making this primarily an information-cost result rather than a failure of the scientific-success simulator. The active, decision-focused probe cost substantially less, but neither its fixed nor learned policy exceeded fixed \(K\) without a probe.

\begin{table}[!t]
\centering
\caption{\textbf{Executed selector-frontier factorial ablation.} Contrasts use fixed \(K\), no probe as the common comparator.}
\label{tab:sr_factorial}
\small
\renewcommand{\arraystretch}{1.10}
\rowcolors{2}{tablerow}{white}
\begin{tabularx}{\textwidth}{@{}L{2.3cm}L{2.1cm}C{1.8cm}C{2.0cm}Y@{}}
\specialrule{\heavyrulewidth}{0pt}{0pt}
\rowcolor{tablehead}
Value component & Information component & Operational value & Contrast & \shortstack[l]{Model-\\dependent} \\
\specialrule{\lightrulewidth}{0pt}{0pt}
Fixed \(K\) & No probe & 0.705246 & 0 & no \\
Fixed \(K\) & Decision probe & 0.704732 & \(-0.515\times10^{-3}\) & no \\
Fixed \(K\) & Full panel & 0.362332 & \(-0.342915\) & no \\
Global learned \(K\) & No probe & 0.704432 & \(-0.814\times10^{-3}\) & yes \\
Global learned \(K\) & Decision probe & 0.703846 & \(-1.401\times10^{-3}\) & yes \\
Global learned \(K\) & Full panel & 0.343310 & \(-0.361937\) & yes \\
Soft residual experts & No probe & 0.704432 & \(-0.814\times10^{-3}\) & yes; \(E=0\) \\
Soft residual experts & Decision probe & 0.703846 & \(-1.401\times10^{-3}\) & yes; \(E=0\) \\
Soft residual experts & Full panel & 0.343310 & \(-0.361937\) & yes; \(E=0\) \\
\specialrule{\heavyrulewidth}{0pt}{0pt}
\end{tabularx}
\end{table}

In the probe-option ledger, 10 of 810 banks purchased a probe and two changed capacity. Mean gross option value was \(0.0228668\), whereas mean net option value after the prespecified 0.12 resource charge was \(-0.0971332\). The common-random-number realized contribution across 720 in-support banks was \(-0.5862\times10^{-3}\), with standard error \(0.3336\times10^{-3}\) and simultaneous interval \([-1.1724,0]\times10^{-3}\). The two action-changing banks had mean realized contribution \(-0.14798\), but this sample is too small for population inference. Candidate identities, actions before and after probing, and recomputed resource charges all agreed with the source ledger.

The K selector produced distribution \(K=0:92\), \(K=1:139\), \(K=2:489\), compared with oracle distribution \(K=0:112\), \(K=1:52\), \(K=2:556\). There were 222 mismatches among 720 banks. Nested cross-validation selected \(E=0\) in all five folds; consequently, the soft-expert and global learned-\(K\) factorial rows are identical by construction rather than by post hoc deletion.

Supplementary Fig.~\ref{fig:ed_architecture} displays the foldwise expert
selection, selector--oracle action matrix and probe outcomes underlying these
diagnostics.

\begin{figure}[!htbp]
\centering
\suppfigure{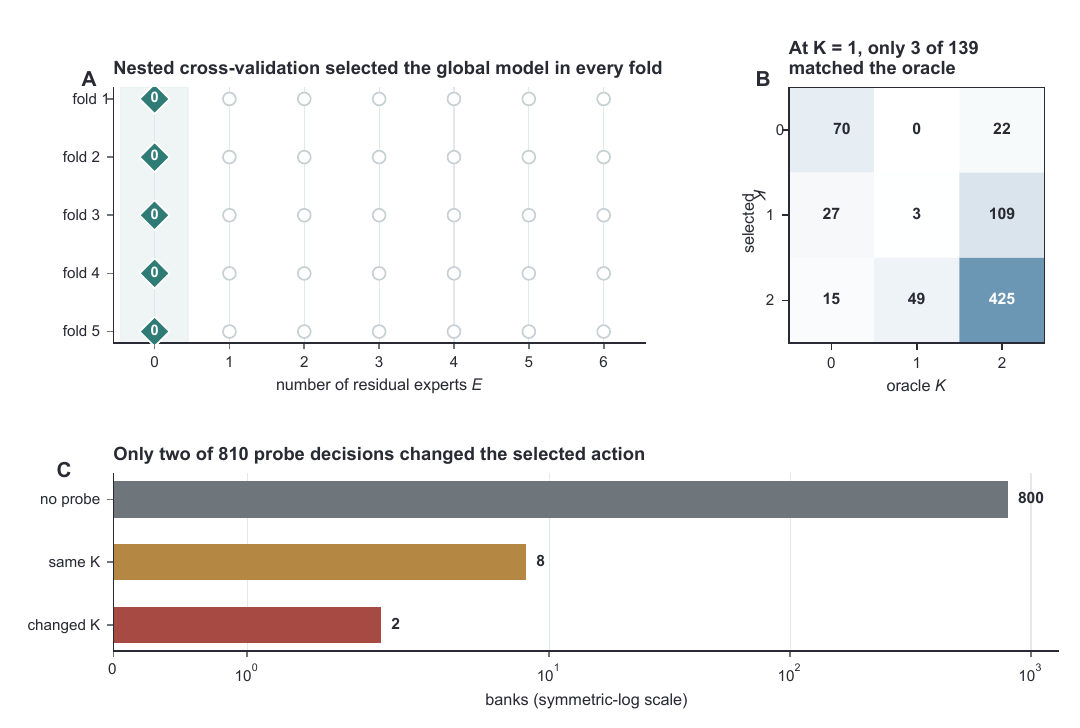}
\caption{\textbf{Model-complexity and action-change diagnostics.}
\textbf{A}, Candidate residual-expert counts \(E=0,\ldots,6\) in each outer
fold. Filled diamonds mark the nested-cross-validation choice under the
one-standard-error rule; all five folds selected the global model \(E=0\).
\textbf{B}, Confusion matrix for the learned capacity and information oracle
over 720 in-support banks. The selector chose \(K=1\) 139 times, of which three
matched the oracle. \textbf{C}, Probe outcomes over 810 banks. Counts are
displayed on a symmetric-log scale: 800 no-purchase decisions, eight purchases
without an action change and two action-changing purchases.}
\label{fig:ed_architecture}
\end{figure}

\FloatBarrier

\subsection{Finite-population evidence boundaries limit certifiability}\label{sec:sr_finite_group}

The next analysis treats the deployment complement, rather than a
superpopulation mean, as the target. We report the exhaustive certification
calculation first and then retain the same campaign denominator when
information cost and reuse are evaluated.

\subsubsection{Finite-campaign certification}\label{sec:sr_finite_exact}

The finite population contained \(N=16\) banks. Our primary analysis used the
three-class observation-and-decision model. We enumerated
all 1,113 possible signal compositions over the 21 prespecified \((q,p)\) cells,
reconstructed every exact confidence-pair set and least-favorable
composition, and found no actionable or active observation
(Proposition~\ref{prop:full_likelihood_impossibility} and
Table~\ref{tab:full_likelihood_certificate}). This certificate was computed
from the prespecified likelihood and utility contract without regenerating the
22,861,824-row simulator ledger.

\begin{table}[!htbp]
\centering
\caption{\textbf{Full three-class likelihood certification envelope.} The number of signal vectors is per channel parameter; all three prespecified \(p\) values give the same envelopes and state outcome.}
\label{tab:full_likelihood_certificate}
\small
\renewcommand{\arraystretch}{1.10}
\rowcolors{2}{tablerow}{white}
\begin{tabularx}{\textwidth}{@{}C{0.9cm}C{1.8cm}C{1.8cm}C{2.0cm}C{2.1cm}Y@{}}
\specialrule{\heavyrulewidth}{0pt}{0pt}
\rowcolor{tablehead}
\(q\) & Signals per \(p\) & \(B_{q,p}\) & \(B^\Delta_{q,p}\) & \shortstack{Actionable/\\active} & State set \\
\specialrule{\lightrulewidth}{0pt}{0pt}
2 & 6 & 0 & \(-0.06875\) & 0/0 & Indeterminate \\
4 & 15 & 0 & \(-0.13125\) & 0/0 & Indeterminate \\
6 & 28 & 0 & \(-0.19375\) & 0/0 & Indeterminate \\
8 & 45 & 0 & \(-0.25625\) & 0/0 & Indeterminate \\
10 & 66 & 0 & \(-0.31875\) & 0/0 & Indeterminate \\
12 & 91 & 0 & \(-0.38125\) & 0/0 & Indeterminate \\
14 & 120 & 0 & \(-0.44375\) & 0/0 & Indeterminate \\
\specialrule{\heavyrulewidth}{0pt}{0pt}
\end{tabularx}
\end{table}

To interpret this result and guide campaign-size design, we separately evaluated the
binary all-actionable event \(X=q\) under the largest total actionable count
\(M_0=q+\lfloor(16-q)/2\rfloor\) whose complement still lacked a strict
actionable majority. Table~\ref{tab:sr_n16_enumeration} gives the integer
enumeration underlying Proposition~\ref{prop:impossibility}. This
binary corollary complements rather than replaces the three-class algorithm.
Neither calculation uses a normal or independent-bank approximation.

\begin{table}[!t]
\centering
\caption{\textbf{Exact all-success enumeration for the prespecified 16-bank
campaign.} For each pilot size \(q\), \(M_0\) is the largest total actionable
count whose unobserved complement still lacks a strict actionable majority.}
\label{tab:sr_n16_enumeration}
\small
\renewcommand{\arraystretch}{1.10}
\rowcolors{2}{tablerow}{white}
\begin{tabularx}{\textwidth}{@{}C{1.0cm}C{1.6cm}C{1.6cm}C{2.5cm}C{2.5cm}Y@{}}
\specialrule{\heavyrulewidth}{0pt}{0pt}
\rowcolor{tablehead}
\(q\) & \(16-q\) & \(M_0\) & \(\binom{M_0}{q}\) & \(\binom{16}{q}\) & Exact probability \\
\specialrule{\lightrulewidth}{0pt}{0pt}
2 & 14 & 9 & 36 & 120 & 0.300000 \\
4 & 12 & 10 & 210 & 1,820 & 0.115385 \\
6 & 10 & 11 & 462 & 8,008 & 0.057692 \\
8 & 8 & 12 & 495 & 12,870 & 0.038462 \\
10 & 6 & 13 & 286 & 8,008 & 0.035714 \\
12 & 4 & 14 & 91 & 1,820 & 0.050000 \\
14 & 2 & 15 & 15 & 120 & 0.125000 \\
\specialrule{\heavyrulewidth}{0pt}{0pt}
\end{tabularx}
\end{table}

Exact enumeration covered all 21 \(q\times p\) combinations without omission
or imputation. Pair coverage ranged from 0.998212 to 1.000000, above the
prespecified per-look minimum of 0.997143. The campaign registry contained 7,442
campaigns, and the potential-outcome ledger contained 22,861,824
common-random-number rows. An independent implementation reproduced the pilot
selection, observation channel, decisions, campaign values and potential
outcomes.

Supplementary Fig.~\ref{fig:ed_coverage} summarizes the exact pair-coverage
shortfall and the denominator available to each operating-characteristic
endpoint.

\begin{figure}[!htbp]
\centering
\suppfigure{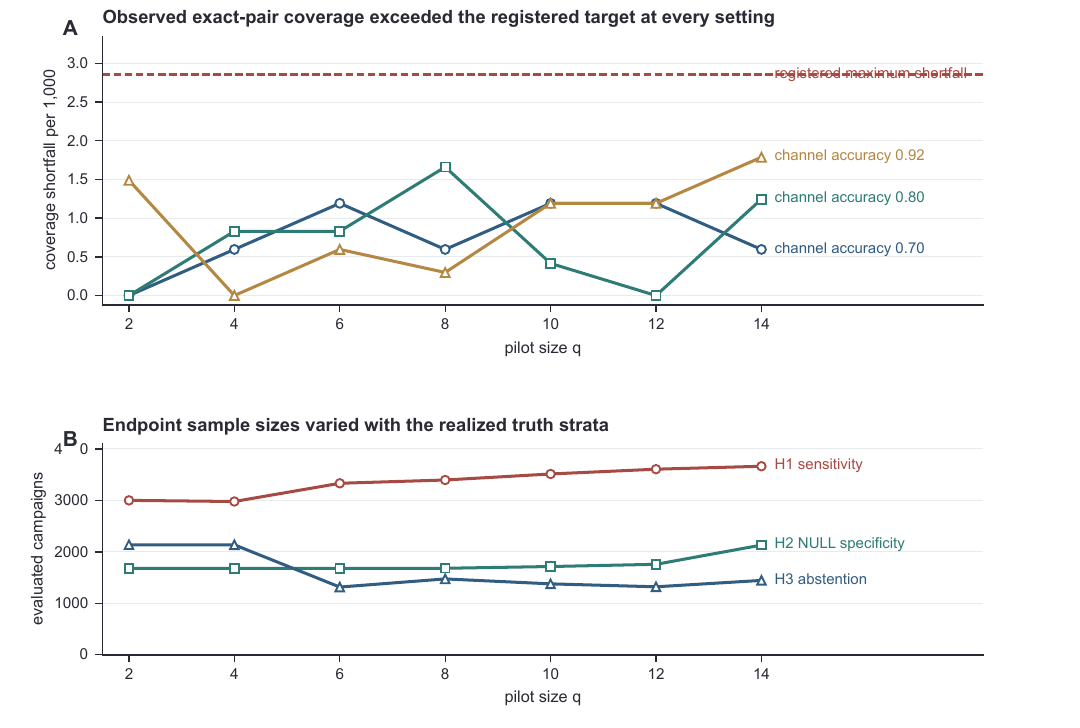}
\caption{\textbf{Exact finite-campaign coverage analysis.}
\textbf{A}, Empirical joint-coverage shortfall per 1,000 for the exact
composition--channel confidence set over 21 combinations of pilot size and
channel accuracy. The dashed line is the prespecified maximum shortfall
corresponding to coverage 0.99714; observed coverage was
0.9982--1.0000. \textbf{B}, Numbers of campaigns contributing to each
operating-characteristic endpoint as a function of \(q\). Counts vary because
truth strata are assigned from the finite realized campaign and deployable
opportunity.}
\label{fig:ed_coverage}
\end{figure}

Actionable sensitivity was zero at every \(q\). Table~\ref{tab:sr_finite_oc} places this result in context by reporting each denominator and its exact simultaneous one-sided upper limit. NULL specificity and nonactionable abstention use the corresponding lower limits. At every look, actionable upper limits were below \(2.5\times10^{-3}\), whereas both abstention lower limits exceeded 0.9945.

\begin{table}[!t]
\centering
\caption{\textbf{Finite-campaign operating-characteristic denominators and simultaneous exact limits.}}
\label{tab:sr_finite_oc}
\fontsize{6.3}{7.2}\selectfont
\renewcommand{\arraystretch}{1.08}
\setlength{\tabcolsep}{1.5pt}
\rowcolors{2}{tablerow}{white}
\begin{tabularx}{\textwidth}{@{}C{0.55cm}C{1.35cm}C{1.5cm}C{1.1cm}C{1.5cm}C{1.35cm}C{1.5cm}Y@{}}
\specialrule{\heavyrulewidth}{0pt}{0pt}
\rowcolor{tablehead}
\(q\) & \shortstack{Actionable\\\(n\)} & \shortstack{Active UCB\\(\(\times10^{-3}\))} & \shortstack{NULL\\\(n\)} & \shortstack{NULL spec.\\LCB} & \shortstack{Nonact.\\\(n\)} & \shortstack{Nonact.\\abstain LCB} & Result \\
\specialrule{\lightrulewidth}{0pt}{0pt}
2 & 3,001 & 2.411 & 1,678 & 0.995692 & 2,138 & 0.996617 & \shortstack[l]{No active\\qualification} \\
4 & 2,979 & 2.429 & 1,678 & 0.995692 & 2,137 & 0.996616 & \shortstack[l]{No active\\qualification} \\
6 & 3,333 & 2.171 & 1,680 & 0.995697 & 1,316 & 0.994510 & \shortstack[l]{No active\\qualification} \\
8 & 3,397 & 2.130 & 1,681 & 0.995700 & 1,474 & 0.995097 & \shortstack[l]{No active\\qualification} \\
10 & 3,514 & 2.059 & 1,715 & 0.995785 & 1,378 & 0.994757 & \shortstack[l]{No active\\qualification} \\
12 & 3,607 & 2.006 & 1,758 & 0.995888 & 1,321 & 0.994531 & \shortstack[l]{No active\\qualification} \\
14 & 3,664 & 1.975 & 2,132 & 0.996608 & 1,445 & 0.994999 & \shortstack[l]{No active\\qualification} \\
\specialrule{\heavyrulewidth}{0pt}{0pt}
\end{tabularx}
\end{table}

\subsubsection{Fallback value and finite-campaign qualification}\label{sec:sr_finite_value}

At every fixed \(q\), the policy retained \(K_0\) for every campaign. The
operational contrast therefore had zero sampling standard error and equalled
the negative information bill. The prespecified feasible sets
\(Q_{\mathrm{OC}}\), \(Q_{\mathrm{net}}\) and \(Q_{\mathrm{adopt}}\) were
empty. The sequential policy carried all 7,442 campaigns to the last
prespecified look, with zero activation, mean stopping point \(q=14\), mean
contrast \(-0.44375\) and no active harm events. Because no switch was
exercised, this result does not measure the performance of active switching.

We evaluated lifecycle value at
\(H\in\{250,1000,3000,10000,100000\}\). Because
\(\Delta_{\mathrm{camp}}\leq0\) before shared cost, every lifecycle value was
negative and no finite \(H^\star\) existed. Information-recovery ratios were
undefined because their opportunity denominator was negligible or non-positive.

The complete evidence trajectories, value summaries and cost--reuse dependence
are shown together in Supplementary Figs.~\ref{fig:finite_campaign_diagnostics},
\ref{fig:finite_campaign_values} and~\ref{fig:ed_finite_economics}.

\begin{figure}[!htbp]
\centering
\suppfigure{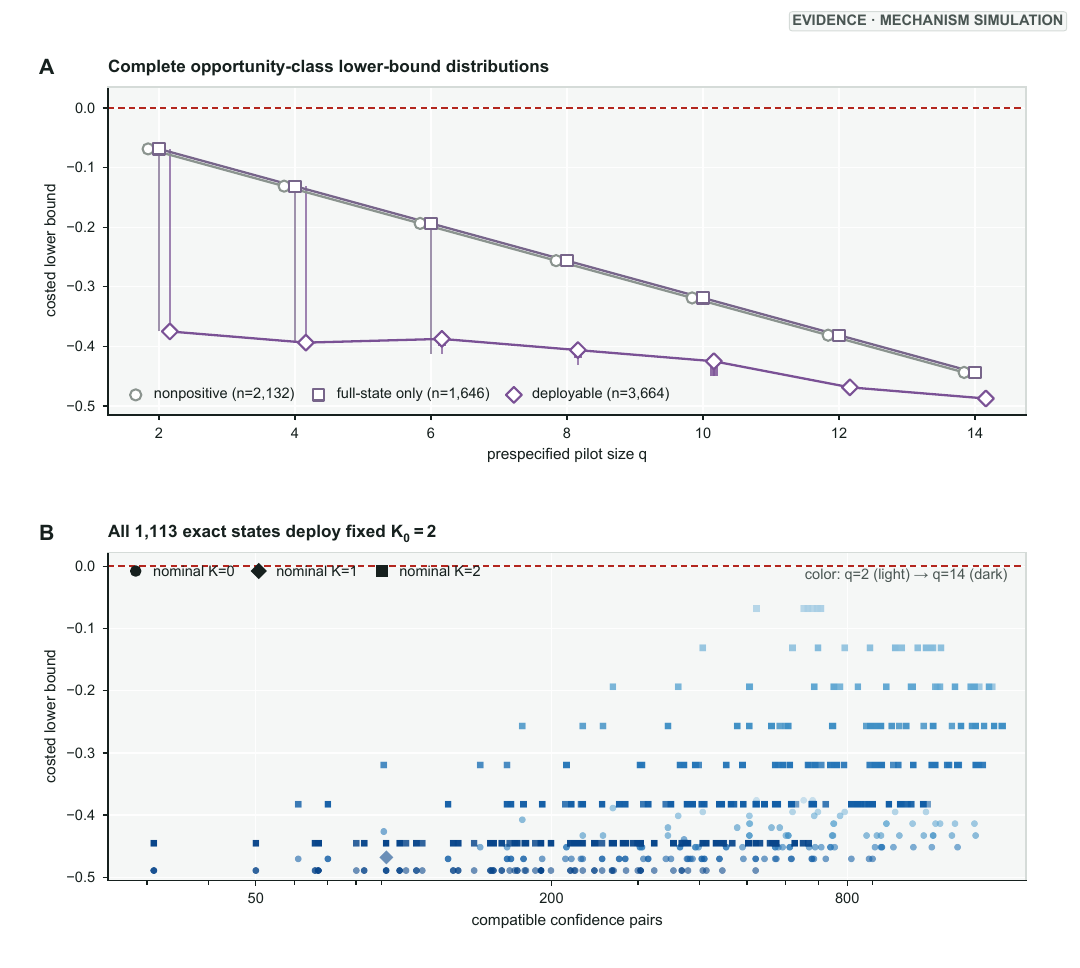}
\caption{\textbf{Complete finite-campaign evidence trajectories and exact states.}
\textbf{A}, Pilot-size-dependent 5th--95th percentile ranges, interquartile
ranges and medians for nonpositive, positive full-state-only and deployable
campaigns. This complete three-class view supports the simplified main-text
display without hiding the overlapping central trajectories. \textbf{B}, All
1,113 exact observation states at \(N=16\), colored by prespecified pilot size;
every state selected the fallback \(K_0=2\) and every costed lower bound was
negative.}
\label{fig:finite_campaign_diagnostics}
\end{figure}

\begin{figure}[!htbp]
\centering
\suppfigure{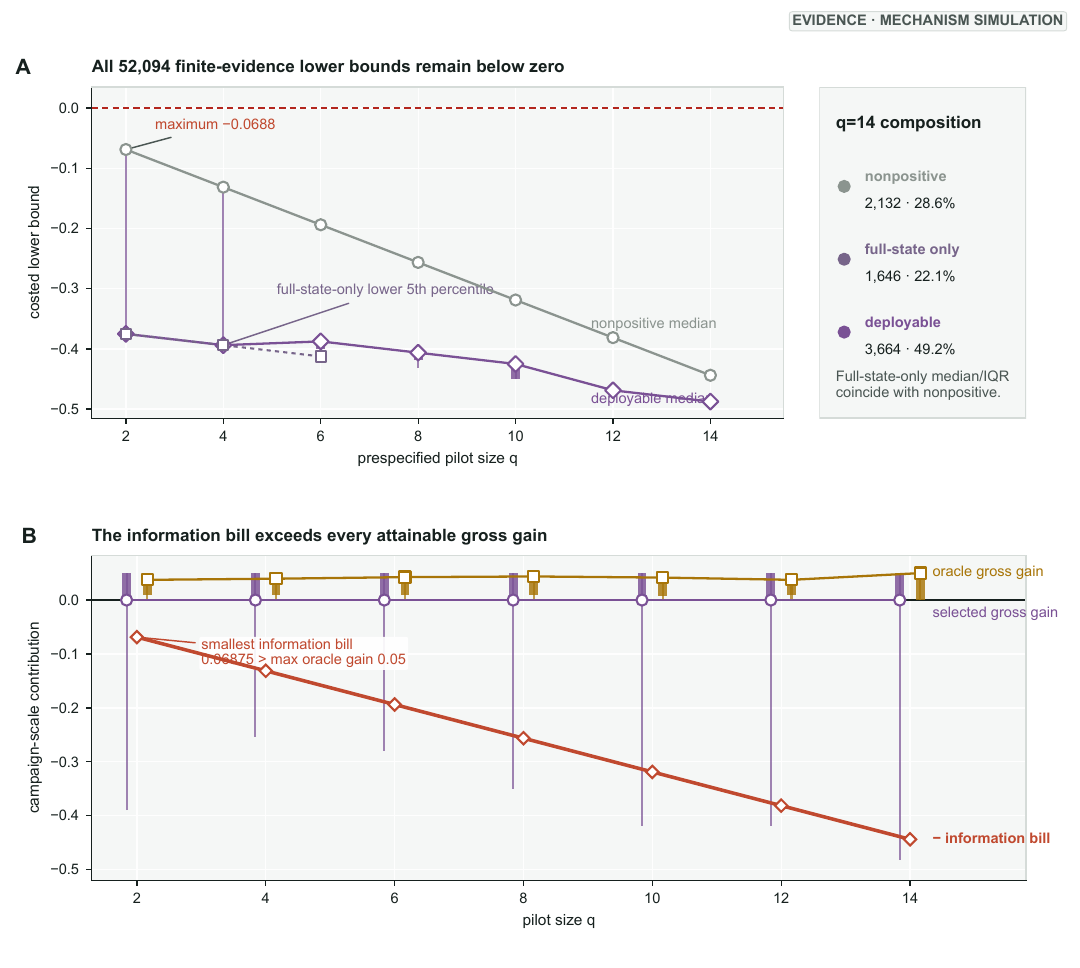}
\caption{\textbf{Finite-evidence lower bounds and information cost across all campaign looks.}
\textbf{A}, Pilot-size-dependent 5th--95th percentile ranges, interquartile
ranges and medians for all 52,094 campaign--look lower bounds. Every bound
remained below zero. The inset gives the exact final-look composition of 7,442
campaigns: 3,664 deployable-positive, 1,646 positive full-state-only and 2,132
nonpositive campaigns. \textbf{B}, Selected-action and oracle gross-gain
distributions together with the campaign-normalized information bill across
104,188 campaign--look gain values. The smallest information bill, 0.06875,
exceeded the maximum oracle gross gain, 0.05, and increased with pilot size.}
\label{fig:finite_campaign_values}
\end{figure}

\begin{figure}[!htbp]
\centering
\suppfigure{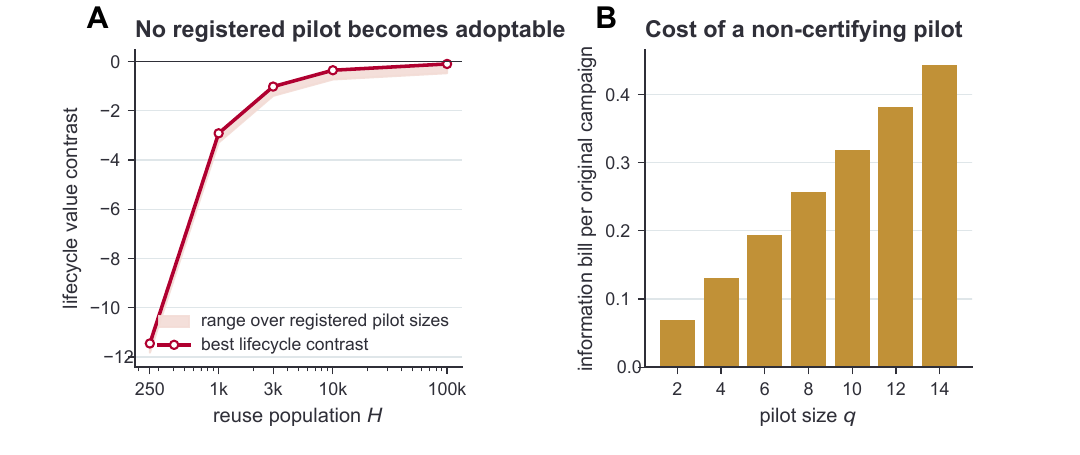}
\caption{\textbf{Finite-campaign information cost and lifecycle scenarios.}
\textbf{A}, Best lifecycle contrast and its range over all prespecified pilot
sizes as reuse population \(H\) increases. The entire range remains below zero
because operational contrast is already non-positive before shared cost is
added. \textbf{B}, Information bill per original 16-bank campaign. No finite
positive \(H^\star\) exists at any displayed \(q\). The \(H=100{,}000\)
column is a unit-relaxed stress scenario, not a campaign-level adoption
estimate.}
\label{fig:ed_finite_economics}
\end{figure}

\FloatBarrier

\subsection{Held-out simulation shows that risk control need not force universal fallback}\label{sec:sr_risk_group}

The prespecified simulator study comprised NULL-only threshold selection, an
independent risk lock, held-out validation and a retrospective comparator
battery. The binding false-activation claim applies only to the originally
locked rule. Sensitivity is an estimation target, and the value decomposition
and comparator analyses are descriptive.

\subsubsection{Threshold calibration and risk lock}\label{sec:sr_d6_calibration}

The held-out continuous-population study used four mutually disjoint
populations. The NULL tuning population contained 140 fixtures and generated
the order-statistic candidate class. It selected
\(\tau=0.0180806098\) without access to positive or lock observations. The
independent risk-lock population contained 500 fixtures and no activations,
giving a one-sided 99.5\% Clopper--Pearson upper bound of
\(0.0105406882\). This result fixed the original threshold; the prespecified
second-threshold option was not used.

Tuning scores ranged from \(-0.123126\) to \(0.018081\), with mean
\(-0.042451\); independent-lock scores ranged from \(-0.161873\) to
\(7.961\times10^{-3}\), with mean \(-0.044585\). The separation between their
maxima is descriptive. Release of the frozen threshold to validation rested
on the exact binomial upper bound.

\subsubsection{Held-out risk validation}\label{sec:sr_d6_validation}

The final run contained 400 NULL and 500 positive fixtures, with one 200-bank
campaign per fixture. The tuning, lock and validation partitions had no
fixture, campaign, bank, parameter or seed overlap. Supplementary
Fig.~\ref{fig:risk_validation}, A to C, shows the complete score distributions,
score--opportunity relation and primary risk result; Table~\ref{tab:sr_study3_results}
gives the exact numerical summaries.

\begin{table}[!t]
\centering
\caption{\textbf{Held-out risk calibration and simulator validation results.}}
\label{tab:sr_study3_results}
\small
\renewcommand{\arraystretch}{1.13}
\rowcolors{2}{tablerow}{white}
\begin{tabularx}{\textwidth}{@{}L{2.4cm}C{1.2cm}C{1.8cm}L{3.0cm}Y@{}}
\specialrule{\heavyrulewidth}{0pt}{0pt}
\rowcolor{tablehead}
Partition & \(n\) & Activations & Exact result & Evidentiary role \\
\specialrule{\lightrulewidth}{0pt}{0pt}
NULL tuning & 140 & 0 above selected maximum & \(\tau=0.0180806098\) & Threshold selection \\
NULL risk lock & 500 & 0 & 99.5\% UCB 0.0105407 & Independent threshold lock \\
NULL validation & 400 & 0 & 99.5\% UCB 0.0131585 & Binding claim supported \\
Positive validation & 500 & 342 & 0.684; 95\% CI [0.641249, 0.724559] & Estimation only \\
\specialrule{\heavyrulewidth}{0pt}{0pt}
\end{tabularx}
\end{table}

NULL validation scores ranged from \(-0.172296\) to
\(3.260\times10^{-3}\), with mean \(-0.046505\). Positive scores ranged from
\(-0.075173\) to \(0.165675\), with mean \(0.035189\). The threshold therefore
exceeded every NULL score but not every positive score. Direct reconstruction
confirmed the reported units, score and raw-to-summary calculations. The
potential-outcome score remains an evaluation statistic rather than a
deployable one.

\medskip\noindent\textbf{Retrospective value decomposition.}
After the primary risk result was known, we used the sealed
positive-validation potential outcomes for a descriptive value decomposition
(Table~\ref{tab:sr_d6_value}). The frozen policy applies the learned action when
\(S^{\mathrm{eval}}>\tau\) and otherwise returns \(K_0\). Because the
activated-fixture mean conditions on this selection, we report it alongside the
unconditional gated mean, executed fallback mean and force-activate comparator.
The force-activate arm applies the same learned bank-level action to all 500
positive fixtures, separating the gate from the quality of the fixtures that
it selected. Because \(S^{\mathrm{eval}}\) is also the fixture-level paired
potential-outcome contrast used in this decomposition, the analysis is
diagnostic rather than an independent confirmation of policy value. At the
fixture level, gated minus force-activate value was
\(8.6026\times10^{-5}\), with a percentile bootstrap 95\% interval of
\([-6.7823\times10^{-4},\,8.7498\times10^{-4}]\). The interval spans zero. The
gate removed negative forced contrasts in 13.6\% of fixtures but also declined
small positive contrasts, yielding no detectable mean-value advantage
(Supplementary Fig.~\ref{fig:risk_validation}, D).

\begin{table}[!t]
\centering
\caption{\textbf{Evaluation-only value decomposition on positive simulator fixtures.} Intervals are percentile intervals from 10,000 fixture bootstrap resamples with a fixed analysis seed. The analysis was specified after the primary risk result and is descriptive.}
\label{tab:sr_d6_value}
\fontsize{7.4}{8.2}\selectfont
\renewcommand{\arraystretch}{1.05}
\setlength{\tabcolsep}{2pt}
\rowcolors{2}{tablerow}{white}
\begin{tabularx}{\textwidth}{@{}L{2.05cm}C{0.65cm}C{1.15cm}C{2.05cm}C{1.05cm}Y@{}}
\specialrule{\heavyrulewidth}{0pt}{0pt}
\rowcolor{tablehead}
Policy & \(n\) & Mean & Bootstrap 95\% interval & Harm fraction & Interpretation \\
\specialrule{\lightrulewidth}{0pt}{0pt}
Gated policy, all positive & 500 & 0.035275 & [0.032382, 0.038196] & 0 & Learned action if active; otherwise \(K_0\) \\
Activated fixtures & 342 & 0.051572 & [0.048692, 0.054587] & 0 & Conditional on \(S^{\mathrm{eval}}>\tau\) \\
Fallback fixtures & 158 & 0 & [0, 0] & 0 & Executed \(K_0\) contrast \\
Force activate, all positive & 500 & 0.035189 & [0.032181, 0.038262] & 0.136 & Learned action without the campaign gate \\
Gated minus force activate & 500 & \(0.086\times10^{-3}\) & \([-0.678,\,0.875]\)\newline\(\times10^{-3}\) & 0.180 & Paired difference; no distinguishable mean advantage \\
\specialrule{\heavyrulewidth}{0pt}{0pt}
\end{tabularx}
\end{table}

\begin{figure}[!htbp]
\centering
\suppfigure{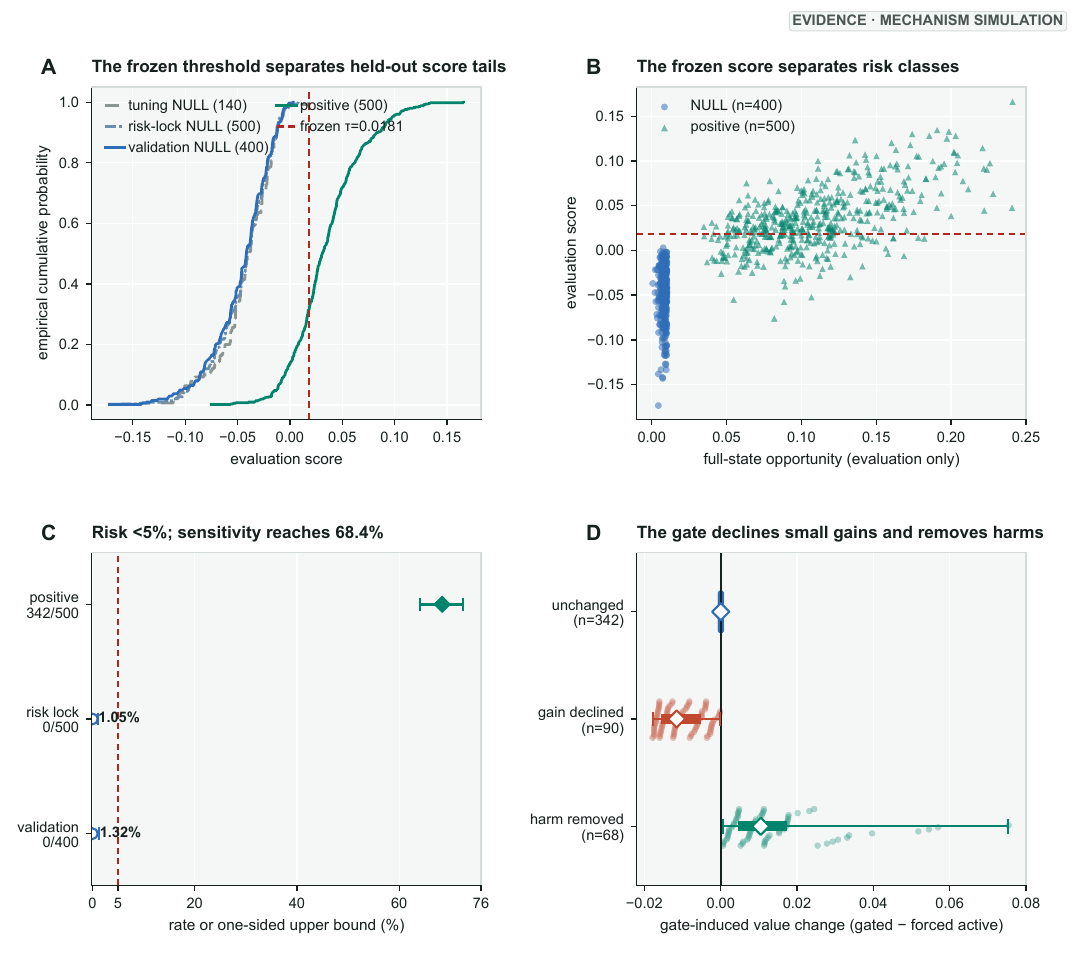}
\caption{\textbf{Held-out simulation shows that risk control need not force
universal fallback.} \textbf{A}, Score distributions in disjoint NULL tuning,
NULL risk-lock, NULL validation and positive-validation sets, with the frozen
threshold \(\tau=0.0180806\). \textbf{B}, Held-out score--opportunity
relationship for all 900 validation fixtures. \textbf{C}, Exact risk bounds
and sensitivity: 0/400 NULL activations (one-sided 99.5\% UCB, 1.316\%) and
342/500 positive activations (68.4\%; 95\% confidence interval,
64.1--72.5\%). \textbf{D}, Fixture-level gated-minus-forced-active value for
all 500 positive fixtures: 342 active fixtures were unchanged, 68 harmful
forced-active values were removed and 90 small positive effects were declined.
This mechanism simulation is methodological evidence and not a measured
biological archive.}
\label{fig:risk_validation}
\end{figure}

The tuning-to-lock transition and positive-population heterogeneity are shown
together in Supplementary Fig.~\ref{fig:ed_d6_calibration_heterogeneity}.

\begin{figure}[!htbp]
\centering
\suppfigurefit{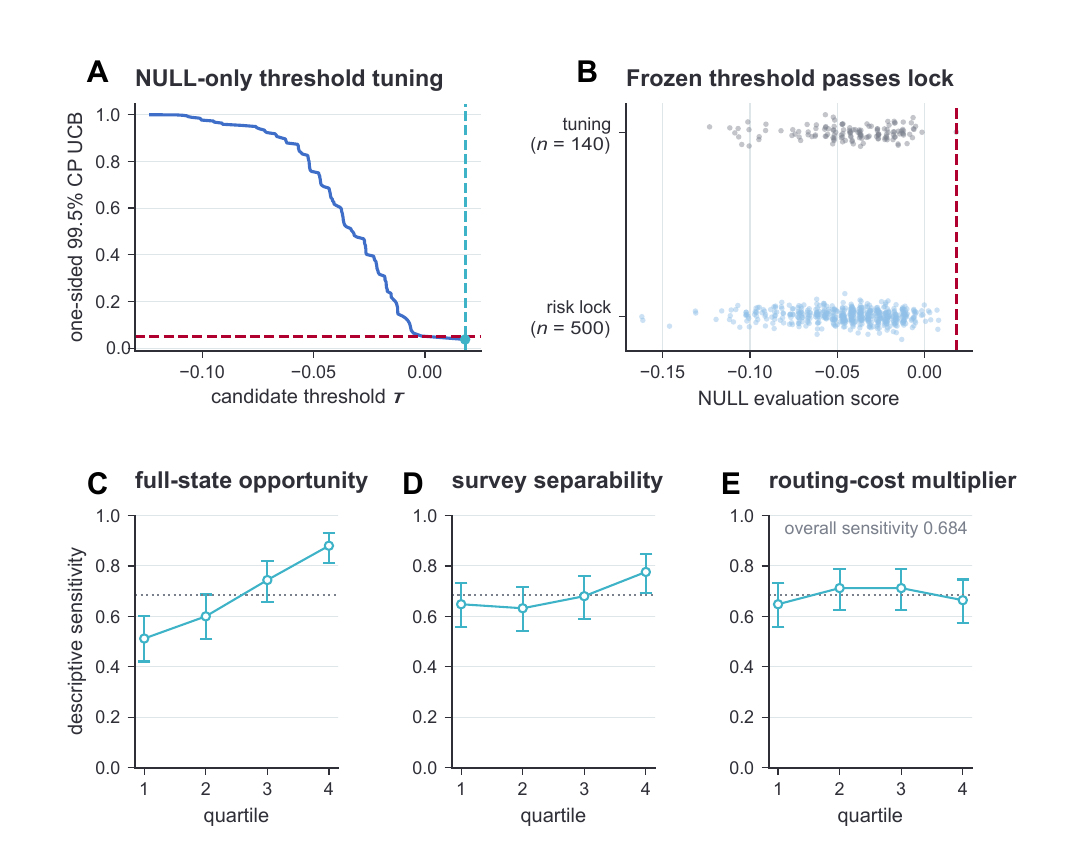}
\caption{\textbf{NULL-only risk calibration and descriptive
positive-population heterogeneity.} \textbf{A}, One-sided 99.5\%
Clopper--Pearson upper risk bound across order-statistic thresholds computed
from 140 NULL tuning fixtures. The horizontal line is the 0.05 target and the
vertical line is the selected \(\tau=0.0180806\). \textbf{B}, Tuning and
independent risk-lock scores with the frozen threshold. Jitter is used only
for display. The risk-lock set contained 500 disjoint fixtures and produced
zero activations; no positive or validation outcome entered threshold
selection. Positive-validation fixtures were divided into quartiles of
\textbf{C}, full-state opportunity \(G_M\); \textbf{D}, survey separability;
and \textbf{E}, routing-cost multiplier. Points are activation fractions and
error bars are two-sided 95\% exact Clopper--Pearson intervals. The dotted line
is the overall sensitivity, 0.684. These post hoc strata were not used to
modify \(\tau\).}
\label{fig:ed_d6_calibration_heterogeneity}
\end{figure}

\subsubsection{Shared-score gate ablation}\label{sec:sr_d6_comparators}

After observing the primary validation result, we specified a seven-rule
threshold ablation before calculating its summaries. It included always-abstain
and always-activate endpoints, a fixed zero-score threshold, a Gaussian 95\%
plug-in threshold, the empirical 95th percentile of the 140 NULL tuning scores,
the original risk-controlled threshold and a non-deployable oracle regime-label
ceiling. The zero, Gaussian and empirical rules shared the frozen evaluation
score and differed only in threshold construction. No threshold was fitted to
validation outcomes.

On the 400 NULL/500 positive validation fixtures, the zero and Gaussian rules
each activated one NULL fixture and 432 and 427 positive fixtures,
respectively. Their sensitivities were 0.864 and 0.854, and their one-sided
99.5\% NULL upper bounds were both 0.01843. The empirical 95th-percentile rule
activated 21 NULL and 458 positive fixtures, giving sensitivity 0.916 but a
NULL upper bound of 0.08819. Relative to the original risk-controlled rule,
paired sensitivity differences were 0.180 (95\% bootstrap interval
[0.146,0.214]), 0.170 [0.138,0.202] and 0.232 [0.194,0.268], while
false-activation differences were \(2.5\times10^{-3}\), \(2.5\times10^{-3}\) and 0.0525. The complete
battery is Supplementary Fig.~\ref{fig:ed_d6_comparators} and
Table~\ref{tab:sr_d6_comparator_battery}. These unadjusted comparisons are
descriptive: only the original rule passed through the prespecified
tuning-and-lock procedure required for a binding claim.

\begin{figure}[!htbp]
\centering
\suppfigure{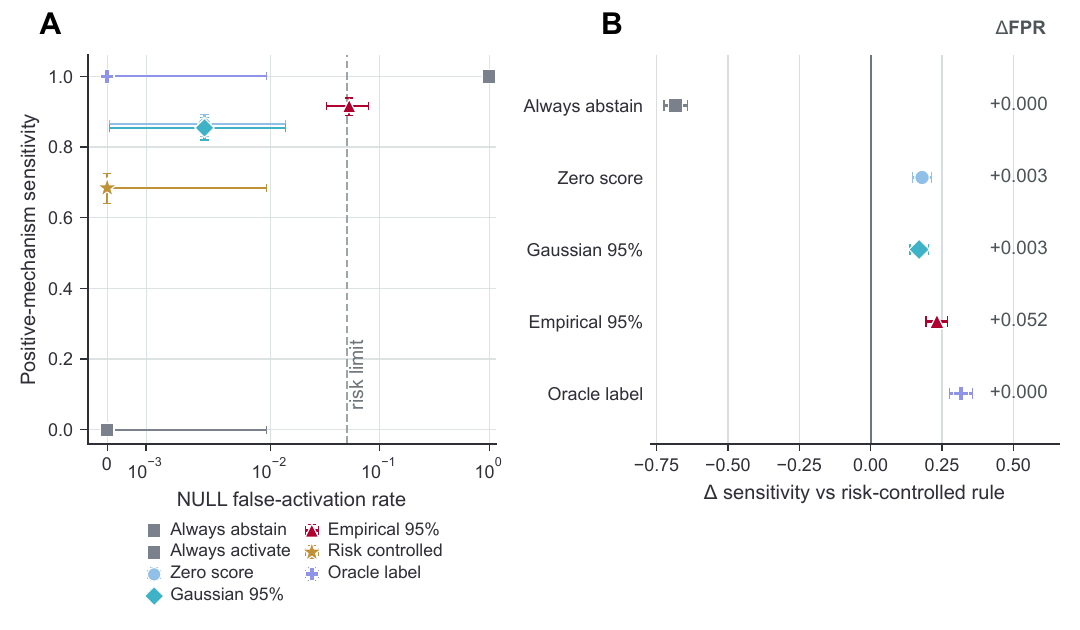}
\caption{\textbf{Descriptive shared-score gate ablation for the held-out
simulator.} \textbf{A}, NULL false-activation rate against
positive-mechanism sensitivity for seven rules evaluated on the same 400 NULL
and 500 positive validation fixtures. Error bars are two-sided 95\% exact
Clopper--Pearson intervals; the vertical line is the original 0.05 risk limit.
The horizontal axis uses a symmetric-log transform to retain zero-rate rules
and the always-activate endpoint. \textbf{B}, Paired sensitivity differences
from the false-activation-controlled rule with unadjusted 95\%
fixture-bootstrap intervals. The right column gives the corresponding
false-activation-rate difference. The oracle label is non-deployable. This
retrospective ablation reuses one oracle-assisted evaluation score and changes
only its threshold; it does not modify the original binding claim.}
\label{fig:ed_d6_comparators}
\end{figure}

\begin{table}[!t]
\centering
\caption{\textbf{Frozen retrospective shared-score gate ablation on the independent-validation fixtures.} Exact intervals are two-sided 95\% intervals for sensitivity; NULL values in parentheses are one-sided 99.5\% upper bounds. All non-oracle rules reuse the same evaluation-only score and therefore do not constitute full implementations of neighbouring authorization methods.}
\label{tab:sr_d6_comparator_battery}
\fontsize{7.4}{8.3}\selectfont
\renewcommand{\arraystretch}{1.1}
\setlength{\tabcolsep}{2.3pt}
\rowcolors{2}{tablerow}{white}
\begin{tabularx}{\textwidth}{@{}L{2.3cm}C{1.7cm}C{2.0cm}C{2.5cm}L{2.4cm}Y@{}}
\specialrule{\heavyrulewidth}{0pt}{0pt}
\rowcolor{tablehead}
Rule & Threshold & NULL activated & Positive activated & Sensitivity (95\% CI) & Role \\
\specialrule{\lightrulewidth}{0pt}{0pt}
Always abstain & \(+\infty\) & 0/400 (0.0132) & 0/500 & 0 (\(0\)--\(7\times10^{-3}\)) & Negative control \\
Always activate & \(-\infty\) & 400/400 (1) & 500/500 & 1 (0.993--1) & Positive endpoint \\
Zero score & 0 & 1/400 (0.0184) & 432/500 & 0.864 (0.831--0.893) & Secondary \\
Gaussian 95\% & \(1.824\times10^{-3}\) & 1/400 (0.0184) & 427/500 & 0.854 (0.820--0.884) & Secondary \\
Empirical 95\% & \(-7.643\times10^{-3}\) & 21/400 (0.0882) & 458/500 & 0.916 (0.888--0.939) & Secondary \\
Risk-controlled rule & 0.018081 & 0/400 (0.0132) & 342/500 & 0.684 (0.641--0.725) & Original binding rule \\
Oracle regime label & truth label & 0/400 (0.0132) & 500/500 & 1 (0.993--1) & Non-deployable ceiling \\
\specialrule{\heavyrulewidth}{0pt}{0pt}
\end{tabularx}
\end{table}

\subsection{Measured archive states and trajectory constructibility}\label{sec:sr_external_group}

The core measured-archive evaluations resolve three evidence states. Cell
Painting tests a target-calibrated adaptive-replicate rule, LINCS--LJP asks
whether a matched 24-h transcriptomic profile adds net value for predicting a
72-h growth-rate endpoint after a 3-h profile, and CTRP provides a
pre-partitioned locked evaluation in which every retained capacity action can
be valued by measured lookup. Causal Chambers has a separate role: its
post-selection analysis asks whether an author-defined proxy computation can
be run on measured trajectories. It is constructibility evidence rather than
an additional policy evaluation.

\subsubsection{Compatibility with measured physical-system archives}\label{sec:sr_ds03_audit}

We pinned the Causal Chambers repository at commit
\texttt{6da4e646\allowbreak{}cf62bc90\allowbreak{}2f87911c\allowbreak{}866298a8\allowbreak{}c98ad2eb}. We downloaded all 14
non-image archives listed at that commit and verified their official MD5
checksums. Six image archives were excluded before
measurement inspection because they require a separate representation and
compute contract. One named experiment CSV was counted as one candidate
physical session; timestamp rows within a CSV were treated as repeated
measurements.

Thirteen scalar datasets met the five compatibility criteria: ordered timestamps, at least two
candidate scalar measurement channels, an actuator--response schema, a
deterministic experiment- or descriptive time-level split and a
byte-reproducible raw-to-manifest transform. Randomized intervention archives
did not contain a logged per-row action propensity, and the remaining
interventional archives were descriptive. The largest individual dataset contained 59
experiment sessions. None met the frozen requirement of at least 104
independent NULL validation sessions, derived from
\(1-(5\times10^{-3})^{1/n}<0.05\), together with propensity support.

These properties restricted the analysis to physical constructibility. No
prospective execution of the complete measurement rule on an instrument was
conducted. The complete task
registry contains 20 released datasets and the excluded metadata template.
The session manifest records file identities, row counts, timestamp checks and
deterministic split assignments.

\subsubsection{Physical-archive constructibility}\label{sec:sr_study4p}

The physical-archive constructibility analysis followed the compatibility
assessment. Its design was fixed after an initial analysis that included the
locked-suffix predictive comparisons. The reported results are therefore
post-selection and descriptive. They do not provide prospective validation of
an instrument-executed measurement rule or independent evidence of risk control.

The physical measurements came from the published wind-tunnel and light-tunnel
Causal Chambers archives~\cite{gamella2025}. Individual response, actuator and
candidate measurement columns were checked against the pinned scalar schemas.
We defined the capacity construction as follows: \(K_0\) used mandatory
actuator/state variables and lagged response history, \(K=1\) added one optional
scalar measurement family, and \(K=2\) added two such families. For wind-tunnel
data the response was downstream pressure and the two added families grouped
auxiliary pressure and acoustic channels. For light-tunnel data the response
was visible-light sensor 1 and the added families grouped infrared photodiode
and infrared thermal channels. These groupings are archive-backed predictive
measurement-capacity proxies, not documented physical actions or randomized
treatments.

Within a session, the pilot was the strict time prefix
\[
n_{\mathrm{pilot}}
=
\min\{500,\max[100,\lfloor0.2n_{\mathrm{rows}}\rfloor]\}.
\]
For each proxy capacity, a standardized ridge model produced rolling-origin
next-step response errors within the pilot. The descriptive score was the
normalized \(K_0\) loss minus the smaller normalized loss under \(K=1\) or
\(K=2\). A greedy rule selected the lowest pilot loss; the reported
one-standard-error rule deviated from \(K_0\) only when the pilot improvement
exceeded its session-level standard error. Missing or insufficient pilot data,
out-of-distribution features, degenerate uncertainty, prediction failure or the
absence of positive pilot improvement returned \(K_0\). These rules define
archive-level proxy actions rather than physical interventions.

All 13 scalar datasets supported the score--action--fallback computation
(Table~\ref{tab:sr_study4p_datasets}). The archive contained 196 sessions, of
which 131 produced a finite pilot score. The one-standard-error rule selected
\(K_0\) in 138 sessions, \(K=1\) in 24 and \(K=2\) in 34. The 138 fallback
decisions comprised 29 insufficient-pilot cases, 73 cases without positive
pilot improvement and 36 out-of-distribution cases. Six datasets supported
experiment-level session summaries; the others retained descriptive time
splits and were not relabeled as independent experiments.

\begin{table}[!t]
\centering
\caption{\textbf{Archive-level physical constructibility.} Fractions are over archived sessions. ``Exp.'' denotes an experiment-level split; ``time'' denotes descriptive time segmentation only. Activity refers to the author-constructed proxy action.}
\label{tab:sr_study4p_datasets}
\fontsize{7.2}{8.1}\selectfont
\renewcommand{\arraystretch}{1.06}
\rowcolors{2}{tablerow}{white}
\begin{tabularx}{\textwidth}{@{}L{2.9cm}C{1.0cm}C{1.0cm}C{1.25cm}C{1.3cm}C{1.3cm}C{0.9cm}@{}}
\specialrule{\heavyrulewidth}{0pt}{0pt}
\rowcolor{tablehead}
Archive & System & Sessions & Split & Finite score & Proxy active & OOD \\
\specialrule{\lightrulewidth}{0pt}{0pt}
WT change-points & wind & 10 & Exp. & 0.900 & 0.500 & 0.100 \\
WT walks & wind & 28 & Exp. & 0.893 & 0.750 & 0.107 \\
LT walks & light & 3 & time & 0.667 & 0 & 0.333 \\
LT test & light & 4 & time & 0.750 & 0.750 & 0.250 \\
WT intake impulse & wind & 5 & time & 0.200 & 0 & 0.800 \\
LT interventions & light & 59 & Exp. & 0.831 & 0.186 & 0.170 \\
LT Malus & light & 12 & Exp. & 0.833 & 0.750 & 0.167 \\
LT validation & light & 29 & Exp. & 0.103 & 0.103 & 0 \\
WT validation & wind & 28 & Exp. & 0.643 & 0.036 & 0.250 \\
WT Bernoulli & wind & 3 & time & 0.333 & 0 & 0.667 \\
WT test & wind & 10 & time & 0.600 & 0.300 & 0.400 \\
WT pressure control & wind & 1 & time & 1.000 & 0 & 0 \\
LT SPICE & light & 4 & time & 0.750 & 0.500 & 0.250 \\
\specialrule{\heavyrulewidth}{0pt}{0pt}
\end{tabularx}
\end{table}

We recomputed every action after replacing or permuting suffix response and
candidate-measurement values. Across 196 sessions, scores, actions and fallback
reasons were unchanged, and every replay was deterministic. The implemented
proxy action therefore depends only on prefix data. This data-flow property
does not show that the action improves a physical outcome.

Locked-suffix predictive loss was available for 30 experiment-level sessions,
and comparator intervals overlapped broadly (Table~\ref{tab:sr_study4p_loss}).
The mean normalized loss was \(0.34765\) for \(K_0\), \(0.34417\) for the
greedy proxy and \(0.34433\) for the one-standard-error proxy. Even the
non-deployable archive oracle, selected after observing suffix prediction
loss, had mean \(0.34257\). Thus, the archive showed no descriptive predictive
advantage over \(K_0\). This endpoint is prediction loss on archived
trajectories rather than reward, scientific success, treatment effect or policy
value.

\begin{table}[!t]
\centering
\caption{\textbf{Descriptive archival predictive loss.} Intervals are session-bootstrap intervals over 30 locked, experiment-level sessions. The archive oracle observes suffix prediction loss and is non-deployable.}
\label{tab:sr_study4p_loss}
\small
\renewcommand{\arraystretch}{1.10}
\rowcolors{2}{tablerow}{white}
\begin{tabularx}{\textwidth}{@{}L{4.3cm}C{2.4cm}C{2.4cm}Y@{}}
\specialrule{\heavyrulewidth}{0pt}{0pt}
\rowcolor{tablehead}
Comparator & Mean normalized loss & Session-bootstrap interval & Role \\
\specialrule{\lightrulewidth}{0pt}{0pt}
\(K_0\) & 0.34765 & 0.15771--0.58760 & Fixed proxy baseline \\
Fixed \(K=1\) & 0.34364 & 0.15480--0.57915 & Descriptive comparator \\
Fixed \(K=2\) & 0.34288 & 0.15477--0.57900 & Descriptive comparator \\
Greedy proxy & 0.34417 & 0.15463--0.58210 & Pilot-prefix descriptive rule \\
One-standard-error proxy & 0.34433 & 0.15475--0.58233 & Pilot-prefix fallback rule \\
Archive oracle & 0.34257 & 0.15458--0.57870 & Suffix-informed; non-deployable \\
\specialrule{\heavyrulewidth}{0pt}{0pt}
\end{tabularx}
\end{table}

All 13 proxy groupings were author-constructed, and none represented a
documented physical \(K\) action. The largest archive contained 59 independent
sessions; no dataset met the 104-session NULL-validation requirement or
provided logged per-action propensities. The evidence therefore establishes
schema-level constructibility, prefix-only replay and the absence of a
descriptive archival predictive advantage. Prospective validation of causal
policy value and false-activation control requires a physical study with
documented actions, propensities and enough independent sessions
(Supplementary Fig.~\ref{fig:ed_constructibility}).

\begin{figure}[!htbp]
\centering
\suppfiguretall{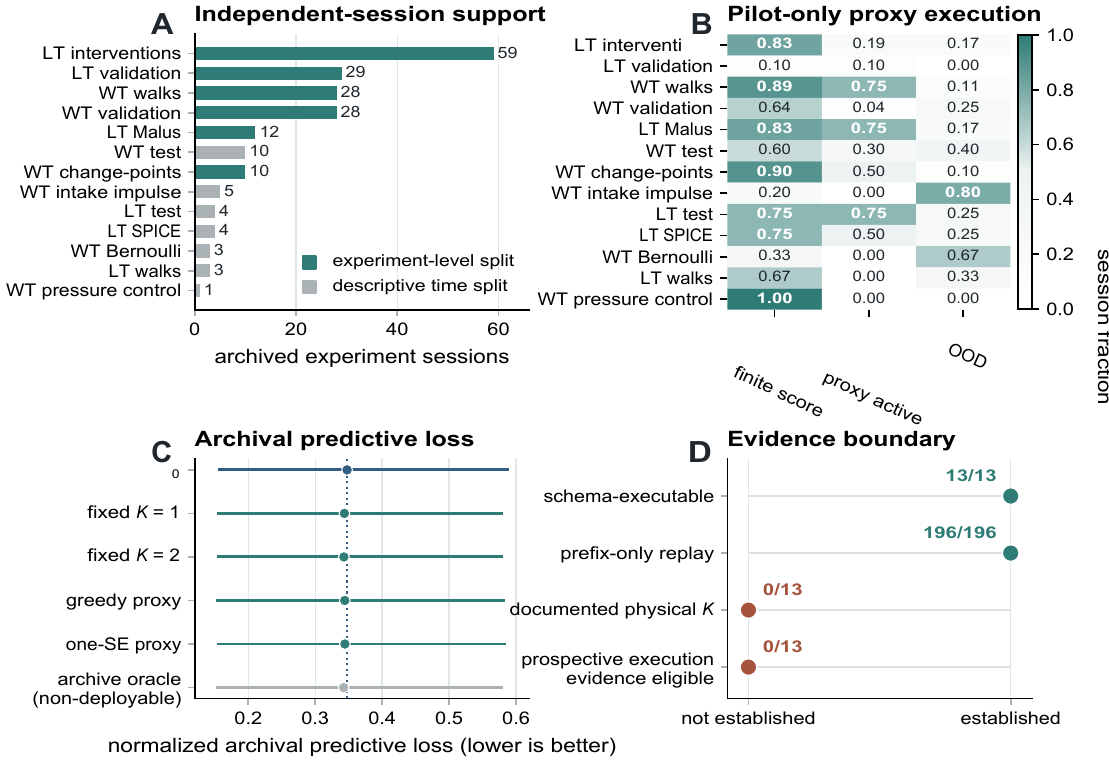}
\caption{\textbf{Constructibility of the score--action--fallback path on
public physical-system archives.} \textbf{A}, Numbers of archived experiment
sessions in 13 scalar Causal Chambers datasets. Dark bars denote datasets with
experiment-level splits eligible for session-level summaries; light bars
denote descriptive time splits. \textbf{B}, Fractions of sessions with a
finite pilot-prefix score, an active proxy action and an out-of-distribution
flag. All 13 schemas admitted score--action--fallback execution, but activity
is not a physical treatment effect. \textbf{C}, Mean normalized archival
predictive loss over 30 locked descriptive sessions. Lines are
session-bootstrap intervals. The archive oracle uses suffix outcomes and is
non-deployable. Intervals overlap broadly, and no descriptive predictive
advantage over \(K_0\) was observed. \textbf{D}, Suffix mutation produced zero
score, action, fallback-reason or deterministic-replay mismatches across 196
sessions, establishing a property of the code data flow. The proxies were
author-defined; none of the 13 mappings was a documented physical \(K\) action,
and no archive supplied the independent sessions and propensities required for
prospective execution of the complete measurement rule on an instrument.}
\label{fig:ed_constructibility}
\end{figure}

\FloatBarrier

\subsubsection{CTRP falls back because the frozen score misses measured opportunity}\label{sec:sr_ctrp}

\noindent\textbf{Source, unit and frozen partitions.}
The locked pharmacogenomic evaluation used exact outcomes from CTRP v2, which
reports measured small-molecule sensitivity across cancer cell
lines~\cite{seashoreludlow2015}. The independent unit was the
\texttt{master\_ccl\_id} family. Only experiments labeled
\texttt{SNP-matched-reference} and families with post-quality-control
measurements for all four frozen compounds were eligible. A metadata-only
salted hash assigned families to DEVELOPMENT below 0.20, CALIBRATION
from 0.20 to below 0.60 and LOCKED\_EVALUATION at or above 0.60 in the
namespace \texttt{OPAL\_HTE03\_CTRP\_FAMILY\_SPLIT\_V1}. The design snapshot
contained 126, 244 and 259 families, respectively. Outcome parsing retained
125 development and 254 locked families after the frozen complete-panel-coverage
and finite-value checks; calibration retained all 244. No family crossed
a partition.

The target of inference is the fixed locked archive, so its activation and
paired-value counts are exact finite-archive quantities. Protocol-specified
Clopper--Pearson intervals provide repeated-sampling summaries under a model
of independent Bernoulli family-level indicators with a common event
probability within each denominator. They do not quantify uncertainty about
the observed archive or transport across tissue lineages, assay batches or
future CTRP-like populations.

The candidate panel was selected before response-value access from compounds
labeled FDA and standard-of-care, with one compound from each of four
mechanistic axes and maximum joint family coverage as the selection rule
(Table~\ref{tab:ctrp_actions}). Panel selection used annotations and binary
availability only. It did not rank compounds or cell lines by response.

\begin{table}[!t]
\centering
\caption{\textbf{Frozen CTRP compound panel and capacity contract.} Compound identities and the nested \(K\) actions were fixed before response-value access. The panel is an assay-capacity abstraction and is not a treatment recommendation.}
\label{tab:ctrp_actions}
\small
\renewcommand{\arraystretch}{1.10}
\rowcolors{2}{tablerow}{white}
\begin{tabularx}{\textwidth}{@{}L{2.2cm}C{1.5cm}L{2.6cm}L{2.1cm}Y@{}}
\specialrule{\heavyrulewidth}{0pt}{0pt}
\rowcolor{tablehead}
Compound & CTRP ID & Mechanistic axis & Eligibility & Role in action \\
\specialrule{\lightrulewidth}{0pt}{0pt}
Paclitaxel & 26956 & Microtubule & FDA; standard-of-care & Eligible for top-\(K\) rank \\
Doxorubicin & 36599 & Topoisomerase & FDA; standard-of-care & Eligible for top-\(K\) rank \\
Bortezomib & 411737 & Proteasome & FDA; standard-of-care & Eligible for top-\(K\) rank \\
Vorinostat & 56554 & Histone deacetylase & FDA; standard-of-care & Eligible for top-\(K\) rank \\
\specialrule{\heavyrulewidth}{0pt}{0pt}
\end{tabularx}
\end{table}

\medskip\noindent\textbf{Decision observables, actions and value.}
The policy used only metadata available before the four panel outcomes:
primary site, broad histology, histology subtype, growth mode, culture medium,
median \(\log(1+\text{cells per well})\), median
\(\log(1+\text{baseline signal})\), and missingness indicators. Categorical
aggregation used the modal value with lexicographic tie breaking; unseen
development categories mapped to \texttt{<OTHER>}. We excluded experiment
date, run identifier, curve-quality statistics, response values, truth strata
and measured opportunity.

A development-only response ranker ordered the four compounds for each family.
The nested actions \(K=1,2,4\) tested the top one, top two or all four ranked
compounds. Each compound corresponds to a 16-concentration series in
duplicate, so the resource counts are 32, 64 and 128 compound wells. Repeated
family--compound curves were aggregated by the prespecified median. With
measured efficacy \(R_{ij}=1-\mathrm{AUC}_{ij}\), the value was
\[
V_i(K;\lambda)
=
\max_{j\in\operatorname{top}K_i}R_{ij}-\lambda(K-1).
\]
The primary normalized cost was \(\lambda=0.01\), with
\(\lambda=0,5\times10^{-3},0.02\) used for descriptive sensitivity analyses.
The resulting values are normalized assay quantities rather than monetary,
patient-benefit or facility-level scientific utilities.

\medskip\noindent\textbf{Development model and score.}
The response ranker was a global ridge model with compound identity and
compound-by-cell-feature interactions. Five outer family folds generated
out-of-fold predictions; four inner folds selected
\(\alpha\in\{0.1,1,10,100\}\) by mean squared error and a one-standard-error
rule favoring the larger penalty. Predictions were clipped to \([0,1]\).
Separate global ridge models predicted \(V_i(K;0.01)\). Under the
one-standard-error rule, which favored smaller \(K\), the best fixed action in
out-of-fold development predictions was \(K_0=2\). This external study did not
use a mixture-of-experts or probe component.

Five hundred family-bootstrap refits estimated the standard error of each
predicted contrast. Let \(\widehat k_i=\arg\max_K\widehat V_i(K)\), with ties
to smaller \(K\). The observable score was
\[
S_i=
\frac{\widehat V_i(\widehat k_i)-\widehat V_i(K_0)}
{\max(\widehat{\mathrm{se}}_i,10^{-6})}.
\]
If \(\widehat k_i=K_0\), or if a value or uncertainty was
degenerate, the rule assigned score zero and fallback.

\medskip\noindent\textbf{Calibration and locked assignment.}
We defined measured opportunity as
\[
G_i=\max_K V_i(K;0.01)-V_i(K_0;0.01),
\]
This quantity measures panel-relative opportunity under the frozen
four-compound, three-capacity and normalized-cost contract, rather than
unrestricted pharmacological or treatment opportunity. We defined the strata
with NULL \(G_i\leq0.01\), AMBIGUOUS \(0.01<G_i<0.05\), and POSITIVE
\(G_i\geq0.05\). We computed these labels only for evaluation; they never
entered the observable score. The threshold grid was
\(\{0,0.5,1,1.5,2,2.5,3,4\}\). Calibration selected the smallest threshold
whose one-sided exact Clopper--Pearson upper bound at \(\alpha=5\times10^{-3}\) was at
most 0.05, provided at least 104 NULL families were present.

The calibration partition contained 236 NULL, three AMBIGUOUS and five POSITIVE families.
At \(\tau=0\), six NULL families activated and the risk upper bound was
0.06501. Threshold \(\tau=0.5\) was the first eligible member, with zero NULL
activations and upper bound 0.02220. We then hash-bound the fitted encoder,
ranker, value models, fixed action and threshold. The assignment table for all
locked families was generated from observables and sealed before parsing the
locked-response table. Locked responses were opened once, after verification
of the assignment hash.

\medskip\noindent\textbf{Locked inference and outcome.}
The external claim required a one-sided 99.5\% exact NULL false-activation
upper bound at most 0.05 and a one-sided 97.5\% paired lower confidence bound
for mean executed value relative to fixed \(K=2\) strictly above zero.
Sensitivity in POSITIVE families carried
no minimum threshold and used a two-sided 95\% exact interval. All four action
values were exact lookups from retained measured responses; neither a
propensity model nor outcome extrapolation was used.

The locked set contained 244 NULL, three AMBIGUOUS and seven POSITIVE
families. Nine families had model-predicted candidate action \(K=4\), but the
largest score was 0.34542, below the frozen threshold. All 254 assignments
therefore fell back to \(K=2\). The NULL false-activation result was 0/244,
with one-sided 99.5\% upper bound 0.02148, meeting the protocol-defined
repeated-sampling risk component. Because no assignment activated, the
locked assignments were identical to an always-\(K_0\) policy.
Executed paired gain was zero for every family, and its one-sided 97.5\% lower
bound was also zero; the joint claim was therefore not supported. POSITIVE sensitivity was
0/7, with exact 95\% interval \(0\)--0.40962. Mean measured opportunity was 0.01322
overall and 0.20542 among POSITIVE families. The post hoc Spearman rank
correlation between the frozen score and measured opportunity was
\(8.69\times10^{-3}\). All seven POSITIVE families and all three AMBIGUOUS
families had score zero, whereas all nine families with a nonzero score were
NULL. This
near-zero association shows poor transfer of the development-trained value
model to the outcome-defined locked strata, without ruling out other
representations of the same decision-time metadata. The seven POSITIVE
families gave a 95\% sensitivity interval extending to 0.40962, leaving
positive-stratum sensitivity imprecisely estimated
(Table~\ref{tab:ctrp_locked_results}).

\begin{table}[!t]
\centering
\caption{\textbf{Pre-partitioned CTRP calibration and locked-evaluation results.} Counts are exact for the fixed archive. Clopper--Pearson bounds have a secondary repeated-sampling interpretation under a model of independent Bernoulli family-level indicators with a common event probability. Sensitivity is estimation-only.}
\label{tab:ctrp_locked_results}
\small
\renewcommand{\arraystretch}{1.10}
\rowcolors{2}{tablerow}{white}
\begin{tabularx}{\textwidth}{@{}L{1.85cm}L{2.85cm}C{1.9cm}C{2.1cm}Y@{}}
\specialrule{\heavyrulewidth}{0pt}{0pt}
\rowcolor{tablehead}
Stage & Quantity & Estimate & Exact/paired bound & Interpretation \\
\specialrule{\lightrulewidth}{0pt}{0pt}
Calibration, \(\tau=0\) & NULL false activation & 6/236 & 99.5\% UCB 0.06501 & Ineligible \\
Calibration, \(\tau=0.5\) & NULL false activation & 0/236 & 99.5\% UCB 0.02220 & Selected threshold \\
Locked & NULL false activation & 0/244 & 99.5\% UCB 0.02148 & Risk threshold met; identical to always-fallback \\
Locked & POSITIVE sensitivity & 0/7 & 95\% CI 0--0.40962 & Estimation only \\
Locked & Paired gain versus \(K=2\) & 0 & 97.5\% lower bound 0 & Value component not supported \\
Locked & Measured opportunity (panel-relative) & 0.01322 overall; 0.20542 POSITIVE & Descriptive & Outcome heterogeneity exists within the frozen panel \\
\specialrule{\heavyrulewidth}{0pt}{0pt}
\end{tabularx}
\end{table}

The molecular panel, measured response example, calibration boundary and
locked score--opportunity result are shown in Supplementary
Fig.~\ref{fig:ed_ctrp}.

\begin{figure}[!htbp]
\centering
\makebox[\linewidth][c]{\includegraphics[width=0.98\textwidth,trim=0 260bp 0 0,clip]{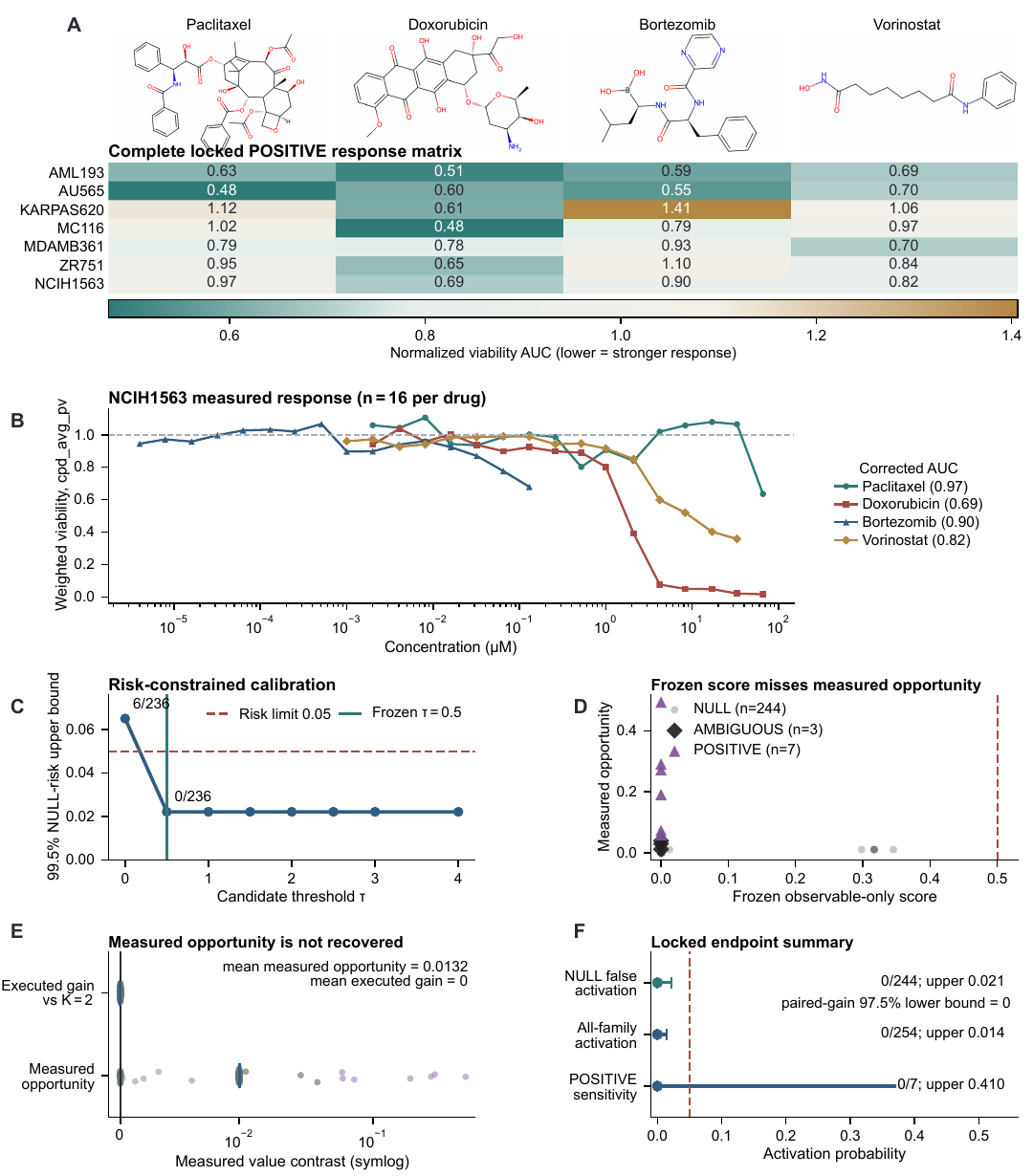}}
\caption{\textbf{A risk-calibrated source-trained score misses measured
opportunity in CTRP.} \textbf{A}, Frozen paclitaxel, doxorubicin, bortezomib
and vorinostat structures above the normalized-viability-AUC matrix for seven
locked POSITIVE families. Structures were not score inputs. POSITIVE denotes
measured opportunity of at least 0.05 under the frozen contract. Each cell has one
curve repeat, and the scale spans 0.47606--1.40590 without clipping.
\textbf{B}, Post-lock NCIH1563 example (family 155421; experiment 793),
selected as the POSITIVE family at median measured opportunity, 0.19019. Symbols
are 16 archive-reported weighted viability summaries per compound; lines guide
the eye, and AUC labels are fitted common-range integrals. Panels \textbf{A}
and \textbf{B} connect the aggregate result to the measured pharmacological
responses.}
\label{fig:ed_ctrp}
\end{figure}

\begin{figure}[!htbp]
\ContinuedFloat
\centering
\makebox[\linewidth][c]{\includegraphics[width=0.98\textwidth,trim=0 0 0 330bp,clip]{extended_data_fig7_ctrp_diagnostics.pdf}}
\caption{\textbf{A risk-calibrated source-trained score misses measured
opportunity in CTRP (continued).} \textbf{C}, Calibration NULL-risk bounds.
At \(\tau=0\), 6/236 activated (99.5\% upper bound, 0.0650); frozen
\(\tau=0.5\) gave 0/236 (upper bound, 0.0222). \textbf{D}, Locked score versus
measured opportunity. The maximum score was
\(0.3454<0.5\), including all seven POSITIVE families; colors and markers are
evaluation-only strata. \textbf{E}, Measured opportunity and executed gain over
254 locked families. Gain was zero because every assignment retained
\(K_0=2\). \textbf{F}, Exact locked results: NULL false activation 0/244
(99.5\% upper bound, \(0.0215<0.05\)); POSITIVE sensitivity 0/7 (95\%
interval, 0--0.410); and paired-gain 97.5\% lower bound, zero. This is an
offline compound-bundle evaluation.}
\end{figure}

Reproduction revealed two floating-point-only comparison differences:
\(7.1\times10^{-13}\) for a \(t\)-critical value and
\(5.6\times10^{-17}\) for a Clopper--Pearson endpoint. The validator now uses
a \(10^{-12}\) comparison tolerance. Inputs, assignments, outputs, claims and
the one-run seal were unchanged; no scientific analysis was rerun.

The CTRP lock therefore resolved to fallback for the current frozen score. The
score met the NULL-risk threshold through universal fallback, while measured
panel-relative opportunity remained and was not ranked. This result does not
establish exhaustion of decision-time information. It indicates that a revised
score must first demonstrate opportunity ranking in an independent evaluation
before selective acquisition is reconsidered.

\subsubsection{Cell Painting yields a lower-burden sparse candidate but retains the fixed plan}
\label{sec:sr_cpg0012}

\noindent\textbf{Study population and outcome separation.}
The target-calibrated morphology study used the public
\texttt{cpg0012-wawer-bioactivecompoundprofiling} archive, which contains
Cell Painting images and morphological profiles for approximately 30,000
small-molecule treatments~\cite{wawer2014profiling,bray2017cellpaintingdata}.
Before downloading or inspecting any profile, we used design metadata to
reconstruct 30,616 planned compound identifiers, 405 mapped assay plates
and 127,904 planned treatment wells. Requiring four planned wells and
excluding exact provided-structure duplicates yielded 25,013 non-overlapping
canonical-compound inferential units. A salted compound-level hash assigned 2,444
clusters to \texttt{TARGET\_TUNING}, 11,304 to
\texttt{TARGET\_CERTIFICATION} and 11,265 to \texttt{FINAL\_LOCK}; no
replicate or exact provided structure crossed partitions. This procedure
provided author-controlled self-blinding rather than independent broker
blinding.

Four planned replicates received deterministic hashed roles: one initial
decision-time replicate, two gate-contingent additional replicates
and one evaluation-only verifier. For action \(a\), utility was
\[
U_i(a)=
\frac{1+\cos\{\widehat z_i(a),z_{i,\mathrm{verify}}\}}{2}.
\]
The fallback estimate used the initial replicate and the active estimate used
the mean of the initial plus two additional replicates. Net gain was
\(\Gamma_i=U_i(\mathrm{ADD\_TWO})-U_i(\mathrm{FALLBACK})-0.02\).
NULL meant \(\Gamma_i\leq0\), POSITIVE meant
\(\Gamma_i\geq5\times10^{-3}\), and measured values between these limits were
AMBIGUOUS. These labels and the additional and verifier profiles were
evaluation-only.

This utility measures technical replicate-profile agreement for estimating a
compound's morphological profile. It is not a measure of hit efficacy,
mechanism-of-action recovery or biological response. The
\(5\times10^{-3}\) POSITIVE margin was an author-set study-specific operational
threshold rather than a biologically calibrated effect size.

\medskip\noindent\textbf{Model and threshold development.}
Decision-time predictors included initial-replicate morphology, stateless
hashed SMILES features, compound source, plate map, well position, plate
prefix, solvent and planned concentration. The selected
\texttt{MULTI\_EXTRA\_LEAF2} model used three 400-tree ExtraTrees heads for
NULL probability, POSITIVE probability and net gain, with minimum leaf size
two and square-root feature subsampling. The fitted model used the
\texttt{TARGET\_TUNING} partition only.

We chose the model and thresholds on a labeled development population of
13,748 compounds from the two pre-final partitions, before the final test. The
predictor fit remained restricted to \texttt{TARGET\_TUNING}; candidate-rule
selection used the labeled development population. Accordingly, neither
pre-final partition is used as independent confirmatory evidence. The
remaining \texttt{FINAL\_LOCK} provided the single held-out final test. Final
outcomes did not enter model fitting, threshold selection, preprocessing or
assignment.

The amended development search treated the seven quantities as screening
objectives rather than confirmatory tests. None of 53,792 threshold
combinations met all seven. Among 12,673 candidates meeting the six non-FDP
requirements, the unique rule minimizing the FDP-UCB excess above 35\% was
fixed before final outcomes. It
activated a compound when predicted NULL probability was at most 0.395, predicted
POSITIVE probability at least 0.633333, predicted gain at least 0.053625 and
OOD percentile at most 1.0. On the 13,748-compound development population it
activated 722 compounds (5.2517\%), with 473 POSITIVE, 232
false-or-unevaluable and 17 AMBIGUOUS activations. Its FDP point estimate was
32.1330\%; its one-sided 95\% UCB was 35.1117\% against the 35\% screen. The
rule met the six non-FDP screening criteria, including a nominal
compound-bootstrap executed-value lower endpoint of
\(1.667\times10^{-3}\). Development selected but did not confirm the rule; the
still-unopened final partition supplied its sole test.

The OOD score was an empirical percentile on \([0,1]\). The frozen ceiling of
1.0 passed all 11,265 FINAL units and was therefore nonbinding in the final
assignment; the NULL-probability, POSITIVE-probability and predicted-gain
thresholds determined the 595 activations. The OOD term should not be read as a
restrictive final-partition protection in this execution.

\medskip\noindent\textbf{Pre-outcome final assignment.}
Using only final initial-replicate profiles and other permitted observables,
the frozen rule assigned 595/11,265 compounds to
\texttt{ADD\_TWO}. Coverage was 5.2818\% and decision-time feature
missingness was 1.1895\%, meeting the pre-outcome requirements of at least 100
activations, at least 5\% coverage and at most 5\% missingness. The assignment
table, model, configuration, protocol, evaluator and encrypted final outcomes
were sealed before the final key was used.

\medskip\noindent\textbf{Final authorization contract and result.}
The final authorization contract was conjunctive. Four inferential components
used marginal one-sided 95\% bounds: a false-activation UCB at most 7.5\%, an
active-set FDP UCB at most 35\%, a nominal compound-bootstrap BCa executed-value
lower endpoint strictly above zero and a POSITIVE-sensitivity LCB at least 5\%.
Three finite-frame guards required at least 100 activations, at least 5\%
coverage and no more than 5\% missing or unevaluable final outcomes. Their
conjunction gives a nominal intersection--union decision under an iid
canonical-compound model and bootstrap regularity. Conditional on the observed
denominators, the three Clopper--Pearson bounds are exact under their binomial
models; the BCa component has nominal level 0.05. The component bounds do not
form a distribution-free finite-sample guarantee or a simultaneous 95\%
confidence set.

These requirements have operational rather than biological meanings. The
7.5\% NULL-risk limit was introduced in the tuning-stage amendment and fixed
before certification and final-outcome access. It constrains unnecessary
additional profiling at the upper confidence bound. It was not derived from a
biological efficacy threshold, monetary valuation or calibrated
facility-capacity model. The 35\% FDP limit requires a non-false share of at
least 65\% among activations, and the 5\% sensitivity floor excludes negligible activation at
archive scale. The value requirement asks whether gain remains positive after
the prespecified cost. Under the prespecified worst-case completion, all 263
technical failures received a gain of \(-1.02\), the theoretical lower bound
of the utility scale. The nine activated failures counted as false and entered
executed value at that bound; the 254 inactive failures retained the fallback
and entered the worst-case POSITIVE denominator.
Measured AMBIGUOUS compounds were reported separately and did not count as
NULL.

The stricter \(5\%\), \(25\%\), \(25\%\) and \(5\times10^{-3}\) thresholds in
Table~\ref{tab:target_power_plan} belong to an earlier, unexecuted three-way
assurance scenario that required a separate certification partition. Once
both pre-final labeled partitions were assigned to development, that scenario
could not supply confirmatory evidence. The v5 single-held-out contract and
11,265 pre-outcome assignments were sealed before any final outcome was
decrypted and remained unchanged thereafter.

\begin{table}[!htbp]
\centering
\caption{\textbf{Original development-only assurance scenario for a
three-way target-calibrated design.} Rates and values include prespecified
worst-case missingness recoding. They are assumptions informed by opened
development resources, not target estimates or confirmatory evidence.}
\label{tab:target_power_plan}
\small
\renewcommand{\arraystretch}{1.08}
\rowcolors{2}{tablerow}{white}
\begin{tabularx}{\textwidth}{@{}L{3.4cm}L{4.2cm}Y@{}}
\specialrule{\heavyrulewidth}{0pt}{0pt}
\rowcolor{tablehead}
Object & Prespecified development scenario & Consequence \\
\specialrule{\lightrulewidth}{0pt}{0pt}
Joint contract & \(R_{\mathrm{wc}}\leq0.05\), \(F_{\mathrm{wc}}\leq0.25\), \(S_{\mathrm{wc}}\geq0.25\), \(L_\mu\geq5\times10^{-3}\) & The allocation gives simultaneous marginal coverage of at least 0.95 and an IUT false-authorization bound of 0.025; finite-frame guards add no population claim \\
Assurance inputs & NULL prevalence \(\geq0.40\), POSITIVE prevalence \(\geq0.30\), coverage \(\geq0.25\), \(R_{\mathrm{wc}}\leq0.01\), \(F_{\mathrm{wc}}\leq0.10\), \(S_{\mathrm{wc}}\geq0.40\), \(\mu_{\mathrm{wc}}\geq4.8\times10^{-2}\), standard deviation \(\leq0.14\), missingness \(\leq0.01\) & Random NULL, POSITIVE and active denominators are integrated rather than replaced by their expectations \\
Bounded-value cushion & Confirmatory margin \(5\times10^{-3}\); design LCB target \(7.5\times10^{-3}\) on hypothetical development support \([-0.02,1]\) & Avoids selecting a numerical knife-edge at the scientific limit \\
Selected partition size & 1,253 independent units in certification and 1,253 in final & Fixed-rule joint-assurance lower bound 0.965 under the stated assumptions; tuning-rule selection success is excluded \\
Outcome-blind split size & 10\% tuning, 45\% certification, 45\% final; minima 150/1,253/1,253 & 2,900 registry units give simultaneous split-count assurance at least 0.950 by marginal binomial bounds and a union bound; the candidate rounds to 3,000 \\
Value sensitivity & Mean lower bound \(0.04\)--\(0.06\), standard-deviation upper bound \(0.10\)--\(0.20\) & Value-component-only requirement ranges from 713 to 2,444 units \\
\specialrule{\heavyrulewidth}{0pt}{0pt}
\end{tabularx}
\end{table}

The one-shot final result contained 384 POSITIVE, 197 measured NULL, nine
technical-failure and five AMBIGUOUS activations. The nine failures plus 197
NULL activations gave 206 worst-case false activations. Their risk denominator
was 4,453 (4,444 measured NULL compounds plus nine active failures), whereas
the worst-case sensitivity denominator was 6,534 (6,280 measured POSITIVE
compounds plus 254 inactive failures). Thus the component counts were
\(206/4453\) and \(384/6534\). Among the 595 activations, 384 were measured
POSITIVE (64.5378\%); measured POSITIVE compounds comprised 6,280 of all
11,265 final units (55.7479\%). Their ratio gives a descriptive 1.15767-fold
active-set enrichment while retaining AMBIGUOUS and unevaluable
units in both denominators. The false-discovery bound nevertheless exceeded
its target, retaining the fixed plan
(Table~\ref{tab:cpg0012_stage_results}).

These ratios and the completed all-unit mean are exact properties of the locked
archive. The Clopper--Pearson bounds have a repeated-sampling interpretation,
and the BCa endpoint~\cite{efron1987bca} a nominal bootstrap interpretation,
under an iid canonical-compound sampling model for the stated frame. Removing
exact structural duplicates does not establish this independence assumption,
and we did not model plate or batch dependence. The BCa endpoint is therefore
not a distribution-free finite-sample guarantee. As a stress test, the
empirical-Bernstein lower bound was
\(-1.099\times10^{-3}\), whereas the fixed-archive mean after the prespecified
least-favorable missing-outcome completion was
\(1.948\times10^{-3}>0\).

\medskip\noindent\textbf{Post-lock microscopy display QC.}
The two Fig.~2 compounds were already fixed as the active POSITIVE and active
measured-NULL units nearest their respective group-median realized gains. For
display only, we enumerated sites 1--6 for each of their four frozen role wells
and applied one common site index to all eight images. A fixed pixel-only QC
rule lexicographically minimized the worst 8-connected near-black component,
then the worst outer-edge bright-or-blue artifact fraction and then site index.
It selected site 5. This post hoc display rule used neither predicted nor
realized gain and supplied no biological evidence. Every selected TIFF is
bound to its public URL, S3 metadata and SHA-256 in
\texttt{main\_fig2\_selected\_raw\_asset\_manifest.csv}; all candidate and
per-image QC values are reported in the accompanying site-selection manifests.
For display, channel-specific 0.20th and 99.80th percentile limits were pooled
across all 40 bound site-5 W1--W5 TIFFs. Intensities were clipped to those
limits, scaled to \([0,1]\) and transformed with a fixed gamma exponent of
0.82, identically within channel. The decision-time and verifier composites
show the full fields; the five single-channel decision-time panels use a
centered 680-by-680-pixel detail crop. No local adjustment, object removal or
stitching was applied.

\begin{table}[!t]
\centering
\caption{\textbf{Target-calibrated cpg0012 development and one-shot final
results.} The contract binds on the confidence bounds, which are reported
separately from point estimates. Final assignments were fixed before outcome
access. Binomial bounds have a repeated-sampling interpretation and BCa
endpoints a nominal bootstrap interpretation under the stated iid-compound
sampling model.}
\label{tab:cpg0012_stage_results}
\small
\renewcommand{\arraystretch}{1.08}
\rowcolors{2}{tablerow}{white}
\begin{tabularx}{\textwidth}{@{}L{3.1cm}C{2.2cm}C{2.2cm}C{2.2cm}Y@{}}
\specialrule{\heavyrulewidth}{0pt}{0pt}
\rowcolor{tablehead}
Quantity & Pooled development & Final & Internal final requirement & Final interpretation \\
\specialrule{\lightrulewidth}{0pt}{0pt}
Units; activations & 13,748; 722 & 11,265; 595 & \(\geq100\) active & Met \\
Activation coverage & 5.2517\% & 5.2818\% & \(\geq5\%\) & Met \\
POSITIVE activations & 473 & 384 & --- & Observed positive-gain activations \\
False activations & 232 & 206 & --- & NULL plus technical failure \\
AMBIGUOUS activations & 17 & 5 & Report separately & Not counted as NULL \\
False-activation point; 95\% UCB & 4.1667\%; 4.6343\% & 4.6261\%; 5.1776\% & UCB \(\leq7.5\%\) & Met \\
FDP point; 95\% UCB & 32.1330\%; 35.1117\% & \textbf{34.6218\%; 37.9660\%} & UCB \(\leq35\%\) & \textbf{Not met} \\
Positive sensitivity; 95\% LCB & 6.0416\%; 5.6045\% & 5.8770\%; 5.4054\% & LCB \(\geq5\%\) & Met \\
Mean executed value; nominal 95\% BCa lower endpoint & \(2.210\times10^{-3}\); \(1.667\times10^{-3}\) & \(1.948\times10^{-3}\); \(1.229\times10^{-3}\) & lower endpoint \(>0\) & Met \\
Outcome missingness & 2.0221\% & 2.3347\% & \(\leq5\%\) & Met \\
\specialrule{\heavyrulewidth}{0pt}{0pt}
\end{tabularx}
\end{table}

The final FDP point estimate, \(206/595=34.6218\%\), was below the 35\% target,
but its exact one-sided 95\% UCB was 37.9660\%, 2.966 percentage points above
the target. The gate nevertheless activated nontrivially, captured 384
POSITIVE compounds and had a
fixed-archive worst-case mean of \(1.948\times10^{-3}>0\), with a nominal
one-sided 95\% compound-bootstrap BCa lower endpoint of
\(1.229\times10^{-3}>0\). Because the prespecified conjunctive rule binds on
the confidence bound rather than the point estimate, the complete contract
kept the fixed plan in place.

At the fixed final size of 595 activations, the upper-bound criterion permits
at most 188 false activations, hence at least 407 non-false activations. This corresponds
to a non-false share of at least 68.4034\%, compared with 65.3782\% observed. Holding
the 206 false activations fixed, the one-sided 95\% Clopper--Pearson upper bound
first satisfies the 35\% criterion at 647 total activations. This requires 52
additional activations, all non-false; 646 total activations still fail. If the
observed FDP is instead held exactly at (206/595), the first integer multiple
that satisfies the criterion contains 43,435 activations and 15,038 false
activations. These counterfactual calculations provide design guidance under
the prespecified independent-compound model and do not alter the final result.

\medskip\noindent\textbf{Post-hoc score-ranking diagnostic.}\label{sec:cpg_score_sweep}
We separately asked how much POSITIVE enrichment either frozen scalar score
could achieve under a simple rank cutoff. This exploratory calculation joined
the 11,002 compounds with measured outcomes to their pre-outcome assignments
and ranked them by either predicted gain or predicted POSITIVE probability.
The measured POSITIVE baseline was \(6{,}280/11{,}002=57.08\%\). Across every
integer cutoff from 50 to 6,000 compounds, predicted gain reached a maximum
enrichment of 1.2376-fold at \(n=218\) (154 POSITIVE), whereas predicted
POSITIVE probability reached 1.1812-fold at \(n=132\) (89 POSITIVE). Their
full-population AUCs were 0.5419 (Hanley--McNeil 95\% interval,
0.5311--0.5527) and 0.5465 (0.5357--0.5573), respectively.

The prespecified \OPAL{} assignment was not a top-\(n\) cutoff on either score:
it combined multiple frozen criteria, activated 595 compounds and included
586 measured outcomes, of which 384 were POSITIVE. Its measured-only
enrichment was therefore 1.1480-fold. These post-hoc sweeps characterize two
individual scores in this archive; they neither replace the prespecified gate
nor establish a general representation ceiling. The peak cutoffs
(\(n=218\) for predicted gain and \(n=132\) for predicted POSITIVE
probability) were selected after outcome access, and any deployable top-\(n\)
rule would require an acquisition budget fixed before evaluation. The complete integer sweeps
and deterministic summary are supplied as
\texttt{cpg0012\_posthoc\_score\_sweep.csv} and
\texttt{cpg0012\_posthoc\_score\_summary.csv}.

\medskip\noindent\textbf{Post-hoc plate-map dependence sensitivity.}
The 320 physical assay plates form 80 non-overlapping
\texttt{plate\_map\_name} blocks of four plates. Each compound's initial,
two incremental and verifier roles occupy four distinct plates within one
block, so resampling individual plates would split the inferential unit. We
therefore held the final gate, assignments and endpoint-specific
least-favorable completions fixed and resampled 80 complete blocks with
replacement 20,000 times using one shared index set (seed 2026082901) for the
percentile analysis. The paired cluster-BCa sensitivity used 20,000
endpoint-specific paired block resamples with deterministic seeds recorded in
the deposited code (2026083402--2026083405 for the four \OPAL{} endpoints).

We report pointwise one-sided 95\% percentile endpoints: upper
95th percentiles for false activation and false discovery, and lower 5th
percentiles for sensitivity and executed value
(Table~\ref{tab:cpg0012_plate_map_sensitivity}). False activation remained
below 7.5\%, false discovery remained above 35\%, and the executed-value lower
endpoint remained positive. The sensitivity lower endpoint was 4.8592\%, just
below its 5\% floor. A secondary paired cluster-BCa calculation gave the same
risk and sensitivity directions but placed the value lower endpoint slightly
below zero. This method sensitivity was concentrated around a block with 137
of 146 final outcomes missing and all seven active rows recorded as technical
failures.

All 80 plate-map blocks contain both development and final compounds, and plate
metadata entered the decision-time representation. The calculation therefore
tests dependence of the fixed-policy endpoints within the observed layout; it
does not create an evaluation held out by plate-map block, repair development--final
block sharing or replace the prespecified compound-level endpoints. A future
population claim across assay blocks requires block-disjoint splitting or
cross-fitting before model selection and outcome access.

\begin{table}[!t]
\centering
\caption{\textbf{Post-hoc cpg0012 plate-map dependence sensitivity.}
Primary endpoints use the prespecified iid-compound model. Plate-map
endpoints resample 80 four-plate blocks and are pointwise, descriptive and
not adjusted for multiplicity. Paired cluster BCa is reported alongside the
percentile endpoints as a method-sensitivity calculation. Neither
changes the prespecified seven-component decision.}
\label{tab:cpg0012_plate_map_sensitivity}
\small
\renewcommand{\arraystretch}{1.08}
\rowcolors{2}{tablerow}{white}
\begin{tabularx}{\textwidth}{@{}Y C{1.55cm} C{2.0cm} C{2.0cm} C{1.85cm} C{1.35cm}@{}}
\specialrule{\heavyrulewidth}{0pt}{0pt}
\rowcolor{tablehead}
Endpoint & Point & Primary compound-level endpoint & Block percentile & Block BCa & Target \\
\specialrule{\lightrulewidth}{0pt}{0pt}
False activation & 4.6261\% & UCB 5.1776\% & UCB 5.6525\% & UCB 5.6923\% & \(\leq7.5\%\) \\
False discovery & 34.6218\% & UCB 37.9660\% & UCB 37.8165\% & UCB 37.8739\% & \(\leq35\%\) \\
POSITIVE sensitivity & 5.8770\% & LCB 5.4054\% & LCB 4.8592\% & LCB 4.9246\% & \(\geq5\%\) \\
Executed value & \(1.948\times10^{-3}\) & LCB \(1.229\times10^{-3}\) & LCB \(5.319\times10^{-4}\) & LCB \(-9.083\times10^{-5}\) & \(>0\) \\
\specialrule{\heavyrulewidth}{0pt}{0pt}
\end{tabularx}
\end{table}

\medskip\noindent\textbf{Common-contract comparator battery.}
The prespecified applicable comparators used identical final compounds,
decision-time scores, action, endpoint and normalized cost
(Table~\ref{tab:cpg0012_comparators}). Always-fallback was the explicit
non-triviality negative control. The expected-net-benefit gate had the largest
reported held-out value mean and BCa lower endpoint, but activated 96.01\% of
compounds and had a 97.14\% false-activation UCB. Forced activation, the simple
uncertainty gate and the risk-only gate likewise achieved broader coverage and
higher value than \OPAL{} while crossing the false-activation limit. \OPAL{}
retained a positive value lower endpoint at 5.28\% coverage and was the only
active rule below the 7.5\% limit. The battery therefore measures a
risk--value frontier rather than an unconditional performance ranking. HCPI,
SPIBB and CSPI-MT require a logged behavior policy and action propensities,
which this fixed archive does not contain. Conservative bandit and
Bayesian-optimization baselines require online interaction. We therefore
report these native data requirements without constructing oracle-assisted
surrogates.

\begin{table}[!t]
\centering
\caption{\textbf{cpg0012 final comparator battery under common observables,
action and cost.} Binomial bounds are one-sided 95\%; values are all-unit
worst-case executed gains with nominal compound-bootstrap BCa
lower endpoints. FDP is undefined
for zero activation; the software value of one is a conservative gate
convention, not a confidence bound. No method met the complete joint contract.}
\label{tab:cpg0012_comparators}
\fontsize{8.1}{9.0}\selectfont
\renewcommand{\arraystretch}{1.08}
\rowcolors{2}{tablerow}{white}
\begin{tabularx}{\textwidth}{@{}Y C{1.35cm}C{1.8cm}C{1.45cm}C{1.8cm}C{1.8cm}C{1.25cm}@{}}
\specialrule{\heavyrulewidth}{0pt}{0pt}
\rowcolor{tablehead}
Method & Coverage & False-activation UCB & FDP UCB & Sensitivity LCB & Value BCa LCB & Added wells \\
\specialrule{\lightrulewidth}{0pt}{0pt}
Always-fallback & 0.00\% & 0.07\% & NA\(^\dagger\) & 0.00\% & 0 & 0 \\
Forced activation & 100.00\% & 100.00\% & 42.55\% & 99.95\% & \(2.064\times10^{-3}\) & 22,530 \\
Simple uncertainty gate & 81.86\% & 83.74\% & 41.65\% & 80.88\% & \(1.4554\times10^{-2}\) & 18,442 \\
ENB gate & 96.01\% & 97.14\% & 41.64\% & 95.13\% & \(1.4829\times10^{-2}\) & 21,630 \\
Risk-only gate & 38.83\% & 37.94\% & 39.06\% & 39.58\% & \(7.750\times10^{-3}\) & 8,748 \\
\OPAL{} final gate & 5.28\% & \textbf{5.18\%} & \textbf{37.97\%} & 5.41\% & \(1.229\times10^{-3}\) & 1,190 \\
\specialrule{\heavyrulewidth}{0pt}{0pt}
\end{tabularx}
\end{table}

\medskip\noindent\textbf{Post-hoc risk-budget and cost sensitivity.}
We first varied only the false-activation risk budget at the prespecified cost
\(c=0.02\), retaining the six pre-outcome assignments and their final-test
statistics. An active rule was risk-feasible when its one-sided 95\%
false-activation UCB did not exceed the candidate budget. Under this
definition, \OPAL{} was the only risk-feasible active rule for every
\(r_0\in[5.1776105886\%,37.9356104264\%)\). At the upper endpoint, the risk-only gate
also became feasible. This descriptive threshold sensitivity concerns the
false-activation component alone, rather than the complete seven-component
contract.

We next varied the normalized incremental cost of the active two-well
bundle while holding all six assignments fixed (Supplementary
Fig.~\ref{fig:cpg_real_data_validation}, D, and Supplementary
Fig.~\ref{fig:cpg_cost_sensitivity}; crossings in Supplementary
Table~\ref{tab:cpg_cost_crossovers}). The nominal one-sided 95\% BCa lower
endpoint crossed zero first for forced activation, followed by the ENB,
simple-uncertainty, risk-only and \OPAL{} rules. The corresponding bundle costs
were 0.02206398, 0.03543449, 0.03778481, 0.03997444 and 0.04320122. Thus,
approximately over \(c\in[0.03997444,0.04320122)\), \OPAL{} was the sole active
rule with a positive nominal BCa lower endpoint. The sequence reflects the
cost exposure of the frozen decisions: the other active rules assigned
8,748--22,530 additional wells, whereas the frozen \OPAL{} gate assigned 1,190. This
fixed-assignment analysis does not compare policies retrained or reselected at
the alternative costs.

The score--gain relationship, selected and nonselected gain distributions,
plate-map dependence analysis and fixed-assignment cost response are assembled
in Supplementary Fig.~\ref{fig:cpg_real_data_validation}.

\begin{figure}[!htbp]
\centering
\makebox[\linewidth][c]{\includegraphics[width=0.98\textwidth]{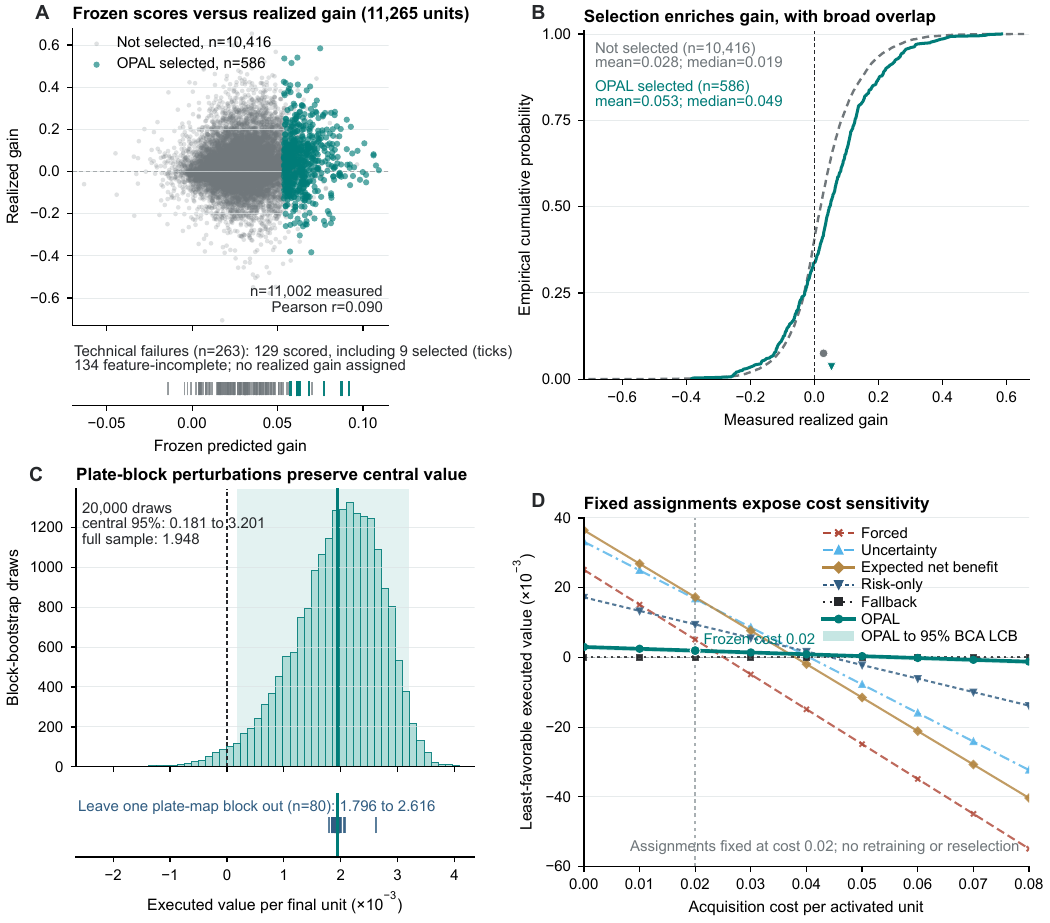}}
\caption{\textbf{Frozen Cell Painting scores, measured gains, dependence and
cost sensitivity.} \textbf{A}, Frozen predicted gain versus realized gain for
all 11,002 final compounds with a measured outcome. Of these, 586 were selected
by \OPAL{} and 10,416 were not. The lower strip accounts separately for 129
compounds with a score but a technical outcome failure and 134
feature-incomplete compounds; the full frozen rule selected 595 of 11,265
compounds. \textbf{B}, Empirical cumulative distributions of measured realized
gain for the same selected and nonselected compounds. The distributions overlap
substantially despite the shift in their centers. \textbf{C}, Distribution of
20,000 four-plate-block bootstrap values under the frozen assignments, with the
central 95\% range and full-sample estimate, and the 80 leave-one-plate-map-block
values shown below. These are post hoc dependence analyses of the observed
layout. \textbf{D}, Least-favorable executed value for all six frozen rules as
the per-activation acquisition cost changes. Assignments and the truth strata
defined at cost 0.02 remain fixed; no rule is retrained or reselected.}
\label{fig:cpg_real_data_validation}
\end{figure}

\begin{figure}[!htbp]
\centering
\suppfigure{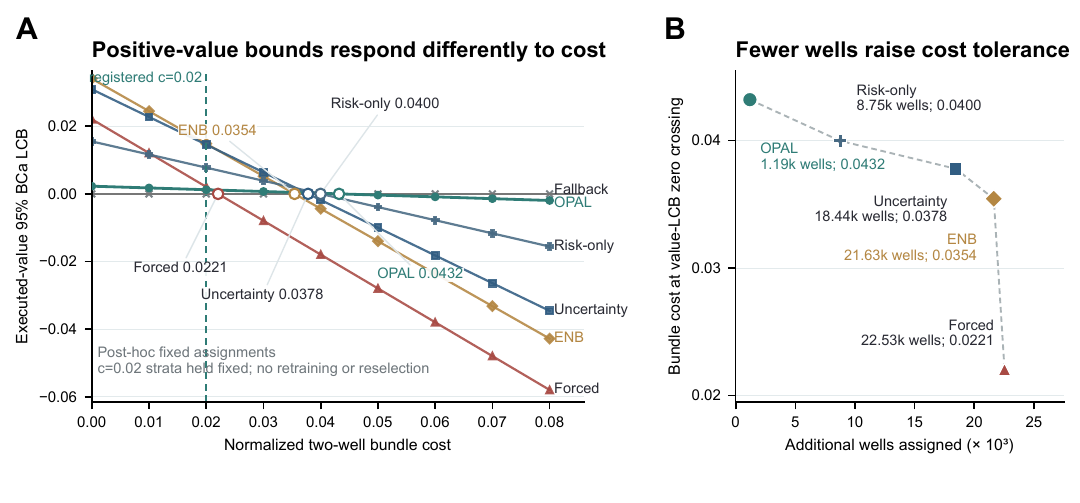}
\caption{\textbf{Fixed-assignment value bounds reveal the cost of activation
volume.} \textbf{A}, Post-hoc cost sensitivity for the six assignments frozen
at the prespecified normalized two-well bundle cost, \(c=0.02\). Assignments and
the prespecified evaluation strata were held fixed; only the common bundle cost
in executed gain changed. Curves show the nominal one-sided 95\% BCa lower
endpoint recomputed with 20,000 compound-level bootstrap resamples at each of
33 costs. The vertical line marks the prespecified cost and open symbols mark
the recomputed lower-bound zero crossings: 0.02206 for forced activation,
0.03543 for expected net benefit, 0.03778 for simple uncertainty, 0.03997 for
the risk-only gate and 0.04320 for \OPAL{}. \textbf{B}, Lower-bound
zero-crossing cost against the number of additional wells assigned by each
active frozen rule. The dashed line guides the eye and is not a fitted
relation; always-fallback is omitted because its value is identically zero and
has no positive-bound crossing. This analysis neither retrains nor reselects a
gate and does not replace the final evaluation at \(c=0.02\).}
\label{fig:cpg_cost_sensitivity}
\end{figure}

\begin{table}[!t]
\centering
\caption{\textbf{Fixed-assignment cpg0012 cost sensitivity.} The zero crossing
is the normalized two-well bundle cost at which the recomputed nominal
one-sided 95\% compound-bootstrap BCa lower endpoint for all-unit executed
value reached zero. Assignments and outcome strata remained fixed at their
prespecified \(c=0.02\) definitions; only net executed value was re-evaluated at
the candidate cost. Crossings were obtained numerically and are descriptive.}
\label{tab:cpg_cost_crossovers}
\small
\renewcommand{\arraystretch}{1.08}
\rowcolors{2}{tablerow}{white}
\begin{tabularx}{\textwidth}{@{}Y C{2.3cm} C{2.3cm} C{3.2cm}@{}}
\specialrule{\heavyrulewidth}{0pt}{0pt}
\rowcolor{tablehead}
Frozen rule & Active compounds & Additional wells & BCa lower-endpoint zero crossing \\
\specialrule{\lightrulewidth}{0pt}{0pt}
Always-fallback & 0 & 0 & Identically zero \\
Forced activation & 11,265 & 22,530 & 0.02206398 \\
ENB gate & 10,815 & 21,630 & 0.03543449 \\
Simple uncertainty gate & 9,221 & 18,442 & 0.03778481 \\
Risk-only gate & 4,374 & 8,748 & 0.03997444 \\
\OPAL{} final gate & 595 & 1,190 & 0.04320122 \\
\specialrule{\heavyrulewidth}{0pt}{0pt}
\end{tabularx}
\end{table}

\FloatBarrier
Under the compound-level analysis, Cell Painting therefore resolved to a
lower-burden sparse candidate rather than a rule ready to replace the fixed
plan. Its 595 activations
required 1,190 additional wells and had positive executed value, but
modest active-set enrichment left the FDP upper bound above its internal
limit. The post-hoc block analysis further showed that a population claim
across assay blocks requires block-disjoint evidence. The next empirical step
is to improve active-set enrichment and test the revised rule in a new
block-disjoint held-out evaluation.

\subsubsection{LINCS--LJP indicates broad temporal-profile acquisition with positive mean value}
\label{sec:sr_lincs}

The LINCS--LJP analysis joined the public GSE70138 Level 5 MODZ L1000
profiles with the published LJP growth-rate measurements
~\cite{subramanian2017l1000,niepel2017lincs}. For each exact
batch--cell--compound--dose episode, \(X\) was the 3-h profile over the 978
directly measured landmark genes, \(Z\) was the matched optional 24-h profile,
and \(M\) was the 72-h growth-rate endpoint. The complete-case frame contained
4,049 episodes from 105 non-overlapping perturbation-identity components, six cell lines
and two LJP batches. Components, rather than episode rows, were assigned to
LEARN, CALIBRATION, CERTIFICATION and FINAL\_LOCK in counts of 42, 21, 21 and
21. The corresponding episode counts were 1,649, 750, 790 and 860.

LEARN, CALIBRATION and CERTIFICATION contributed to the complete final rule and
are therefore development evidence rather than independent confirmation. The
initial FDP-UCB rule (UCB no greater than 0.35) was not met in CALIBRATION and
was relaxed to component-equal FDP point estimate no greater than 0.45 before
CERTIFICATION. At 21 components, the original KL--Hoeffding UCB criterion
required a point FDP no greater than 0.11675; the amended rule therefore made
a substantively weaker point-estimate claim. CERTIFICATION met that FDP rule
but not the original value-LCB-above-zero rule. Value acceptance was then
relaxed to component-equal mean net value above zero. Accordingly,
CERTIFICATION remained development data, and only FINAL\_LOCK evaluated the
complete rule. The support and technical-influence screens remained in force,
and the FDP upper bound and value lower bound remained uncertainty summaries.
The FINAL\_LOCK action map was fixed from 3-h information and permitted
covariates before any FINAL\_LOCK 24-h profile or 72-h outcome was opened.

Among 29 calibration thresholds, the selected threshold was \(-0.06\). It
acquired 706 of 750 calibration episodes across all 21 components, giving a
component-equal activation fraction of 0.94071. The calibration FDP point
estimate was 0.42666 and mean net value was 0.03704. On CERTIFICATION, the
rule acquired all 790 episodes across all 21 components. Its FDP point
estimate was 0.35548 and mean net value was 0.04358. The leave-one-technical-
group-out value estimates ranged from 0.03981 to 0.04825.

The previously unopened FINAL\_LOCK partition reproduced this broad action:
all 860 episodes and all 21 components were activated, with no unevaluable
selected episode. The component-equal FDP point estimate was 0.39917, 5.08
percentage points below the 0.45 operational limit, and the component-equal
mean net value was 0.04295. All 12 batch-by-cell technical groups were
represented. The leave-one-group-out value remained positive throughout,
ranging from 0.03968 to 0.04615, and the largest absolute group-contribution
share was 0.16465. The corresponding FDP upper bound was 0.66111 and the value
lower bound was \(-1.02534\); these wide bounds quantify component-level
uncertainty but did not enter the point-estimate decision rule. In particular,
the Hoeffding value construction uses the complete four-unit prespecified range
\([-2.02,1.98]\) across only 21 equal-weight components. Its one-sided 95\%
penalty is approximately 1.068, which places the lower bound near \(-1.025\)
despite an observed mean near 0.043.

The LINCS--LJP result therefore resolves to broad acquisition within the
observed archive. Positive mean value and stable leave-one-group-out summaries
indicate acquisition of the 24-h profile for all final episodes under the endpoint,
cost, criteria and action map fixed before FINAL. Because activation was
860/860, the result provides no evidence for selective measurement saving.
The wide component-level uncertainty summaries and recurrence of
batch-by-cell groups across partitions limit transport beyond this setting
(Supplementary Table~\ref{tab:lincs_ljp_stage_results}).

\begin{sidewaystable}[p]
\centering
\begin{minipage}{0.93\textheight}
\centering
\begin{threeparttable}
\caption{\textbf{LINCS--LJP calibration, certification and final aggregate
results.} FDP and value are equal-component aggregates. The acceptance rule
used FDP point estimate \(\leq0.45\) and mean net value \(>0\), together with
the support and technical-influence screens.}
\label{tab:lincs_ljp_stage_results}
\fontsize{7.7}{8.7}\selectfont
\renewcommand{\arraystretch}{1.10}
\begin{tabularx}{\linewidth}{@{}L{1.65cm}C{2.2cm}C{2.15cm}C{1.85cm}C{1.5cm}C{1.5cm}C{1.5cm}C{1.5cm}Y@{}}
\toprule
Stage & Selected / total episodes & Activated components & Activation fraction & FDP point & FDP UCB\tnote{a} & Value mean & Value LCB\tnote{a} & LOO value range \\
\midrule
Calibration & 706 / 750 & 21 / 21 & 0.94071 & 0.42666 & 0.68606 & 0.03704 & ---\tnote{b} & ---\tnote{b} \\
Certification & 790 / 790 & 21 / 21 & 1.00000 & 0.35548 & 0.61988 & 0.04358 & \(-1.02470\) & 0.03981--0.04825 \\
FINAL\_LOCK & 860 / 860 & 21 / 21 & 1.00000 & 0.39917 & 0.66111 & 0.04295 & \(-1.02534\) & 0.03968--0.04615 \\
\bottomrule
\end{tabularx}
\begin{tablenotes}[flushleft]\footnotesize
\item[a] One-sided 95\% component-level uncertainty summaries; neither bound was an acceptance criterion for the final operational rule.
\item[b] These calibration-stage summaries were not part of the released aggregate table.
\end{tablenotes}
\end{threeparttable}
\end{minipage}
\end{sidewaystable}

\subsection{Evidence states and experimental consequences}\label{sec:sr_claims}

Supplementary Table~\ref{tab:claims} separates empirical archive states from
methodological evidence. Across the three biological archives, no evaluation
justified a nondegenerate measurement-saving selective policy that met every
study-specific criterion. This shared statement does not collapse the results
into one outcome. Cell Painting produced a lower-burden candidate with
positive executed value, but its binding FDP uncertainty kept the fixed
plan. LINCS--LJP paired positive mean point-estimate value with acquisition of
every final episode, indicating broad acquisition rather than selective
saving. CTRP fell back because its current frozen score did not rank
measured opportunity.

The three states imply different next experiments. Cell Painting calls for
stronger active-set enrichment followed by block-disjoint held-out evaluation.
LINCS--LJP points to budgeting the optional profile broadly within the observed
setting. CTRP requires a revised decision-time score before another selective
policy is evaluated. The formal and simulator results answer a distinct
question: they define identification and finite-sample boundaries and show,
under constructed known-truth conditions, that risk control can retain a
nontrivial active branch. The retrospective and constructibility analyses
likewise establish methodological properties rather than additional
biological archive states.

\begin{sidewaystable}[p]
\centering
\begin{minipage}{0.93\textheight}
\centering
\caption{\textbf{Claim--evidence map across empirical and methodological layers.} Empirical states report the experimental consequence supported in each fixed archive. Formal, simulated and reconstructive analyses characterize framework behavior rather than additional archive states.}
\label{tab:claims}
\fontsize{7.5}{7.9}\selectfont
\renewcommand{\arraystretch}{0.90}
\rowcolors{2}{tablerow}{white}
\begin{tabularx}{\linewidth}{@{}Y L{4.2cm}L{3.5cm}Y@{}}
\specialrule{\heavyrulewidth}{0pt}{0pt}
\rowcolor{tablehead}
Claim & Evidence & Final status & Boundary \\
\specialrule{\lightrulewidth}{0pt}{0pt}
Opportunity is bounded by information & Proposition~\ref{prop:opportunity} and Proposition~\ref{prop:frontier} & Supported analytically & Common utility and payoff-sufficient information channels \\
Probe option value vanishes under a common optimizer & Proposition~\ref{prop:option}; retrospective selector-frontier mechanism diagnostic & Theoretical; empirical signature observed & Finite action set; gross value before cost; retrospective selector analysis did not satisfy exact equality for every bank \\
Selected policy exceeds fixed \(K\) & Retrospective simultaneous executed selector frontier & Not supported & Retrospective development evidence; no universal inferiority claim \\
Cohort-level simultaneous-band safety & Proposition~\ref{prop:safe}; descriptive application to the retrospective selector-frontier analysis band & Supported analytically; fallback illustrated & No learned member had simultaneous LCB \(\geq0\) at \(\epsilon=0\); no active safe improvement was certified \\
Joint risk--value--nontriviality authorization & Proposition~\ref{prop:joint_authorization} & Supported analytically; not established by the locked external result & Risk, value, nontriviality endpoint, minimum and error allocation must all precede outcome opening \\
Source-only joint target authorization under unrestricted conditional outcome shift & Proposition~\ref{prop:target_shift}; domain-adaptation identification literature & Not identifiable without target outcomes or explicit transport restrictions & Distinct from the finite-campaign \(N=16\) boundary; does not imply every source-only policy fails \\
Target-calibrated A--D recovery & Corollary~\ref{cor:target_calibrated_joint}; cpg0012 design-only separation check, development selection, sealed assignments and one-shot final result & FDP upper bound kept the fixed plan in place & Single-held-out final test; FDP 34.62\% point estimate but 37.97\% 95\% UCB versus 35\% internal limit \\
Finite-campaign exact procedure is valid & Propositions~\ref{prop:exact_pair_coverage} and~\ref{prop:finite_safe}; finite-support coverage and non-certifiability certificates & Supported & Known channel grid and frozen value/cost table; pilot cost remains charged under fallback \\
Finite-campaign active qualification is feasible & Propositions~\ref{prop:impossibility} and~\ref{prop:full_likelihood_impossibility}; finite-campaign actionable operating characteristics and feasibility sets & Not supported under the prespecified design & \(N=16\), frozen three-class likelihood, complement-value target and error family \\
NULL false activation of \(S^{\mathrm{eval}}\) is controlled & Proposition~\ref{prop:risk}; prespecified held-out simulator validation & Supported: \(0/400\), UCB 0.01316 & One fresh campaign per distinct fixture; evaluation-only score \\
Sensitivity under constrained risk & Held-out continuous-population estimate and exact interval & 0.684 [0.641, 0.725] & Estimation target without a minimum threshold \\
The opportunity diagnostic is deployable & Observable \(S^{\mathrm{dep}}\), deployment-isomorphic calibration and physical logging & Not tested & \(S^{\mathrm{eval}}\) uses potential outcomes and cannot substitute \\
Archive-backed proxy computation is constructible & Physical-archive constructibility analysis of measured trajectories & Supported descriptively & Post-selection; schema-verified columns but author-constructed probe families and \(K\) levels \\
Archive-backed proxy improves physical outcomes & Causal action logging and physical reward & Not tested & Archived predictive loss is not physical reward; no logged propensity \\
External observable-only NULL activation meets the protocol threshold & Locked pharmacogenomic exact-outcome evaluation & 0/244, UCB 0.02148 & Assignments were identical to always-fallback; the repeated-sampling bound assumes independent Bernoulli NULL-family indicators with a common activation probability \\
External observable-only policy recovers measured opportunity & locked pharmacogenomic exact-outcome evaluation paired-value component and POSITIVE sensitivity & Not supported & Paired lower bound \(=0\); 0/7 POSITIVE families activated \\
Target-calibrated policy is non-trivial and controls false activation & cpg0012 sealed final assignments and one-shot final outcomes & Supported within archive: 595/11,265 active; false-activation UCB 5.18\% \(\leq7.5\%\) & Author-controlled procedural separation; adaptive-replicate abstraction \\
Target-calibrated policy has positive executed value and minimum sensitivity & cpg0012 all-unit worst-case value and POSITIVE-stratum bounds & Prespecified model: value LCB \(1.229\times10^{-3}>0\); sensitivity LCB 5.41\% \(\geq5\%\) & Block sensitivity LCB 4.86\%; value positive by percentile but not BCa; new-block claims need block-disjoint evidence \\
Target-calibrated policy meets complete joint certificate & cpg0012 one-shot final joint result & Not supported & FDP point estimate 34.62\% was below 35\%, but its binding 95\% UCB was 37.97\% \\
The optional 24-h profile has positive component-equal mean point-estimate value in LINCS--LJP & Component-disjoint FINAL\_LOCK aggregate & Met under the operational point rule: FDP 39.92\% \(\leq45\%\), mean value 0.04295 \(>0\) & 860/860 episodes selected; indicates broad acquisition, not selectivity or measurement saving; UCB 66.11\% and LCB \(-1.02534\) are uncertainty summaries \\
Frozen observable score exhausts decision-time information & Alternative observable representations or a sharp \(G_X\) bound & Not established & The data show policy--opportunity misalignment, not information-theoretic absence \\
Lifecycle adoption is feasible & Positive operational band and finite \(H^\star\) & Not established in the selector-frontier or finite-campaign evidence layers & Operational contrasts were non-positive; normalized synthetic costs \\
Prospective physical-facility generalization and adoption & Online external evaluation with documented actions and calibrated costs & Not tested & Offline archives and post-selection constructibility do not establish prospective facility transport \\
\specialrule{\heavyrulewidth}{0pt}{0pt}
\end{tabularx}
\end{minipage}
\end{sidewaystable}

\clearpage
\section{SUPPLEMENTARY METHODS}\label{sec:supp_methods}
\setcounter{section}{2}
\setcounter{subsection}{0}
\setcounter{subsubsection}{0}

\subsection{Decision theory and deployment contract}\label{sec:si_method_theory}

\subsubsection{Notation and evidence roles}\label{sec:si_overview}

This Supplementary Information provides the assumptions, complete proofs,
simulator specifications, inference algorithms, cost accounting and provenance
for the framework described in Main Methods. Table~\ref{tab:notation}
summarizes the notation and separates deployable, evaluation-only and oracle
evidence roles used below.

\begin{table}[t]
\centering
\caption{\textbf{Core notation and evidence roles.}}
\label{tab:notation}
\small
\renewcommand{\arraystretch}{1.08}
\rowcolors{2}{tablerow}{white}
\begin{tabularx}{\textwidth}{@{}L{3.0cm}Y L{3.45cm}@{}}
\specialrule{\heavyrulewidth}{0pt}{0pt}
\rowcolor{tablehead}
Symbol & Definition & Evidence role \\
\specialrule{\lightrulewidth}{0pt}{0pt}
\(M\) & Payoff-relevant latent campaign state & Oracle truth \\
\(X\) & Information available before commitment & Deployable survey/context \\
\(Z_r\) & Optional probe result & Deployable after purchase; \(r\neq q\) \\
\(\mathcal K\) & Finite capacity action set; primary \(\{0,1,2\}\) & Common action set \\
\(K_{\mathrm{score},-f},K_0\) & Fold score reference; executed fixed comparator and fallback & Record both identities explicitly \\
\(V(\mathcal I)\) & Optimal value under information \(\mathcal I\) & Information-indexed value \\
\(G_M,G_X,G_{XZ}\) & Full-state, precommitment and probe-augmented opportunity & Information hierarchy \\
\(\Omega_r\) & Value of delaying commitment for probe \(r\) & Gross and net forms \\
\(\pi_\lambda\) & Selector at threshold \(\lambda\) & Frozen frontier member \\
\(\Delta_{\mathrm{camp}},G_{A,D}(q)\) & Net campaign value; pre-bill gain on the deployment complement & Finite-campaign estimands \\
\(R_0(\tau)\) & NULL activation probability & Binding constrained risk \\
\(J,E,A,\Gamma\) & Target truth stratum, evaluability, activation and incremental net gain & Target-calibrated design only \\
\(R_{\mathrm{wc}},F_{\mathrm{wc}},S_{\mathrm{wc}},\mu_{\mathrm{wc}}\) & Endpoint-specific worst-case false activation, active-set failure, POSITIVE sensitivity and all-unit executed value & Joint target certificate \\
\(S^{\mathrm{dep}},S^{\mathrm{eval}}\) & Deployable and evaluation-only scores & Roles cannot be interchanged \\
\(H,H^\star\) & Reuse before retraining; break-even reuse & Lifecycle denominator and boundary \\
\specialrule{\heavyrulewidth}{0pt}{0pt}
\end{tabularx}
\end{table}

\subsubsection{Decision contract and deployment isomorphism}\label{sec:deployment_isomorphism}

\paragraph{Physical interpretation of capacity}\label{sec:capacity_interpretation}

Capacity \(K\) denotes a resource-dependent ability to alter an experimental trajectory after initial information becomes available. Examples include reserving a calibration-dependent branch, an operator-supported intervention, a switchable instrument mode or a software-mediated decision slot. The theory requires only a finite action set and a common utility scale; capacity need not be an explicitly named facility resource.

Four commitment modes motivate the abstraction. In a fully reversible mode, capacity is charged only when used and can be chosen at each step. In soft and strong commitment modes, capacity is reserved after a survey or before probes and unused reservations receive partial or zero refunds. In hard commitment, capacity is fixed before the campaign and spot expansion is unavailable. Commitment friction changes reservation, idle, cancellation and spot terms but never multiplies scientific success.

\paragraph{Deployment-isomorphism invariant}\label{sec:deployment_isomorphism_rule}

For every deployed quantity \(T\), let
\[
\begin{aligned}
\mathcal I_{\mathrm{dep}}(T)
=\sigma(&\text{frozen identifiers and prespecified randomization,}\\
        &\text{observations available by the decision time}).
\end{aligned}
\]
The invariant requires:

\begin{enumerate}
\item \(T\) is measurable with respect to \(\mathcal I_{\mathrm{dep}}(T)\);
\item calibration recomputes \(T\) using the identical function and identical number of observations;
\item no latent class, future response, evaluation outcome or oracle action enters a quantity claimed to be deployable;
\item pilot selection is permutation-invariant with respect to storage order and uses a committed hash or prespecified random sample;
\item the independent unit and estimand used for uncertainty match the unit and population acted on at deployment.
\end{enumerate}

The final condition extends data-flow requirements to the estimand itself. A
statistic calibrated for a new-bank superpopulation does not automatically
measure value on the unobserved complement of the current finite campaign. A
simulator may use potential outcomes to construct a truth or evaluation score,
but that score must be labeled \(S^{\mathrm{eval}}\), kept outside the
deployable feature set and excluded from claims of real-time observability.

\subsubsection{Theoretical guarantees}\label{sec:theory}

\paragraph{Information opportunity}\label{sec:opportunity_proof}

Let \((\Omega,\mathcal F,\mathbb P)\) be a probability space and let \(\mathcal K\) be finite. Utilities \(U(k)\) are integrable and already include action-specific operational costs. For a sigma-field \(\mathcal I\), define
\[
\mu_k(\mathcal I)=\mathbb E\{U(k)\mid\mathcal I\},
\qquad
V(\mathcal I)=\mathbb E\max_{k\in\mathcal K}\mu_k(\mathcal I).
\]
Let \(\mathcal I_0\) be the trivial information set, so
\(
V(\mathcal I_0)=\max_k\mathbb E U(k)=V_0.
\)

\begin{lemma}[Information refinement]\label{lem:refinement}
If \(\mathcal I_1\subseteq\mathcal I_2\), then
\[
V(\mathcal I_1)\leq V(\mathcal I_2).
\]
\end{lemma}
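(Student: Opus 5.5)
The plan is the standard tower-property and Jensen argument for the convexity of the maximum. Fix \(\mathcal I_1\subseteq\mathcal I_2\) and write \(\mu_k^{(j)}=\mu_k(\mathcal I_j)=\mathbb E\{U(k)\mid\mathcal I_j\}\) for \(j=1,2\).

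The first step is to relate the two sets of conditional means. Because \(\mathcal I_1\subseteq\mathcal I_2\) and each \(U(k)\) is integrable, the tower property gives \(\mu_k^{(1)}=\mathbb E\{\mu_k^{(2)}\mid\mathcal I_1\}\) almost surely for every \(k\in\mathcal K\).

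Next, for each fixed \(k\), the pointwise inequality \(\mu_k^{(2)}\leq\max_{k'}\mu_{k'}^{(2)}\) holds. Conditioning on \(\mathcal I_1\) and using monotonicity of conditional expectation yields
\[
\mu_k^{(1)}\leq\mathbb E\Bigl\{\max_{k'\in\mathcal K}\mu_{k'}^{(2)}\Bigm|\mathcal I_1\Bigr\}
\]
almost surely. This step needs the right-hand side to exist. Since \(\mathcal K\) is finite, \(\bigl|\max_{k'}\mu_{k'}^{(2)}\bigr|\leq\sum_{k'}|\mu_{k'}^{(2)}|\), and this sum is integrable by conditional Jensen for \(|\cdot|\).

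Then take the maximum over \(k\). The bound above holds for each \(k\) outside a null set, and a finite union of null sets is null. Hence \(\max_k\mu_k^{(1)}\leq\mathbb E\{\max_{k'}\mu_{k'}^{(2)}\mid\mathcal I_1\}\) almost surely. Taking unconditional expectations and applying the tower property once more gives \(V(\mathcal I_1)\leq V(\mathcal I_2)\).

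There is no real obstacle. The only points needing care are integrability of the maximum, which the finiteness of \(\mathcal K\) supplies, and the almost-sure qualifiers, which a finite union of null sets handles. Taking \(\mathcal I_1=\mathcal I_0\) gives \(V_0\leq V(\mathcal I)\) as an immediate corollary, which will be used later for \(G\geq0\).
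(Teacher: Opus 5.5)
Your proof is correct and is essentially the paper's argument: the paper states the inequality \(\max_k\mathbb E\{\mu_k(\mathcal I_2)\mid\mathcal I_1\}\leq\mathbb E\{\max_k\mu_k(\mathcal I_2)\mid\mathcal I_1\}\), cites conditional Jensen for the convex maximum, and takes expectations, which is exactly what you derive directly from the tower property and monotonicity, and you also supply the integrability and null-set details the paper leaves implicit. The paper additionally phrases the result as every \(\mathcal I_1\)-measurable action rule also being \(\mathcal I_2\)-measurable, which is the same fact viewed as a supremum over a larger policy class.
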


\begin{proof}
For any \(\mathcal I_1\)-measurable action \(a_1\), the same action is feasible under \(\mathcal I_2\). Taking the supremum over the larger policy class proves the result. Equivalently,
\[
\max_k\mathbb E\{\mu_k(\mathcal I_2)\mid\mathcal I_1\}
\leq
\mathbb E\{\max_k\mu_k(\mathcal I_2)\mid\mathcal I_1\}
\]
by conditional Jensen for the convex maximum; expectation gives the claim. This is the finite-action decision form of information monotonicity~\cite{blackwell1953}.
\end{proof}

\begin{lemma}[Payoff-sufficient garbling]\label{lem:garbling}
Let \(M\) be a payoff-relevant state and let \(A\) be an information channel, not necessarily a sub-\(\sigma\)-field of \(\sigma(M)\). Suppose, for every \(k\in\mathcal K\),
\[
\mathbb E\{U(k)\mid M,A\}
=
\mathbb E\{U(k)\mid M\}.
\]
Then \(V\{\sigma(A)\}\leq V\{\sigma(M)\}\).
\end{lemma}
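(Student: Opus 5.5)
The plan is to compare the two optimal values through the tower property, using $\sigma(M,A)$ as a common refinement. Since $\sigma(A)\subseteq\sigma(M,A)$, Lemma~\ref{lem:refinement} gives $V\{\sigma(A)\}\leq V\{\sigma(M,A)\}$. It therefore suffices to show that $V\{\sigma(M,A)\}=V\{\sigma(M)\}$, and the payoff-sufficiency hypothesis is exactly what delivers this. For each $k$,
\[
\mu_k\{\sigma(M,A)\}=\mathbb E\{U(k)\mid M,A\}=\mathbb E\{U(k)\mid M\}=\mu_k\{\sigma(M)\}
\quad\text{almost surely.}
\]
Because $\mathcal K$ is finite, the almost-sure equalities hold simultaneously for all $k$ on a single full-measure event. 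Hence $\max_k\mu_k\{\sigma(M,A)\}=\max_k\mu_k\{\sigma(M)\}$ almost surely, and taking expectations gives $V\{\sigma(M,A)\}=V\{\sigma(M)\}$. Chaining the two steps yields $V\{\sigma(A)\}\leq V\{\sigma(M,A)\}=V\{\sigma(M)\}$, which is the claim.

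The only technical points are integrability and the handling of null sets. Integrability of each $U(k)$ makes every conditional expectation well defined and integrable. The maximum over a finite set of integrable random variables is integrable, so every $V$ appearing above is finite. The equality $\mathbb E\{U(k)\mid M,A\}=\mathbb E\{U(k)\mid M\}$ is stated up to versions, so I will fix versions and use finiteness of $\mathcal K$ to intersect the finitely many null sets.

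I expect no step to be a real obstacle. The one point to state carefully is why $A$ need not be $\sigma(M)$-measurable: the argument never asks whether $\sigma(A)\subseteq\sigma(M)$. It uses only the joint field $\sigma(M,A)$, which is why the lemma allows a general channel rather than a garbling in the strict sub-$\sigma$-field sense. As a cross-check, one can also argue directly through the tower property and conditional Jensen, as in the proof of Lemma~\ref{lem:refinement}:
\[
\max_k\mu_k\{\sigma(A)\}=\max_k\mathbb E[\mu_k\{\sigma(M)\}\mid A]\leq\mathbb E[\max_k\mu_k\{\sigma(M)\}\mid A].
\]
The first equality uses the tower property through $\sigma(M,A)$ together with sufficiency. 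Taking expectations then gives the result.
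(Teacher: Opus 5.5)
Your proof is correct, and it takes a slightly different route from the paper's. The paper argues in one chain. It writes \(m_k(M)=\mathbb E\{U(k)\mid M\}\) and uses sufficiency with the tower property to get \(\mathbb E\{U(k)\mid A\}=\mathbb E\{m_k(M)\mid A\}\). It then applies conditional Jensen to the convex maximum and takes expectations, which is exactly your ``cross-check.''

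Your main argument instead factors the inequality through the joint field \(\sigma(M,A)\). Monotonicity (Lemma~\ref{lem:refinement}) gives \(V\{\sigma(A)\}\leq V\{\sigma(M,A)\}\). Sufficiency then gives the equality \(V\{\sigma(M,A)\}=V\{\sigma(M)\}\) almost surely, with no inequality needed at that step.

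The ingredients are ultimately the same, because Lemma~\ref{lem:refinement} is itself proved by the tower property plus conditional Jensen. Your decomposition buys three things:
\begin{itemize}
\item it reuses the earlier lemma as a black box;
\item it isolates the slightly stronger intermediate fact that appending a payoff-irrelevant channel to \(M\) leaves the optimal value unchanged;
\item it makes transparent why no inclusion \(\sigma(A)\subseteq\sigma(M)\) is required.
\end{itemize}

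The paper's direct version is more self-contained and makes the garbling structure explicit: the \(A\)-conditional action values are conditional averages of the \(M\)-conditional values \(m_k(M)\). Your treatment of versions and null sets via finiteness of \(\mathcal K\) is appropriate; the paper leaves it implicit.
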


\begin{proof}
Write \(m_k(M)=\mathbb E\{U(k)\mid M\}\). Conditional sufficiency and the tower property give
\[
\mathbb E\{U(k)\mid A\}
=
\mathbb E\{m_k(M)\mid A\}.
\]
Therefore, by conditional Jensen,
\[
\begin{aligned}
V\{\sigma(A)\}
&=
\mathbb E\max_k\mathbb E\{m_k(M)\mid A\}\\
&\leq
\mathbb E\,\mathbb E\{\max_k m_k(M)\mid A\}\\
&=
\mathbb E\max_k m_k(M)
=V\{\sigma(M)\}.
\end{aligned}
\]
This is the finite-action consequence of \(A\) being Blackwell-dominated by the payoff-relevant state~\cite{blackwell1953}.
\end{proof}

\begin{proposition}[Opportunity monotonicity and the common-optimizer condition]\label{prop:opportunity}
Suppose
\begin{equation}\label{eq:si_payoff_sufficiency}
\mathbb E\{U(k)\mid M,X,Z_r\}
=
\mathbb E\{U(k)\mid M\},
\qquad k\in\mathcal K.
\end{equation}
No inclusion \(\sigma(X)\subseteq\sigma(M)\) is required. Then
\[
0\leq G_X\leq G_{XZ}\leq G_M,
\]
where
\[
\begin{aligned}
G_X&=V\{\sigma(X)\}-V_0,\\
G_{XZ}&=V\{\sigma(X,Z_r)\}-V_0,\\
G_M&=V\{\sigma(M)\}-V_0.
\end{aligned}
\]
Also, \(G_M=0\) if and only if there exists \(k_0\in\mathcal K\) such that
\[
m_{k_0}(M)
=
\max_{k\in\mathcal K}m_k(M)
\quad\text{almost surely},
\qquad
m_k(M)=\mathbb E\{U(k)\mid M\}.
\]
\end{proposition}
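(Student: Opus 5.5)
The chain of inequalities follows by applying Lemma~\ref{lem:refinement} and Lemma~\ref{lem:garbling} in turn; the equality characterization is a conditional Jensen argument. The plan is as follows.

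\emph{Lower bound.} Since \(\mathcal I_0\subseteq\sigma(X)\), Lemma~\ref{lem:refinement} gives \(V_0=V(\mathcal I_0)\leq V\{\sigma(X)\}\), hence \(G_X\geq0\).

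\emph{Middle inequality.} Since \(\sigma(X)\subseteq\sigma(X,Z_r)\), Lemma~\ref{lem:refinement} again gives \(G_X\leq G_{XZ}\).

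\emph{Upper inequality.} Apply Lemma~\ref{lem:garbling} with the channel \(A=(X,Z_r)\). Its hypothesis \(\mathbb E\{U(k)\mid M,A\}=\mathbb E\{U(k)\mid M\}\) is exactly Eq.~\eqref{eq:si_payoff_sufficiency}, and the lemma does not require \(\sigma(A)\subseteq\sigma(M)\). This yields \(V\{\sigma(X,Z_r)\}\leq V\{\sigma(M)\}\), so \(G_{XZ}\leq G_M\). Subtracting the common constant \(V_0\) throughout is harmless, since all values are finite by integrability of the finitely many \(U(k)\) (note that \(|\max_k m_k|\leq\sum_k|m_k|\)).

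\emph{Characterization of \(G_M=0\).} First, \(\mathbb E\,m_k(M)=\mathbb E\,U(k)\) by the tower property, so \(V_0=\max_k\mathbb E\,m_k(M)\).

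For the ``if'' direction, suppose \(m_{k_0}(M)=\max_k m_k(M)\) almost surely. Then
\[
V\{\sigma(M)\}=\mathbb E\,m_{k_0}(M)=\mathbb E\,U(k_0)\leq V_0,
\]
and combined with \(G_M\geq0\) this gives \(G_M=0\).

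For the ``only if'' direction, suppose \(G_M=0\) and choose \(k_0\in\arg\max_k\mathbb E\,U(k)\). This maximizer exists because \(\mathcal K\) is finite. Then
\[
\mathbb E\Bigl\{\max_k m_k(M)-m_{k_0}(M)\Bigr\}=V\{\sigma(M)\}-V_0=0 .
\]
The integrand is nonnegative, so it vanishes almost surely. This is the required \(k_0\).

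\emph{Main obstacle.} The only real subtlety is the upper inequality. Because \(\sigma(X)\not\subseteq\sigma(M)\) in general, refinement alone does not apply, and one must route through the payoff-sufficiency hypothesis via Lemma~\ref{lem:garbling}. The equivalence step additionally relies on finiteness of \(\mathcal K\), so that an unconditional maximizer \(k_0\) exists; with infinitely many actions the supremum of \(\mathbb E\,U(k)\) need not be attained. I would state both points explicitly in the proof.
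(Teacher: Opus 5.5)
Your proof is correct and follows essentially the same route as the paper. You use refinement (ignoring information) for \(0\leq G_X\leq G_{XZ}\), the payoff-sufficient garbling lemma with \(A=(X,Z_r)\) for \(G_{XZ}\leq G_M\), and the nonnegative gap \(\max_k m_k(M)-m_{k_0}(M)\) with \(k_0\in\arg\max_k\mathbb E\,U(k)\), whose expectation is \(G_M\), for the common-optimizer equivalence.
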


\begin{proof}
Non-negativity follows because every informed policy can ignore its information. Since
\(
\sigma(X)\subseteq\sigma(X,Z_r),
\)
Lemma~\ref{lem:refinement} gives \(G_X\leq G_{XZ}\). Equation~\eqref{eq:si_payoff_sufficiency} and Lemma~\ref{lem:garbling}, with \(A=(X,Z_r)\), give \(G_{XZ}\leq G_M\). The result therefore accommodates independent noise in \(X\) or \(Z_r\) without requiring either variable to be a sub-\(\sigma\)-field of \(M\).

For the equality condition, choose
\(
k_0\in\arg\max_k\mathbb E U(k).
\)
The random variable
\[
D
=
\max_k m_k(M)-m_{k_0}(M)
\]
is non-negative and has expectation
\[
\mathbb E D
=V\{\sigma(M)\}-\mathbb E U(k_0)
=G_M.
\]
If \(G_M=0\), then \(D=0\) almost surely, giving the common optimizer. Conversely, if a common optimizer exists, its expected utility equals both the informed optimum and the best fixed value, so \(G_M=0\).
\end{proof}

\begin{corollary}[No architecture can create absent opportunity]\label{cor:no_architecture}
If \(G_M=0\), every deployable policy \(\pi(A)\) based on a channel satisfying
\[
\mathbb E\{U(k)\mid M,A\}
=
\mathbb E\{U(k)\mid M\}
\quad\text{for all }k
\]
satisfies
\[
V(\pi)\leq V_0.
\]
After non-negative model, probe or switching cost is added, the inequality
remains valid and is strict whenever the expected incremental
policy-specific cost is strictly positive.
\end{corollary}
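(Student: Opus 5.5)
The plan is to reduce the corollary to Lemma~\ref{lem:garbling} together with the equality characterization in Proposition~\ref{prop:opportunity}. The first step is to fix what a deployable policy is. It is a $\sigma(A)$-measurable map $\pi(A)$ into the finite set $\mathcal K$, with gross value $V(\pi)=\mathbb E\,U\{\pi(A)\}=\sum_k \mathbb E\{U(k)\mathbf 1[\pi(A)=k]\}$. Because $\mathbf 1[\pi(A)=k]$ is $\sigma(A)$-measurable, the tower property gives
\[
V(\pi)=\mathbb E\sum_k \mathbf 1[\pi(A)=k]\,\mathbb E\{U(k)\mid A\}\leq \mathbb E\max_k \mathbb E\{U(k)\mid A\}=V\{\sigma(A)\}.
\]
So any single policy is dominated by the information-indexed optimum. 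If $\pi$ uses internal randomization, I would absorb the independent randomization into $A$. The payoff-sufficiency condition still holds for the enlarged channel, since independent noise does not change $\mathbb E\{U(k)\mid M,A\}$.

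The second step applies Lemma~\ref{lem:garbling} to the channel $A$, using the stated sufficiency hypothesis. This gives $V\{\sigma(A)\}\leq V\{\sigma(M)\}=V_0+G_M$. With $G_M=0$ this is $V_0$, which proves $V(\pi)\leq V_0$. As a cross-check, the common-optimizer clause of Proposition~\ref{prop:opportunity} gives a direct route. Since $m_{k_0}(M)=\max_k m_k(M)$ almost surely, we get $\mathbb E\{U(k)\mid A\}=\mathbb E\{m_k(M)\mid A\}\leq \mathbb E\{m_{k_0}(M)\mid A\}$ for every $k$. Hence no action chosen from $A$ beats $k_0$ in expectation.

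For costs, write the net value as $V(\pi)-\mathbb E\,C_\pi$, where $C_\pi\geq0$ collects model, probe and switching charges beyond those already inside $U(k)$. The weak inequality follows immediately. Strictness follows whenever $\mathbb E\,C_\pi>0$, because then $V(\pi)-\mathbb E\,C_\pi\leq V_0-\mathbb E\,C_\pi<V_0$.

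The main obstacle is bookkeeping rather than analysis. I must make sure the fixed comparator $V_0$ carries none of these incremental charges, so the cost is genuinely ``incremental.'' I must also make sure the probe result, if purchased adaptively, is included in $A$ and satisfies the sufficiency condition. The cleanest way to handle both is to take $A$ to be the full decision-time record (survey, purchased probe outcomes and randomization), and to state explicitly that $U(k)$ contains only action-specific operational costs shared with the fixed plan. The policy-specific overhead is then charged separately.
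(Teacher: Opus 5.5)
Your proposal is correct and follows the paper's own proof: you bound \(V(\pi)\) by the optimal \(\sigma(A)\)-measurable value, apply Lemma~\ref{lem:garbling} to get \(V\{\sigma(A)\}\leq V\{\sigma(M)\}=V_0\) when \(G_M=0\), and then subtract the non-negative cost. Your tower-property derivation of \(V(\pi)\leq V\{\sigma(A)\}\), the absorption of independent randomization into \(A\), and the explicit strictness step \(V(\pi)-\mathbb E\,C_\pi\leq V_0-\mathbb E\,C_\pi<V_0\) spell out steps that the paper states in one line each.
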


\begin{proof}
Lemma~\ref{lem:garbling} gives
\(
V\{\sigma(A)\}\leq V\{\sigma(M)\}=V_0.
\)
The policy \(\pi(A)\) cannot exceed the optimal \(A\)-measurable value. Subtracting non-negative cost cannot increase value.
\end{proof}

\paragraph{Option value of delayed commitment}\label{sec:option_proof}

For \(P_X\)-almost every \(x\) at which the regular conditional expectations
below are defined, fix \(X=x\) and write
\[
\mu_k(z)=\mathbb E\{U(k)\mid X=x,Z=z\},
\qquad
\bar\mu_k=\mathbb E\{\mu_k(Z)\mid X=x\}.
\]

\begin{proposition}[Gross option value and equality]\label{prop:option}
For finite \(\mathcal K\),
\[
\Omega^{\mathrm{gross}}(x)
=
\mathbb E\{\max_k\mu_k(Z)\mid X=x\}
-\max_k\bar\mu_k
\geq0.
\]
Equality holds if and only if there exists \(k_x\in\mathcal K\) such that
\[
\mu_{k_x}(Z)=\max_k\mu_k(Z)
\quad\text{almost surely conditional on }X=x.
\]
\end{proposition}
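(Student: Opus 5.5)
The plan mirrors the proofs of Lemma~\ref{lem:refinement} and the equality part of Proposition~\ref{prop:opportunity}, now working conditionally on \(X=x\) under the regular conditional distribution of \(Z\) given \(X=x\). Throughout, \(x\) is fixed in the full-measure set where the regular conditional expectations exist and the \(\mu_k(Z)\) are integrable. For this \(x\), write \(\mathbb E_x\) for expectation under that conditional law.

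\textbf{Nonnegativity.} For each fixed \(k\) we have \(\mu_k(Z)\leq\max_j\mu_j(Z)\) pointwise. Taking \(\mathbb E_x\) gives \(\bar\mu_k\leq\mathbb E_x\max_j\mu_j(Z)\). Maximizing the left side over the finite set \(\mathcal K\) yields \(\Omega^{\mathrm{gross}}(x)\geq0\). This is Jensen for the convex map \(\max\). Because \(\mathcal K\) is finite, \(\max_j\mu_j(Z)\) is integrable: it is bounded above by \(\sum_j|\mu_j(Z)|\). Hence the difference is well defined and no \(\infty-\infty\) issue arises.

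\textbf{Equality.} Pick \(k_x\in\arg\max_k\bar\mu_k\); it exists because \(\mathcal K\) is finite. Define
\[
D_x=\max_k\mu_k(Z)-\mu_{k_x}(Z)\geq0 .
\]
Then \(\mathbb E_x D_x=\mathbb E_x\max_k\mu_k(Z)-\bar\mu_{k_x}=\Omega^{\mathrm{gross}}(x)\).

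If \(\Omega^{\mathrm{gross}}(x)=0\), then \(D_x\) is a nonnegative variable with zero mean, so \(D_x=0\) almost surely conditional on \(X=x\). Thus \(k_x\) is a common optimizer.

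Conversely, suppose some \(k'\) satisfies \(\mu_{k'}(Z)=\max_k\mu_k(Z)\) almost surely conditionally. Then \(\mathbb E_x\max_k\mu_k(Z)=\bar\mu_{k'}\leq\max_k\bar\mu_k\). Combined with nonnegativity, this gives equality.

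\textbf{Main obstacle.} The only delicate point is measure-theoretic. One must ensure that \(\mu_k(z)\) is a jointly measurable version and that ``almost surely conditional on \(X=x\)'' is taken with respect to the regular conditional law for \(P_X\)-a.e. \(x\). I would handle this by fixing versions of the regular conditional distributions at the outset. Every statement is then a statement about a single probability measure, and the argument above is the Proposition~\ref{prop:opportunity} argument with \(M\) replaced by \(Z\) and the unconditional law replaced by the conditional one.
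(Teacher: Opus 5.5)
Your proof is correct and follows the paper's argument. As in the paper, you choose \(k_x\in\arg\max_k\bar\mu_k\), write \(\Omega^{\mathrm{gross}}(x)\) as the conditional mean of the nonnegative gap \(\max_k\mu_k(Z)-\mu_{k_x}(Z)\), conclude from a zero mean that this gap vanishes almost surely, and obtain the converse by substituting the common optimizer. Your separate pointwise derivation of nonnegativity and your integrability remarks are harmless additions; the paper gets nonnegativity directly from the same gap variable.
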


\begin{proof}
Let \(k_x\in\arg\max_k\bar\mu_k\). Then
\[
\Omega^{\mathrm{gross}}(x)
=
\mathbb E[
\max_k\mu_k(Z)-\mu_{k_x}(Z)
\mid X=x]\geq0.
\]
If the expectation is zero, its non-negative integrand vanishes almost surely, so \(k_x\) is a common optimizer. The converse follows by substituting a common optimizer.
\end{proof}

\begin{corollary}[Net option value]\label{cor:net_option}
For cost \(c_r+c_r^{\mathrm{lat}}\geq0\), an individual net probe value
\(
\Omega_r^{\mathrm{gross}}-c_r-c_r^{\mathrm{lat}}
\)
may be negative. The optimized value
\[
\Omega^\star
=
\max\{0,\max_r(\Omega_r^{\mathrm{gross}}-c_r-c_r^{\mathrm{lat}})\}
\]
is non-negative solely because the no-probe action is retained.
\end{corollary}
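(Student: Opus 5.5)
The plan is to show the statement's two claims separately, both by evaluating the optimized value against the no-probe option.

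For the first claim, that an individual net probe value can be negative, an explicit instance suffices. Take any probe \(r\) whose gross option value is zero. Proposition~\ref{prop:option} says this happens exactly when a common optimizer \(k_x\) exists conditional on \(X=x\); the degenerate case where \(Z_r\) is independent of the payoffs given \(X=x\) is one such case. Charging any strictly positive cost \(c_r+c_r^{\mathrm{lat}}>0\) then makes
\[
\Omega_r^{\mathrm{gross}}-c_r-c_r^{\mathrm{lat}}<0.
\]
More generally, negativity occurs whenever the cost exceeds \(\Omega_r^{\mathrm{gross}}\). This is a possibility statement, so one example is enough.

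For the second claim, non-negativity of \(\Omega^\star\) follows from two facts:
\[
\Omega^\star=\max\{0,\cdot\}\geq0,
\]
and the no-probe option is always feasible. I will argue that the term \(0\) is the net value of declining every probe: no cost is charged, and the decision-maker acts on \(X=x\) alone with value \(\max_k\bar\mu_k\). Each other candidate \(\Omega_r^{\mathrm{gross}}-c_r-c_r^{\mathrm{lat}}\) is, by definition, the improvement of probing \(r\) and then acting optimally on \((x,Z_r)\), measured relative to that same baseline.

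To show the non-negativity comes "solely" from retaining the no-probe action, I will drop the option from the set and exhibit the failure. The restricted optimum \(\max_r(\Omega_r^{\mathrm{gross}}-c_r-c_r^{\mathrm{lat}})\) is negative as soon as every probe has gross value below its cost, for instance by taking each probe to satisfy the first claim's construction. Proposition~\ref{prop:option} guarantees only \(\Omega_r^{\mathrm{gross}}\geq0\), which is a gross statement and says nothing once the costs are subtracted.

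I expect the main obstacle to be interpretive rather than technical. The argument needs all net values to share the same baseline \(\max_k\bar\mu_k\), so that the \(0\) genuinely corresponds to the no-probe policy. It also needs \(\Omega_r^{\mathrm{gross}}\) to be computed with \(Z=Z_r\) in Proposition~\ref{prop:option}. I will state both conventions explicitly.
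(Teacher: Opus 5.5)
Your proposal is correct and follows the immediate argument the paper leaves implicit; the corollary is stated without proof as a direct consequence of Proposition~\ref{prop:option}. Gross value is non-negative but can be exceeded by a non-negative cost, and the floor at zero is exactly the retained no-probe option. Your concern about a common baseline is resolved automatically rather than by convention: by the tower property, \(\bar\mu_k=\mathbb E\{U(k)\mid X=x\}\), so the baseline \(\max_k\bar\mu_k\) does not depend on which probe \(r\) is considered.
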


\paragraph{Frontier upper bound}\label{sec:frontier_proof}

\begin{proposition}[Deployable frontier ceiling]\label{prop:frontier}
Let \(\Pi_\Lambda=\{\pi_\lambda:\lambda\in\Lambda\}\) be a family of policies measurable with respect to \(\mathcal I_{XZ}\). Here \(V(\pi_\lambda)\) denotes expected executed utility after subtracting any policy-specific information cost, whereas \(V(\mathcal I_{XZ})\) is the gross optimal decision value under that information. For an executed fixed comparator \(K_0\), define its population fixed-action gap
\[
b_0=V_0-V(K_0)\geq0,
\qquad
V_0=\max_{k\in\mathcal K}V(k).
\]
If policy-specific information costs are non-negative, then
\[
\sup_{\lambda\in\Lambda}
\{V(\pi_\lambda)-V(K_0)\}
\leq G_{XZ}+b_0
\leq G_M+b_0.
\]
If \(K_0\) is the population-optimal fixed action on the same population,
utility scale and cost contract, then \(b_0=0\).
\end{proposition}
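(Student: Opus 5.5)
The plan is a short chain of inequalities. I would first remove the information cost, then bound the gross value of each frontier member by the optimal value under its information set, and finally recenter at \(V(K_0)\) instead of \(V_0\).

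\emph{Step 1: drop the cost.} Fix \(\lambda\in\Lambda\). Write \(V(\pi_\lambda)=V^{\mathrm{gross}}(\pi_\lambda)-C_\lambda\). Here \(V^{\mathrm{gross}}(\pi_\lambda)=\mathbb E\,U(\pi_\lambda)\) is the expected utility of the chosen action before policy-specific information cost, and \(C_\lambda\geq0\) is that cost. Non-negativity of \(C_\lambda\) gives \(V(\pi_\lambda)\leq V^{\mathrm{gross}}(\pi_\lambda)\).

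\emph{Step 2: bound the gross value.} Since \(\pi_\lambda\) is \(\mathcal I_{XZ}\)-measurable with values in the finite set \(\mathcal K\), decompose
\[
U(\pi_\lambda)=\sum_k \mathbf 1\{\pi_\lambda=k\}U(k).
\]
Conditioning on \(\mathcal I_{XZ}\) and using the tower property gives
\[
V^{\mathrm{gross}}(\pi_\lambda)=\mathbb E\sum_k\mathbf 1\{\pi_\lambda=k\}\mu_k(\mathcal I_{XZ})\leq\mathbb E\max_k\mu_k(\mathcal I_{XZ})=V(\mathcal I_{XZ}).
\]
This is the pointwise statement that any measurable selection is dominated by the conditional maximum, which is the same argument used in Lemma~\ref{lem:refinement}.

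\emph{Step 3: recenter at the fixed comparator.} Subtract \(V(K_0)=V_0-b_0\) from both sides:
\[
V(\pi_\lambda)-V(K_0)\leq V(\mathcal I_{XZ})-V_0+b_0=G_{XZ}+b_0.
\]
The right-hand side does not depend on \(\lambda\), so taking the supremum over \(\Lambda\) gives the first inequality. The second inequality \(G_{XZ}\leq G_M\) follows from Proposition~\ref{prop:opportunity}, under the payoff-sufficiency condition \eqref{eq:si_payoff_sufficiency} that the frontier claim implicitly inherits. I would state this dependence explicitly, since the proposition as written omits that hypothesis.

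\emph{Step 4: the remaining claims.} The fact \(b_0\geq0\) is immediate from the definition \(V_0=\max_k V(k)\geq V(K_0)\). If \(K_0\) attains this maximum on the same population, utility scale and cost contract, then \(b_0=0\).

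The only delicate point is bookkeeping, not mathematics. The \(V(k)\) entering \(V_0\), and the utilities in \(V(\mathcal I_{XZ})\), must be the same operational utilities, including action costs, as those used in \(V(\pi_\lambda)\). Only the information cost may differ between them. Otherwise the recentering in Step 3 mixes scales. I would fix this by stating that \(U(k)\) already includes action-specific costs, as in the setup of Lemma~\ref{lem:refinement}, and that only information cost is separated.
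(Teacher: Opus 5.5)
Your proof is correct and follows the paper's argument. Both bound the gross value of each \(\mathcal I_{XZ}\)-measurable policy by \(V(\mathcal I_{XZ})\), drop the non-negative information cost, recenter via \(V(K_0)=V_0-b_0\), and cite Proposition~\ref{prop:opportunity} for \(G_{XZ}\leq G_M\). You are also right that this last step needs the payoff-sufficiency condition~\eqref{eq:si_payoff_sufficiency}, which the paper's proof uses through Proposition~\ref{prop:opportunity} but does not list among the statement's hypotheses.
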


\begin{proof}
\(V(\mathcal I_{XZ})\) is the supremum over all
\(\mathcal I_{XZ}\)-measurable actions. Hence
\[
V(\pi_\lambda)-V(K_0)
\leq
V(\mathcal I_{XZ})-V_0+\{V_0-V(K_0)\}
=G_{XZ}+b_0.
\]
Subtracting a non-negative policy-specific information cost cannot increase
the left-hand side. Proposition~\ref{prop:opportunity} gives the second
inequality. When \(V(K_0)=V_0\), the baseline-gap term vanishes.
\end{proof}

\begin{remark}
The proposition gives a ceiling rather than an equality. A contrast against a
suboptimal fixed comparator may exceed \(G_{XZ}\) by at most \(b_0\), but this
increment comes from the baseline rather than routing. A flat empirical
frontier may reflect low \(G_{XZ}\), a restrictive selector architecture,
imprecise value estimation or cost.
\end{remark}

\paragraph{Cohort-level safe policy selection}\label{sec:safe_proof}

\begin{proposition}[Simultaneous-band safety under data-dependent selection]\label{prop:safe}
Let \(\Pi\) be a finite, prespecified policy class containing \(K_0\). Define
\[
\Delta(\pi)=V(\pi)-V(K_0).
\]
Suppose random lower bounds \(L(\pi)\) satisfy
\[
\Pr(\mathcal E)\geq1-\alpha,
\qquad
\mathcal E=
\{L(\pi)\leq\Delta(\pi)\text{ for every }\pi\in\Pi\}.
\]
For \(\epsilon\geq0\), let the eligible set be
\[
\mathcal A=\{\pi\in\Pi:L(\pi)\geq-\epsilon\}.
\]
If \(\widehat\pi\) is any data-dependent member of \(\mathcal A\), with \(K_0\) returned when \(\mathcal A\) is empty, then
\[
\Pr\{\Delta(\widehat\pi)\geq-\epsilon\}\geq1-\alpha.
\]
\end{proposition}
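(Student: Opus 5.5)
The plan is to work on the simultaneous-coverage event \(\mathcal E\) and show that, on \(\mathcal E\), the selected policy always satisfies \(\Delta(\widehat\pi)\geq-\epsilon\). Monotonicity of probability then gives the claim. The argument is deterministic once \(\mathcal E\) is fixed. Its key point is that simultaneity over the finite class \(\Pi\) makes the data-dependent choice harmless. The selection rule may depend on the same data that produced the bounds \(L(\pi)\), but \(\mathcal E\) covers every member of \(\Pi\) at once, so no selection-adjusted coverage argument is needed.

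Concretely, fix an outcome \(\omega\in\mathcal E\) and split into two cases according to whether the eligible set is empty.

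\emph{Case 1: \(\mathcal A(\omega)\) is nonempty.} Then \(\widehat\pi(\omega)\in\mathcal A(\omega)\subseteq\Pi\). By the definition of \(\mathcal A\), we have \(L(\widehat\pi(\omega))\geq-\epsilon\). On \(\mathcal E\), the inequality \(L(\pi)\leq\Delta(\pi)\) holds for every \(\pi\in\Pi\), and in particular for the realized choice \(\pi=\widehat\pi(\omega)\). Chaining the two gives \(\Delta(\widehat\pi(\omega))\geq L(\widehat\pi(\omega))\geq-\epsilon\).

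\emph{Case 2: \(\mathcal A(\omega)\) is empty.} The rule returns \(K_0\), and \(\Delta(K_0)=V(K_0)-V(K_0)=0\geq-\epsilon\) because \(\epsilon\geq0\).

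Hence \(\mathcal E\subseteq\{\Delta(\widehat\pi)\geq-\epsilon\}\), and therefore \(\Pr\{\Delta(\widehat\pi)\geq-\epsilon\}\geq\Pr(\mathcal E)\geq1-\alpha\).

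The only delicate step is Case 1, which substitutes a random index \(\widehat\pi\) into an inequality stated for fixed \(\pi\). I would make this explicit by noting that \(\Delta(\cdot)\) is a deterministic population functional on the finite set \(\Pi\). The event \(\mathcal E\) is the finite intersection \(\bigcap_{\pi\in\Pi}\{L(\pi)\leq\Delta(\pi)\}\), so evaluating it pointwise at \(\omega\) along \(\widehat\pi(\omega)\) is legitimate. Measurability of \(\{\Delta(\widehat\pi)\geq-\epsilon\}\) follows because it is a finite union over \(\pi\in\Pi\) of the sets \(\{\widehat\pi=\pi\}\), each of which is measurable when \(\widehat\pi\) is.

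I would also remark that the guarantee concerns the cohort-level population contrast \(\Delta\), not any individual unit. The result requires \(K_0\in\Pi\) only through the fallback convention.
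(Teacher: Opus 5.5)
Your proof is correct and follows the paper's argument. On the simultaneous-coverage event \(\mathcal E\), every eligible policy satisfies \(\Delta(\pi)\geq L(\pi)\geq-\epsilon\), and the empty-set fallback has \(\Delta(K_0)=0\), so failure is contained in \(\mathcal E^c\); your measurability remark is a harmless refinement the paper leaves implicit, though strictly the union should run only over those \(\pi\) with \(\Delta(\pi)\geq-\epsilon\).
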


\begin{proof}
On \(\mathcal E\), every \(\pi\in\mathcal A\) satisfies
\(
\Delta(\pi)\geq L(\pi)\geq-\epsilon.
\)
Thus any selection rule within \(\mathcal A\) is safe on the same event. If the set is empty, the fallback has \(\Delta(K_0)=0\). Therefore failure of the conclusion is contained in \(\mathcal E^c\), whose probability is at most \(\alpha\).
\end{proof}

The implementation decomposes
\begin{equation}\label{eq:epsilon_registry}
\epsilon_{\mathrm{registry}}
=
r_{\max}\delta_{\mathrm{harm}}(\lambda,\alpha)
+C_{\mathrm{decision}}+C_{\mathrm{probe}},
\end{equation}
where \(r_{\max}\) is a prespecified cohort switch-rate cap and \(\delta_{\mathrm{harm}}\) is a simultaneous per-deviation harm bound. A realized switch rate may yield a tighter descriptive bound but does not alter \(\epsilon_{\mathrm{registry}}\). Independent bankwise intervals are insufficient because data-dependent selection can accumulate familywise error.

\begin{proposition}[Conjunctive risk--value--nontriviality authorization]
\label{prop:joint_authorization}
Conditional on all development and calibration data, let the active action,
gate, threshold and action-cost contract be fixed before an independent
evaluation sample is opened. Let
\[
R_0=\Pr\{A(X)=1\mid\mathrm{NULL}\}
\]
be false-activation risk. Let \(\pi_A\) execute the active action when \(A=1\)
and \(K_0\) otherwise, and let
\[
\mu=\mathbb E\{U(\pi_A(X))-U(K_0)\}
\]
be executed value, and let \(T\) be one prespecified nontriviality endpoint,
such as POSITIVE sensitivity or activation coverage. Define the marginal pass
events
\[
P_R=\{U_R\leq r_0\},\qquad
P_\mu=\{L_\mu>0\},\qquad
P_T=\{L_T\geq t_{\min}\},
\]
and suppose that each is a level-\(\alpha_j\) test of its complementary
component null:
\[
\sup_{R_0>r_0}\Pr(P_R)\leq\alpha_R,\quad
\sup_{\mu\leq0}\Pr(P_\mu)\leq\alpha_\mu,\quad
\sup_{T<t_{\min}}\Pr(P_T)\leq\alpha_T.
\]
If \(t_{\min}>0\), the identity of \(T\), and every pass rule were frozen
before evaluation, and authorization requires \(P_R\cap P_\mu\cap P_T\), then
\[
\sup_{\{R_0>r_0\}\cup\{\mu\leq0\}\cup\{T<t_{\min}\}}
\Pr(\mathrm{authorize})
\leq\max(\alpha_R,\alpha_\mu,\alpha_T).
\]
No independence among component statistics is required. Separately, if
\(U_R,L_\mu,L_T\) have marginal coverage \(1-\alpha_R\),
\(1-\alpha_\mu\) and \(1-\alpha_T\), the probability that all three bounds
cover simultaneously is at least
\(1-\alpha_R-\alpha_\mu-\alpha_T\).
\end{proposition}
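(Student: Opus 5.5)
The argument is the standard intersection--union (IUT) argument, applied pointwise over the composite null and conditioned on the development and calibration data throughout. First, I condition on the development and calibration sigma-field. The gate \(A\), the active action, the threshold, the identity of \(T\), \(t_{\min}\) and the three pass rules are then fixed, and the evaluation sample is independent of them. Under this conditioning, \(R_0\), \(\mu\) and \(T\) are fixed population functionals of the evaluation law. The assumed level statements \(\sup_{R_0>r_0}\Pr(P_R)\leq\alpha_R\) and their analogues hold as conditional statements. Freezing before evaluation is exactly what rules out data-dependent choice of which component or bound to report, so no selection adjustment is needed.

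Second, I fix an arbitrary evaluation law \(\mathbb P\) in the composite null \(\{R_0>r_0\}\cup\{\mu\leq0\}\cup\{T<t_{\min}\}\). At least one component null holds under \(\mathbb P\); call it \(j\in\{R,\mu,T\}\), choosing any one if several hold. Because authorization requires \(P_R\cap P_\mu\cap P_T\subseteq P_j\), monotonicity gives
\[
\mathbb P(\mathrm{authorize})\leq\mathbb P(P_j)\leq\sup_{\text{null}_j}\Pr(P_j)\leq\alpha_j\leq\max(\alpha_R,\alpha_\mu,\alpha_T).
\]
Taking the supremum over \(\mathbb P\) in the union yields the claim. Only the event inclusion and a single component level are used, never a joint law of \((U_R,L_\mu,L_T)\), which is why no independence is required.

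Third, the simultaneous-coverage statement follows from a union bound. I let \(C_R=\{R_0\leq U_R\}\), \(C_\mu=\{L_\mu\leq\mu\}\) and \(C_T=\{L_T\leq T\}\), each with complement probability at most \(\alpha_j\). Then \(\Pr(C_R\cap C_\mu\cap C_T)\geq1-\sum_j\Pr(C_j^c)\geq1-\alpha_R-\alpha_\mu-\alpha_T\) by Bonferroni.

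The main obstacle is not computational but definitional. The component levels must hold uniformly over the entire component null, even when another component's null fails simultaneously. I would state this explicitly, since the supremum in each hypothesis ranges over all laws with, for example, \(R_0>r_0\), regardless of \(\mu\) and \(T\). I would also remark on the role of \(t_{\min}>0\): it guarantees that the always-fallback policy (\(T=0\), \(\mu=0\)) lies in the null and so cannot be authorized except with probability at most the stated level. This remark explains the nontriviality claim made in Methods, although the inequality itself does not require it.
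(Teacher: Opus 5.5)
Your proof is correct and follows the paper's intersection--union argument: under any law in the composite null some component null \(j\) holds, authorization is contained in \(P_j\), so its probability is at most \(\alpha_j\leq\max(\alpha_R,\alpha_\mu,\alpha_T)\), and simultaneous coverage follows separately from the union bound. One slip in your closing remark: always-fallback is already in the null because \(\mu=0\leq0\), so the strict \(L_\mu>0\) handles it; what \(t_{\min}>0\) additionally does, as the paper's remark notes, is bring policies with negligible activation into the null.
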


\begin{proof}
If the conjunction is false, at least one component null is true. For any
such component \(j\), authorization is a subset of \(P_j\), so its
probability is at most \(\alpha_j\), and hence at most the largest component
level. This is the intersection--union test argument. The simultaneous
coverage statement follows separately by the union bound.
\end{proof}

\begin{remark}
The nontriviality endpoint cannot be selected after evaluation. If several
endpoints are admissible, either one must be designated in advance or their
selection must be incorporated into a valid component test. Strict
\(L_\mu>0\) already excludes
always-\(K_0\), whose paired gains are identically zero; \(t_{\min}>0\)
additionally excludes policies that activate only at a negligible rate.
\end{remark}

\paragraph{Source-only target-outcome non-identifiability and
target-calibrated recovery}\label{sec:target_calibrated_recovery}

For this paragraph, let \(A(X)\in\{0,1\}\) be a frozen gate and let
\(E\in\{0,1\}\) indicate whether the target outcome is evaluable. When
\(E=1\), let
\(J\in\{\mathrm{NULL},\mathrm{POSITIVE},\mathrm{AMBIGUOUS}\}\) be the
evaluation-only stratum and let
\(\Gamma=U(a_1)-U(a_0)\) be active-minus-fallback gain after all prespecified
action, information and delay costs. Suppose
\(\Gamma\in[g_{\min},g_{\max}]\). The gated policy's incremental value is
\(\mathbb E(A\Gamma)\); \(\mu_{\mathrm{wc}}\) below is its worst-case version. Define
\[
\widetilde\Gamma=
\begin{cases}
\Gamma,&E=1,\\
g_{\min},&E=0.
\end{cases}
\]
Assume the conditional denominators below are positive. The
endpoint-specific worst-case quantities are
\[
\begin{aligned}
R_{\mathrm{wc}}
&=
\frac{
 \Pr(A=1,E=1,J=\mathrm{NULL})+\Pr(A=1,E=0)}
{\Pr(E=1,J=\mathrm{NULL})+\Pr(A=1,E=0)},\\
F_{\mathrm{wc}}
&=
\Pr\{E=0\ \mathrm{or}\ (E=1,J=\mathrm{NULL})\mid A=1\},\\
S_{\mathrm{wc}}
&=
\frac{\Pr(A=1,E=1,J=\mathrm{POSITIVE})}
{\Pr(E=1,J=\mathrm{POSITIVE})+\Pr(A=0,E=0)},\\
\mu_{\mathrm{wc}}
&=\mathbb E\{A\widetilde\Gamma\}.
\end{aligned}
\]
If \(\Pr(A=1)=0\), \(F_{\mathrm{wc}}\) is undefined and the authorization
software assigns the conservative gate value one. Measured AMBIGUOUS units
are reported separately and are neither NULL nor POSITIVE. For every
completion of the missing labels and gains, false-activation risk and FDP are
no larger than \(R_{\mathrm{wc}}\) and \(F_{\mathrm{wc}}\), whereas
sensitivity and mean executed gain are no smaller than
\(S_{\mathrm{wc}}\) and \(\mu_{\mathrm{wc}}\). The four least-favorable
completions need not be the same; each inequality holds endpoint-wise and can
therefore enter the conjunctive rule. The target joint claim is
\begin{equation}\label{eq:target_joint_claim}
R_{\mathrm{wc}}\leq r_0,\qquad F_{\mathrm{wc}}\leq f_0,\qquad
S_{\mathrm{wc}}\geq s_0,\qquad
\mu_{\mathrm{wc}}>\delta_\mu,\qquad \delta_\mu\geq0.
\end{equation}

The following proposition parallels domain-adaptation impossibility results
showing that source labels and unlabeled target observations do not guarantee
target performance without assumptions linking the domains
~\cite{bendavid2010impossibility}. Our target is different: it concerns the
identifiability of a conjunctive authorization claim on the subset selected
for an additional measurement, rather than a general upper bound on target
prediction error. The proof is self-contained and leads directly to the
target-calibrated recovery below.

\begin{proposition}[Source-only target-outcome non-identifiability]
\label{prop:target_shift}
Let \(\mathcal F_{\mathrm{pre}}\) contain every source-domain outcome,
unlabeled target covariate, schema object, frozen model, threshold and random
seed available before target outcomes are opened. Suppose no restriction is
placed on the target conditional law of \(E\) and, when \(E=1\),
\((J,\Gamma)\) given \(X\). Assume \(0\leq r_0<1\). Fix a target
covariate law and an \(\mathcal F_{\mathrm{pre}}\)-measurable gate for which
at least one outcome extension satisfying Eq.~\eqref{eq:target_joint_claim}
exists. Then an adverse extension exists with the same distribution on
\(\mathcal F_{\mathrm{pre}}\), and therefore the same assignments, for which
Eq.~\eqref{eq:target_joint_claim} is false.

Consequently, if an \(\mathcal F_{\mathrm{pre}}\)-measurable certificate
\(D\) obeys
\[
\sup_{Q:\,\text{Eq.~\eqref{eq:target_joint_claim} false}}
\Pr_Q(D=1)\leq\alpha,
\]
then every favorable \(Q^+\) pre-outcome-indistinguishable from an adverse
\(Q^-\) satisfies
\[
\Pr_{Q^+}(D=1)=\Pr_{Q^-}(D=1)\leq\alpha.
\]
\end{proposition}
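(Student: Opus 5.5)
The plan is to build the adverse extension by hand, keeping everything that is observable before target outcomes are opened. Fix the favorable law \(Q^+\) from the hypothesis. It specifies the source law, the target covariate law \(P_X\), the frozen gate \(A(X)\), and some target conditional law of \((E,J,\Gamma)\) given \(X\).

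Define \(Q^-\) as follows. It uses the same source-domain joint law, the same unlabeled target covariate law, and the same seeds, model and threshold, so it agrees with \(Q^+\) on \(\mathcal F_{\mathrm{pre}}\). The assignments \(A(X)\) are \(\mathcal F_{\mathrm{pre}}\)-measurable, so they have the same distribution under both laws. Only the target conditional outcome kernel changes. Under \(Q^-\), set \(E=1\), \(J=\mathrm{NULL}\) and \(\Gamma=g_{\min}\) almost surely for every \(x\).

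Because Proposition~\ref{prop:target_shift} places no restriction on this kernel, \(Q^-\) is admissible. The main step is to check that Eq.~\eqref{eq:target_joint_claim} fails under \(Q^-\). Split into two cases on \(\Pr(A=1)\).

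If \(\Pr(A=1)>0\), then \(R_{\mathrm{wc}}=\Pr(A=1)>0\) but need not exceed \(r_0\), so I would instead use another component. Every activated unit is NULL, so \(F_{\mathrm{wc}}=1\). Also \(\Pr(E=1,J=\mathrm{POSITIVE})=0\), so \(S_{\mathrm{wc}}=0\) (or its denominator vanishes, which the convention treats as failure). Finally \(\mu_{\mathrm{wc}}=g_{\min}\Pr(A=1)\le 0\le\delta_\mu\), provided \(g_{\min}\le0\).

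If \(\Pr(A=1)=0\), then \(\mu_{\mathrm{wc}}=0\le\delta_\mu\) and \(F_{\mathrm{wc}}\) is assigned the value one, so the claim is again false.

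The cleanest route is therefore to target the strict value component \(\mu_{\mathrm{wc}}>\delta_\mu\ge0\), which fails in both cases. The main obstacle is the sign of \(g_{\min}\). I would argue \(g_{\min}\le 0\) because the active action carries nonnegative incremental cost, and note that in the paper's instance \(g_{\min}=-1.02\). If that is not granted, I would fall back on \(F_{\mathrm{wc}}=1>f_0\), assuming \(f_0<1\), or use the sensitivity component with \(s_0>0\).

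An alternative that works in every case is to make all target units unevaluable, \(E\equiv0\). Then \(R_{\mathrm{wc}}=1>r_0\) whenever \(\Pr(A=1)>0\), which is exactly where the hypothesis \(r_0<1\) is used. When \(\Pr(A=1)=0\), \(\mu_{\mathrm{wc}}=0\) fails the strict value bound instead. I would present this \(E\equiv0\) construction as the primary proof, since it needs only \(r_0<1\) and \(\delta_\mu\ge0\), which are both stated assumptions.

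For the consequence, \(D\) is \(\mathcal F_{\mathrm{pre}}\)-measurable and \(Q^+\) and \(Q^-\) coincide on \(\mathcal F_{\mathrm{pre}}\), so \(\Pr_{Q^+}(D=1)=\Pr_{Q^-}(D=1)\). Since \(Q^-\) is adverse, the uniform level-\(\alpha\) hypothesis bounds this probability by \(\alpha\). One remaining detail is that the positivity assumption on the conditional denominators may fail under \(Q^-\). The \(E\equiv0\) construction keeps the \(R_{\mathrm{wc}}\) denominator positive when \(\Pr(A=1)>0\), and the software conventions cover the remaining undefined ratios.
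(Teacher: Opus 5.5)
Your \(E\equiv0\) construction is the paper's construction when \(0<\Pr(A=1)<1\), and your consequence step is correct. The gap is the case \(\Pr(A=1)=1\), which you dismiss by saying that ``the software conventions cover the remaining undefined ratios.'' The paper defines \(R_{\mathrm{wc}},F_{\mathrm{wc}},S_{\mathrm{wc}}\) only under the assumption that their denominators are positive. Its only convention is the value one for \(F_{\mathrm{wc}}\) when \(\Pr(A=1)=0\).

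If the gate activates almost surely and you set \(E\equiv0\), then \(\Pr(A=0,E=0)=0\) and \(\Pr(E=1,J=\mathrm{POSITIVE})=0\). The sensitivity denominator is therefore zero, so \(S_{\mathrm{wc}}\) is undefined and your \(Q^-\) falls outside the model. You have then not shown that the certificate's uniform-control hypothesis, taken over laws for which the joint claim is false, applies to it.

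The same defect affects your all-NULL construction for every gate, because it has no POSITIVE units and no unevaluable inactive units. There \(S_{\mathrm{wc}}\) is \(0/0\), not \(0\).

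Conversely, the case you treat separately, \(\Pr(A=1)=0\), is vacuous. A favorable extension needs \(\mu_{\mathrm{wc}}=\mathbb E(A\widetilde\Gamma)>\delta_\mu\geq0\), which forces \(\Pr(A=1)>0\). Under \(E\equiv0\) that case would also have a zero \(R_{\mathrm{wc}}\) denominator.

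The repair is the paper's. When \(\Pr(A=1)=1\), the favorable extension's sensitivity denominator reduces to \(\Pr^{+}(E=1,J=\mathrm{POSITIVE})>0\), so an evaluable POSITIVE outcome is admissible. Set \(E=0\) with some probability \(\epsilon\in(0,1)\), use that POSITIVE outcome otherwise, and include no NULL units. The numerator and denominator of \(R_{\mathrm{wc}}\) both equal \(\epsilon\), so \(R_{\mathrm{wc}}=1>r_0\) still holds. Meanwhile \(F_{\mathrm{wc}}\) conditions on \(\Pr(A=1)=1>0\), and \(S_{\mathrm{wc}}\) has denominator \(1-\epsilon>0\).

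This is why the paper uses unevaluable rather than NULL units and mixes in POSITIVE mass exactly when the gate activates everywhere. Every conditional denominator stays positive, so the adverse law is a genuine member of the class over which \(D\) is controlled.
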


\begin{proof}
Keep the target covariate law and every
\(\mathcal F_{\mathrm{pre}}\)-measurable object fixed. Favorable value implies
\(a=\Pr(A=1)>0\). If \(a<1\), set \(E=0\) for every target unit. If
\(a=1\), positivity of the prespecified sensitivity denominator under the
favorable extension guarantees an admissible evaluable POSITIVE outcome; set
\(E=0\) with any probability in \((0,1)\) and use that outcome otherwise. All
conditional denominators remain positive and \(R_{\mathrm{wc}}=1>r_0\). This
adverse law has the same pre-outcome distribution and assignments as the
favorable law; measurability of \(D\) and uniform control give the bound.
\end{proof}

\begin{remark}
The existence qualifier matters. Always-\(K_0\) has no favorable extension
with strictly positive executed value and non-trivial activation. The result
does not say that a source-only gate cannot perform well under a particular
target law; it says that its joint target certificate is not identified
uniformly over unrestricted conditional outcome shifts.
\end{remark}

\begin{corollary}[Uniform deterministic target-risk boundary]
\label{cor:uniform_target_risk}
Suppose the target family may concentrate on any covariate point and make its
outcome evaluable and NULL. If a deterministic gate activates at any \(x\), a
target law concentrated on \(X=x,E=1,J=\mathrm{NULL}\) has
\(R_{\mathrm{wc}}=1\). Thus, for \(r_0<1\),
the only deterministic gate that uniformly controls target false activation
over this unrestricted family is always-\(K_0\). For a randomized gate with
activation probability \(a(x)\), risk-only uniform control requires
\(\sup_x a(x)\leq r_0\); an adverse gain extension still prevents a uniform
strictly positive value guarantee.
\end{corollary}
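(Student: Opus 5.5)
The argument is a direct construction, one point mass per case; no appeal to Proposition~\ref{prop:target_shift} is needed beyond its setting. Throughout, I use the definition of \(R_{\mathrm{wc}}\) from the preceding paragraph and the assumption that the target family is unrestricted. It may place all covariate mass on any single point and choose \(E\) and \(J\) freely there.

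\emph{Deterministic gates.} Let \(A\) be a deterministic gate that is not always-\(K_0\), so there is some \(x^\ast\) with \(A(x^\ast)=1\). Take the target law \(Q\) concentrated on \(X=x^\ast\), with \(E=1\) and \(J=\mathrm{NULL}\) almost surely. Under \(Q\) we have \(\Pr(A=1,E=1,J=\mathrm{NULL})=1\), \(\Pr(E=1,J=\mathrm{NULL})=1\) and \(\Pr(A=1,E=0)=0\). The numerator and denominator of \(R_{\mathrm{wc}}\) are therefore both equal to one, so \(R_{\mathrm{wc}}=1>r_0\). Uniform control thus fails for every deterministic gate that activates anywhere. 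Always-\(K_0\) gives numerator zero whenever the denominator is positive, so it controls risk uniformly. This proves the deterministic claim.

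\emph{Randomized gates, risk.} For a gate with activation probability \(a(x)\), I interpret \(A\) as an independent Bernoulli\((a(X))\) draw given \(X\). Under the same point-mass construction at \(x\) we get \(R_{\mathrm{wc}}=a(x)\).

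\emph{Necessity.} Uniform control requires \(a(x)\leq r_0\) at every \(x\) the family can concentrate on, that is, \(\sup_x a(x)\leq r_0\).

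\emph{Sufficiency.} Conversely, \(\Pr(A=1,E=1,J=\mathrm{NULL})\leq \sup a\cdot\Pr(E=1,J=\mathrm{NULL})\) and \(\Pr(A=1,E=0)\leq \sup a\cdot\Pr(E=0)\). The target therefore cannot push \(R_{\mathrm{wc}}\) above \(\sup a\), provided one checks that the second term of the denominator stays in proportion. This is the step I expect to be the main obstacle. The denominator contains \(\Pr(A=1,E=0)\) itself, not \(\Pr(E=0)\). I would handle it by noting that \(R_{\mathrm{wc}}=(p+m)/(d+m)\) is increasing in \(m\) and \(p\), and that \(p\leq \sup a\cdot d\). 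Since \((\sup a\cdot d+m)/(d+m)\) is a weighted average of \(\sup a\) and \(1\), this bound alone is not enough. I would therefore either state the converse only as the necessary condition, which is what the corollary asserts, or restrict to evaluable targets. The statement only says uniform control ``requires'' the sup condition, so I will prove necessity only.

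\emph{Randomized gates, value.} Finally, take any gate with \(a\not\equiv0\), and let \(Q\) be any target law with \(\Pr(A=1)>0\). Keep the covariate law of \(Q\), but set \(E=1\) and \(\Gamma=g_{\min}\) everywhere. Assuming \(g_{\min}\leq 0\), as for a cost-bearing scale, this gives \(\mu_{\mathrm{wc}}=\mathbb E\{a(X)\}g_{\min}\leq0\), so \(\mu_{\mathrm{wc}}>\delta_\mu\geq0\) fails. This adverse gain extension leaves the pre-outcome distribution unchanged, so no uniform strictly positive value guarantee is possible. For always-\(K_0\), \(\mu_{\mathrm{wc}}=0\) trivially, which also fails the strict value requirement.
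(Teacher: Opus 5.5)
Your proposal is correct and uses the same point-mass construction that the paper builds directly into the statement: \(R_{\mathrm{wc}}=1\) at an activating covariate for deterministic gates, \(R_{\mathrm{wc}}=a(x)\) for randomized gates (which yields only the necessary condition \(\sup_x a(x)\leq r_0\)), and a worst-case gain completion for value. Your explicit assumption \(g_{\min}\leq0\) is the one the paper leaves implicit. You are right not to claim sufficiency, and it in fact fails outright if the family may also make outcomes unevaluable: a point mass at \(x\) with \(E=0\) gives \(R_{\mathrm{wc}}=a(x)/a(x)=1\) whenever \(a(x)>0\), so delete the tentative ``cannot push \(R_{\mathrm{wc}}\) above \(\sup a\)'' sentence from the final write-up.
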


The qualifiers \emph{uniform}, \emph{deterministic} and
\emph{unrestricted} are essential. Non-trivial recovery requires either a
defensible transport restriction or target labels. The cpg0012 study used
target labels while retaining a final partition untouched by model and
threshold development.

\begin{corollary}[Target-calibrated specialization]
\label{cor:target_calibrated_joint}
Use target tuning outcomes to select one pointwise gate, and hold that gate
fixed before opening an independent target certification sample. Conditional
on selection, suppose the retained certification units are independent at
the prespecified inferential-cluster level and arise from the declared target
sampling frame. Let \(U_R,U_F,L_S,L_\mu\) be valid one-sided bounds for
\(R_{\mathrm{wc}},F_{\mathrm{wc}},S_{\mathrm{wc}},\mu_{\mathrm{wc}}\),
with marginal error probabilities
\(\alpha_R,\alpha_F,\alpha_S,\alpha_\mu\). Authorize only if
\[
U_R\leq r_0,\qquad U_F\leq f_0,\qquad
L_S\geq s_0,\qquad L_\mu>\delta_\mu.
\]
Then
\[
\sup_{\substack{
R_{\mathrm{wc}}>r_0\ \mathrm{or}\ F_{\mathrm{wc}}>f_0\\
\mathrm{or}\ S_{\mathrm{wc}}<s_0\ \mathrm{or}\
\mu_{\mathrm{wc}}\leq\delta_\mu}}
\Pr(\mathrm{authorize})
\leq
\max(\alpha_R,\alpha_F,\alpha_S,\alpha_\mu).
\]
No independence among the four endpoint statistics is required. Separately,
the probability that all four marginal bounds cover simultaneously is at
least \(1-\alpha_R-\alpha_F-\alpha_S-\alpha_\mu\).
\end{corollary}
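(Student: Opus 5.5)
The plan is to reduce the statement to the intersection--union argument already used for Proposition~\ref{prop:joint_authorization}, now with four components instead of three. I will work throughout conditional on the development data, meaning the target tuning outcomes together with the selected gate, threshold, action and cost contract. Because the gate is frozen before the certification sample is opened, this conditioning leaves the certification units as a sample from the declared target frame. Under that sampling model, the endpoint-specific worst-case quantities \(R_{\mathrm{wc}},F_{\mathrm{wc}},S_{\mathrm{wc}},\mu_{\mathrm{wc}}\) are fixed functionals of the (conditional) target law. The first step is to record this explicitly. Validity of the one-sided bounds is a statement about their marginal coverage under the conditional law, and the selection step must not be counted as part of the evaluation sample.

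The second step translates each bound's validity into a level statement for its pass event. Take \(U_R\) as the model case. Validity means \(\Pr(U_R<R_{\mathrm{wc}})\leq\alpha_R\). Whenever \(R_{\mathrm{wc}}>r_0\), the event \(\{U_R\leq r_0\}\) is contained in \(\{U_R<R_{\mathrm{wc}}\}\), so \(\Pr(U_R\leq r_0)\leq\alpha_R\). The same argument applies to \(U_F\) against \(f_0\). For \(L_S\), I will use \(\{L_S\geq s_0\}\subseteq\{L_S>S_{\mathrm{wc}}\}\) when \(S_{\mathrm{wc}}<s_0\). For \(L_\mu\), I will use \(\{L_\mu>\delta_\mu\}\subseteq\{L_\mu>\mu_{\mathrm{wc}}\}\) when \(\mu_{\mathrm{wc}}\leq\delta_\mu\). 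This yields the component-level hypotheses of Proposition~\ref{prop:joint_authorization}, with the additional component \(F\).

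The third step is the intersection--union argument itself. Fix any law in the composite null, so that at least one of the four inequalities fails. Choose one failing component \(j\). Authorization requires all four pass events, so it is contained in \(P_j\), and therefore its probability is at most \(\alpha_j\leq\max(\alpha_R,\alpha_F,\alpha_S,\alpha_\mu)\). Taking the supremum over the null gives the stated bound. This argument never couples the four statistics, which is why independence among them is not needed.

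For the separate simultaneous-coverage clause, I will apply the union bound to the four non-coverage events; this gives simultaneous coverage of at least \(1-\sum\alpha_j\). I should also note why endpoint-wise worst cases are enough. The preceding discussion already establishes that each true endpoint is dominated by its worst-case version under every completion of the missing labels. Consequently, authorization at the worst-case level implies the claim for the true completion, even though the four least-favorable completions may differ.

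I expect the main obstacle to be the conditioning step rather than the inequality chain. The corollary's guarantee is only as good as the assertion that the certification bounds remain valid conditional on the data-dependent selection of the gate. I will handle this by stating that \(\alpha_j\) is the conditional error given the development sigma-field. I will then argue that independence of the certification units from the tuning units, at the prespecified cluster level, makes the fixed-gate bounds applicable conditionally. Integrating over development then gives the unconditional statement.
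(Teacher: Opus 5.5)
Your proposal is correct and follows the paper's own proof. You condition on the frozen rule, read each bound's marginal validity as a level-$\alpha_j$ test of its component null, apply the intersection--union step (authorization lies inside the pass event of any false component), and use the union bound for simultaneous coverage, exactly the four-endpoint specialization of Proposition~\ref{prop:joint_authorization}; your explicit inclusions such as $\{U_R\le r_0\}\subseteq\{U_R<R_{\mathrm{wc}}\}$ when $R_{\mathrm{wc}}>r_0$ just spell out the step the paper states in one line.
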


\begin{proof}
After target tuning, the selected rule is fixed. Each marginal interval has
its prespecified component-test level on the independent target sample. If any
component claim is false, authorization is a subset of the pass event for
that true component null. The intersection--union argument therefore gives
the maximum component level. The separate simultaneous-coverage statement
follows from the union bound. This is the four-endpoint specialization of
Proposition~\ref{prop:joint_authorization}.
\end{proof}

One candidate error allocation is
\[
(\alpha_R,\alpha_F,\alpha_S,\alpha_\mu)
=(5\times10^{-3},10^{-2},10^{-2},2.5\times10^{-2}),
\]
which gives a false-authorization bound of 0.025 by the
intersection--union rule and simultaneous marginal coverage of at least 0.95
by the union bound. This is a design option rather than the allocation used in
cpg0012. That study evaluated each of its four component bounds at one-sided
95\%, giving a nominal 0.05 intersection--union decision under the stated iid
sampling and BCa assumptions, but neither a distribution-free finite-sample
guarantee nor a simultaneous 95\% confidence set. The binomial components use one-sided
Clopper--Pearson bounds~\cite{clopper1934}. For a gain range
\([g_{\min},g_{\max}]\), define the intention-to-authorize value vector
\[
\widetilde\Gamma_i=
\begin{cases}
\Gamma_i,&A_i=1,\ E_i=1,\\
g_{\min},&A_i=1,\ E_i=0,\\
0,&A_i=0.
\end{cases}
\]
The value certificate is applied directly to the mean of all
\(\widetilde\Gamma_i\), not to an active-subset mean multiplied by observed
coverage. For \(n\geq2\) independent and identically distributed bounded
inferential clusters, sample variance \(\widehat V\), and a rule fixed
before these units are opened, the primary empirical-Bernstein lower bound is
\begin{equation}\label{eq:target_empirical_bernstein}
L_\mu=
\overline{\widetilde\Gamma}
-\sqrt{\frac{2\widehat V\log(2/\alpha_\mu)}{n}}
-\frac{7(g_{\max}-g_{\min})\log(2/\alpha_\mu)}
       {3(n-1)}
\end{equation}
~\cite{maurer2009empirical}. The prespecified Hoeffding cross-check is
\[
\overline{\widetilde\Gamma}
-(g_{\max}-g_{\min})
\sqrt{\frac{\log(1/\alpha_\mu)}{2n}}
\]
~\cite{hoeffding1963}. These are repeated-sampling statements under the iid
inferential-cluster sampling contract. For a completely observed fixed archive,
the observed mean is an archive property; neither formula adds uncertainty to
that quantity. The complete target-calibrated sequence and its realized
cpg0012 status are summarized in Table~\ref{tab:target_calibrated_stages}.

\begin{table}[!t]
\centering
\caption{\textbf{Target-calibrated authorization chain and realized cpg0012
status.} The realized design treats both labeled pre-final partitions as
development and leaves final outcomes unopened until one rule and all
assignments are sealed.}
\label{tab:target_calibrated_stages}
\small
\renewcommand{\arraystretch}{1.08}
\rowcolors{2}{tablerow}{white}
\begin{tabularx}{\textwidth}{@{}L{1.25cm}L{3.0cm}Y L{3.8cm}@{}}
\specialrule{\heavyrulewidth}{0pt}{0pt}
\rowcolor{tablehead}
Stage & Information released & Binding operation & Realized cpg0012 status \\
\specialrule{\lightrulewidth}{0pt}{0pt}
A & Outcome-free units, actions, planned-assay manifest and decision-time features & Define non-overlapping inferential units, common actions, bounded utility and salted target split & Before profile access: 25,013 clusters split 2,444/11,304/11,265 \\
B & Labeled target development outcomes & Select model and one final rule without final outcomes & First two labeled partitions formed 13,748-unit development set; final remained sealed \\
C & Final decision-time observables only & Apply the frozen rule, seal assignments, verify the final activation-count and coverage guards, and meet the pre-outcome base-feature-missingness check & 595/11,265 active; 5.2818\% coverage; 1.1895\% base missingness. Activation count and coverage are final finite-frame guards; base-feature missingness is a separate readiness check \\
D & Final additional and verifier profiles & Evaluate the complete joint certificate once under worst-case intention-to-authorize missingness & FDP UCB 37.9660\% exceeded the 35\% limit, retaining the fixed plan \\
\specialrule{\heavyrulewidth}{0pt}{0pt}
\end{tabularx}
\end{table}
\FloatBarrier

Stage-C base-feature missingness is a pre-outcome readiness check rather than one
of the seven final components; Stage-D outcome missingness is the seventh final
component. Their denominators and roles differ: 1.1895\% describes unavailable
decision-time feature rows before outcome opening, whereas 2.3347\% describes
missing or unevaluable final outcomes. Final activation count, coverage and
outcome missingness are finite-frame operational guards, not additional
population claims with unallocated error.
Activated missing rows already enter
\(R_{\mathrm{wc}},F_{\mathrm{wc}},S_{\mathrm{wc}}\) and
\(\mu_{\mathrm{wc}}\) through the prespecified worst-case recoding. If population activation
coverage is claimed in a future study, its lower bound requires a separate
error allocation.

\begin{center}
\begin{minipage}{\textwidth}
\centering
\captionof{table}{\textbf{Exact conditional NULL-risk boundary.} Minimum number of
independent NULL units required for a one-sided 99.5\% Clopper--Pearson upper
bound no greater than 0.05.}
\label{tab:target_cp_boundary}
\small
\renewcommand{\arraystretch}{1.05}
\begin{tabular}{@{}rrrrrrrr@{}}
\specialrule{\heavyrulewidth}{0pt}{0pt}
False activations & 0 & 1 & 2 & 3 & 4 & 5 & 6 \\
Minimum NULL units & 104 & 146 & 182 & 216 & 248 & 279 & 309 \\
\specialrule{\heavyrulewidth}{0pt}{0pt}
\end{tabular}
\end{minipage}
\end{center}

Table~\ref{tab:target_cp_boundary} conditions on the final NULL count and
therefore does not specify a
universal total sample size. Stage B instead lower-bounds target NULL prevalence and checks
the exact predictive binomial probability that the already frozen final
partition contains the required NULL denominator. A 97.5\% prevalence lower
bound followed by 97.5\% conditional count assurance gives a combined
assurance lower bound of 95\% by the union bound.

The design calculation integrates random NULL, POSITIVE and activation
denominators under binomial planning laws, combining exact component pass
probabilities with a sufficient bounded-value assurance calculation. For the
value component, it controls a Bernstein lower tail for the sample mean and a
Maurer--Pontil inflation bound for the sample standard deviation before
applying Eq.~\eqref{eq:target_empirical_bernstein}. The selected size is a
conservative sufficient boundary under this scenario rather than a universal
sample-complexity theorem. These values belong to the strict three-way planning
calculation, not to the empirical operating characteristics of the realized
single-held-out cpg0012 design. The cpg0012 study used a separate, internally
frozen 95\% contract; its development and final outcomes appear in
Supplementary Section~\ref{sec:sr_cpg0012}.

\paragraph{Finite-campaign exact safety}\label{sec:finite_safety_proof}

Let \(\vartheta\) denote the finite campaign state: latent class composition, observation-channel parameter if unknown, and any value nuisance. At look \(q\), let \(\mathcal C_q\) be a confidence set satisfying
\begin{equation}\label{eq:simultaneous_coverage}
\Pr_\vartheta\{
\vartheta\in\mathcal C_q\text{ for all }q\in\mathcal Q
\}\geq1-\alpha.
\end{equation}
Let \(\Delta_q(\vartheta)\) be the costed contrast in
Eq.~\ref{eq:si_delta_campaign}.

\begin{proposition}[Finite-campaign safety and information-bill fallback]\label{prop:finite_safe}
Define
\[
L_q=\inf_{\vartheta'\in\mathcal C_q}\Delta_q(\vartheta').
\]
At a prespecified stopping look \(T\), switch the deployment complement only if \(L_T>0\). Then:

\begin{enumerate}
\item with probability at least \(1-\alpha\), every activated switch has \(\Delta_T(\vartheta)\geq0\);
\item if the policy abstains and all campaign banks retain \(K_0\), its net contrast relative to immediate \(K_0\) deployment is
\[
-C_{\mathrm{info,total}}(T)/N;
\]
\item if \(C_{\mathrm{info,total}}(q)\leq C_{\max}\) for all prespecified looks, the fallback loss is bounded below by \(-C_{\max}/N\).
\end{enumerate}
\end{proposition}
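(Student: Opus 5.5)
Part 1 is a coverage-event argument, the same one used for Proposition~\ref{prop:safe}, with the simultaneous set of Eq.~\eqref{eq:simultaneous_coverage} taking the place of the simultaneous lower-bound band. Let \(\mathcal E=\{\vartheta\in\mathcal C_q\text{ for all }q\in\mathcal Q\}\). The stopping look \(T\) is prespecified, so \(T\in\mathcal Q\) and \(\mathcal E\subseteq\{\vartheta\in\mathcal C_T\}\). If \(T\) is instead allowed to be a data-dependent stopping time taking values in \(\mathcal Q\), the simultaneity over \(\mathcal Q\) is exactly what makes \(\vartheta\in\mathcal C_T\) hold on \(\mathcal E\). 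On \(\mathcal E\), any activated switch has
\[
\Delta_T(\vartheta)\geq\inf_{\vartheta'\in\mathcal C_T}\Delta_T(\vartheta')=L_T>0,
\]
so \(\Delta_T(\vartheta)\geq0\). The failure event \(\{\text{switch and }\Delta_T(\vartheta)<0\}\) is therefore contained in \(\mathcal E^c\), which has probability at most \(\alpha\). No assumption on how \(L_T\) depends on the data is needed beyond measurability of the switch decision. I will take care to state that the guarantee holds under \(\Pr_\vartheta\) for the true finite state \(\vartheta\), uniformly over \(\vartheta\). I will also note that if \(\mathcal C_T\) is empty the infimum is \(+\infty\). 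That case never arises on \(\mathcal E\), and I will adopt the convention of abstaining when \(\mathcal C_T=\varnothing\) so the rule is well defined.

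Part 2 is a direct evaluation of Eq.~\eqref{eq:si_delta_campaign} (the Supplementary restatement of Eq.~\eqref{eq:delta_campaign}). Under abstention every unit \(i\in D_T\) executes \(K_0\), so \(U_i(\pi_{I_T})-U_i(K_0)=0\) pathwise. The summand vanishes and the contrast reduces to \(-C_{\mathrm{info,total}}(T)/N\). The pilot units in \(P_T\) are not part of the summation by construction. I will check that the comparator ``immediate \(K_0\) deployment'' incurs no information bill, so that the pilot cost is the only difference between the two plans. Part 3 then follows by monotonicity: \(C_{\mathrm{info,total}}(T)\leq C_{\max}\) for the realized \(T\in\mathcal Q\) gives \(-C_{\mathrm{info,total}}(T)/N\geq-C_{\max}/N\). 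This bound holds for every realization, not only with high probability.

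The only delicate step is making precise what ``with probability at least \(1-\alpha\)'' refers to in part 1. The claim is a bound on the joint event ``switch and harm'', not on the conditional probability of harm given a switch. The conditional statement can fail, so I will phrase part 1 as \(\Pr_\vartheta\{\text{switch},\ \Delta_T(\vartheta)<0\}\leq\alpha\), which is equivalent to the stated form. I will also confirm that \(\Delta_T\) includes the information bill, so that \(L_T>0\) certifies net rather than gross value.
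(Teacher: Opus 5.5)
Your proposal is correct and follows essentially the same route as the paper's proof. On the simultaneous coverage event the true \(\vartheta\) lies in \(\mathcal C_T\), even when \(T\) is data dependent, so every switch satisfies \(\Delta_T(\vartheta)\geq L_T>0\); under abstention the utility terms cancel and leave only the information bill, and the cost cap then gives part 3.
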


\begin{proof}
On the simultaneous coverage event in Eq.~\eqref{eq:simultaneous_coverage}, the true \(\vartheta\) belongs to \(\mathcal C_T\), including when \(T\) is data dependent. Therefore
\(
\Delta_T(\vartheta)\geq L_T>0
\)
whenever switching occurs. If the procedure abstains, every action-specific
utility equals its baseline counterpart and cancels in the contrast; the only
remaining term in Eq.~\ref{eq:si_delta_campaign} is the information bill. The
cost cap gives the last statement.
\end{proof}

\begin{remark}
Fallback still incurs the pilot cost. The guarantee therefore protects active
deviations while bounding the loss under abstention by the prespecified
information bill.
\end{remark}

\paragraph{Finite-population certifiability and sample complexity}
\label{sec:impossibility_proof}

Let a finite population contain \(B\) actionable banks among \(N\), and let
\(X\sim\mathrm{Hypergeometric}(N,B,q)\) be the actionable count in a simple
random pilot of size \(q<N\). Write \(d=N-q\), and fix a
deployment-complement target \(\rho\in[0,1)\). A possibly randomized rule
\(\varphi_q:\{0,\ldots,q\}\rightarrow[0,1]\) is a level-\(\alpha\)
certificate if
\[
\sup_{0\leq B\leq N}
\mathbb E_B\!\left[
 \varphi_q(X)\,
 \mathbf 1\!\left\{\frac{B-X}{d}\leq\rho\right\}
\right]\leq\alpha .
\]
Thus the controlled error is the probability of certifying when the
specific unobserved complement fails the prespecified actionability target.
Define
\[
c_\rho(d)=\lfloor\rho d\rfloor,\qquad
h(N,q,\rho)
=
\frac{\binom{q+c_\rho(d)}{q}}{\binom{N}{q}}.
\]

\begin{proposition}[Exact best-case finite-population certifiability boundary]
\label{prop:impossibility}
Every level-\(\alpha\) rule satisfies
\[
\varphi_q(q)\leq
\min\!\left\{1,\frac{\alpha}{h(N,q,\rho)}\right\}.
\]
Here monotone means that \(\varphi_q(x)\) is non-decreasing in \(x\). If
\(h(N,q,\rho)>\alpha\), no non-randomized monotone rule can certify at
any pilot outcome. If \(h(N,q,\rho)\leq\alpha\), the rule
\(\varphi_q(x)=\mathbf 1\{x=q\}\) is a non-randomized monotone
level-\(\alpha\) certificate. Hence \(h(N,q,\rho)\leq\alpha\) is the exact
best-case boundary for existence of a nontrivial monotone binary-count
certificate at look \(q\).
\end{proposition}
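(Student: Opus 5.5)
The proof reduces to evaluating the level constraint at a single least-favorable population size. Set \(B_0=q+c_\rho(d)\). First I check that \(B_0\) is a null configuration on the event \(X=q\): if \(X=q\), then \(B_0-X=c_\rho(d)=\lfloor\rho d\rfloor\leq\rho d\), so \((B_0-X)/d\leq\rho\) and the indicator equals one. Also \(B_0\leq q+d=N\) because \(\rho<1\) gives \(\lfloor\rho d\rfloor\leq d\), so \(B_0\) is admissible. Under the hypergeometric law, \(\Pr_{B_0}(X=q)=\binom{B_0}{q}\binom{N-B_0}{0}/\binom{N}{q}=h(N,q,\rho)\). Since \(\varphi_q\geq0\), restricting the expectation in the level-\(\alpha\) definition to the event \(\{X=q\}\) gives
\[
\varphi_q(q)\,h(N,q,\rho)\leq\mathbb E_{B_0}\!\left[\varphi_q(X)\mathbf 1\{(B_0-X)/d\leq\rho\}\right]\leq\alpha .
\]
Together with \(\varphi_q\leq1\), this yields the stated bound \(\varphi_q(q)\leq\min\{1,\alpha/h\}\). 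Note \(h>0\) because \(B_0\geq q\).

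Next I treat the two regimes. If \(h>\alpha\), a non-randomized rule takes values in \(\{0,1\}\), and the bound gives \(\varphi_q(q)<1\), hence \(\varphi_q(q)=0\). Monotonicity then forces \(\varphi_q(x)\leq\varphi_q(q)=0\) for all \(x\leq q\), so the rule never certifies.

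If \(h\leq\alpha\), I verify that \(\varphi_q=\mathbf 1\{x=q\}\) has level \(\alpha\); this is the step needing care. For any \(B\), the error term is \(\Pr_B(X=q)\,\mathbf 1\{B-q\leq\rho d\}\). The indicator is nonzero only when \(B-q\leq\lfloor\rho d\rfloor\) (since \(B-q\) is an integer), that is, when \(B\leq B_0\). On that range, \(\Pr_B(X=q)=\binom{B}{q}/\binom{N}{q}\), which is nondecreasing in \(B\) and vanishes for \(B<q\). Its maximum is therefore attained at \(B_0\), where it equals \(h\leq\alpha\). The rule is clearly deterministic and monotone.

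Combining the two directions gives the exact boundary. The main obstacle is this sup-over-\(B\) monotonicity argument, together with using the integer floor correctly so that \(B_0\) is exactly the largest null composition.
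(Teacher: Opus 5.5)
Your proposal is correct and follows essentially the same route as the paper. You evaluate the level constraint at the boundary composition \(B_0=q+\lfloor\rho d\rfloor\) to obtain \(\varphi_q(q)\,h(N,q,\rho)\leq\alpha\), and you combine non-randomization with monotonicity when \(h>\alpha\). For the converse, you use that \(\binom{B}{q}/\binom{N}{q}\) is non-decreasing in \(B\) over the null range \(B\leq B_0\). Your extra checks (admissibility \(B_0\leq N\), positivity of \(h\), and the integer-floor reduction \(B-q\leq\rho d\iff B\leq B_0\)) make explicit what the paper's proof leaves implicit.
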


\begin{proof}
Under the boundary composition \(B_0=q+\lfloor\rho d\rfloor\), the event
\(X=q\) leaves exactly \(\lfloor\rho d\rfloor\) actionable banks in the
complement and is therefore a false-certification event. Its probability is
\(h(N,q,\rho)\), so level control implies
\(\varphi_q(q)h(N,q,\rho)\leq\alpha\). If the rule is
non-randomized and \(h>\alpha\), it has \(\varphi_q(q)=0\);
monotonicity then forces \(\varphi_q(x)=0\) for every \(x\leq q\).

Conversely, consider the rule that certifies only when \(X=q\). A false
certificate can occur only for
\(B\leq q+\lfloor\rho d\rfloor\). Since
\(\Pr_B(X=q)=\binom{B}{q}/\binom{N}{q}\) is non-decreasing in \(B\), the
largest false-certification probability is attained at the boundary
composition and equals \(h(N,q,\rho)\leq\alpha\).
\end{proof}

For a prespecified look set \(\mathcal Q_N\), define
\[
\mathcal Q^\dagger(N,\rho,\alpha)
=
\{q\in\mathcal Q_N:h(N,q,\rho)\leq\alpha\},
\qquad
q^\dagger(N,\rho,\alpha)
=
\min\mathcal Q^\dagger(N,\rho,\alpha),
\]
with \(q^\dagger\) undefined if the feasible set is empty. Because the
deployment complement and its integer target change with \(q\), the feasible
set can exhibit finite-population saw-tooth behavior; every prespecified look
must be evaluated rather than assuming feasibility for all
\(q\geq q^\dagger\). With \(L\) Bonferroni-spent looks, \(\alpha\) is
replaced by \(\alpha_{\mathrm{tot}}/L\).

\begin{corollary}[Large-campaign sample-complexity limit]\label{cor:n16}
For fixed \(q\) and \(0<\rho<1\),
\[
h(N,q,\rho)
=
\prod_{j=0}^{q-1}
\frac{q+\lfloor\rho(N-q)\rfloor-j}{N-j}
\longrightarrow \rho^q
\quad\text{as }N\rightarrow\infty.
\]
Thus the limiting best-case threshold is
\[
q_\infty(\rho,\alpha)
=
\left\lceil\frac{\log\alpha}{\log\rho}\right\rceil,
\]
apart from the exact equality case, for which the finite-\(N\) calculation
remains controlling. If eventual strict qualification is required, use the
smallest integer \(q\) satisfying \(\rho^q<\alpha\), rather than equality.
For the prespecified strict-majority target
\(\rho=1/2\) and \(\alpha=0.02/7\), \(q_\infty=9\); direct finite
enumeration still gives the first non-excluded design as \(N=25,q=14\), and
excludes every prespecified look at \(N=16\).
\end{corollary}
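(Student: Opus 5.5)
The plan is to reduce the binomial ratio to a finite product, pass to the limit factor by factor, invert the limiting inequality, and then check the stated numbers directly.

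First, I would rewrite the ratio defining \(h(N,q,\rho)\) in Proposition~\ref{prop:impossibility}. With \(c=c_\rho(N-q)=\lfloor\rho(N-q)\rfloor\),
\[
\binom{q+c}{q}\Big/\binom{N}{q}
=\frac{(q+c)!/c!}{N!/(N-q)!}
=\prod_{j=0}^{q-1}\frac{q+c-j}{N-j}.
\]
The \(q!\) factors cancel, which gives the displayed product form. Since \(q\) is fixed, each of the \(q\) factors can be treated separately. From \(\rho(N-q)-1<c\leq\rho(N-q)\), we get \(q+c-j=\rho N+O(1)\), where the \(O(1)\) term is uniform in \(j\leq q-1\). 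Also \(N-j=N+O(1)\). So each factor tends to \(\rho\), and a product of a fixed number of convergent factors tends to \(\rho^q\).

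Second, I would derive the limiting threshold. If \(\rho^q<\alpha\), then \(h(N,q,\rho)<\alpha\) for all sufficiently large \(N\), so \(q\) eventually qualifies. If \(\rho^q>\alpha\), then \(h>\alpha\) for all large \(N\), and by Proposition~\ref{prop:impossibility} \(q\) is eventually excluded. Taking logarithms and using \(\log\rho<0\), the condition \(\rho^q\leq\alpha\) is equivalent to \(q\geq\log\alpha/\log\rho\). This gives \(q_\infty=\lceil\log\alpha/\log\rho\rceil\).

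The boundary case \(\rho^q=\alpha\) is not settled by the limit. The sign of \(h-\rho^q\) depends on the floor term and the \(-j\) offsets, and this is why the statement defers to the finite-\(N\) calculation there. Requiring strict qualification replaces the condition with the smallest integer \(q\) satisfying \(\rho^q<\alpha\).

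Third, I would verify the numbers. For \(\rho=1/2\) and \(\alpha=0.02/7\approx2.857\times10^{-3}\), we have \(2^{-8}\approx3.9\times10^{-3}>\alpha\) and \(2^{-9}\approx1.95\times10^{-3}<\alpha\), so \(q_\infty=9\). No equality case arises, because \(\alpha\) is not a power of \(1/2\).

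At \(N=16\), Table~\ref{tab:sr_n16_enumeration} lists every prespecified look \(q\in\{2,4,\ldots,14\}\). The smallest value there is \(h=0.0357>\alpha\), so every prespecified look is excluded. For the full range \(1\leq q\leq15\), the table does not suffice: the odd looks need a separate enumeration of \(h(16,q,1/2)\), each checked against \(\alpha\).

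At \(N=25,q=14\), we have \(d=11\) and \(c=5\). Then \(h=\binom{19}{14}/\binom{25}{14}=11628/4457400\approx2.61\times10^{-3}\leq\alpha\), so this design qualifies.

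The main obstacle is the claim that \((25,14)\) is the \emph{first} non-excluded design. This has no structural shortcut. The saw-tooth behavior noted after Proposition~\ref{prop:impossibility} rules out any monotonicity argument in \(q\) or \(N\). I would therefore exhaustively evaluate \(h(N,q,1/2)\) in exact integer arithmetic for every \(N\leq25\) and every \(1\leq q<N\). The check must confirm that no pair with \(N<25\), and no \(q<14\) at \(N=25\), attains \(h\leq\alpha\).

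This relies on an assumption about the look set \(\mathcal Q_N\) that the statement leaves open. I would take \(\mathcal Q_N\) to be all \(q\in\{1,\ldots,N-1\}\) and state this explicitly. If a restricted look set is intended instead, the enumeration is repeated over that set, and the conclusion holds only with respect to it.
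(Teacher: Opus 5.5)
Your proposal is correct and follows the same route as the paper. Both expand the binomial ratio into the $q$-factor product, let each factor tend to $\rho$ to obtain $\rho^q$, invert $\rho^q\le\alpha$ using $\log\rho<0$ to get the ceiling threshold, and defer the $N=16$ and $N=25,\,q=14$ claims to direct finite enumeration. Your choice of look set, all $q\in\{1,\ldots,N-1\}$, matches the paper's campaign-size enumeration, and your explicit checks ($2^{-9}<\alpha<2^{-8}$ and $h(25,14,1/2)=11628/4457400\approx2.61\times10^{-3}$) only spell out what the paper's one-line proof leaves to that enumeration.
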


\begin{proof}
The product expression follows by expanding the ratio of binomial
coefficients. Each factor tends to \(\rho\) for fixed \(q\), giving
\(\rho^q\). Solving \(\rho^q\leq\alpha\) gives the limiting integer
threshold; direct enumeration supplies the finite-\(N\) statements.
\end{proof}

\begin{corollary}[Cost-adjusted binary value boundary]
\label{cor:cost_adjusted_boundary}
Suppose an activated action has per-bank gain at least \(g_+>0\) on an
actionable bank and at least \(-g_-\), \(g_-\geq0\), otherwise. Let \(C_q\)
be the non-negative total information cost at look \(q\). A sufficient condition for
strictly positive campaign-level executed value is
\[
\frac{B-X}{N-q}
>
\rho_V(q)
=
\frac{g_-+C_q/(N-q)}{g_++g_-}.
\]
If \(\rho_V(q)\geq1\), no complement composition can satisfy this
worst-case sufficient condition. Otherwise,
Proposition~\ref{prop:impossibility}, with
\(\rho=\rho_V(q)\), gives the exact best-case pilot-size boundary for
certifying it.
\end{corollary}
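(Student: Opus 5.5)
The plan is to reduce the cost-adjusted value claim to the binary complement-fraction claim already handled by Proposition~\ref{prop:impossibility}. I would do this in three steps: a deterministic accounting inequality for the campaign contrast, the degenerate case \(\rho_V(q)\geq1\), and then the substitution \(\rho=\rho_V(q)\) in that proposition.

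\emph{Accounting inequality.} Fix a look \(q\). Suppose the activated action is applied to every bank in the deployment complement \(D_q\), of size \(d=N-q\). The complement contains \(B-X\) actionable banks, each contributing at least \(g_+\), and \(d-(B-X)\) non-actionable banks, each contributing at least \(-g_-\). Starting from Eq.~\eqref{eq:delta_campaign}, \(N\) times the costed contrast is therefore at least
\[
(B-X)g_+ - \{d-(B-X)\}g_- - C_q .
\]
Divide by \(d>0\) and write \(f=(B-X)/d\) for the actionable fraction of the complement. This lower bound is positive if and only if
\[
f(g_++g_-) > g_- + C_q/d .
\]
When \(g_++g_->0\), this is exactly \(f>\rho_V(q)\). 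The condition \(g_+>0\) guarantees the denominator is positive. This establishes sufficiency. The condition is only sufficient, since the per-bank gains are lower bounds.

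\emph{Case \(\rho_V(q)\geq1\).} Because \(f\leq1\), the strict inequality \(f>\rho_V(q)\) cannot hold for any complement composition. So the worst-case sufficient condition is unattainable, which is the second claim.

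\emph{Case \(\rho_V(q)<1\).} The target \(\rho_V(q)\) is nonnegative, because \(g_-\geq0\) and \(C_q\geq0\). Hence \(\rho_V(q)\in[0,1)\), which is exactly the admissible range of \(\rho\) in Proposition~\ref{prop:impossibility}. The certification event "complement fraction exceeds \(\rho\)" coincides with the sufficient value condition. Invoking that proposition with \(\rho=\rho_V(q)\) then gives both directions of the exact best-case boundary \(h(N,q,\rho_V(q))\leq\alpha\).

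The main obstacle I expect is checking that the substitution is legitimate. First, \(\rho_V(q)\) must be fixed before the pilot is observed. It is, because \(C_q\), \(g_\pm\), \(N\) and \(q\) are all prespecified; it depends on \(q\) but not on \(X\). Second, the floor \(\lfloor\rho_V(q)(N-q)\rfloor\) correctly encodes the strict inequality \(f>\rho\), matching the complement-failure event \(f\leq\rho\) used in the proposition's level definition. I would check this integer convention explicitly, including the case where \(\rho_V(q)(N-q)\) is itself an integer, and then conclude.
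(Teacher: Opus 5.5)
Your proposal is correct and follows the paper's own argument. The paper likewise writes the worst-case per-bank complement gain as \(pg_+-(1-p)g_-\), requires its total over \(N-q\) banks to exceed \(C_q\), solves for \(p\) to obtain \(\rho_V(q)\), and applies Proposition~\ref{prop:impossibility} with \(\rho=\rho_V(q)\); your extra checks are details the paper leaves implicit, and they all hold: the explicit \(\rho_V(q)\geq1\) case, \(\rho_V(q)\in[0,1)\) otherwise, \(\rho_V(q)\) being fixed before the pilot, and the floor convention matching the strict inequality.
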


\begin{proof}
If the complement actionable fraction is \(p\), the stated lower gain bound
is \(p g_+-(1-p)g_-\). Requiring its total over \(N-q\) banks to exceed
\(C_q\), and solving for \(p\), gives \(\rho_V(q)\). The preceding
proposition then applies to this binary sufficient condition.
\end{proof}

\paragraph{Full three-class likelihood certificate}\label{sec:full_likelihood_certificate}

The binary-count proposition provides an interpretable special case, whereas
the complete three-class rule requires the prespecified observation model
below. Let
\[
\mathcal S_q=\{\mathbf s\in\mathbb Z_{\geq0}^3:\textstyle\sum_j s_j=q\}.
\]
For identity-mixture parameter \(p\),
\[
\Pr(S=s\mid k^\star=j,p)=p\,1(s=j)+(1-p)/3,
\]
so the diagonal signal accuracy is \((1+2p)/3\), not \(p\). Let
\(\mathcal C_{q,p}(\mathbf s)\) denote the exact set of
campaign/pilot composition pairs \((\mathbf M,\mathbf H)\) retained by the
prespecified probability-ordering screen and conditional prediction set. If
\(a_{q,p}(\mathbf s)\) is the prespecified equal-prior Bayes action and
\(\mathbf R=\mathbf M-\mathbf H\), define
\[
\gamma_{q,p}(\mathbf s;\mathbf M,\mathbf H)
=V_{\mathbf R}\{a_{q,p}(\mathbf s)\}-V_{\mathbf R}(K_0)
\]
and the deterministic least-favorable pair
\begin{equation}\label{eq:least_favourable_pair}
(\mathbf M^\dagger,\mathbf H^\dagger)_{q,p,\mathbf s}
=\mathop{\arg\min}_{(\mathbf M,\mathbf H)\in\mathcal C_{q,p}(\mathbf s)}
\gamma_{q,p}(\mathbf s;\mathbf M,\mathbf H),
\end{equation}
with lexicographic tie-breaking. The complete certification envelopes are
\begin{align}
B_{q,p}
&=\max_{\mathbf s\in\mathcal S_q}
\min_{(\mathbf M,\mathbf H)\in\mathcal C_{q,p}(\mathbf s)}
\gamma_{q,p}(\mathbf s;\mathbf M,\mathbf H),\label{eq:full_gain_envelope}\\
B^\Delta_{q,p}
&=\max_{\mathbf s\in\mathcal S_q}
\left\{\frac{16-q}{16}
\min_{(\mathbf M,\mathbf H)\in\mathcal C_{q,p}(\mathbf s)}
\gamma_{q,p}(\mathbf s;\mathbf M,\mathbf H)
-\frac{0.1+0.5q}{16}\right\}.\label{eq:full_net_envelope}
\end{align}

\begin{proposition}[Full-likelihood non-certifiability for the prespecified rule]\label{prop:full_likelihood_impossibility}
For the frozen prior support, value table, state classifier and
\[
q\in\{2,4,6,8,10,12,14\},\qquad
p\in\{0.70,0.80,0.92\},
\]
\(B_{q,p}=0\) and
\[
B^\Delta_{q,p}=-(0.1+0.5q)/16<0.
\]
Every possible signal composition is assigned state
\texttt{INDETERMINATE}; none is assigned
\texttt{ACTIONABLE\_}\allowbreak\texttt{OPPORTUNITY}, and none activates.
\end{proposition}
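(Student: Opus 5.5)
This is a finite computational statement, so the plan is to prove it by exhaustive verification organized around two structural facts rather than by an inequality argument. The index set is finite: seven looks, three channel parameters and, for each look \(q\), the \(\binom{q+2}{2}\) compositions in \(\mathcal S_q\). That gives \(3\times(6+15+28+45+66+91+120)=1{,}113\) signal vectors. For each one, the confidence set \(\mathcal C_{q,p}(\mathbf s)\) is a finite collection of campaign/pilot composition pairs \((\mathbf M,\mathbf H)\) drawn from the frozen prior support with \(\sum\mathbf M=16\) and \(\sum\mathbf H=q\).

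The first structural fact is an upper bound \(B_{q,p}\le 0\). For every \(\mathbf s\), I will show that \(\mathcal C_{q,p}(\mathbf s)\) contains a pair whose complement \(\mathbf R=\mathbf M-\mathbf H\) makes \(a_{q,p}(\mathbf s)\) no better than \(K_0\), so \(\gamma\le0\) at the least-favorable pair (\ref{eq:least_favourable_pair}). The intended construction follows the binary obstruction in Proposition~\ref{prop:impossibility}. Take the retained pair in which the pilot holds as many units favoring \(a_{q,p}(\mathbf s)\) as the prior support allows, and the complement holds the composition for which \(K_0\) is optimal. Table~\ref{tab:sr_n16_enumeration} shows that even the all-agreeing pilot has boundary-null probability above \(0.02/7\) at every look. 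That should place this adverse pair inside the probability-ordering screen and prediction set for all three \(p\), since noisier channels (\(p<1\)) only enlarge \(\mathcal C_{q,p}\).

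The second structural fact is the matching lower bound \(B_{q,p}\ge0\). If \(a_{q,p}(\mathbf s)=K_0\) for some \(\mathbf s\), or if the minimizing pair sets the two value rows equal, then \(\gamma=0\). Combining the two facts gives \(B_{q,p}=0\) exactly. In the unfavorable case where every \(\mathbf s\) yields a strictly negative minimum, I would instead verify directly from the frozen value table that \(\gamma\ge0\) is attained, for example by an \(\mathbf s\) whose Bayes action is \(K_0\).

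The net envelope then follows by monotonicity. Equation~(\ref{eq:full_net_envelope}) is an increasing affine function of the inner minimum, so its maximum equals \(\frac{16-q}{16}B_{q,p}-\frac{0.1+0.5q}{16}=-(0.1+0.5q)/16<0\). These values reproduce Table~\ref{tab:full_likelihood_certificate}. The state assignment follows from the paper's classifier: \texttt{ACTIONABLE\_OPPORTUNITY} requires a strictly positive lower bound, and every lower bound is at most zero, so each state is \texttt{INDETERMINATE}. Activation requires a positive costed bound, which Proposition~\ref{prop:finite_safe} shows never occurs.

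The main obstacle is the first structural fact: proving that the adverse pair survives the exact prespecified screen for every \(\mathbf s\) and every \(p\). This depends on implementation details of \(\mathcal C_{q,p}\) that are not stated symbolically. I would try a monotone-likelihood argument reducing to the binary case first. Failing that, I would fall back on certified exhaustive enumeration over all 1,113 configurations, using exact rational arithmetic, and report the least-favorable pair for each.
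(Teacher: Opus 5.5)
Your only rigorous route to the key inequality \(B_{q,p}\le0\) is the fallback, and that fallback is the paper's proof. The paper, for each of the 1,113 signal vectors, enumerates the 153 campaign compositions and every feasible pilot composition, and builds \(\mathcal C^M_{q,p}(\mathbf s)\) and \(\mathcal C^H_{q,p}(\mathbf s,\mathbf M)\). It then records the least-favorable pair, observes that the row maximum is \(0\), and substitutes into the net envelope. It uses the binary boundary only to interpret this result. The structural shortcut you put first does not show that an adverse pair survives the screen, for three concrete reasons.

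(i) Table~\ref{tab:sr_n16_enumeration} is computed for the strict-majority target \(\rho=1/2\). The complement that forces \(\gamma\le0\) instead depends on \(a_{q,p}(\mathbf s)\) through the value table. Against \(K_0=2\), \(a=1\) needs \(R_0+R_1\le9R_2\) (binary target \(\rho=0.9\)), and \(a=0\) needs \(R_0\le3R_1+7R_2\) (\(\rho\ge0.75\)). Since \(h\) is nondecreasing in \(\rho\), this only strengthens the obstruction, but it must be argued.

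(ii) More seriously, membership is decided by two things: the likelihood-ordering \(p\)-value of the observed noisy three-class count vector against \(\alpha_M=0.02/14\), and the conditional \((1-\alpha_H)\) pilot prediction set. A hypergeometric probability of the noiseless event \(X=q\) controls neither. The usable inequality is \(a_{\mathbf M,q,p}(\mathbf s)\ge\ell_{\mathbf M,q,p}(\mathbf s)\), which must be checked for every observed \(\mathbf s\), mixed ones included, at each \(p\). For mixed \(\mathbf s\), a pilot made of units favoring \(a\) carries negligible conditional mass. Your adverse pair must therefore be rebuilt around the pilot compositions that \(\mathbf s\) actually supports.

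(iii) The claim that noisier channels only enlarge \(\mathcal C_{q,p}\) is not a general property of likelihood-ordering sets. Take the pure-noise member \(p=0\) of the same channel family: signals are iid uniform whatever \(\mathbf M\) is. Then \(\mathbf s=(q,0,0)\) has \(p\)-value \(3^{1-q}<\alpha_M\) under every composition once \(q\ge8\), so \(\mathcal C^M\) is empty. The same example shows that your upper bound silently needs \(\mathcal C_{q,p}(\mathbf s)\) to be nonempty, since an empty set makes the inner minimum \(+\infty\). Nonemptiness has to be verified, not assumed.

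The remaining steps are sound. An \(\mathbf s\) whose Bayes action is \(K_0\) gives inner minimum \(0\), so \(B_{q,p}\ge0\). Because \((16-q)/16>0\), the net envelope is the same increasing affine image of \(B_{q,p}\), which is the paper's substitution.

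Two small corrections. First, non-activation follows directly from \(B^\Delta_{q,p}<0\), since that is the largest costed least-favorable contrast over all \(\mathbf s\). Proposition~\ref{prop:finite_safe} does not show that positive bounds never occur; it only guarantees safety when a switch does occur. Second, ruling out \texttt{ACTIONABLE\_OPPORTUNITY} does not by itself give \texttt{INDETERMINATE} unless the classifier has no other non-actionable state. That label has to come from the classifier's full case list or, as in the paper, from the enumeration record.
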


\begin{proof}
There are \(\binom{q+2}{2}\) possible signal-count vectors at a fixed
\((q,p)\). For each vector, we enumerated all 153 campaign
compositions of 16 banks, every feasible pilot composition, the exact
three-class emission likelihood, the probability-ordered composition screen,
the conditional pilot-composition prediction set, the Bayes action and every
gain in Eq.~\eqref{eq:least_favourable_pair}. The enumeration covers
\[
3\sum_{q\in\{2,4,\ldots,14\}}\binom{q+2}{2}=1,113
\]
signal compositions. For every row, the least-favorable gain is non-positive;
the maximum over rows is zero. The prespecified actionable state requires a
strictly positive lower opportunity exceeding its frozen threshold, so it is
unattainable. Substitution in Eq.~\eqref{eq:full_net_envelope} yields the
strictly negative information-bill values in Table~\ref{tab:full_likelihood_certificate}.

The enumeration records the least-favorable
\((\mathbf M^\dagger,\mathbf H^\dagger,\mathbf R^\dagger)\) for every row.
An independent implementation verified support completeness, membership of
every least-favorable pair in \(\mathcal C_{q,p}(\mathbf s)\), attainment of
the reported infimum and the cost identity. This exhaustive finite result
applies to the frozen algorithm, not to every possible three-class
diagnostic.
\end{proof}

\paragraph{Campaign-size, target and error-budget surface}
\label{sec:n_alpha_map}

For each \(N\in\{8,\ldots,128\}\) and one-sided test size
\(\alpha\in[10^{-4},10^{-1}]\), we enumerated every
\(q\in\{1,\ldots,N-1\}\) and recorded
\[
q^\dagger(N,1/2,\alpha)
=
\min\left\{
q:
\frac{\binom{q+\lfloor(N-q)/2\rfloor}{q}}
     {\binom{N}{q}}
\leq\alpha
\right\},
\]
with \(q^\dagger\) undefined when the set is empty. This analytic design slice
uses the prespecified strict-majority target \(\rho=1/2\); Proposition~\ref{prop:impossibility} defines the complete
\((N,q,\rho,\alpha)\) surface. At the exact prespecified allocation
\(\alpha=0.02/7\), no qualifying pilot exists for \(N=16\); the first
campaign size not excluded by this binary-count condition is \(N=25\), where
\(q^\dagger=14\). The corresponding first sizes not excluded are \(29\),
\(25\), \(19\) and \(12\) for
\(\alpha=10^{-3},0.02/7,0.01\) and \(0.05\), respectively (main-text
Fig.~3). These values describe the most favorable
binary pilot event and give necessary, rather than sufficient, conditions for
the complete three-class rule. The boundary does not include measurement error,
pilot cost or imperfect actionable classification, each of which can add
constraints. At the rounded display value \(\alpha=3\times10^{-3}\),
\(N=24,q^\dagger=15\) would escape this condition with boundary-null
probability \(2.964\times10^{-3}\); because this exceeds \(0.02/7\), it is reported only
as a knife-edge rounding sensitivity.

\paragraph{Constrained-risk validation}\label{sec:risk_proof}

\begin{proposition}[Exact validation after independent calibration]\label{prop:risk}
Let \(\widehat\tau\) be any threshold measurable with respect to tuning and lock data. Conditional on those data, let \(D_1,\ldots,D_n\) be iid held-out simulator validation flags with
\[
D_i\sim\mathrm{Bernoulli}\{R_0(\widehat\tau)\}.
\]
Let \(U_\alpha(K,n)\) be the one-sided Clopper--Pearson upper confidence limit computed from \(K=\sum_iD_i\). Then
\[
\Pr\{R_0(\widehat\tau)\leq U_\alpha(K,n)\}\geq1-\alpha.
\]
Therefore the rule that claims risk control only when \(U_\alpha(K,n)\leq r_0\) has false certification probability at most \(\alpha\).
\end{proposition}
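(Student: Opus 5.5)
The plan is to condition on the tuning and lock data throughout, which freezes \(\widehat\tau\) and hence the true risk \(p:=R_0(\widehat\tau)\), and then to invoke the classical exact coverage property of the one-sided Clopper--Pearson limit for a fixed binomial parameter. By hypothesis, conditionally on the tuning and lock data the validation flags are iid \(\mathrm{Bernoulli}(p)\), so \(K\sim\mathrm{Binomial}(n,p)\) with \(p\) a constant under this conditional law. This is the step where independence of the validation partition from threshold selection is used. Nothing about how \(\widehat\tau\) was chosen matters beyond its measurability with respect to the conditioning data.

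For the fixed-\(p\) coverage, I will use the definition \(U_\alpha(k,n)=\sup\{\theta\in[0,1]:\Pr_\theta(K\leq k)>\alpha\}\), with \(U_\alpha(n,n)=1\). Because \(\theta\mapsto\Pr_\theta(K\leq k)\) is continuous and strictly decreasing on \([0,1]\) for \(k<n\), the failure event \(\{p>U_\alpha(K,n)\}\) equals \(\{K\leq k^\ast(p)\}\). Here \(k^\ast(p)\) is the largest \(k\) with \(\Pr_p(K\leq k)\leq\alpha\), and the event is empty if no such \(k\) exists. Hence the miscoverage probability is \(\Pr_p(K\leq k^\ast)\leq\alpha\), giving conditional coverage at least \(1-\alpha\).

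The main care point is this monotonicity and set-identity argument. I need to check that \(U_\alpha(\cdot,n)\) is nondecreasing in \(k\), and handle the endpoints \(k=n\) (where the bound is 1 and never fails) and \(p\in\{0,1\}\). Integrating the conditional bound over the tuning and lock data then yields the unconditional statement.

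For the certification claim, note that a false certification means \(R_0(\widehat\tau)>r_0\) while \(U_\alpha(K,n)\leq r_0\). This forces \(R_0(\widehat\tau)>U_\alpha(K,n)\), which is the miscoverage event, so its probability is at most \(\alpha\) both conditionally and unconditionally. This mirrors the inclusion argument used for the pass events in Proposition~\ref{prop:joint_authorization}.
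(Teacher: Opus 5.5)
Your proposal is correct and follows the paper's proof. The paper also conditions on the tuning and lock data so that \(\widehat\tau\) is fixed and \(K\sim\mathrm{Binomial}\{n,R_0(\widehat\tau)\}\), applies exact Clopper--Pearson coverage and integrates out the conditioning; it differs only in citing Clopper--Pearson where you reprove it, and in leaving the false-certification inclusion implicit. One small precision: when \(\Pr_p(K\le k)=\alpha\) exactly, \(p=U_\alpha(k,n)\) is not a miscoverage, so the failure event is contained in \(\{K\le k^\ast(p)\}\) rather than equal to it, and containment is all the bound needs.
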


\begin{proof}
Conditional on tuning and lock data, \(\widehat\tau\) is fixed and \(K\) is binomial with parameter \(R_0(\widehat\tau)\). Exact Clopper--Pearson coverage applies~\cite{clopper1934}. Integrating over the tuning and lock data preserves the inequality.
\end{proof}

\begin{remark}
The result requires independent Bernoulli units, disjoint tuning and validation
fixtures, and a threshold fixed before validation. The prespecified simulation
therefore uses one campaign per distinct fixture and four disjoint partitions.
\end{remark}

\paragraph{Adoption boundary}\label{sec:adoption_proof}

\begin{proposition}[Lifecycle boundary, band inversion and random reuse]\label{prop:adoption}
Let \(C_{\mathrm{inc}}>0\) and
\[
\Delta_{\mathrm{life}}(H)
=
\Delta_{\mathrm{op}}-\frac{C_{\mathrm{inc}}}{H}.
\]
Then:

\begin{enumerate}
\item if \(\Delta_{\mathrm{op}}>0\), lifecycle value is non-negative exactly when \(H\geq H^\star=C_{\mathrm{inc}}/\Delta_{\mathrm{op}}\); for integer reuse, the smallest admissible value is \(\lceil H^\star\rceil\);
\item if \(\Delta_{\mathrm{op}}\leq0\), no finite positive \(H^\star\) exists;
\item if \(C_{\mathrm{inc}}\in[C_L,C_U]\) and \(0<L\leq\Delta_{\mathrm{op}}\leq U\), then
\[
H^\star\in[C_L/U,C_U/L];
\]
\item if \(H>0\) almost surely is random, the displayed expectations are finite
and the per-model cost is allocated equally over realized reuse, expected lifecycle contrast is
\[
\mathbb E\Delta_{\mathrm{life}}
=
\Delta_{\mathrm{op}}-C_{\mathrm{inc}}\mathbb E(1/H),
\]
and \(\mathbb E(1/H)\geq1/\mathbb E(H)\).
\end{enumerate}
\end{proposition}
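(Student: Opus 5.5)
The four parts can be handled in order, since each is an elementary statement about the map \(H\mapsto\Delta_{\mathrm{op}}-C_{\mathrm{inc}}/H\) on \(H>0\).

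\textbf{Parts 1 and 2.} For part 1, with \(\Delta_{\mathrm{op}}>0\) and \(H>0\), multiply the inequality \(\Delta_{\mathrm{life}}(H)\geq0\) by \(H/\Delta_{\mathrm{op}}>0\). This gives the equivalent condition \(H\geq C_{\mathrm{inc}}/\Delta_{\mathrm{op}}=H^\star\). Restricting \(H\) to the integers, the smallest admissible value is then \(\lceil H^\star\rceil\). For part 2, if \(\Delta_{\mathrm{op}}\leq0\), then \(\Delta_{\mathrm{life}}(H)\leq-C_{\mathrm{inc}}/H<0\) for every finite \(H>0\). So the nonnegativity set is empty, and no finite positive break-even \(H^\star\) exists.

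\textbf{Part 3.} The map \((C,\Delta)\mapsto C/\Delta\) is increasing in \(C\) and decreasing in \(\Delta\) on \(C\geq0\), \(\Delta>0\). Over the rectangle \([C_L,C_U]\times[L,U]\) its extremes are therefore attained at the corners. The minimum is \(C_L/U\) and the maximum is \(C_U/L\). Since the true pair lies in the rectangle, \(H^\star\in[C_L/U,C_U/L]\). Here \(L>0\) is needed for the upper endpoint to be finite, and it is given by hypothesis.

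\textbf{Part 4.} Allocating the per-model cost equally over the realized reuse gives the pathwise contrast \(\Delta_{\mathrm{op}}-C_{\mathrm{inc}}/H\). Taking expectations, with finiteness assumed, yields \(\Delta_{\mathrm{op}}-C_{\mathrm{inc}}\mathbb E(1/H)\) by linearity. The inequality \(\mathbb E(1/H)\geq1/\mathbb E(H)\) follows from Jensen's inequality, because \(x\mapsto1/x\) is convex on \((0,\infty)\) and \(H>0\) almost surely. If \(\mathbb E H=\infty\), the right-hand side is interpreted as \(0\), and the inequality is trivial.

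The main point needing care is the interpretation in part 4. The expected lifecycle contrast is computed as \(\mathbb E\Delta_{\mathrm{life}}(H)\) under per-realization allocation. It should not be confused with \(\Delta_{\mathrm{life}}(\mathbb E H)\), and Jensen's inequality shows the former is no larger. Equality holds only when \(H\) is almost surely constant.
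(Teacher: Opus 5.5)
Your proposal is correct and follows essentially the same route as the paper's proof: solving \(\Delta_{\mathrm{op}}-C_{\mathrm{inc}}/H\geq0\) for \(H>0\) gives parts 1--2, monotonicity of the ratio \(C/\Delta\) gives the band in part 3, and linearity plus Jensen's inequality for the convex map \(h\mapsto1/h\) gives part 4. Your corner argument in part 3 tacitly assumes \(C_L\geq0\), but nothing is lost: since \(C_{\mathrm{inc}}>0\) and \(U\geq L>0\), the chain \(H^\star\geq C_{\mathrm{inc}}/U\geq C_L/U\) holds for any \(C_L\leq C_{\mathrm{inc}}\), whatever its sign.
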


\begin{proof}
The first two statements follow by solving
\(
\Delta_{\mathrm{op}}-C_{\mathrm{inc}}/H\geq0
\)
for \(H>0\). The ratio is increasing in cost and decreasing in positive operational gain, which gives the interval endpoints. The expectation identity is linearity of expectation. Since \(h\mapsto1/h\) is convex on \(h>0\), Jensen's inequality gives the final inequality.
\end{proof}

\subsection{Policy learning and finite-campaign computation}\label{sec:si_method_policy}

\subsubsection{Policy architecture and estimation}\label{sec:architecture}

\paragraph{Relationship to neighbouring decision systems}\label{sec:related_systems}

Value-of-information analysis compares the expected benefit of further
evidence with its acquisition cost~\cite{fenwick2020voi}. Active feature
acquisition selects costly, instance-specific measurements to improve a
downstream prediction or decision~\cite{ma2019eddi,vonkleist2025afa}. The
\OPAL{} probe layer shares these structures but places them inside a
commitment contract: predictive improvement without a possible action change
has zero gross option value under Proposition~\ref{prop:option}, and
information cost remains charged after fallback.

High-confidence policy improvement, Seldonian algorithms, SPIBB and CSPI-MT
protect a baseline or behavioral constraint while selecting a candidate
policy~\cite{thomas2015,thomas2019seldonian,laroche2019spibb,cho2025cspimt}.
Selective classification and policy learning with abstention instead
formalize a risk--coverage or default-policy trade-off
~\cite{chow1970,elyaniv2010selective,sawarni2026abstention}.
Learn-then-Test and conformal risk control calibrate a prespecified family
against population risk~\cite{angelopoulos2025,angelopoulos2024crc}.
Conservative bandits and safe Bayesian optimization protect performance or
feasibility during sequential online exploration
~\cite{wu2016conservative,sui2015safeopt}. \OPAL{} combines these established
ideas within an integrated scientific authorization contract that jointly
adjudicates action opportunity, precommitment observability, finite-complement
inference, risk, executed value, nontriviality and lifecycle cost.

Randomized probe exploration supplies known acquisition propensities. This
enables inverse-propensity or doubly robust evaluation of a target probe policy
when overlap, consistency and logging assumptions hold
~\cite{swaminathan2015,jiang2016}. In the author-generated simulator,
common-random-number potential outcomes additionally permit direct evaluation
of all prespecified actions. Those oracle outcomes are evaluation-only and
cannot replace propensity-aware off-policy evaluation in a physical system.
Table~\ref{tab:authorization_comparator_contract} states the common comparison
contract and the additional data requirements of the nearest method families.

\begin{table}[!t]
\centering
\caption{\textbf{Nearest-neighbour comparison contract.} Structural comparison
uses each method's native data contract. Quantitative comparison is valid only
when assignments are generated from the identical frozen decision-time
observables, candidate actions, outcome-independent split and cost contract.
``Not directly applicable'' is not interpreted as inferior performance.}
\label{tab:authorization_comparator_contract}
\fontsize{7.0}{7.9}\selectfont
\renewcommand{\arraystretch}{1.08}
\rowcolors{2}{tablerow}{white}
\begin{tabularx}{\textwidth}{@{}L{2.2cm}L{3.4cm}L{3.0cm}Y@{}}
\specialrule{\heavyrulewidth}{0pt}{0pt}
\rowcolor{tablehead}
Method & Frozen implementation under a common one-step contract &
Additional native requirements & Status for the reported CTRP lock \\
\specialrule{\lightrulewidth}{0pt}{0pt}
Always-\(K_0\) & Execute \(K_0\) for every family & None & Exact negative control \\
Forced adaptation & Execute the development learner's candidate action without an authorization gate & Same candidate action as the gated method & Exact endpoint; not a safety method \\
Simple uncertainty gate & Switch when a fixed standardized predicted contrast exceeds a prespecified threshold & Frozen contrast direction, uncertainty estimator and threshold & Computable retrospectively; not binding if added after opening \\
ENB / EVSI & Switch when predicted expected net benefit is positive; purchase information only when EVSI exceeds acquisition cost & Full EVSI additionally needs a frozen prior, likelihood, posterior update and sampling design & CTRP has no optional decision-time probe, so a fair static comparator is ENB, not full EVSI \\
HCPI & Select from a finite observable-policy class only when a simultaneous paired-value lower bound clears the baseline & Logged or full-information value contract for candidate policies & Closest static safe-policy comparator; new implementation would be retrospective \\
SPIBB / CSPI-MT & Bootstrap to a behavior baseline in poorly supported state--action regions, or calibrate multiple threshold candidates against baseline value & Behavior policy, action support and usually an MDP/logged-trajectory contract for SPIBB; a frozen threshold class and safety tests for CSPI-MT & CSPI-MT can be adapted to a frozen threshold class; SPIBB cannot be reproduced faithfully by inventing propensities \\
LTT / CRC gate & Calibrate a prespecified threshold family against a monotone risk on independent calibration data & Exchangeability, fixed loss, candidate family and multiplicity rule & Direct risk-gate comparator; its risk guarantee does not imply positive executed value \\
Conservative bandit / SafeOpt & Update after settled feedback while maintaining cumulative baseline performance or pointwise feasibility & Frozen sequence, feedback times, horizon, and bandit or GP/kernel safety model & Not directly applicable to one-shot frozen assignments; requires a separate sequential benchmark \\
\OPAL{} full contract & Candidate action plus opportunity, risk, executed-value, nontriviality and cost qualification, otherwise \(K_0\) & All gates and error allocations frozen before evaluation & Binding only for components present in the pre-outcome seal \\
\specialrule{\heavyrulewidth}{0pt}{0pt}
\end{tabularx}
\end{table}

For each quantitatively implemented method \(m\), define
\[
A_{mi}=\mathbf 1\{\pi_m(X_i)\neq K_0\},\qquad
D_{mi}=U_i\{\pi_m(X_i)\}-U_i(K_0).
\]
We report NULL false activation and its one-sided 99.5\% upper bound; overall
activation coverage; POSITIVE sensitivity and its exact interval; all-family
and activated-only executed value; the corresponding one-sided paired-value
lower bound; action and information costs; development and calibration sample
sizes; the first learning-curve size meeting the joint qualification rule;
harmful-activation fraction; and fallback frequency with reason codes. A
complete evaluation requires two comparisons: a shared-learner ablation that
changes only the authorization layer, and a full-method comparison in which
each method uses its native training procedure under the same observables,
budget, actions and utility. The simulator threshold ablation provides the
first comparison but not the second. Post-lock analyses remain retrospective.

\paragraph{Retrospective static CTRP comparison.}
After the one-time CTRP lock was opened, the complete measured action--outcome
table enabled a reproducible full-information reanalysis of deterministic
one-step policies (Supplementary Table~\ref{tab:ctrp_retro_comparators}). This
\emph{post-lock, descriptive} analysis leaves the sealed \OPAL{} assignment
unchanged and does not provide prospective head-to-head evidence. The forced learned
candidate is the development model's predicted expected-net-benefit (ENB)
action; the uncertainty rule requires its standardized predicted contrast to
exceed 1.96. The HCPI-style analogue authorizes that one ENB policy only if
its calibration-wide one-sided 97.5\% paired-value lower bound is positive.
The LTT/CRC-like rule selects the first member of the frozen threshold grid
whose calibration NULL-risk upper bound is at most 0.05. The CSPI-MT-style
analogue additionally requires at least one activation and a Bonferroni
positive value lower bound across the eight frozen thresholds. These static
analogues do not reproduce the native data and interaction requirements of
logged-data HCPI, full EVSI, SPIBB, CSPI-MT or a sequential conservative
bandit/BO experiment.

\begin{sidewaystable}[p]
\centering
\begin{minipage}{0.92\textheight}
\centering
\caption{\textbf{Post-lock full-information CTRP comparator reanalysis.}
All methods use the same 254-family measured-outcome table, decision-time
metadata, candidate action and \(K_0=K_2\) utility contract. Values in
parentheses are one-sided 99.5\% Clopper--Pearson NULL-risk upper bounds or
one-sided 97.5\% paired-\(t\) value lower bounds, as indicated. This
retrospective table is not a binding prospective comparison.}
\label{tab:ctrp_retro_comparators}
\fontsize{7.5}{8.4}\selectfont
\renewcommand{\arraystretch}{1.08}
\rowcolors{2}{tablerow}{white}
\begin{tabular}{@{}L{2.7cm}L{2.55cm}L{1.45cm}L{1.85cm}L{3.25cm}L{1.65cm}L{1.4cm}@{}}
\specialrule{\heavyrulewidth}{0pt}{0pt}
\rowcolor{tablehead}
Method & NULL false activation (UCB) & Coverage & POSITIVE sensitivity &
Mean executed gain (LCB) & Mean action cost & Fallback \\
\specialrule{\lightrulewidth}{0pt}{0pt}
Always-\(K_2\) & \(0/244\;(0.02148)\) & \(0/254\) & \(0/7\) & \(0\;(0)\) & \(0.01000\) & \(1.0000\) \\
Always-\(K_4\) forced & \(244/244\;(1.00000)\) & \(254/254\) & \(7/7\) & \(-0.013073\;(-0.018604)\) & \(0.03000\) & \(0\) \\
ENB / forced learned candidate & \(9/244\;(0.08013)\) & \(9/254\) & \(0/7\) & \(-7.09\times10^{-4}\;(-1.166\times10^{-3})\) & \(0.01071\) & \(0.9646\) \\
Simple 1.96-SE gate & \(0/244\;(0.02148)\) & \(0/254\) & \(0/7\) & \(0\;(0)\) & \(0.01000\) & \(1.0000\) \\
HCPI-style global LCB & \(0/244\;(0.02148)\) & \(0/254\) & \(0/7\) & \(0\;(0)\) & \(0.01000\) & \(1.0000\) \\
LTT/CRC-like risk gate & \(0/244\;(0.02148)\) & \(0/254\) & \(0/7\) & \(0\;(0)\) & \(0.01000\) & \(1.0000\) \\
CSPI-MT-style multiplicity gate & \(0/244\;(0.02148)\) & \(0/254\) & \(0/7\) & \(0\;(0)\) & \(0.01000\) & \(1.0000\) \\
\OPAL{} frozen \(\tau=0.5\) & \(0/244\;(0.02148)\) & \(0/254\) & \(0/7\) & \(0\;(0)\) & \(0.01000\) & \(1.0000\) \\
\specialrule{\heavyrulewidth}{0pt}{0pt}
\end{tabular}
\end{minipage}
\end{sidewaystable}

The ungated ENB policy had calibration mean gain \(-4.676\times10^{-4}\)
and lower bound \(-8.423\times10^{-4}\); its \(\tau=0\) NULL-risk upper
bound was \(0.06501>0.05\). Every frozen threshold \(\tau\geq0.5\) produced
zero activation. Consequently, no reconstructed threshold simultaneously
met the risk limit, nonzero coverage and a strictly positive value lower
bound. The archive contains no optional decision-time measurement, so
incremental information-acquisition cost is zero for these static rules; the
table instead reports the prespecified action cost
\(0.01(K-1)\). A single development--calibration--lock split cannot identify a
common learning curve, runtime overhead or repeated-lock failure rate, so
sample efficiency and these failure endpoints are unavailable.

\paragraph{Cross-fitted fixed baseline and value model}\label{sec:crossfit}

The deployed fixed baseline is learned separately inside each outer training fold:
\[
\widehat K_{0,-f}
=
\arg\max_{k\in\mathcal K}
\frac{1}{|I_{-f}|}
\sum_{i\in I_{-f}}\widehat U_i(k).
\]
Evaluation campaigns in fold \(f\) do not participate in baseline selection, feature scaling, model training, expert selection or threshold calibration. This prevents a full-data baseline from leaking information into every outer fold. Cross-fitted prediction and evaluation follow the separation principle used in debiased machine learning~\cite{chernozhukov2018}, although no causal interpretation is attached without its own identification assumptions.

For each campaign and capacity, the value learner produces a mean and paired contrast uncertainty against \(K_0\). The same outputs define the entire \(\lambda\) frontier. Changing \(\lambda\) changes only a critical value, never the model, features, folds, cost contract or score. Degenerate uncertainty has a frozen action: \(K_0\).

\paragraph{Soft residual experts}\label{sec:soft_experts}

The expert layer starts with a global prediction and models only its residual. For outer fold \(f\), expert hyperparameters are chosen using \(I_{-f}\) alone. The candidate expert count is finite and includes zero. Soft gating avoids discontinuous assignment, while residual shrinkage prevents small experts from replacing a stable global effect. The one-standard-error rule chooses the simplest model statistically indistinguishable from the inner-fold optimum.

We checked three properties:

\begin{enumerate}
\item perturbing an outer-test campaign cannot alter that fold's selected hyperparameters;
\item posterior belief changes can change predicted action values when an expert is supported;
\item beliefs outside prespecified support invoke the global-only model.
\end{enumerate}

Mixtures of local experts provide the architectural precedent~\cite{jacobs1991,jordan1994}. Here, they operate within a fail-closed opportunity ceiling rather than defining a new generic mixture model.

\paragraph{Probe exploration and action-change gate}\label{sec:probe}

For each probe \(q\), development exploration records the known propensity, pre- and post-probe beliefs, actions, gross realized gain, resource charge, latency charge and net realized gain. The action-change probability model and net-gain model are trained on this randomized data. At deployment, a probe is purchased only when both
\[
\widehat{\Pr}\{\pi(X,Z_r)\neq\pi(X)\mid X\}\geq p_{\min}
\]
and a calibrated lower confidence bound for net gain is positive. The explicit no-probe action has cost and value zero. The confirmatory evaluation does not continue forced exploration unless a separate policy explicitly includes and charges it.

\subsubsection{Exact finite-campaign algorithm}\label{sec:finite_algorithm}

\paragraph{Primary population and observation channel}\label{sec:finite_population}

The prespecified boundary study contains \(N=16\) banks with latent class \(k^\star\in\{0,1,2\}\), from which the pilot is sampled without replacement. The primary simulator uses a signal channel known by design:
\[
\Pr(S=s\mid k^\star=k,p)
=
p\,\mathbf 1(s=k)+\frac{1-p}{3},
\qquad
p\in\{0.70,0.80,0.92\}.
\]
The value table is
\[
\begin{array}{@{}lccc@{}}
\toprule
 & k^\star=0 & k^\star=1 & k^\star=2\\
\midrule
K=0&0.495&0.495&0.495\\
K=1&0.465&0.665&0.665\\
K=2&0.445&0.645&0.845
\\\bottomrule
\end{array}
\]
and is known in this simulation study. An external application would require independent confidence sets for both the observation channel and the value table.

\paragraph{Likelihood inversion}\label{sec:likelihood_inversion}

For composition \(\mathbf M=(M_0,M_1,M_2)\), \(\sum_jM_j=N\), let \(\mathbf H\) be the latent class counts in the pilot. The probability of \(\mathbf H\) is multivariate hypergeometric:
\[
\Pr(\mathbf H=\mathbf h\mid\mathbf M)
=
\frac{\prod_j\binom{M_j}{h_j}}{\binom{N}{q}}.
\]
Conditional on \(\mathbf H\), observed signal counts follow the prespecified identity-mixture channel
\[
\Pr(S=s\mid k^\star=j,p)=p\,1(s=j)+(1-p)/3,
\qquad p\in\{0.70,0.80,0.92\}.
\]
Thus \(p\) is the weight on an identity channel and the diagonal accuracy is
\((1+2p)/3\). Write \(\mathcal S_q=\{\mathbf s\in
\mathbb Z_{\geq0}^3:\sum_j s_j=q\}\), and let
\[
\ell_{\mathbf M,q,p}(\mathbf s)
=
\sum_{\substack{\mathbf h:\,\sum_jh_j=q\\0\leq h_j\leq M_j}}
\Pr(\mathbf H=\mathbf h\mid\mathbf M)
\Pr_p(\mathbf S=\mathbf s\mid\mathbf H=\mathbf h)
\]
be the exact marginal likelihood. Its probability-ordering \(p\)-value is
\begin{equation}\label{eq:composition_pvalue}
a_{\mathbf M,q,p}(\mathbf s)
=
\sum_{\mathbf s'\in\mathcal S_q:\,
\ell_{\mathbf M,q,p}(\mathbf s')
\leq\ell_{\mathbf M,q,p}(\mathbf s)}
\ell_{\mathbf M,q,p}(\mathbf s').
\end{equation}
All outcomes tied at the observed likelihood are included. With
\[
\alpha_M=\alpha_H=\frac{0.020}{2\times7},
\]
the composition confidence set is
\begin{equation}\label{eq:composition_set}
\mathcal C^M_{q,p}(\mathbf s)
=
\{\mathbf M:a_{\mathbf M,q,p}(\mathbf s)>\alpha_M\}.
\end{equation}

For each retained \(\mathbf M\), the conditional mass of a feasible pilot
composition is
\[
\rho_{\mathbf M,q,p}(\mathbf h\mid\mathbf s)
=
\frac{\Pr(\mathbf H=\mathbf h\mid\mathbf M)
\Pr_p(\mathbf S=\mathbf s\mid\mathbf H=\mathbf h)}
{\ell_{\mathbf M,q,p}(\mathbf s)}.
\]
The set \(\mathcal C^H_{q,p}(\mathbf s,\mathbf M)\) orders these masses from
largest to smallest, includes terms until cumulative mass is at least
\(1-\alpha_H\), and includes every composition tied at the boundary mass.
The exact pair set used by the policy is therefore
\begin{equation}\label{eq:pair_set}
\mathcal C_{q,p}(\mathbf s)
=
\{(\mathbf M,\mathbf H):
\mathbf M\in\mathcal C^M_{q,p}(\mathbf s),\
\mathbf H\in\mathcal C^H_{q,p}(\mathbf s,\mathbf M)\}.
\end{equation}
The calculation uses no normal approximation, sample-variance substitution or
outcome-dependent change of ordering. Operating-characteristic confidence limits use a
separate \(0.015/(3\times7)\) allocation over three endpoints and seven pilot
sizes.

\begin{proposition}[Exact finite-support pair coverage]\label{prop:exact_pair_coverage}
For every prespecified \(q\), known channel \(p\) and finite campaign
composition \(\mathbf M\),
\[
\Pr_{\mathbf M,p}\{(\mathbf M,\mathbf H)
\in\mathcal C_{q,p}(\mathbf S)\}
\geq1-\alpha_M-\alpha_H
=0.997142857.
\]
Consequently, for the seven prespecified looks,
\[
\Pr_{\mathbf M,p}\{(\mathbf M,\mathbf H_q)
\in\mathcal C_{q,p}(\mathbf S_q)
\text{ for every }q\in\mathcal Q\}
\geq1-7(\alpha_M+\alpha_H)=0.98.
\]
\end{proposition}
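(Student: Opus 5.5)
The plan is to prove the single-look statement by a union bound over two failure events, one for the composition and one for the pilot composition. Then extend to seven looks by Bonferroni. Fix \(q\), \(p\) and the true composition \(\mathbf M\). The failure event \(\{(\mathbf M,\mathbf H)\notin\mathcal C_{q,p}(\mathbf S)\}\) is contained in the union
\[
\{\mathbf M\notin\mathcal C^M_{q,p}(\mathbf S)\}\cup\{\mathbf H\notin\mathcal C^H_{q,p}(\mathbf S,\mathbf M)\}.
\]
This follows from the definition in Eq.~\eqref{eq:pair_set}: if \(\mathbf M\) is retained and \(\mathbf H\) lies in the conditional set built for that same \(\mathbf M\), the pair is covered. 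It therefore suffices to bound the first event by \(\alpha_M\) and the second by \(\alpha_H\).

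For the first event, I will use the standard validity argument for probability-ordering (Clopper--Pearson/Sterne type) \(p\)-values over a finite sample space. Under \(\mathbf M\), the signal vector \(\mathbf S\) has law \(\ell_{\mathbf M,q,p}\) on the finite set \(\mathcal S_q\). Define \(F(t)=\sum_{\mathbf s':\ell(\mathbf s')\leq t}\ell(\mathbf s')\), so that \(a(\mathbf s)=F(\ell(\mathbf s))\). The event \(\{a(\mathbf S)\leq\alpha_M\}\) equals \(\{\ell(\mathbf S)\leq t^\ast\}\), where \(t^\ast\) is the largest attained likelihood value with \(F(t^\ast)\leq\alpha_M\). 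Because ties are included, this event is a union of whole likelihood level sets. Its probability is therefore exactly \(F(t^\ast)\leq\alpha_M\), and rejection has probability at most \(\alpha_M\).

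For the second event, I will condition on \(\mathbf S=\mathbf s\). Under the true \(\mathbf M\), the conditional law of \(\mathbf H\) given \(\mathbf S=\mathbf s\) is exactly \(\rho_{\mathbf M,q,p}(\cdot\mid\mathbf s)\), by Bayes' rule with the multivariate hypergeometric prior and the known channel. The set \(\mathcal C^H\) is constructed as a highest-mass set with cumulative mass at least \(1-\alpha_H\), so \(\Pr(\mathbf H\notin\mathcal C^H\mid\mathbf S=\mathbf s)\leq\alpha_H\) for every \(\mathbf s\) with \(\ell(\mathbf s)>0\). Integrating over \(\mathbf s\) gives the unconditional bound.

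One subtlety is that \(\mathcal C^H\) is defined only for retained \(\mathbf M\). This is harmless, because I bound the second event using the conditional set at the true \(\mathbf M\) whether or not \(\mathbf M\) is retained. The union bound then gives coverage at least \(1-\alpha_M-\alpha_H=1-0.02/7\approx0.997142857\).

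For the simultaneous statement, I apply the union bound over the seven looks in \(\mathcal Q\). Each look fails with probability at most \(\alpha_M+\alpha_H=0.02/7\), so the total failure probability is at most \(0.02\). This holds regardless of dependence between looks, including nested pilots across looks, since only marginal per-look bounds are used.

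The main obstacle is the tie handling in the first event, where I must show that including tied outcomes keeps the rejection region a union of entire level sets so that its mass is at most \(\alpha_M\) exactly. I will handle this step via the distribution-function argument above rather than through any continuity assumption.
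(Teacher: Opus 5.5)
Your proposal is correct and follows essentially the same route as the paper. Both arguments use super-uniformity of the probability-ordering \(p\)-value to bound exclusion of the true composition by \(\alpha_M\), and the highest-conditional-mass construction of \(\mathcal C^H\) to bound pilot-set exclusion by \(\alpha_H\). Both then apply one union bound within a look and a second union bound over the seven looks. Your explicit level-set treatment of ties and your extension of \(\mathcal C^H\) to non-retained compositions fill in details that the paper only asserts. The paper's additional exhaustive enumeration is a numerical check, not part of the proof.
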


\begin{proof}
The discrete probability-ordering \(p\)-value in
Eq.~\eqref{eq:composition_pvalue} is super-uniform, so the probability that
the true \(\mathbf M\) is excluded from Eq.~\eqref{eq:composition_set} is at
most \(\alpha_M\). Conditional on \((\mathbf S,\mathbf M)\), construction of
\(\mathcal C^H\) contains conditional design probability at least
\(1-\alpha_H\); inclusion of
boundary ties can only increase that mass. A union bound gives the pair
coverage at one look and a second union bound gives simultaneous coverage
over the frozen looks; independence between looks is not required.

As a finite-support check, an independent implementation enumerated all 153
campaign compositions, every feasible pilot composition and every observed
count vector in all 21 \((q,p)\) cells. The minimum attained pair coverage was
\(0.997392773\), above the analytic target \(0.997142857\), in every cell.
Re-execution reproduced the archived certificate exactly.
\end{proof}

For sequential looks \(\mathcal Q\), the error probability is allocated before outcomes are observed. At each look, the algorithm:

\begin{enumerate}
\item enumerates all integer compositions \(\mathbf M\);
\item evaluates the exact pilot-signal likelihood;
\item retains compositions in the inverted confidence set;
\item computes the optimal deployment-complement action for each retained composition;
\item evaluates the costed Eq.~\ref{eq:si_delta_campaign};
\item takes the minimum over the set;
\item switches only when the minimum is positive.
\end{enumerate}

The fixed campaign cost is
\[
C_{\mathrm{info,total}}(q)=0.1+0.5q
\]
in inherited normalized simulation units, producing
\begin{equation}\label{eq:si_delta_campaign}
\Delta_{\mathrm{camp}}(q)
=
\frac{16-q}{16}G_{A,D}(q)
-\frac{0.1+0.5q}{16}.
\end{equation}
These are simulated design costs, not measured facility costs.

\subsection{Simulation design and LINCS--LJP archive construction}\label{sec:si_method_validation}

\subsubsection{Mechanism population and risk validation}\label{sec:si_method_risk}

\paragraph{Continuous mechanism population}\label{sec:population}

\paragraph{Mechanism variables}\label{sec:continuous_axes}

Table~\ref{tab:axes} gives the complete sampling axes and distinguishes design
stress envelopes from fitted facility distributions.

\begin{table}[t]
\centering
\caption{\textbf{Continuous mechanism population.} Axes are sampled independently and the demand mixture is Dirichlet. The dimensions are facility-interpretable, but the numerical endpoints are design stress envelopes rather than fitted facility distributions.}
\label{tab:axes}
\small
\renewcommand{\arraystretch}{1.14}
\rowcolors{2}{tablerow}{white}
\begin{tabularx}{\textwidth}{@{}L{2.7cm}C{2.45cm}L{2.75cm}Y@{}}
\specialrule{\heavyrulewidth}{0pt}{0pt}
\rowcolor{tablehead}
Axis & Range / distribution & Generative role & Rationale / anchor \\
\specialrule{\lightrulewidth}{0pt}{0pt}
Matched-capacity success & Uniform [0.70, 0.92] & Base success probability under matched capacity & Author-specified envelope from moderately to highly reliable execution; not a facility estimate \\
Under-provision penalty & Uniform [0.05, 0.30] & Success loss per missing capacity level & Normalized stress envelope chosen to create resolvable under-capacity consequences \\
Over-provision penalty & Uniform [0.05, 0.30] & Success loss per excess capacity level & Symmetric normalized envelope for waste or unnecessary intervention; not a measured cancellation rate \\
Survey separability & Uniform [0.34, 0.97] & Diagonal probability of the three-class signal channel & Lower endpoint is near chance (\(1/3\)); upper endpoint is highly informative but imperfect \\
Routing-cost multiplier & Uniform [0.80, 1.20] & Multiplier on the frozen operational cost vector & Symmetric \(\pm20\%\) sensitivity envelope around the prespecified normalized contract \\
Common success shift & Uniform [$-0.05$, 0.05] & Capacity-invariant campaign difficulty & Symmetric nuisance perturbation used to test robustness without changing action ordering by construction \\
Demand mixture & Dirichlet(2,2,2) & Prevalence of three capacity needs & Symmetric interior design that avoids fixing class balance while retaining heterogeneous campaigns \\
\specialrule{\heavyrulewidth}{0pt}{0pt}
\end{tabularx}
\end{table}

For need class \(n\) and action \(k\), the success probability is the matched probability minus linear under- or over-provision penalties plus the common shift, clipped to \([0.02,1]\). Operational action cost is subtracted on the same utility scale. Oracle best fixed capacity, class-specific capacity, \(G_M\), \(G_{XZ}\), class support and utility gaps are computed deterministically before a learned policy is evaluated.

NULL mechanisms have \(G_M\leq g_0\). Positive-actionable mechanisms require
\(G_M\geq g_1>0\), so that \(\eta=G_{XZ}/G_M\) is defined, together with
\(\eta\geq\eta_{\min}\), at least two supported optimal-capacity classes and a
minimum utility gap. We classify all other mechanisms as weak/ambiguous.
Conditional samplers either return the requested number of distinct fixtures or
fail closed.

The general framework learns \(K_0\) within training folds. For this prespecified simulation, we instead conditioned the sampled mechanism population on the oracle-optimal fixed capacity being \(K_0=1\). This restriction isolates whether observable information justifies routing away from a common intermediate-capacity baseline. The results therefore do not establish that \(K_0=1\) is generally optimal in facilities or directly transport to populations with an optimal fixed action of \(0\) or \(2\).

\paragraph{Campaign score}\label{sec:campaign_score}

Each campaign contains 200 banks. Need classes and survey signals are generated from the fixture. A frozen global value map chooses
\[
\widehat K_i
=
\arg\max_k\widehat V(k\mid X_i)
\]
from the survey and prespecified decision randomization only. This action rule is deployable in the simulated decision contract. The prespecified simulation then evaluates it using
\[
S^{\mathrm{eval}}
=
\frac{1}{200}\sum_{i=1}^{200}
\{U_i(\widehat K_i)-U_i(K_0)\},
\]
where common-random-number potential outcomes supply \(U_i(k)\) for every action. Thus, \(S^{\mathrm{eval}}\) is an oracle-assisted simulation evaluation score, unavailable at facility commitment time. A field implementation would instead require an observable pilot score \(S^{\mathrm{dep}}(X,Z_{\mathrm{pilot}})\), calibrated on the same pilot information set and evaluated under an independently prespecified coverage argument.

\paragraph{Prespecified tuning, locking and validation}\label{sec:study3_design}

The design has four non-overlapping populations
(Table~\ref{tab:study3_design}):

\begin{table}[t]
\centering
\caption{\textbf{Prespecified simulation design for constrained-risk calibration.}}
\label{tab:study3_design}
\small
\renewcommand{\arraystretch}{1.14}
\rowcolors{2}{tablerow}{white}
\begin{tabularx}{\textwidth}{@{}Y C{2.0cm} C{2.0cm} Y@{}}
\specialrule{\heavyrulewidth}{0pt}{0pt}
\rowcolor{tablehead}
Role & Distinct fixtures & Campaigns & Function \\
\specialrule{\lightrulewidth}{0pt}{0pt}
NULL tuning & 140 & 140 & Construct order-statistic threshold class and select lowest passing threshold \\
NULL risk lock & 500 & 500 & Independently lock threshold or force always-\(K_0\) fallback \\
NULL validation & 400 & 400 & Evaluate binding false-activation risk \\
Positive validation & 500 & 500 & Estimate sensitivity and heterogeneity; no minimum threshold \\
\specialrule{\lightrulewidth}{0pt}{0pt}
Total & 1,540 & 1,540 & 308,000 banks at 200 per campaign \\
\specialrule{\heavyrulewidth}{0pt}{0pt}
\end{tabularx}
\end{table}

The primary unit is one continuous fixture with one fresh campaign (\(M_{\mathrm{primary}}=1\)). Repeated campaigns from one fixture may be used only in a separately labeled cluster-valid variance decomposition. The binding risk level is \(r_0=0.05\), evaluated with a one-sided exact upper bound at component \(\alpha=5\times10^{-3}\). If the risk lock fails, the active evaluator is disabled. Sensitivity is reported with an exact interval and no prespecified lower threshold.

Candidate design selection and design certification use separate fixture
libraries, campaign realizations and seed namespaces. Certification fixes the
sample sizes before the independent simulation is drawn. The resulting study
tests the simulator risk claim, not deployability or physical-facility
performance.

\subsubsection{LINCS--LJP temporal-profile acquisition evaluation}
\label{sec:si_method_lincs_ljp}

\paragraph{Source construction and component-disjoint partitions}
\label{sec:si_lincs_sources}

The transcriptomic source was the GSE70138 Level 5 MODZ matrix and its
associated signature, perturbagen, cell and landmark-gene annotations. We
retained the ordered 978 directly measured landmark genes and used each
published MODZ consensus signature once, without an additional replicate
collapse~\cite{subramanian2017l1000}. The outcome source was the LJP 72-h
growth-rate dataset~\cite{niepel2017lincs}. Episodes
were matched exactly by LJP batch, cell line, compound identity and dose.
For each matched episode, the endpoint was the arithmetic mean of exactly
three unique finite biological-replicate \(\mathrm{GRvalue}\) measurements,
clipped to
\[
M_i^*=\min\{1,\max(-1,M_{i,\mathrm{raw}})\}.
\]
The clipping defines a bounded decision endpoint and does not replace the
reported raw values in the source archive.

The complete-case frame contained 4,049 episodes in 105 non-overlapping
perturbation-identity components. An episode comprised one exact combination of
batch, cell line, component, compound and dose. Six cell lines and the LJP005
and LJP006 batches yielded 12 batch-by-cell technical groups. Two additional
compound identities had matched 3-h and 24-h profiles but no 72-h endpoint;
their 70 episodes were excluded before partitioning and do not enter any
reported denominator.

Components were assigned without balancing to LEARN, CALIBRATION,
CERTIFICATION and FINAL\_LOCK in counts of 42, 21, 21 and 21. All doses,
cells, batches and repeated episodes belonging to one component remained in
the same partition. The partition episode counts were 1,649, 750, 790 and
860, respectively. LEARN, CALIBRATION and CERTIFICATION were used for model,
threshold and operational-criterion development. Only FINAL\_LOCK is treated
as the untouched final evaluation.

After the independent evaluation was complete, we reconstructed two
publication indexes from the public source metadata and the frozen matching
and partition rules. The file
\texttt{lincs\_ljp\_episode\_index.csv} records the 4,049 episode
identifiers, source-signature identifiers, component membership and fixed
partition assignments. The file
\texttt{lincs\_ljp\_component\_map.csv} records the 105
component identities, structure identifiers, fixed partitions and episode
counts. Neither file contains growth-rate outcomes, model predictions,
certification values or FINAL values. Their construction did not repeat model
fitting, rule selection or evaluation; the independently returned aggregate
results remain the source of all reported LINCS--LJP statistics.

\paragraph{Prediction models, acquisition score and net value}
\label{sec:si_lincs_models}

The 3-h L1000 profile \(X_i\) was available before the optional measurement,
and \(Z_i\) was the matched 24-h profile. Two component-equal weighted ridge
models with unpenalized intercepts predicted \(M_i^*\). The first used
\(X_i\), log-dose, cell-line indicators and batch indicators; the second used
the same inputs together with \(Z_i\). Ridge penalties were selected
separately by six-fold component-disjoint cross-validation within the 42
LEARN components, followed by refitting on all LEARN components. Predictions
\(p_{X,i}\) and \(p_{XZ,i}\) were clipped to \([-1,1]\).

The gross reduction in bounded absolute error was
\[
g_i
=
\lvert M_i^*-p_{X,i}\rvert
-
\lvert M_i^*-p_{XZ,i}\rvert.
\]
The normalized cost of acquiring the 24-h profile was fixed at
\(c^*=0.02\) bounded-GR mean-absolute-error units. This quantity is 1\% of
the two-unit endpoint range and is not a monetary cost. A component-weighted
ridge acquisition model predicted \(g_i-c^*\) from the 3-h profile,
\(p_{X,i}\), log-dose, cell line and batch. Nested component-disjoint
cross-fitting within CALIBRATION generated the score used for threshold
selection; the final acquisition model was then refitted on all 21
CALIBRATION components.

For threshold \(t\), the acquisition action was
\[
A_i(t)=\mathbf 1\{s_i\geq t\}.
\]
The 29-candidate grid ranged from \(-2.02\) to \(1.98\) and included the
selected threshold \(-0.06\). The selected episode value was
\[
y_i=A_i(g_i-c^*),
\]
for an evaluable episode, while inactive episodes contributed zero. Let
\(E_i\) indicate that the selected episode and both predictions were
evaluable. A selected episode was false when \(E_i=0\) or its gross
improvement did not strictly exceed \(c^*\). A selected unevaluable episode
remained in the denominator and received the least-favorable net value
\(-2.02\).

For component \(j\), with fixed episode set \(I_j\), let
\[
S_j=\sum_{i\in I_j}A_i,\qquad
F_j=\sum_{i\in I_j}A_i\left[(1-E_i)+E_i\mathbf 1\{g_i\leq c^*\}\right],
\qquad
v_j=\frac{1}{|I_j|}\sum_{i\in I_j}y_i,
\]
with the least-favorable value included in \(v_j\) when needed. The reported
activation fraction was the mean of \(S_j/|I_j|\) over the 21
components. FDP was the mean of \(F_j/S_j\) over activated components, and
value was the mean of \(v_j\) over all 21 components, including components
with no activation.

Candidates with no activated component were ineligible and had no FDP point
estimate. The stored value 1.0 is a conservative software sentinel and is not
interpreted as an FDP estimate.

\paragraph{Operational rule and final-opening sequence}
\label{sec:si_lincs_final_sequence}

The final operational rule required FDP point estimate no greater than 0.45
and component-equal mean value strictly greater than zero. It also required
at least 14 activated components, component-equal activation fraction above
0.5395375, action support across both batches, all six cell lines and at least
eight of the 12 technical groups, and no excessive concentration in one
technical group. Candidate ordering first maximized empirical
component-equal value, then minimized empirical FDP and total acquisitions,
with the larger threshold breaking any remaining tie. The FDP point criterion
was fixed after CALIBRATION and before CERTIFICATION. The positive-mean value
criterion and support constants were fixed after CERTIFICATION, which was
therefore treated as development; only FINAL\_LOCK evaluated the complete
final rule. The support constants, including 14 components and the 0.5395375
activation floor, were author-set operational screens; they are not universal
biological thresholds.

One-sided 95\% component-level FDP upper and value lower bounds were computed
with KL--Hoeffding and Hoeffding constructions, respectively. They are
reported as uncertainty summaries rather than acceptance criteria for this
final operational rule. Outcome diagnostics required positive mean value
after leaving out each of the 12 batch-by-cell groups and limited the largest
absolute group-contribution share to 0.50. These groups are shared technical
strata, not independent new-batch replications.

Before any FINAL\_LOCK \(Z\) or \(M\) value was read, the acquisition model
and threshold were applied to FINAL\_LOCK \(X\) and permitted covariates and
the complete action map was fixed. The 24-h profiles and 72-h outcomes were
opened only after the pre-outcome support checks passed. No model,
preprocessing transformation, threshold or action was updated after this
opening.

\subsection{Value accounting, inference, reproducibility and scope}\label{sec:si_method_value}

\subsubsection{Operational and lifecycle value}\label{sec:cost}

\paragraph{Operational ledgers}\label{sec:operational_cost}

Let \(\phi\in\Phi\) index a prespecified commitment contract rather than an unspecified scalar. For action \(a\) and context \(x\),
\[
\begin{aligned}
C_\phi(a;x)
=\;&C_{\phi}^{\mathrm{reserve}}(a;x)
+C_{\phi}^{\mathrm{setup}}(a;x)
+C_{\phi}^{\mathrm{idle}}(a;x)\\
&+C_{\phi}^{\mathrm{cancel}}(a;x)
+C_{\phi}^{\mathrm{spot}}(a;x)
+C_{\phi}^{\mathrm{decision}}(a;x)\\
&+C_{\phi}^{\mathrm{delay}}(a;x)
+C_{\phi}^{\mathrm{information}}(a;x).
\end{aligned}
\]
Each policy records every term in the ledger, with absent charges set to zero rather than absorbed into scientific success. The operational contrast is
\[
\Delta_{\mathrm{op}}(\phi)
=
\mathbb E[
Y(\pi)-Y(K_0)
-C_\phi(\pi;X)+C_\phi(K_0;X)].
\]
The symbol \(\phi\) is a discrete index for a complete prespecified
commitment-and-cost contract. Every reported contrast is reconstructed from
the full ledger above. No scalar-friction approximation is used in the
reported studies.

\paragraph{Fixed-assignment cpg0012 sensitivity}
\label{sec:cpg_cost_sensitivity_methods}

The post-hoc cpg0012 cost analysis retained the six unit-level assignments,
the \(c=0.02\) NULL, AMBIGUOUS and POSITIVE strata, the final population and
the prespecified least-favorable completion. For a measured compound, we
recovered cost-free incremental utility by adding 0.02 to its deposited net
gain. At a candidate two-well bundle cost \(c\), the executed value for method
\(m\) was
\[
W_{im}(c)
=A_{im}
\begin{cases}
\Gamma_i(0.02)+0.02-c, & \text{measured},\\
-1-c, & \text{unevaluable},
\end{cases}
\]
with value zero when \(A_{im}=0\). Consequently, the all-unit point mean obeys
\(\overline W_m(c)=\overline W_m(0.02)-\widehat q_m(c-0.02)\), where
\(\widehat q_m\) is the frozen activation coverage. At each cost we recomputed
the nominal one-sided 95\% compound-bootstrap BCa lower endpoint using 20,000
resamples. The reported zero crossings were located by numerical bisection
between adjacent cost-grid points with opposite endpoint signs.

No model, score, threshold or assignment was retrained or reselected. Because
the prespecified outcome strata were also held fixed, false activation, FDP and
POSITIVE sensitivity were not reinterpreted at the alternative costs. The
result is a pointwise, descriptive sensitivity analysis rather than a new
confirmatory test or a simultaneous confidence band over cost. Separately, the
risk-budget sensitivity at \(c=0.02\) retained the reported final-test bounds
and classified an active rule as risk-feasible only when its one-sided 95\%
false-activation UCB was no greater than \(r_0\).

\paragraph{Shared model costs and reuse scenarios}\label{sec:shared_cost}

The inherited shared learned-policy cost is 2737.152 normalized units and the characterization component is 108, giving 2845.152 under the inclusive allocation. Because the characterization cost cannot be allocated uniquely between learned and fixed policies, we report incremental cost as an interval. Table~\ref{tab:h_scenarios} lists the prespecified reuse scenarios used for this sensitivity analysis.

\begin{table}[t]
\centering
\caption{\textbf{Prespecified deployment-reuse scenarios.} These are sensitivity scenarios, not five competing estimates of one true \(H\).}
\label{tab:h_scenarios}
\small
\renewcommand{\arraystretch}{1.14}
\rowcolors{2}{tablerow}{white}
\begin{tabularx}{\textwidth}{@{}C{1.5cm}L{3.6cm}L{3.1cm}Y@{}}
\specialrule{\heavyrulewidth}{0pt}{0pt}
\rowcolor{tablehead}
\(H\) & Scenario & Unit & Interpretation \\
\specialrule{\lightrulewidth}{0pt}{0pt}
250 & Campaign-family reference & Campaign & Conservative illustrative case in which one frozen model serves roughly one quarter of a facility-scale annual program \\
\(10^3\) & Facility-wide annual scale & Campaign & External upper-scale anchor; not an assumption that one mechanism-specific model serves every proposal \\
\(3\times10^3\) & Run-cycle condition block & Condition block / shift & Higher-frequency reuse within operating periods \\
\(10^4\) & Annual high-throughput & Condition block / shift & High-throughput sensitivity \\
\(10^5\) & Long-reuse stress only & Sample / decision event & Unit-relaxed stress case; not campaign-primary \\
\specialrule{\heavyrulewidth}{0pt}{0pt}
\end{tabularx}
\end{table}

The annual facility-wide scale is anchored to 1,312 awarded Diamond proposals and 989 allocated ESRF proposals in the cited annual sources~\cite{diamond2024,esrf2024}. These counts establish only the order of magnitude. The \(H=250\) family scenario uses an illustrative, rather than estimated, 25\% allocation share. The primary quantity is the break-even curve \(H^\star(\phi)\); the listed scenarios do not validate the normalized model-development costs.

\subsubsection{Statistical inference and reproducibility}\label{sec:inference}

\paragraph{Resampling and simultaneous families}\label{sec:resampling}

All learned-policy values are aggregated to the prespecified independent unit
before resampling. One resample plan is shared within each family. The maximum
absolute studentized statistic supplies a common critical value and
multiplicity correction across selectors, friction points and
regimes~\cite{chernozhukov2013}.

Oracle opportunities are descriptive truth quantities in simulation. Estimated
opportunity and policy value use cross-fitted models and disjoint evaluation
data. Recovered fractions
\[
\rho
=
\frac{V(\pi)-V(K_0)}{V(\pi_M^\star)-V(K_0)}
\]
are reported only when a simultaneous lower bound establishes a non-negligible
denominator. Joint inversion is used rather than dividing two point estimates.

\paragraph{Assurance near decision thresholds}\label{sec:knife_edge}

A design estimate close to a threshold cannot support a binary claim by
itself. A binding claim additionally requires independent design simulations
to establish:

\begin{enumerate}
\item a prespecified minimum point probability of joint success;
\item an exact lower confidence bound on that assurance;
\item a Monte Carlo precision requirement;
\item valid coverage at the scientific independent unit.
\end{enumerate}

Without this assurance, the quantity remains an estimation target. In the
held-out continuous-population study, sensitivity is therefore an estimation
target, whereas NULL false-activation risk remains binding.

\paragraph{Analysis separation and reproducibility}\label{sec:governance}

Design-power simulation, development calibration, risk locking and independent
evaluation use disjoint data and seed namespaces. Outputs are archived by
analysis stage, and corrected implementations generate new manifests while
preserving earlier runs. Each run records its scientific role, configuration,
source version, unit and seed registries, overlap audit, policy and cost ledgers,
and predefined fallback states. The protocol prohibits repeated evaluation
until a criterion passes, relabelling development output as validation, or
inferring a lifecycle denominator from observed break-even results.

\subsubsection{Data and reproducibility}\label{sec:data_register}

The evidence package separates three core biological archive sources from
supporting methodological and constructibility sources. CTRP supplies the
pre-partitioned measured-outcome fallback evaluation; cpg0012 supplies the
target-calibrated sparse-candidate evaluation; and LINCS and LJP supply the
broad temporal-profile acquisition evaluation. Author-generated simulator
ledgers support the theoretical and risk-control analyses, Causal Chambers
supports constructibility only, and facility documents provide scheduling or
scale context. None of these supporting sources substitutes for an empirical
archive state.

Figure source data, frozen summaries, validation reports and manifests are
archived in
\href{https://doi.org/10.5281/zenodo.21859443}{doi: 10.5281/zenodo.21859443}.
The cited record contains all 29 LINCS--LJP calibration candidates,
certification and FINAL aggregate results and the deterministic LINCS--LJP
frame-construction and analysis pipeline. The record also contains a 4,049-row
episode index and a 105-component identity and partition map, together with
their schema, deterministic builder, validation record and checksum manifest.
These publication indexes were reconstructed after the independently executed
evaluation from public source metadata and the frozen matching and partition
rules. They contain no outcomes, predictions or held-out statistics and did not
repeat model fitting, rule selection or outcome aggregation. The aggregate
tables directly reproduce every value in Main Fig.~5; full episode-level
reanalysis requires reacquiring the public third-party inputs and rerunning the
deposited pipeline. The latest manuscript source and post-hoc
score-sweep materials accompany this submission. Author-generated software in
the submission package is released under the MIT License; the public
\href{https://github.com/JIABI/mard_autonomous}{GitHub repository} is released
under the same MIT License. Supplementary
Table~\ref{tab:data_sources} records the public location and evidentiary role
of every external source.

\begin{sidewaystable}[p]
\centering
\begin{minipage}{0.93\textheight}
\centering
\caption{\textbf{Data and contextual-source register.} ``Used'' means that the source enters a method, scale anchor or core outcome. External repositories that did not meet the eligibility criteria are listed to make their evidentiary role explicit.}
\label{tab:data_sources}
\fontsize{8.1}{9.0}\selectfont
\renewcommand{\arraystretch}{1.08}
\rowcolors{2}{tablerow}{white}
\begin{tabularx}{\linewidth}{@{}L{2.6cm}L{3.5cm}Y L{3.2cm}@{}}
\specialrule{\heavyrulewidth}{0pt}{0pt}
\rowcolor{tablehead}
Source & Public location & Role in this study & Status \\
\specialrule{\lightrulewidth}{0pt}{0pt}
Author-generated OPAL simulator & Versioned public repository and archival record accompanying submission & Author-generated simulation outcome ledgers, oracle truth, campaign observations, potential outcomes and costs & Used; synthetic only \\
OPAL evidence and source data & \href{https://doi.org/10.5281/zenodo.21859443}{doi: 10.5281/zenodo.21859443}; code at \href{https://github.com/JIABI/mard_autonomous}{GitHub} & Figure source data, result tables, validation reports and reproducibility records for the original studies & Public manuscript record; third-party raw source files excluded \\
LINCS--LJP reproducibility source & \href{https://doi.org/10.5281/zenodo.21859443}{doi: 10.5281/zenodo.21859443}; latest revision materials accompanying submission & Calibration candidate table, final aggregate, Main Fig.~5 build script, 4,049-row episode index and 105-component partition map & Public aggregate record and derived identity/partition indexes; third-party expression profiles and growth-response values excluded \\
Diamond Light Source & \href{https://www.diamond.ac.uk/Home/News/LatestNews/2024/20112024.html}{annual statistics} & Annual proposal/shift scale for \(H\) scenarios & Used as scale anchor \\
Diamond application FAQ & \href{https://www.diamond.ac.uk/Users/Apply-for-Beamtime/Application-FAQs.html}{application FAQ} & Proposal and allocation context & Used as context \\
ESRF Highlights 2024 & \href{https://www.esrf.fr/files/live/sites/www/files/ExternalFiles/Communication/Highlights/2024/172-ESRF-Highlights-2024.html}{annual highlights} & Annual proposal/shift scale for \(H\) scenarios & Used as scale anchor \\
SOLEIL availability policy & \href{https://www.synchrotron-soleil.fr/en/users/current-call-proposals/beamtime-availability-and-partition}{availability page} & Block-structured access context & Cite-only context \\
SSRL scheduling procedure & \href{https://www-ssrl.slac.stanford.edu/ssrl/web/node/28}{scheduling page} & Advance scheduling context & Cite-only context \\
APS access policy & \href{https://www.aps.anl.gov/files/APS-sync/centraldocs/policy_procedures/user/docs/APS_1700813.pdf}{policy PDF} & Beam-time allocation context & Cite-only context \\
NSLS-II proprietary policy & \href{https://www.bnl.gov/nsls2/docs/pdf/proprietaryresearchpolicyandprocedure.pdf}{policy PDF} & Economic sensitivity context only & No outcome or utility conversion \\
Olympus & \href{https://github.com/aspuru-guzik-group/olympus}{GitHub repository} & Candidate external benchmark framework & Compatibility assessed; not used for core outcomes \\
Causal Chambers & \href{https://github.com/juangamella/causal-chamber}{GitHub repository; commit \texttt{6da4e646}} & Fourteen scalar archives examined; 13 used only for the Supplementary post-selection proxy-constructibility analysis & Descriptive constructibility analysis only; no core or locked decision-rule claim; prospective instrument execution not started (max 59 independent sessions; no logged propensities) \\
CTRP v2 & \href{https://portals.broadinstitute.org/ctrp.v2.1/}{Broad CTRP portal}; \href{https://ocg.cancer.gov/programs/ctd2/data-portal}{NCI CTD\({}^{2}\) portal} & Measured four-compound response panel; 125 development, 244 calibration and 254 locked cell-line families & Used in locked pharmacogenomic exact-outcome evaluation; offline exact-outcome evaluation; author-defined nested capacity \\
cpg0012 Cell Painting & Cell Painting Gallery cpg0012; original publications and gallery resource~\cite{wawer2014profiling,bray2017cellpaintingdata,weisbart2024gallery} & Per-well morphology, plate maps, structures and replicate roles for 25,013 eligible compound clusters; 13,748 development and 11,265 final & Used in target-calibrated one-shot final test; public archive, author-controlled procedural separation, adaptive-replicate abstraction \\
LINCS L1000 and LJP growth-rate data & \href{https://www.ncbi.nlm.nih.gov/geo/query/acc.cgi?acc=GSE70138}{NCBI GEO GSE70138}; \href{https://doi.org/10.1038/s41467-017-01383-w}{published LJP growth-rate resource} & Matched 3-h and 24-h profiles over 978 landmark genes and 72-h GR endpoints; 105 perturbation-identity components and 4,049 complete-case episodes split 42/21/21/21 & Used in component-disjoint external final evaluation; positive point-estimate net value under 860/860 final acquisition; offline complete-case frame \\
\specialrule{\heavyrulewidth}{0pt}{0pt}
\end{tabularx}
\begin{tablenotes}[flushleft]\footnotesize
\item Public facility documents support institutional structure or scale only. They do not provide OPAL policy outcomes, normalized utility values or measured cancellation penalties unless the cited source explicitly states them. No private facility trace was used.
\end{tablenotes}
\end{minipage}
\end{sidewaystable}

\clearpage

\subsubsection{Scope of empirical evidence}\label{sec:limitations}

The three core empirical studies are offline evaluations of recorded
biological archives; none is a prospective online deployment on a physical
instrument. Their conclusions are archive-specific evidence states rather
than claims of transport to new facilities, batches or biological
populations. The simulator and Causal Chambers analyses form a separate
methodological evidence layer.

The cpg0012 study used both labeled pre-final partitions for model and
threshold development, leaving one untouched 11,265-compound final test. This
is a single held-out evaluation rather than an independent two-stage or
brokered confirmation. The FDP point estimate was below its target, but the
95\% UCB was not, so the complete contract kept the fixed plan in place. Adding two
archived replicates is an adaptive-replicate abstraction rather than a
prospective facility operation. The earlier 1,253-per-evaluation-partition and
3,000-registry assurance values are planning scenarios, not realized cpg0012
operating characteristics. The post-hoc plate-map analysis motivates, but does
not replace, block-disjoint evaluation.

The LINCS--LJP study is an offline complete-case replay within six cell lines
and two batches. Its 105 perturbation-identity components are
component-disjoint allocation and analysis units, but the 12 batch-by-cell
groups recur across partitions and do not support inference to new batches or
cell lines. The final rule selected 860/860 episodes, so the positive mean
point estimate pertains to broad acquisition rather than selective measurement
saving. FDP and value passed the stated point-estimate criteria; their
component-level UCB and LCB are uncertainty summaries and do not provide
confidence-bound authorization. The study does not establish a causal benefit
of delaying measurement to 24 h or prospective laboratory performance.

The locked CTRP evaluation exercised no active branch. Its \(K\) actions are
author-defined compound-bundle sizes evaluated by lookup in a complete
measured response panel, and the seven locked POSITIVE families leave
positive-stratum sensitivity imprecisely estimated. The result establishes
fallback for the current frozen score, not exhaustion of the available
decision-time information or failure of every alternative representation.

The held-out simulator exercises an active branch, but
\(S^{\mathrm{eval}}\) uses potential outcomes unavailable at decision time.
The Causal Chambers analysis establishes replay of an author-defined
prefix-score--action--fallback computation on measured trajectories; it does
not identify causal value or controlled activation risk. Facility documents
supply scheduling context and reuse scale rather than policy outcomes or
monetary utility. These analyses establish methodological properties and
scope, not additional biological archive states.

\clearpage
\section{SUPPLEMENTARY TABLES}\label{sec:supp_tables}

For quick cross-reference, Table~\ref{tab:ed_estimands} summarizes the
selector-frontier estimands and baselines, and Table~\ref{tab:ed_d6}
summarizes the constrained-risk calibration and validation partitions.

\begin{table}[ht]
\centering
\caption{\textbf{Selector-frontier estimand and baseline crosswalk.} Internal
analysis identifiers are replaced by their scientific definitions.}
\label{tab:ed_estimands}
\small
\renewcommand{\arraystretch}{1.13}
\setlength{\tabcolsep}{3pt}
\rowcolors{2}{tablerow}{white}
\begin{tabularx}{\textwidth}{@{}L{2.35cm}L{2.55cm}L{2.3cm}L{1.9cm}Y@{}}
\specialrule{\heavyrulewidth}{0pt}{0pt}
\rowcolor{tablehead}
Estimand & Outcome/value source & Comparator & Mean contrast & Scientific role \\
\specialrule{\lightrulewidth}{0pt}{0pt}
\(K\)-only surrogate & Label-derived per-\(K\) value & \(K_{\mathrm{score},-f}=1\) & \(+5.549\times10^{-3}\) & Surrogate decision value; not adoption \\
Executed operational & Executed no-probe policy utility & \(K_0=2\) & \(-0.814\times10^{-3}\) & Correct operational comparison \\
Incremental-cost lifecycle & Executed operational plus incremental shared cost & \(K_0=2\) & No finite \(H^\star\) & Fixed-selection cost not identified in frozen ledger \\
Common-cost lifecycle & Executed operational with common-cost cancellation & \(K_0=2\) & No finite \(H^\star\) & Lifecycle contrast equals non-positive operational contrast \\
\specialrule{\heavyrulewidth}{0pt}{0pt}
\end{tabularx}
\end{table}

\begin{table}[ht]
\centering
\caption{\textbf{Constrained-risk calibration and prespecified held-out
simulator validation.}}
\label{tab:ed_d6}
\small
\renewcommand{\arraystretch}{1.13}
\rowcolors{2}{tablerow}{white}
\begin{tabularx}{\textwidth}{@{}L{2.7cm}C{1.3cm}C{1.5cm}L{2.3cm}Y@{}}
\specialrule{\heavyrulewidth}{0pt}{0pt}
\rowcolor{tablehead}
Partition & Fixtures & Events & Exact interval or bound & Role \\
\specialrule{\lightrulewidth}{0pt}{0pt}
NULL tuning & 140 & 0 above selected maximum & Selected \(\tau=0.0180806\) & Threshold construction only \\
NULL risk lock & 500 & 0 activations & 99.5\% upper bound 0.01054 & Independent lock; passed \\
NULL validation & 400 & 0 activations & 99.5\% upper bound 0.01316 & Binding claim; risk limit 0.05 \\
Positive validation & 500 & 342 activations & 95\% interval 0.641--0.725 & Sensitivity estimate 0.684; no pass/fail target \\
\specialrule{\heavyrulewidth}{0pt}{0pt}
\end{tabularx}
\end{table}

\clearpage

\end{document}